\documentclass{article}

\usepackage{transparent}
\usepackage{etoolbox}
\usepackage{inconsolata}
\usepackage[scaled=.90]{helvet}
\newcommand{\arxiv}[1]{\iftoggle{neurips}{}{#1}}
\newcommand{\neurips}[1]{\iftoggle{neurips}{#1}{}}
\newtoggle{neurips}
\global\togglefalse{neurips}
\newcommand{\loose}{\looseness=-1}

\neurips{
  \usepackage[nonatbib]{neurips_2024}
} 

\usepackage[style=alphabetic,backend=bibtex,maxcitenames=6,maxalphanames=6,maxbibnames=99]{biblatex}
\addbibresource{paper.bib}
\newcommand{\citep}{\cite}
\newcommand{\citet}[1]{\citeauthor*{#1} \cite{#1}}

  \addtocontents{toc}{\protect\setcounter{tocdepth}{0}}

\usepackage{mathalfa}
\usepackage{upgreek}

\usepackage{pifont}
\usepackage{caption}

\makeatletter
\DeclareFontFamily{OMX}{MnSymbolE}{}
\DeclareSymbolFont{MnLargeSymbols}{OMX}{MnSymbolE}{m}{n}
\SetSymbolFont{MnLargeSymbols}{bold}{OMX}{MnSymbolE}{b}{n}
\DeclareFontShape{OMX}{MnSymbolE}{m}{n}{
    <-6>  MnSymbolE5
   <6-7>  MnSymbolE6
   <7-8>  MnSymbolE7
   <8-9>  MnSymbolE8
   <9-10> MnSymbolE9
  <10-12> MnSymbolE10
  <12->   MnSymbolE12
}{}
\DeclareFontShape{OMX}{MnSymbolE}{b}{n}{
    <-6>  MnSymbolE-Bold5
   <6-7>  MnSymbolE-Bold6
   <7-8>  MnSymbolE-Bold7
   <8-9>  MnSymbolE-Bold8
   <9-10> MnSymbolE-Bold9
  <10-12> MnSymbolE-Bold10
  <12->   MnSymbolE-Bold12
}{}

\let\llangle\@undefined
\let\rrangle\@undefined
\DeclareMathDelimiter{\llangle}{\mathopen}%
                     {MnLargeSymbols}{'164}{MnLargeSymbols}{'164}
\DeclareMathDelimiter{\rrangle}{\mathclose}%
                     {MnLargeSymbols}{'171}{MnLargeSymbols}{'171}
\makeatother
\neurips{
\title{Reinforcement Learning under Latent Dynamics: Toward Statistical and Algorithmic Modularity}
}
\arxiv{
\title{Reinforcement Learning under Latent Dynamics:  \\ Toward Statistical and Algorithmic Modularity}
}

   \author{Philip Amortila%
\\
\normalsize
\href{mailto:philipa4@illinois.edu}{\texttt{philipa4@illinois.edu}}
\and
Dylan J. Foster
\\
\normalsize
\href{mailto:dylanfoster@microsoft.com}{\texttt{dylanfoster@microsoft.com}}
\and
Nan Jiang 
\\
\normalsize
\href{mailto:nanjiang@illinois.edu}{\texttt{nanjiang@illinois.edu}}
\and
Akshay Krishnamurthy
\\
\normalsize
\href{mailto:akshaykr@microsoft.com}{\texttt{akshaykr@microsoft.com}}
\and 
Zakaria Mhammedi
\\
\normalsize
\href{mailto:mhammedi@google.com}{\texttt{mhammedi@google.com}}
}
\date{}

\newcommand{\hatphi}{\widehat{\phi}}
\newcommand{\barphi}{\bar{\phi}}
\newcommand{\unif}{\texttt{Unif}}
\newcommand{\approxleq}{\lesssim}
\usepackage{algorithm}
\usepackage{algpseudocode}
\usepackage{longtable}%
\usepackage{tabularx}
\usepackage{pbox}
\usepackage{verbatim}
\usepackage{mathtools}
\usepackage[T1]{fontenc}
\usepackage[utf8]{inputenc}
\usepackage{amssymb}
\usepackage[normalem]{ulem}
\usepackage{soul}
\usepackage{bbm}
\usepackage{framed}
\usepackage{color}
\usepackage{amsmath}
\usepackage{amsthm}
\usepackage{thmtools, thm-restate}
\usepackage{amsmath}
\usepackage{amssymb}
\usepackage{cancel}
\usepackage{changepage}
\usepackage{bbm}
\usepackage{mathtools}
\usepackage{mathrsfs}
\usepackage{enumerate}
\usepackage{multicol}
\usepackage{upgreek}
\usepackage{bigints}
\usepackage{physics}
\usepackage{accents}
\usepackage[toc,page,title]{appendix}
\usepackage{bm}
\usepackage{enumitem}
\usepackage{booktabs}
\usepackage{graphics}
\usepackage{blindtext}
\usepackage{physics}
\setlength{\marginparwidth}{10ex} %
\usepackage{nicefrac}

\newcommand{\cA}{\mathcal{A}}

\newcommand{\cD}{\mathcal{D}}
\newcommand{\cE}{\mathcal{E}}
\newcommand{\cF}{\mathcal{F}}
\newcommand{\cG}{\mathcal{G}}

\newcommand{\cI}{\mathcal{I}}

\newcommand{\cL}{\mathcal{L}}
\newcommand{\cM}{\mathcal{M}}

\newcommand{\cO}{\mathcal{O}}

\newcommand{\cS}{\mathcal{S}}
\newcommand{\cT}{\mathcal{T}}

\newcommand{\cV}{\mathcal{V}}

\newcommand{\cX}{\mathcal{X}}
\newcommand{\cY}{\mathcal{Y}}
\newcommand{\cZ}{\mathcal{Z}}

\newcommand{\fF}{\mathfrak{F}}

\usepackage{empheq}
\usepackage[most]{tcolorbox}
\newtcbox{\mymath}[1][]{%
    nobeforeafter, math upper, tcbox raise base,
    enhanced, colframe=blue!30!black,
    colback=blue!30, boxrule=1pt,
    #1}

\DeclareMathOperator{\supp}{supp}

\newcommand{\tv}{\texttt{tv}}

\DeclareMathOperator{\poly}{\texttt{poly}}
       
   \newcommand{\clip}{\texttt{clip}_{[0,2]}}   
\newcommand{\Reg}{\texttt{Reg}}

\newcommand{\Regbase}{\texttt{Reg}_\texttt{base}}
\newcommand{\Riskbase}{\texttt{Risk}_\texttt{base}}

\newcommand{\Risk}{\texttt{Risk}}

\newcommand{\Riskstar}{\texttt{Risk}_{\star}}
\newcommand{\Riskobs}{\texttt{Risk}_{\obs}}

\newcommand{\class}{\texttt{class}}
\newcommand{\Regrep}{\texttt{Reg}_\class}
\newcommand{\Estrep}{\texttt{Est}_\class}

\newcommand{\Regsim}{\texttt{Reg}_\self}
\newcommand{\Regsimopt}{\texttt{Reg}_{\self;\opt}}

\newcommand{\Estsim}{\texttt{Est}_\self}
\newcommand{\Estsimopt}{\texttt{Est}_{\self;\opt}}

\definecolor{emerald}{rgb}{0.31, 0.78, 0.47}

\let\abs\undefined
\DeclarePairedDelimiter{\abs}{\lvert}{\rvert} %
\DeclarePairedDelimiter{\brk}{[}{]}
\DeclarePairedDelimiter{\crl}{\{}{\}}
\DeclarePairedDelimiter{\prn}{(}{)}
\DeclarePairedDelimiter{\nrm}{\|}{\|}
\DeclarePairedDelimiter{\tri}{\langle}{\rangle}
\DeclarePairedDelimiter{\dtri}{\llangle}{\rrangle}

\DeclareMathOperator{\En}{\mathbb{E}}

\DeclareMathOperator*{\argmin}{arg\,min} %
\DeclareMathOperator*{\argmax}{arg\,max}

\newcommand{\wt}[1]{\widetilde{#1}}
\newcommand{\wh}[1]{\widehat{#1}}
\newcommand{\wb}[1]{\widebar{#1}}

\def\ddefloop#1{\ifx\ddefloop#1\else\ddef{#1}\expandafter\ddefloop\fi}
\def\ddef#1{\expandafter\def\csname bb#1\endcsname{\ensuremath{\mathbb{#1}}}}
\ddefloop ABCDEFGHIJKLMNOPQRSTUVWXYZ\ddefloop
\def\ddefloop#1{\ifx\ddefloop#1\else\ddef{#1}\expandafter\ddefloop\fi}
\def\ddef#1{\expandafter\def\csname b#1\endcsname{\ensuremath{\mathbf{#1}}}}
\ddefloop ABCDEFGHIJKLMNOPQRSTUVWXYZ\ddefloop
\def\ddef#1{\expandafter\def\csname sf#1\endcsname{\ensuremath{\mathsf{#1}}}}
\ddefloop ABCDEFGHIJKLMNOPQRSTUVWXYZ\ddefloop
\def\ddef#1{\expandafter\def\csname c#1\endcsname{\ensuremath{\mathcal{#1}}}}
\ddefloop ABCDEFGHIJKLMNOPQRSTUVWXYZ\ddefloop
\def\ddef#1{\expandafter\def\csname h#1\endcsname{\ensuremath{\widehat{#1}}}}
\ddefloop ABCDEFGHIJKLMNOPQRSTUVWXYZ\ddefloop
\def\ddef#1{\expandafter\def\csname hc#1\endcsname{\ensuremath{\widehat{\mathcal{#1}}}}}
\ddefloop ABCDEFGHIJKLMNOPQRSTUVWXYZ\ddefloop
\def\ddef#1{\expandafter\def\csname t#1\endcsname{\ensuremath{\widetilde{#1}}}}
\ddefloop ABCDEFGHIJKLMNOPQRSTUVWXYZ\ddefloop
\def\ddef#1{\expandafter\def\csname tc#1\endcsname{\ensuremath{\widetilde{\mathcal{#1}}}}}
\ddefloop ABCDEFGHIJKLMNOPQRSTUVWXYZ\ddefloop
\def\ddefloop#1{\ifx\ddefloop#1\else\ddef{#1}\expandafter\ddefloop\fi}
\def\ddef#1{\expandafter\def\csname scr#1\endcsname{\ensuremath{\mathscr{#1}}}}
\ddefloop ABCDEFGHIJKLMNOPQRSTUVWXYZ\ddefloop

\newcommand{\pmo}{\crl*{\pm{}1}}

\newcommand{\veps}{\varepsilon}
\newcommand{\vepsinv}{\varepsilon^{-1}}
\newcommand{\logdelinv}{\log(\delta^{-1})}
\newcommand{\vphi}{\varphi}

\newcommand{\ldef}{\vcentcolon=}

\arxiv{
\usepackage[letterpaper, left=1in, right=1in, top=1in, bottom=1in]{geometry}
\PassOptionsToPackage{hypertexnames=false}{hyperref}  %
\usepackage{parskip}
\usepackage[colorlinks=true, linkcolor=blue!70!black, citecolor=blue!70!black,urlcolor=black,breaklinks=true]{hyperref}
}
\neurips{
\PassOptionsToPackage{hypertexnames=false}{hyperref}  %
\usepackage{parskip}
\usepackage[colorlinks=true, linkcolor=blue!70!black, citecolor=blue!70!black,urlcolor=black,breaklinks=true]{hyperref}
\PassOptionsToPackage{dvipsnames}{xcolor} 
}

\usepackage{microtype}
\usepackage{hhline}

\makeatletter
\newcommand{\neutralize}[1]{\expandafter\let\csname c@#1\endcsname\count@}
\makeatother

\usepackage{algorithm}

\usepackage{amsthm}
\usepackage{mathtools}
\usepackage{amsmath}
\usepackage{bbm}
\usepackage{amsfonts}
\usepackage{amssymb}

\usepackage{xpatch}

\usepackage{thmtools}
\usepackage{thm-restate}
\declaretheorem[name=Theorem,parent=section]{theorem}
\declaretheorem[name=Lemma,parent=section]{lemma}
\declaretheorem[name=Assumption, parent=section]{assumption}
\declaretheorem[name=Condition, parent=section]{condition}
\declaretheorem[name=Corollary, parent=section]{corollary}

\declaretheorem[name=Remark,style=definition, parent=section]{remark}
\declaretheorem[name=Proposition, parent=section]{proposition}

\makeatletter
  \renewenvironment{proof}[1][Proof]%
  {%
   \par\noindent{\bfseries\upshape {#1.}\ }%
  }%
  {\qed\newline}
  \makeatother

\theoremstyle{definition}  %

\theoremstyle{plain}
\newtheorem{definition}{Definition}[section]

\xpatchcmd{\proof}{\itshape}{\normalfont\proofnameformat}{}{}
\newcommand{\proofnameformat}{\bfseries}

\usepackage[nameinlink,capitalize]{cleveref}

\newcommand{\pref}[1]{\cref{#1}}
\newcommand{\pfref}[1]{Proof of \pref{#1}}

\renewcommand{\eqref}[1]{\texorpdfstring{\hyperref[#1]{Eq. (\ref*{#1})}}{Eq. (\ref*{#1})}}

\crefformat{equation}{#2(#1)#3}
\Crefformat{equation}{#2(#1)#3}

\Crefformat{figure}{#2Figure #1#3}
\Crefname{assumption}{Assumption}{Assumptions}
\Crefformat{assumption}{#2Assumption #1#3}
\Crefname{subsubsection}{Section}{Sections}
\crefformat{subsubsection}{#2Section #1#3}
\Crefformat{subsubsection}{#2Section #1#3}

\usepackage{crossreftools}
\pdfstringdefDisableCommands{%
    \let\Cref\crtCref
    \let\cref\crtcref
}

\usepackage{xparse}

\ExplSyntaxOn
\DeclareDocumentCommand{\XDeclarePairedDelimiter}{mm}
 {
  \__egreg_delimiter_clear_keys: %
  \keys_set:nn { egreg/delimiters } { #2 }
  \use:x %
   {
    \exp_not:n {\NewDocumentCommand{#1}{sO{}m} }
     {
      \exp_not:n { \IfBooleanTF{##1} }
       {
        \exp_not:N \egreg_paired_delimiter_expand:nnnn
         { \exp_not:V \l_egreg_delimiter_left_tl }
         { \exp_not:V \l_egreg_delimiter_right_tl }
         { \exp_not:n { ##3 } }
         { \exp_not:V \l_egreg_delimiter_subscript_tl }
       }
       {
        \exp_not:N \egreg_paired_delimiter_fixed:nnnnn 
         { \exp_not:n { ##2 } }
         { \exp_not:V \l_egreg_delimiter_left_tl }
         { \exp_not:V \l_egreg_delimiter_right_tl }
         { \exp_not:n { ##3 } }
         { \exp_not:V \l_egreg_delimiter_subscript_tl }
       }
     }
   }
 }

\keys_define:nn { egreg/delimiters }
 {
  left      .tl_set:N = \l_egreg_delimiter_left_tl,
  right     .tl_set:N = \l_egreg_delimiter_right_tl,
  subscript .tl_set:N = \l_egreg_delimiter_subscript_tl,
 }

\cs_new_protected:Npn \__egreg_delimiter_clear_keys:
 {
  \keys_set:nn { egreg/delimiters } { left=.,right=.,subscript={} }
 }

\cs_new_protected:Npn \egreg_paired_delimiter_expand:nnnn #1 #2 #3 #4
 {%
  \mathopen{}
  \mathclose\c_group_begin_token
   \left#1
   #3
   \group_insert_after:N \c_group_end_token
   \right#2
   \tl_if_empty:nF {#4} { \c_math_subscript_token {#4} }
 }
\cs_new_protected:Npn \egreg_paired_delimiter_fixed:nnnnn #1 #2 #3 #4 #5
 {
  \mathopen{#1#2}#4\mathclose{#1#3}
  \tl_if_empty:nF {#5} { \c_math_subscript_token {#5} }
 }
\ExplSyntaxOff

\XDeclarePairedDelimiter{\supnorm}{
  left=\lVert,
  right=\rVert,
  subscript=\infty
  }

\input{widebar.tex}

\usepackage{enumitem}
\neurips{
\setlist[enumerate]{leftmargin=*}
\setlist[itemize]{leftmargin=*}
}

\usepackage[suppress]{color-edits}
\addauthor{pa}{cyan}

\definecolor{ForestGreen}{RGB}{34,139,34}
\addauthor{df}{ForestGreen}

\addauthor{ak}{orange}

\addauthor{zm}{red}

\addauthor{cut}{purple}

\renewcommand{\cutedit}[1]{}

\definecolor{bananayellow}{rgb}{1.0, 0.65, 0.0}

\parindent=0pt

\newcommand{\pamodel}{pushforward model\xspace}

\newcommand{\rhs}{right-hand side\xspace}
\newcommand{\lhs}{left-hand side\xspace}

\newcommand{\filt}{\mathscr{F}}

\newcommand{\bigoh}{O}
\newcommand{\bigoht}{\wt{O}}

\newcommand{\Ccov}{C_\texttt{cov}}
\newcommand{\Ccovs}{C_{\texttt{cov},\texttt{st}}}

\newcommand{\mustar}{\mu^\star} %

\newcommand{\mustarh}{\mu^\star_h}

\newcommand{\PiRNS}{\Pi_{\texttt{rns}}}

\newcommand{\Mstar}{M^{\star}}
\newcommand{\Pstar}{P^\star}
\newcommand{\Rstar}{R^\star}
\newcommand{\Pstarh}{P^\star_h}

\newcommand{\phistar}{\phi^\star}
\newcommand{\phistarh}{\phi^\star_h}
\newcommand{\phih}{\phi_h}
\newcommand{\psihpo}{\psi_{h+1}}
\newcommand{\psistar}{\psi^\star}
\newcommand{\psistarh}{\psi^\star_h}
\newcommand{\psistarhpo}{\psi^\star_{h+1}}
\newcommand{\psiinv}{\psi^{-1}}
\newcommand{\psiinvh}{\psi^{-1}_h}
\newcommand{\gammainv}{\gamma^{-1}}
\newcommand{\deltainv}{\delta^{-1}}

\newcommand{\obs}{\texttt{obs}}
\newcommand{\lat}{\texttt{lat}}

\newcommand{\Mlath}{M_{\lat,h}}

\newcommand{\Tlat}{\cT_{\lat}}
\newcommand{\Tlath}{\cT_{\lat,h}}

\newcommand{\cFalg}{\cF_{\texttt{alg}}}
\newcommand{\cGalg}{\cG_{\texttt{alg}}}

\newcommand{\dlath}{d_{\lat,h}}

\newcommand{\dlathmo}{d_{\lat,h-1}}

\newcommand{\Mstarobs}{M^\star_{\obs}}
\newcommand{\Pstarobs}{P^\star_{\obs}}
\newcommand{\Pstarobsh}{P^\star_{\obs,h}}
\newcommand{\Mstarobsh}{M^\star_{\obs,h}}

\newcommand{\rstarobs}{r^\star_{\obs}}

\newcommand{\Rstarobsh}{R^\star_{\obs,h}}
\newcommand{\Mstarlat}{M^\star_{\lat}}

\newcommand{\Pstarlath}{P^\star_{\lat,h}}
\newcommand{\Mstarlath}{M^\star_{\lat,h}}

\newcommand{\rstarlat}{r^\star_{\lat}}

\newcommand{\Qstarobsh}{Q^\star_{\obs,h}}

\newcommand{\Qstarlath}{Q^\star_{\lat,h}}
\newcommand{\Qstarlathpo}{Q^\star_{\lat,h+1}}

\newcommand{\QMstarlat}{Q^{\sMlat,\star}}

\newcommand{\Qstarlat}{Q^\star_{\lat}}
\newcommand{\Vstarlat}{V^\star_{\lat}}
\newcommand{\Qstar}{Q^\star}
\newcommand{\Vstar}{V^\star}

\newcommand{\Mobs}{M_{\obs}}
\newcommand{\Pobs}{P_{\obs}}
\newcommand{\Pobsh}{P_{\obs,h}}

\newcommand{\Robsh}{R_{\obs,h}}
\newcommand{\Mlat}{M_{\lat}}

\newcommand{\sMlat}{\sss{\Mlat}}

\newcommand{\sMstarobs}{\sss{M^\star_{\obs}}}
\newcommand{\Mobsstar}{M^\star_{\obs}}
\newcommand{\Plat}{P_{\lat}}
\newcommand{\Plath}{P_{\lat,h}}
\newcommand{\Rlat}{R_{\lat}}
\newcommand{\rlat}{r_{\lat}}
\newcommand{\rlath}{r_{\lat,h}}
\newcommand{\Rlath}{R_{\lat,h}}

\newcommand{\piobs}{{\pi_{\obs}}}
\newcommand{\pistarobs}{{\pi^\star_{\obs}}}

\newcommand{\pilat}{{\pi_{\lat}}}

\newcommand{\Var}{\texttt{Var}}

\newcommand{\Pilat}{\Pi_{\lat}} %

\newcommand{\sinit}{s_{\mathsf{root}}}%
\newcommand{\sfin}{s_{\mathsf{leaf}}}%
\newcommand{\comp}{\texttt{comp}}
\newcommand{\Mbar}{\wb{M}}
\newcommand{\Mbarlat}{\Mbar_{\lat}}%
\newcommand{\Mbarlatent}{\Mbar_{\lat}}%
\newcommand{\dec}{\mathsf{dec}_{\veps}}%
\newcommand{\Jm}{J^{\sM}}%
\newcommand{\Dhels}[2]{D^{2}_{\mathsf{H}}\prn*{#1,#2}}

\newcommand{\pim}[1][M]{\pi_{\sss{#1}}}
\newcommand{\id}{\texttt{id}}%

\newcommand{\cMlatphi}{\dtri*{\cMlat,\Phi}}

\newcommand{\Cpush}{C_{\texttt{push}}}
\newcommand{\vepsapx}{\veps_{\mathsf{apx}}}
\newcommand{\vepsrep}{\veps_{\mathsf{rep}}}

\newcommand{\hphi}{\Gamma_\phi}
\newcommand{\hPhi}{\Gamma_\Phi}

\newcommand{\hphih}{\Gamma_{\phi,h}}
\newcommand{\hphihpo}{\Gamma_{\phi,h+1}}

\newcommand{\phihat}{\wh{\phi}}
\renewcommand{\phistar}{\phi^{\star}}

\newcommand{\alg}{{\normalfont \textsc{Alg}\xspace}}

\newcommand{\self}{\texttt{self}}

\newcommand{\opt}{\texttt{opt}}

\newcommand{\obslatreduction}{\textsc{O2L}\xspace}
\newcommand{\olr}{\obslatreduction}
\newcommand{\GenReplearn}{\textsc{RepLearn}\xspace}
\newcommand{\OptReplearn}{\textnormal{\textsc{Rep$_{\self;\opt}$}}\xspace}
\newcommand{\Hsightlearn}{\textnormal{\textsc{Rep$_\class$}}\xspace}
\newcommand{\Alglat}{\textnormal{\textsc{Alg$_\lat$}}\xspace}

\newcommand{\Algsim}{\OptReplearn}
\newcommand{\alglat}{\Alglat}

\newcommand{\Golf}{\textsc{Golf}\xspace}

\newcommand{\ExpWeights}{\textnormal{\textsc{ExpWeights.Dr}}\xspace}
\newcommand{\SelfPred}{\textnormal{\textsc{SelfPredict.Opt}}\xspace}

\newcommand{\algcommentlight}[1]{\textcolor{blue!70!black}{\transparent{0.5}\scriptsize{\texttt{\textbf{//\hspace{2pt}#1}}}}}

\newcommand{\cMlat}{\cM_\lat}
\newcommand{\cLlat}{\cL_\lat}

\newcommand{\cFlat}{\cF_\lat}

\newcommand{\wtM}{\widetilde{M}}
\newcommand{\wtP}{\widetilde{P}}
\newcommand{\wtE}{\widetilde{\En}}
\newcommand{\wtr}{\tilde{r}}

\newcommand{\wtd}{\tilde{d}}
\newcommand{\wtT}{\widetilde{\cT}}
\newcommand{\Mphi}{\wt{M}^\star_\phi}
\newcommand{\Mphih}{\wt{M}^\star_{\phi,h}}

\newcommand{\Ephipi}[1]{\wt{\En}^{\pi\ind{#1}}_\phi}
\newcommand{\Ephipilat}{\wt{\En}^{\pilat}_\phi}
\newcommand{\Tphipilath}{\widetilde{\cT}^{\pilat}_{\phi,h}}
\newcommand{\Tphipih}[1]{\widetilde{\cT}^{\pi\ind{#1}}_{\phi,h}}
\newcommand{\rphipih}{\tilde{r}^{\pilat}_{\phi,h}}
\newcommand{\Pphipih}{\wt{P}^{\pi_\lat}_{\phi,h}}

\newcommand{\Pphipilat}{\wt{P}^{\pilat}_{\phi}}
\newcommand{\Tphipikh}{\widetilde{\cT}^{\pi\ind{k}}_{\phi,h}}

\newcommand{\Pphipikh}{\wt{P}^{\pi\ind{k}}_{\phi,h}}

\newcommand{\Mphipih}{\wt{M}^{\star,\pilat}_{\phi,h}}

\newcommand{\Crobust}{\texttt{CorruptionRobust}\xspace}
\newcommand{\Crobustness}{\texttt{CorruptionRobustness}\xspace}
\newcommand{\compressedPOMDP}{$\phi$-compressed POMDP}

\newcommand{\ind}[1]{^{\scriptscriptstyle{(#1)}}}
\newcommand{\sub}[1]{_{\scriptscriptstyle{#1}}}

\newcommand{\indic}{\mathbb{I}}

\newcommand{\pistar}{\pi^\star}
\newcommand{\pihat}{\wh{\pi}}
\newcommand{\hatpi}{\wh{\pi}}
\newcommand{\hatpilat}{\wh{\pi}_\lat}

\newcommand{\sss}[1]{{\scriptscriptstyle#1}}
\newcommand{\sM}{\sss{M}}
\newcommand{\sMstar}{\sss{\Mstar}}
\newcommand{\sMbar}{\sss{\Mbar}}

\begin{document}

\maketitle

\begin{abstract}
  Real-world applications of reinforcement learning often involve
  environments where agents operate on complex, high-dimensional observations, but the underlying (``latent'')
  dynamics are comparatively simple. However, outside of restrictive settings such as small latent spaces, %
  the fundamental statistical requirements and algorithmic principles
  for reinforcement learning under latent dynamics are poorly
  understood.\loose

  This paper addresses the question of reinforcement learning under
  \emph{general} latent dynamics from a
  statistical and algorithmic perspective.
  On the statistical side, our main negative
result shows that \emph{most} well-studied settings for reinforcement learning \paedit{with function approximation}
become intractable when composed with rich observations; we complement
this with a positive result, identifying \emph{latent pushforward coverability} as a
general condition that enables statistical tractability. Algorithmically,
we develop provably efficient \emph{observable-to-latent} reductions\dfedit{---that is, reductions that transform an arbitrary algorithm for the
  latent MDP into an algorithm that can operate on rich observations---}in two settings: one where the agent has access to hindsight
observations of the latent dynamics \citep{lee2023learning}, and one
where the agent can %
estimate
\paedit{\emph{self-predictive} latent models}
\citep{schwarzer2020data}. Together, our results \dfedit{serve as a
  first step toward}
a unified statistical and algorithmic theory for
reinforcement learning under latent dynamics.

\end{abstract}

\section{Introduction}\label{sec:intro}

Many application domains for reinforcement learning (RL) require the agent to operate on rich, high-dimensional observations of the environment, such as images or text \citep{wahlstrom2015pixels,levine2016end,kumar2021rma,nair2022r3m,baker2022video,brohan2022rt}. However, the environment itself can often be summarized by \emph{latent dynamics} for a low-dimensional or otherwise simple latent state space. The decoupling of latent dynamics from the complex observation process naturally suggests a \emph{modular} framework for algorithm design: first learn a representation that \emph{decodes} the latent state from observations, then apply a reinforcement learning algorithm \paedit{for the latent dynamics} on top of the learned representation. This paper investigates the algorithmic and statistical foundations of this framework. \dfedit{We ask: \emph{Can we take existing algorithms and sample complexity guarantees for reinforcement learning in the latent state space and lift them to the observation space in a modular fashion?}}\loose

There is a growing body of theoretical and empirical work
developing algorithms that combine representation learning and
reinforcement learning to develop scalable algorithms.
On the empirical side, a plethora of representation learning objectives have been deployed to varying degrees of success \citep{pathak2017curiosity,tang2017exploration,zhang2020learning,laskin2020curl,yarats2021image,lamb2022guaranteed,guo2022byol,hafner2023mastering}, but we lack a mathematical framework to systematically compare these objectives and understand when one might
be preferred to another. On the theoretical side, all existing approaches suffer
from three primary drawbacks: (a) they are tailored to restricted
classes of latent dynamics models (tabular MDPs \citep{krishnamurthy2016pac,du2019latent,misra2020kinematic,zhang2022efficient,mhammedi2023representation}, LQR \citep{dean2020certainty,mhammedi2020learning}, or factored MDPs \citep{misra2021provable}),
limiting generality; (b) the analyses, despite focusing on restrictive settings,
are unwieldy, limiting progress in algorithm development; and (c) they
are not \emph{modular}, in the sense that the representation learning
procedures are specialized to specific choices of latent reinforcement
learning algorithm, limiting ease of use.

\subsection{Contributions}

We address the aforementioned limitations by introducing a new framework, \emph{reinforcement learning under general latent dynamics}.

\paragraph{Reinforcement learning under general latent dynamics (\cref{sec:background}).}
In our framework, the agent performs control based on high-dimensional observations, but the dynamics of the environment are governed by an unobserved latent state space. Following prior work (particularly the so-called Block MDP formulation \citep{du2019latent}), we assume that the latent states can be \emph{uniquely decoded from observations}, but that the true decoder is unknown and must be learned. To aid in the decoding process, we supply the learner with a class of representations that is \emph{realizable} in the sense that it is powerful enough to represent the true decoder. Our point of departure from prior theoretical works is that we do not assume specific structure (e.g., tabular or linear dynamics) on the Markov decision process (MDP) that governs the latent dynamics. Instead, we make the minimal assumption that \paedit{the latent dynamics belong to a \emph{base MDP class which is statistically tractable}, in the sense that when the latent states are directly observed there exists \emph{some} reinforcement learning algorithm with low sample complexity that is capable of learning a near-optimal policy for every MDP in the class.} We take the first steps toward building a unified and modular theory for reinforcement learning in this setting.

\paragraph{Contributions: Statistical modularity (\cref{sec:statistical-results}).} %
A central consideration for reinforcement learning under latent dynamics is that representation learning and exploration must be intertwined: an accurate decoder is required to explore the latent state space, but exploration is required to learn an accurate decoder.
To develop provable sample complexity guarantees, one must prevent errors from compounding during this interleaving process, a challenging statistical problem which prior work addresses through strong structural assumptions on the base MDP \citep{krishnamurthy2016pac,du2019latent,misra2020kinematic,zhang2022efficient,mhammedi2023representation,dean2020certainty,mhammedi2020learning,misra2021provable}. For the general latent-dynamics setting we consider, it is unclear whether similar techniques can be applied, or whether the setting is even statistically tractable, ignoring computational considerations. Thus, our first contribution considers the question of \emph{statistical modularity}:\footnote{This question and associated definitions are restated formally in \cref{sec:statistical-modular-defn}.}
\begin{center}
\emph{If a \dfedit{base MDP} class is tractable when observed directly, is the corresponding latent-dynamics problem tractable?}
\end{center}

Statistical modularity adopts a minimax perspective by assuming that the base MDP lies in a given class, and demands that the sample complexity of the latent-dynamics setting is controlled by a natural bound on the sample complexity of the base MDP class. 
We show, perhaps surprisingly, that \emph{most} well-studied reinforcement learning settings involving function approximation \citep{li2009unifying,russo2013eluder,jiang2017contextual,sun2019model,modi2020sample,dong2019provably,ayoub2020model,wang2020provably,zhou2021nearly,weisz2021exponential, weisz2021query,du2021bilinear,jin2021bellman,foster2021statistical} do not admit statistical modularity (\cref{thm:stochastic-tree-lb}). In other words, \emph{statistical tractability of an MDP class does not extend to the latent-dynamics setting}.
We complement these negative findings with a positive result, identifying \emph{pushforward coverability} as a general structural condition on the latent dynamics that enables sample efficiency (\cref{thm:pushforwardgolf}).\loose
\akdelete{We adopt a minimax perspective by assuming that the base MDP lies in a given class and study two precise notions of statistical modularity.
First, under \emph{strong statistical modularity}, we demand that the sample complexity of the latent-dynamics setting is controlled by the \emph{minimax/optimal} sample complexity of the base MDP class.
We show, perhaps surprisingly, that \emph{most} well-studied RL settings involving function approximation \citep{russo2013eluder,jiang2017contextual,sun2019model,modi2020sample,ayoub2020model,li2009unifying,dong2019provably,wang2020provably,zhou2021nearly,du2021bilinear,jin2021bellman,foster2021statistical} do not admit strong statistical modularity (\cref{thm:stochastic-tree-lb}). In other words, \emph{statistical tractability of an MDP class does not extend to the latent-dynamics setting}.
Next, in \emph{weak statistical modularity}, we demand that sample complexity of the latent-dynamics setting is controlled by an \emph{upper bound} on that of the base MDP class.
Here, we obtain a positive result, identifying \emph{pushforward coverability} as a general structural parameter on the latent dynamics that enables sample efficiency (\cref{thm:pushforwardgolf}).
To our knowledge, this provides the most general known setting for which RL under latent dynamics is statistically tractable.}

\paragraph{Contributions: Algorithmic modularity (\cref{sec:algorithmic-results}).}
Beyond developing a modular understanding of the statistical landscape, we investigate \emph{modular algorithm design principles} for RL under general latent
dynamics. Specifically, we consider the question of \emph{observable-to-latent
reductions}, whereby RL under latent dynamics can be reduced to the simpler problem of RL with latent states directly observed: 
\begin{center}
\emph{Can we generically lift algorithms for a \dfedit{base MDP} class to solve the corresponding latent-dynamics problem?}
\end{center}

This property, which we refer to as \textit{algorithmic modularity}, enables modular, greatly simplified
algorithm design, allowing one to use an arbitrary base algorithm for the base MDP class to solve the corresponding latent-dynamics problem. Algorithmic modularity is a stronger property than mere statistical
modularity, and thus is subject to our statistical lower bound.
Accordingly, we consider two settings
that sidestep the lower bound through additional feedback and modeling assumptions.  Our first algorithmic result considers \emph{hindsight observability} \citep{lee2023learning},
where latent states are revealed during training, but not at
deployment (\cref{thm:hindsightreduction}). Our second considers stronger function
approximation conditions that enable the estimation of \emph{self-predictive latent models} \citep{schwarzer2020data} through representation learning (\cref{thm:online-reduction-main}). Both results are \emph{fully modular}: they
transform \emph{any} sample-efficient algorithm for the base MDP class into a sample-efficient algorithm for the latent-dynamics
setting. \akedit{Thus, they constitute the first \emph{general-purpose}
algorithms for RL under latent dynamics.}

Together, we believe our results can serve as a foundation for further development of practical, general-purpose algorithms for RL under latent dynamics. To this end, we highlight a number of fascinating and challenging open problems for future research (\cref{sec:discussion}).

\section{Reinforcement Learning under General Latent Dynamics}\label{sec:background}

In this section we formally introduce our framework, \emph{reinforcement
learning under general latent dynamics.}\loose

\arxiv{\subsection{MDP preliminaries}}
\neurips{\paragraph{MDP preliminaries.}}

 We consider an episodic finite-horizon online reinforcement
  learning setting. With $H$ denoting the horizon, a Markov decision process (MDP) $\Mstar=\crl*{\cX, \cA,
  \crl{\Pstar_h}_{h=0}^{H},\crl{\Rstar_h}_{h=1}^{H}, H}$ consists of a
  state space $\cX$, an action space $\cA$, a reward distribution $\Rstar_h:\cX\times\cA\to\Delta([0,1])$ (with expectation $r^\star_h(x,a)$), and a transition kernel
  $\Pstar_h:\cX\times\cA\to\Delta(\cX)$ (with the convention that
  $\Pstar_0(\cdot\mid{}\emptyset)$ is the initial state distribution).\neurips{\footnote{To simplify presentation, we assume that $\cX$ and $\cA$ are
countable; our results extend to handle continuous variables with an appropriate measure-theoretic
treatment.}} \arxiv{

}
  At the beginning of the episode, the learner selects a
  randomized, non-stationary \emph{policy}
  $\pi=(\pi_1,\ldots,\pi_H)$, where $\pi_h:\cX\to\Delta(\cA)$; we let
  $\PiRNS$ denote the set of all such policies.
  The episode evolves through the following process; beginning from
  $x_1\sim{}\Pstar_0(\cdot\mid{}\emptyset)$, the MDP
  generates a trajectory $(x_1,a_1,r_1),\ldots,(x_H,a_H,r_H)$ via
  $a_h\sim\pi_h(x_h)$,   $r_h\sim\Rstar_h(x_h,a_h)$, and
  $x_{h+1}\sim{}\Pstarh(\cdot\mid{}x_h,a_h)$. We let
  $\bbP^{\sMstar,\pi}$ denote the law under this process, and let
  $\En^{\sMstar,\pi}$ denote the corresponding expectation, and likewise let $\bbP^{\sM,\pi}$ and $\En^{\sM,\pi}$ denote the analogous laws and expectations in another MDP $M$.  \neurips{We assume that $\sum_{h=1}^H r_h \in [0,1]$ almost surely for any trajectory in $\Mstar$.}
  
  For a policy \(\pi\) and MDP $M$, the expected reward for $\pi$ is given by $J^{\sM}(\pi) \coloneqq \bbE^{\sM,\pi}\brk[\big]{ \sum_{h=1}^H r_h}$, and the value functions are given by 
\arxiv{
\[
V_h^{\sM,\pi}(x) \coloneqq \bbE^{\sM,\pi}\brk*{\sum_{h'=h}^H r_{h'} \mid x_h=x}, \quad \text{ and } \quad Q_h^{\sM,\pi}(x,a) \coloneqq \bbE^{\sM,\pi}\brk*{\sum_{h'=h}^H r_{h'} \mid x_h=x,a_h=a}.
\]
}
\neurips{
$
V_h^{\sM,\pi}(x) \coloneqq \bbE^{\sM,\pi}\brk[\big]{\sum_{h'=h}^H r_{h'} \mid x_h=x}$, and $Q_h^{\sM,\pi}(x,a) \coloneqq \bbE^{\sM,\pi}\brk[\big]{\sum_{h'=h}^H r_{h'} \mid x_h=x,a_h=a}.
$
}
We let $\pi_{\sM} = \{\pi_{\sM,h}\}_{h=1}^H$ denote an optimal deterministic policy of $M$, which maximizes $V^{\sM,\pi}$ (over $\pi$) at all states (and in particular, satisfies $\pi_{\sM}\in\argmax_{\pi\in\PiRNS}J^{\sM}(\pi)$), and write $Q^{\sM,\star} \coloneqq Q^{\sM,\pi_M}$. For $f: \cX \times \cA \rightarrow \bbR$, we write $\pi_f(x) \coloneqq \argmax_a f(x,a)$ as well as $V_f(x) = \max_a f(x,a)$. For MDP $M$, horizon $h \in [H]$, and $g: \cX \rightarrow \bbR$, we let $\cT^\sM_h$ denote the Bellman (optimality) operator defined via 
	\arxiv{
	\[
		[\cT^{\sM}_h g](x,a) = \En^\sM\brk*{r_h+g(x_{h+1})\mid{}x_h=x,a_h=a},
	\]
	}
		\neurips{
	$
		[\cT^{\sM}_h g](x,a) = \En^\sM\brk*{r_h+g(x_{h+1})\mid{}x_h=x,a_h=a},
	$
	}
	and we overload notation by letting $[\cT^{\sM}_h f](x,a) = \En^\sM\brk*{r_h+V_f(x_{h+1})\mid{}x_h=x,a_h=a}$. %
	We also let $\cT^{\sM,\pi}_h$ denote the Bellman \emph{evaluation} operator defined via
	\arxiv{
	\[
		[\cT^{\sM,\pi}_h f](x,a) = \En^\sM\brk*{r_h+ \En_{a' \sim \pi_{h+1}(\cdot \mid x_{h+1})}\brk{f(x_{h+1},a')}\mid{}x_h=x,a_h=a},
	\]
	}
	\neurips{
	$
		[\cT^{\sM,\pi}_h f](x,a) = \En^\sM\brk*{r_h+ \En_{a' \sim \pi_{h+1}(\cdot \mid x_{h+1})}\brk{f(x_{h+1},a')}\mid{}x_h=x,a_h=a},
	$
	}
	for any $\pi \in \PiRNS$.
        We define the \arxiv{induced}
\emph{occupancy measures} for layer $h$ via \arxiv{
\[
d_h^{\sM,\pi}(x) = \bbP^{\sM,\pi}\brk{x_h=x} \quad \& \quad d_h^{\sM,\pi}(x,a) = \bbP^{\sM,\pi}\brk{x_h=x,a_h=a}.
\]
}
\neurips{
$
d_h^{\sM,\pi}(x) = \bbP^{\sM,\pi}\brk{x_h=x}$ and $d_h^{\sM,\pi}(x,a) = \bbP^{\sM,\pi}\brk{x_h=x,a_h=a}
$.\loose
}

\paragraph{Online reinforcement learning.} In online reinforcement
learning, the learning algorithm $\alg$ repeatedly interacts with an unknown MDP $\Mstar$ by executing a
policy and observing the resulting trajectory. After $T$ rounds of
interaction, the algorithm outputs a final policy $\pihat$, with the
goal of minimizing their \emph{risk}, defined via \neurips{$\Risk(T,\alg,\Mstar) \coloneqq J^\sMstar(\pi_\sMstar) - J^\sMstar(\hatpi)$.} \arxiv{\begin{equation}\label{eq:risk-alg}
	\Risk(T,\alg,\Mstar) \coloneqq J^\sMstar(\pi_\sMstar) - J^\sMstar(\hatpi).
	\end{equation}
	}

\arxiv{\paragraph{Additional assumptions.} To simplify presentation, we assume that $\cX$ and $\cA$ are
countable; we expect that our results extend to continuous variables with an appropriate measure-theoretic
treatment. We assume that $\sum_{h=1}^H r_h \in [0,1]$ almost surely for any trajectory in $\Mstar$. }

\arxiv{\subsection{Framework: Reinforcement learning under general latent dynamics}\label{sec:romdp-definition}}
\neurips{\paragraph{Framework: Reinforcement learning under general latent dynamics.}}
In \emph{reinforcement learning under general latent dynamics}, we consider MDPs $\Mstar$ where the dynamics
are governed by the evolution of an unobserved \emph{latent state}
$s_h$, while the agent observes and acts on \emph{observations} $x_h$ generated
from these latent states. Formally, a \emph{latent-dynamics MDP}
consists of two ingredients: a \emph{base MDP} $\Mlat =
  \{\cS,\cA,\{\Plath\}_{h=0}^H,\{\Rlath\}_{h=1}^H,H\}$ defined over
a \emph{latent state space} $\cS$, and a \emph{decodable emission
  process} $\psi := \{ \psi_h : \cS \to \Delta(\cX) \}_{h=1}^H$,
which maps each latent state to a distribution over observations. The former is
an arbitrary MDP defined over $\cS$, while the latter is defined as follows.\loose
\begin{definition}[Emission process]
An \emph{emission process} is any function $\psi := \{ \psi_h : \cS
\to \Delta(\cX) \}_{h=1}^H$, and  is said to be \emph{decodable} if
\begin{align}
\label{eq:decodability}
     \forall h, \forall s' \neq s\in\cS: \quad    \supp \psi_h(s)\cap\supp \psi_h(s')=\emptyset. \quad.
\end{align}
When $\psi=\{\psi_h\}_{h=1}^H$ is decodable, we let $\psi^{-1} :=
\{\psi_h^{-1}: \cX \to \cS\}_{h=1}^H$ denote the associated
decoder.\loose %
\end{definition}
\akedit{With this, we can formally introduce the notion of a latent-dynamics MDP.}
\begin{definition}[Latent-dynamics MDP]
  For a \emph{base MDP} $\Mlat =
  \{\cS,\cA,\{\Plath\}_{h=0}^H,\{\Rlath\}_{h=1}^H,H\}$, and 
  a decodable emission process $\psi$, the \emph{latent-dynamics MDP}
  $\dtri*{\Mlat,\psi}\ldef\crl*{\cX,\cA,\crl{\Pobsh}_{h=0}^{H},\crl{\Robsh}_{h=1}^{H},H}$
  is defined as the MDP where %
  the latent dynamics evolve based on the agent's action $a_h \in \cA$ via the process %
  	$s_{h+1} \sim \Plath(s_h,a_h)$ and $r_h \sim \Rlath(s_h,a_h)$. The latent state is not observed directly, and instead the agent observes $x_h \in \cX$ generated by the emission process %
  $x_{h} \sim \psihpo(s_{h})$.\footnote{Equivalently the dynamics can be described via $\Robsh(x_h,a_h) = \Rlat(\psiinvh(x_h),a_h)$ and $\Pobsh(x_{h+1} \mid x_h, a_h) = \Plath(\psiinv_{h+1}(x_{h+1}) \mid \psiinvh(x_h), a_h) \cdot \psi_{h+1}(x_{h+1} \mid \psiinv_{h+1}(x_{h+1}))$.
    	}
  \end{definition}

\akedit{%
   Note that under these dynamics, the decoder $\psi^{-1}$ associated with
  $\psi$ ensures that $\psi_h^{-1}(x_h) = s_h$ almost surely for all
  $h \in [H]$. That is, the latent states can be uniquely decoded from
  the observations.} To emphasize the distinction between the
  latent-dynamics MDP $\dtri*{\Mlat,\psi}$ (which operates on the
  observable state space $\cX$) and the MDP $\Mlat$ (which operates on
  the latent state space $\cS$), we refer to the latter as a \emph{base MDP} rather than, for example, a ``latent MDP'', and apply a
  similar convention to other latent objects whenever possible.\footnote{For example, in \cref{sec:algorithmic-results} we will be concerned with
  reductions from observation-space algorithms to ``base algorithms'' that operate on the latent state space.}

Departing from prior work, we do not place any inherent restrictions on the base MDP, and in particular do not assume
that the latent space is small (i.e., tabular). Rather, we aim to
understand---in a unified fashion---what structural assumptions on the
base MDP $\Mlat$ are required to enable learnability under latent
dynamics. To this end, it will be useful to considers specific
\emph{classes} (i.e., subsets) of base MDPs $\cMlat$ and the classes of latent-dynamics MDPs they induce.\loose
    \begin{definition}[Latent-dynamics MDP class]
    Given a set of base MDPs $\cMlat$ and a set of decoders $\Phi \subset \crl*{\cX \rightarrow \cS}$, we let 
 \begin{align}
   \dtri*{\cMlat, \Phi} \coloneqq \{ \dtri*{\Mlat, \psi}: \Mlat
   \in \cMlat, \dfedit{\text{$\psi$ is decodable},}\ \psiinv \in \Phi\}
 \end{align}    
  denote the class of induced latent-dynamics MDPs.%
  \end{definition}
Stated another way, $\dtri*{\cMlat, \Phi}$ is the set of all latent-dynamics MDPs $\dtri*{\Mlat,\psi}$ where (i) the base MDP $\Mlat$ lies in $\cMlat$, and
(ii), the emission process $\psi$ is decodable, with the corresponding decoder
belonging to $\Phi$. The class $\cMlat$ represents our prior knowledge
about the underlying MDP $\Mlat$; concrete classes considered in prior
work include tabular MDPs
\citep{krishnamurthy2016pac,du2019latent,misra2020kinematic,zhang2022efficient,mhammedi2023representation}, linear dynamical systems
\citep{dean2019robust,dean2020certainty,mhammedi2020learning}, and
factored MDPs \citep{misra2021provable}. In particular, the class
$\cMlat$ may itself warrant using function approximation. At the same time, the class $\Phi$ represents our prior knowledge or inductive bias about the emission process, enabling representation learning. In what follows,
we investigate what conditions on $\cMlat$ make the induced class
$\dtri*{\cMlat, \Phi}$ tractable, both statistically (statistical modularity; \cref{sec:statistical-results}) and via reduction (algorithmic modularity; \cref{sec:algorithmic-results}).\loose %

\cutedit{\paragraph{Further latent notations} See appendix. }

\section{Statistical Modularity: Positive and Negative Results}\label{sec:statistical-results}

This section presents our main statistical results. We begin by
formally defining the notion of statistical modularity introduced in
\cref{sec:intro}, present our main impossibility result (lower bound) and its implications
(\cref{sec:statistical-lower}), then give positive results for the general class of \emph{pushforward-coverable} MDPs (\cref{sec:statistical-results-positive}).

\subsection{Statistical modularity: A formal definition}\label{sec:statistical-modular-defn}

We first define the
\emph{statistical complexity} for a MDP class (or, \emph{model class})
$\cM$.

\begin{definition}[Statistical complexity]%
\neurips{
We say that an MDP class $\cM$ \emph{can be learned up to $\varepsilon$-optimality using $\comp(\cM,\varepsilon,\delta)$ samples} if there exists an algorithm $\alg$ which attains $\Risk(T,\alg,\cM) \coloneqq \max_{M \in \cM}\Risk(T,\alg,M) \leq \varepsilon$ 
	with probability at least $1-\delta$, after at most $T = \comp(\cM,\varepsilon,\delta)$ rounds of online interaction.
}%
\arxiv{
We say that an MDP class $\cM$ \emph{can be learned up to $\varepsilon$-optimality using $\comp(\cM,\varepsilon,\delta)$ samples} if there exists an algorithm $\alg$ which, for every $M \in \cM$, attains \[
\Risk(T,\alg,M) \leq \varepsilon
\]
with probability at least $1-\delta$ after $T =
        \comp(\cM,\varepsilon,\delta)$ rounds of online interaction in $M$.
}
\end{definition}

We say that a \emph{base MDP class} $\cMlat$ admits
\emph{statistically modularity} if, for any decoder class $\Phi$, the
induced latent-dynamics MDP class $\dtri*{\cMlat,\Phi}$ can be learned
with statistical complexity that is polynomial in: (i) the statistical
complexity for the base class, and (ii) the capacity of the decoder class.\loose

\begin{definition}[Statistical modularity]
  \label{def:statistical-modularity}
	For a decoder class $\Phi$, we say the MDP class $\cMlat$ is
        \emph{statistically modular} under complexity $\comp(\cMlat,\veps,\delta)$ if
	\begin{equation}
          \label{eq:modular}
          \comp(\dtri*{\cMlat,\Phi}, \varepsilon, \delta) = {\normalfont\poly}(\comp(\cMlat,\varepsilon,\delta),\log\abs{\Phi}).
	\end{equation}
      We say that $\cMlat$ admits \emph{strong statistical modularity} if \eqref{eq:modular} holds when
  $\comp(\cMlat,\varepsilon,\delta)$ is the minimax sample complexity for $\cMlat$.\akdelete{\footnote{Formally, we define the minimax sample complexity for a class
    $\cM$ as $\textrm{minimax}(\cM,\veps,\delta)\ldef{}\inf\crl*{T:
      \inf_{\alg}\sup_{\Mstar\in\cM}\bbP\prn*{\Risk(T,\alg,\Mstar)\leq\veps}\geq{}1-\delta}$. }  }
\end{definition}

In the sequel, we examine well-studied MDP classes $\cMlat$ (e.g.,
those which admit low Bellman rank \citep{jiang2017contextual}) and
choose $\comp(\cMlat,\veps,\delta)$ based on natural upper bounds on their optimal sample complexity; in this case we will simply say they are (or are not) statistical modular, leaving the complexity upper bound $\comp$ implicit.\akdelete{Statistical modularity implies that if the base MDP class $\cMlat$ can be
learned by some algorithm with sample complexity
$\comp(\cMlat,\varepsilon,\delta)$ when the latent states $s_h$ are
observed directly\akdelete{ (perhaps with the aid of function approximation,
model-based, value-based, or otherwise)}, then the induced latent
dynamics MDP class $\dtri*{\cMlat,\Phi}$ can be learned up to
$\varepsilon$-accuracy with sample complexity polynomial in
$\comp(\cMlat,\varepsilon,\delta)$ and $\log\abs{\Phi}$ ; here,} Following prior work \citep{krishnamurthy2016pac,du2019latent,misra2020kinematic,zhang2022efficient,mhammedi2023representation,dean2020certainty,mhammedi2020learning,misra2021provable},
we use $\log\abs{\Phi}$ as a proxy for the statistical complexity of
supervised learning with the decoder class $\Phi$.\loose\footnote{Our main
results easily extend to infinite classes through standard arguments.}  \loose

The two most notable examples of statistical modularity covered by prior work are: (i) taking $\cMlat$ as the set of tabular MDPs admits strong statistical modularity~\citep{du2019latent,misra2020kinematic,mhammedi2023representation}, and (ii) taking $\cMlat$ as the set of linear MDPs admits statistical modularity with complexity $\poly(d,H,\abs{\cA},\vepsinv,\logdelinv)$~\citep{agarwal2020flambe,uehara2022representation,modi2021model,mhammedi2023efficient}\neurips{-- indeed, $\dtri*{\cMlat,\Phi}$ is a low-rank MDP with unknown features in this case }\arxiv{.\footnote{In the latter case, the latent-dynamics class $\dtri*{\cMlat,\Phi}$ may be seen to be a set of low-rank MDPs (that is, linear MDPs with unknown features), so that low-rank MDP algorithms may be applied directly on the observations (\cref{sec:filling-out-lb-table}).}} Interestingly, the latter does not admit strong statistical modularity, because the optimal rate for $\cMlat$ does not scale
with $\abs{\cA}$, but the rate for $\dtri*{\cMlat,\Phi}$ necessarily does~\citep{lattimore2020bandit,hao2021online}.
The results of \citet{mhammedi2020learning,misra2021provable,song2024rich} can be viewed as
instances of statistical modularity for other base MDP classes. %
\akdelete{Examples of statistical modularity covered by prior work include: (i) when
$\cMlat$ is a set of tabular MDPs, we can achieve
$\comp(\cMlat,\veps,\delta)=\poly(H,\abs{\cS},\abs{\cA},\veps^{-1},\log(\delta^{-1}))$
and $          \comp(\dtri*{\cMlat,\Phi}, \varepsilon, \delta) =
\poly(H,\abs{\cS},\abs{\cA},\veps^{-1},\log(\delta^{-1}),\log\abs{\Phi})$,
so strong statistical modularity is satisfied; and
(ii), when $\cMlat$ is a set of linear MDPs with dimension $d$ we
can achieve
$\comp(\cMlat,\veps,\delta)=\poly(H,d,\veps^{-1},\log(\delta^{-1}))$
and $\comp(\dtri*{\cMlat,\Phi}, \varepsilon, \delta) =
\poly(H,d\abs{\cA},\veps^{-1},\log(\delta^{-1}),\log\abs{\Phi})$ so
weak statistical modularity is satisfied.\footnote{$\dtri*{\cMlat,\Phi}$ is a low-rank MDP~\citep{agarwal2020flambe,modi2021model,mhammedi2023efficient}.}
We classify the latter case
as \emph{weak} because the minimax rate for $\cMlat$ does not scale
with $\abs{\cA}$, while the existing guarantees for
$\dtri*{\cMlat,\Phi}$ do. Other works such as
\citet{mhammedi2020learning,misra2021provable} can also be viewed as
instances of weak statistical modularity.}

\cutedit{
\paragraph{Examples}\,

\begin{itemize}
	\item Block MDPs
	\item Latent contextual bandits
\end{itemize}
}

\subsection{Lower bounds: Impossibility of statistical modularity}
\label{sec:statistical-lower}

Our main result in this section is to show that for most MDP classes $\cMlat$ considered in the
literature on sample-efficient reinforcement learning with function
approximation
\citep{russo2013eluder,jiang2017contextual,sun2019model,modi2020sample,ayoub2020model,li2009unifying,dong2019provably,wang2020provably,zhou2021nearly,du2021bilinear,jin2021bellman,foster2021statistical},
statistical modularity \akedit{(under the natural complexity upper
  bound for the class of interest)} is \emph{impossible}.
\akdelete{, in the sense that
the induced latent-dynamics MDP class $\dtri*{\cMlat,\Phi}$ can be
statistically intractable. }
Our central technical result is the
following lower bound, which shows that\akdelete{strong} statistical modularity
can be impossible \arxiv{even when  $\abs{\cMlat}=1$. That is, }\textit{even
  when the base MDP is known to the learner a-priori}. The lower bound
is \paedit{a significant generalization of the result from
  \citet{song2024rich}}; we first state the lower
bound, then discuss implications. 
\begin{restatable}[Impossibility of statistical modularity]{theorem}{mainlb}\label{thm:stochastic-tree-lb}
  For every $N \geq 4$, there exists a decoder class $\Phi$ with
  $|\Phi| = N$ and a family of base MDPs $\cMlat$ satisfying (i)
  $|\cMlat|=1$, (ii) $H \leq \cO(\log(N))$, (iii) $|\cS| = |\cX| \leq
  N^2$, (iv) $|\cA| = 2$, and such that
  \begin{enumerate}
  \item For all $\veps,\delta>0$, we have $\comp(\cMlat,\veps,\delta) = 0$.
  \item For an absolute constant $c>0$, $\comp(\dtri*{\cMlat,\Phi},c,c) \geq \Omega(N/\log(N))$.
  \end{enumerate}
\end{restatable}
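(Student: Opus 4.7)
The plan is to construct a single base MDP $\Mlat$ together with a decoder class $\Phi$ of size $N$ such that the base problem is trivial when latent states are observed directly, but the induced latent-dynamics problem reduces to identifying the correct decoder out of $N$ alternatives with very little signal per trajectory. Concretely, I would take a tree-structured ``combination lock'' base MDP of depth $H = \lceil \log_2 N \rceil$: at each layer $h$ the latent state $s_h \in \{0,1\}$ is uniform, the correct action is $a_h = s_h$, any wrong action routes the agent into an absorbing zero-reward sink, and reward $1$ is paid only for traversing all $H$ layers correctly. Because $|\cMlat|=1$ and the latent-optimal policy is $\pi_{\Mlat}(s)=s$, we immediately have $\comp(\cMlat,\veps,\delta)=0$, settling part~1. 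The observation space is $\cX = \{x_h^0, x_h^1\}_{h\in[H]} \cup \{x^\perp\}$, and for each $b \in \{0,1\}^H$ I define $\phi_b(x_h^i) = b_h \oplus i$; setting $\Phi$ to be any $N$-element subset of these decoders gives $|\Phi|=N$, and the size constraints (ii)--(iv) hold by construction.

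The heart of the argument is a layer-by-layer calculation showing that in $\dtri*{\Mlat, \phi_b}$ any deterministic policy $\pi$ achieves reward-$1$ probability $\prod_{h=1}^{H} q_h(\pi_h, b_h)$ with $q_h(\pi_h, b_h) \in \{0, 1/2, 1\}$: $q_h = 1$ iff $\pi_h$ matches $\phi_b$ at layer $h$, $q_h = 1/2$ iff $\pi_h$ is constant in its observation, and $q_h = 0$ iff $\pi_h$ is the flipped decoder. Because the marginal law of each $x_h$ is uniform under \emph{every} $\phi_b$, observations alone reveal nothing about $b^*$, and the only information channel is the terminal reward, which is Bernoulli with mean at most $O(1/N)$ averaged over $b^* \sim \Unif(\Phi)$. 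A short mutual-information computation then bounds the per-trial information by $I(b^*; \tau_t \mid \pi_t, D_{<t}) \le O((\log N)/N)$, and chaining yields $I(b^*; D_T) \le O(T \log N / N)$. Since outputting a policy within a small absolute constant $\veps$ of optimality with constant probability requires $\Omega(\log N)$ bits about $b^*$ by a Fano-type inequality, we conclude $T \ge \Omega(N/\log N)$, matching part~2.

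The main obstacle is ensuring that failed trajectories do not covertly leak the \emph{layer} at which the agent first took a wrong action: otherwise each episode would reveal one bit of $b^*$, and an adaptive learner would identify the whole string in $O(\log N)$ episodes, collapsing the bound. Decodability prevents one from simply hiding the sink state via its emissions---distinct latent states must have disjoint observation supports---so the construction must arrange the dynamics so that the observation pattern along any failed trajectory is statistically independent of the failure time. I expect this is handled by routing all failures through a common canonical path whose emissions match the live-path marginals, or by ensuring the agent always executes all $H$ actions and only the terminal reward depends on the full correctness pattern. Pinning down this emission structure while preserving decodability---and verifying the clean per-trial mutual-information bound above---is the technical heart of the proof and precisely where the construction properly generalizes the instance of \cite{song2024rich} to arbitrary $N$.
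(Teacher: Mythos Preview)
The obstacle you flagged is fatal for your construction, and in fact your claimed per-trial information bound $I(b^*; \tau_t \mid \pi_t, D_{<t}) \le O((\log N)/N)$ is false. Under decodability, the sink (or any correctness flag you append to the latent state) must have observation support disjoint from that of the live states, so the agent always sees the \emph{exact layer} at which failure occurred. Playing the observation-dependent policy $a_h = j$ (where $x_h^j$ is the current observation) succeeds at layer $h$ deterministically iff $b^*_h = 0$, so the first sink observation pins down the first $1$-bit of $b^*$; iterating recovers the whole string in $H = O(\log N)$ episodes. Neither of your proposed fixes works: ``routing failures through a canonical path whose emissions match the live-path marginals'' is precisely what decodability forbids, and carrying a correctness flag in the latent state just makes that flag decodable from observations.

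The paper's construction is architecturally different and has no sink or failure path at all. The base MDP consists of $N$ disjoint \emph{deterministic} binary trees of depth $H = \log_2 N + 1$; the initial distribution is uniform over the $N$ roots, and the reward at leaf $j$ of tree $i$ is $\indic\{j=i\}$. The decoder class is the $N$ cyclic shifts of the tree index, leaving the within-tree position fixed. The key property is that the observation-space transition dynamics are \emph{identical under every decoder}---the agent always observes a deterministic binary-tree walk and always reaches some leaf---so the only distinguishing signal is the terminal Bernoulli reward, whose mean averages to $O(1/N)$ over the unknown shift. This is exactly the structure for which an information-theoretic argument of your type would go through; the paper instead lower-bounds the constrained PAC Decision-Estimation Coefficient by $1/2$ for all $\veps^2 \ge 4/N$ and invokes the corresponding sample-complexity lower bound. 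Conceptually, the secret is encoded as a single global shift that can only be tested by committing to one leaf per episode, rather than bitwise across layers where decodability makes each bit locally testable.
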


In other words, even when the base dynamics are fully known, strong statistical modularity (in this case, $\poly(\log\abs{\Phi})$ complexity) is impossible;  %
	any algorithm\arxiv{ for the latent-dynamics
          setting}\akdelete{ which has knowledge of $\cMlat$ and
          $\Phi$} will require at least
        $\min\crl{\sqrt{S},\nicefrac{2^{\Omega(H)}}{H},\nicefrac{\abs{\Phi}}{\log\abs{\Phi}}}$
        episodes to learn a near-optimal policy for a\arxiv{ worst-case} latent-dynamics MDP $\dtri*{\Mlat,\psi} \in \dtri*{\cMlat,\Phi}$.\loose %

\arxiv{\paragraph{Intuition for lower bound.}}
\neurips{\textbf{Intuition for lower bound.}~~}
The intuition behind the lower bound in \cref{thm:stochastic-tree-lb}
is as follows: the unobserved latent state space consists of $N = \abs{\Phi}$
binary trees (indexed from $1$ to $N$), each with $N$ leaf nodes. The
starting distribution is uniform over the roots of the $N$ trees, and
the agent receives a reward of $1$ if and only if they navigate to the
leaf node that corresponds to the index of their current
  tree. The observed state space is identical to the latent state
space, but the emission process shifts the index of the tree by an amount which is unknown to the agent. Despite the base MDP
being known and the decoder class satisfying realizability, the agent
requires near-exhaustive search to identify the value of the shift and
recover a near-optimal policy. 
\arxiv{}

\arxiv{
\paragraph{A taxonomy of statistical modularity.}
}
\neurips{
\paragraph{A taxonomy of statistical modularity.}
}

\newcommand{\cmark}{\ding{51}}
\newcommand{\xmark}{\ding{55}}
\definecolor{cadmiumgreen}{rgb}{0.0, 0.65, 0.31}
\definecolor{richcarmine}{rgb}{0.83, 0.0, 0.0}
\newcommand{\greencheck}{{\color{cadmiumgreen}\cmark}}
\newcommand{\redx}{{\color{richcarmine}\xmark}}
\newcommand{\yellowcheck}{{\color{bananayellow}\cmark}}
\newcommand{\yellowq}{{\color{bananayellow}\textbf{?}}}

\begin{figure}

\captionsetup{font=small}
\begin{minipage}[c]{0.6\textwidth}
  \begin{small}
  	\hspace{0.4cm}
    \begin{tabular}{ |c|c| }
      \hline
      Base MDP class  $\cMlat$ &  \begin{tabular}{@{}c@{}}Statistical \\ Modularity?\end{tabular} \\
      \hline 
      \hline 
      Tabular &\greencheck \\ 
      \hline
      Contextual Bandits & \greencheck \\ 
      \hline
      Low-Rank MDP & \greencheck \\ 
      \hline
      Known Deterministic MDP ($|\cMlat|=1$) & \greencheck \\ 
      \hline
      Low State Occupancy ($\forall\,\pi:\cS \rightarrow \Delta(\cA))$ &  \greencheck \\ 
      \hline
      Model Class + Pushforward Coverability & {\greencheck} \\ 
      \hline
      Linear CB/MDP & \redx$^\star$\\
      \hline
      Model Class + Coverability ($\forall\, \pi_{\sss{M}}: M \in \cM$)  & {\redx} \\ 
      \hline
      Known Stochastic MDP ($|\cMlat|=1$) &  \redx \\ 
      \hline
      Bellman Rank ($Q$-type or $V$-type) &  \redx \\ 
      \hline
      Eluder Dimension + Bellman Completeness &  \redx \\ 
      \hline
      $Q^\star$-Irrelevant State Abstraction &  \redx \\ 
      \hline
      Linear Mixture MDP &  \redx \\ 
      \hline
      Linear $Q^\star/V^\star$ &  \redx \\ 
      \hline
      Low State/State-Action Occupancy ($\forall\,\pi_\sM: M \in \cM)$ &  {\redx} \\ 
      \hline
      Bisimulation &  \yellowq \\ 
      \hline 
      Low State-Action Occupancy ($\forall\,\pi: \cS \rightarrow \Delta(\cA)$) &  {\yellowq}$^\star$\\
      \hline
      Model Class + Coverability ($\forall\, \pi: \cS \rightarrow \Delta(\cA)$) & {\yellowq} \\ 
      \hline
    \end{tabular}
  \end{small}
\end{minipage}\neurips{\hfill}
 \arxiv{\hspace{1cm}}
  \begin{minipage}[c]{0.3\textwidth}
  \arxiv{\vspace{1.25cm}}
  \caption{
  \small\arxiv{
Summary of statistical modularity (SM) results.\\ 
\underline{\greencheck:}~SM is possible for a natural choice of $\comp(\cdot)$ (e.g., $\poly(\abs{\cS},\abs{\cA},H,\vepsinv,\logdelinv)$~for tabular MDPs). \\ 
\underline{\redx:}~SM is not possible with natural choices of $\comp(\cdot)$.\loose\\
\underline{\yellowq:}~open.\\
\underline{$^\star$:}~SM is possible if willing to pay for (suboptimal) $\abs{\cA}$ complexity.\\
See \cref{sec:filling-out-lb-table} for precise descriptions of each setting and our choices for their complexities.}
\neurips{Summary of statistical modularity (SM) results.\\ 
\greencheck: Stat. modularity is possible for a natural choice of $\comp(\cdot)$ (e.g., $\poly(\abs{\cS},\abs{\cA},H,\vepsinv,\logdelinv)$ for tabular MDPs). \\ 
\redx: SM is not possible with natural choices of $\comp(\cdot)$.\loose\\
\yellowq: open.\\
$^\star$: SM is possible if willing to pay for (suboptimal) $\abs{\cA}$ complexity.\\
See \cref{sec:filling-out-lb-table} for precise descriptions of each setting and our choices for their complexities.}\label{fig:lb-table}
} \end{minipage}
\end{figure}

As a corollary, we prove that  many (but not all) well-studied function approximation
settings do not admit statistical modularity by embedding them into the lower
bound construction of \cref{thm:stochastic-tree-lb} (as well as a
variant of the result, \cref{thm:stochastic-cb-lb}). Our results are summarized in
\cref{fig:lb-table}. Our impossibility results highlight the
following phenomenon: many MDP classes $\cMlat$ that place structural
assumptions via the value functions  (e.g., MDPs with linear-$Q^{\star}/V^{\star}$ \citep{du2021bilinear}
or MDPs with a Bellman complete value function class of bounded eluder
dimension \citep{jin2021bellman,wang2020provably}) become intractable under latent
dynamics. Intuitively, this is because it is not possible to take advantage of structure in
value functions without learning a good representation\paedit{, and, simultaneously, these assumptions are too weak by themselves to enable learning such a representation.} Meanwhile,
MDP classes $\cMlat$ that place structural assumptions on the
transition distribution (e.g., MDPs with low state occupancy complexity \citep{du2021bilinear} or low-rank MDPs \citep{agarwal2020flambe})
are sometimes (but not always) tractable under latent
dynamics.\footnote{If one is willing to pay for suboptimal $\abs{\cA}$
  factors, then more (but not all) classes become statistically
  tractable (e.g., linear MDPs \citep{jin2020provably} and MDPs with
  low state-action occupancy \citep{du2021bilinear}).}

We point to \cref{sec:filling-out-lb-table} for background on all the
settings in \cref{fig:lb-table} and proofs that they are (or are not)
statistically modular. We remark that it is fairly straightforward to embed most of the MDP classes of \cref{fig:lb-table} into the construction of
\cref{thm:stochastic-tree-lb} since it only uses only a single base MDP $\Mlat$, and we expect that many other base MDP classes can similarly be shown to be intractable. However, proving the
\emph{positive} results in \cref{fig:lb-table} requires establishing
several new results showing that certain base classes are tractable
under latent dynamics; most notably, we next discuss the case of \emph{pushforward coverability}. %

\subsection{Upper bounds: Pushforward-coverable MDPs are statistically modular}\label{sec:statistical-results-positive}

\akdelete{In light of the impossibility results in the prequel, we now investigate whether one can make additional structural assumptions on the latent class $\cMlat$ to enable statistical modularity for the classes ruled out by \cref{fig:lb-table}. }
Our main \akedit{postive }result \akedit{concerning statistical modularity} is to highlight \emph{pushforward coverability} \citep{xie2021batch,amortila2024scalable,mhammedi2024power}---a strengthened version of the \emph{coverability} parameter introduced in \citet{xie2022role}---as a general structural parameter that enables sample-efficient reinforcement learning under latent dynamics. \neurips{Formally,  the pushforward coverability coefficient \citep{xie2021batch,amortila2024scalable,mhammedi2024power} for an MDP $\Mlat$ with transition kernel $\Plat$ is defined by \begin{equation}\label{eq:pushforward-coverability}
	\Cpush(\Mlat) = \max_{h \in [H]} \inf_{\mu_h \in \Delta(\cS)} \sup_{s_{h-1},a_h,s_h} \frac{P_{\lat,h-1}(s_{h}\mid s_{h-1},a_{h-1})}{\mu_h(s_h)}.
	\end{equation}}
	\arxiv{
\begin{definition}[Pushforward coverability]\label{def:pushforward_def_main} The pushforward coverability coefficient $\Cpush$ for an MDP $\Mlat$ with transition kernel $\Plat$ is defined by
	\begin{equation}\label{eq:pushforward-coverability}
	\Cpush(\Mlat) = \max_{h \in [H]} \inf_{\mu \in \Delta(\cS)} \sup_{(s,a,s') \in\cS \times \cA \times\cS} \frac{P_{\lat,h-1}(s' \mid s,a)}{\mu(s')}.
	\end{equation}
\end{definition}
}
Concrete examples  \citep{amortila2024scalable,mhammedi2024power} include: (i) tabular MDPs $\Mlat$ admit $\Cpush(\Mlat)\leq\abs{\cS}$; and (ii) Low-Rank MDPs $\Mlat$ (with or without known features) in dimension $d$ admit $\Cpush(\Mlat)\leq{}d$. Further examples include analytically sparse Low-Rank MDPs \citep{golowich2023exploring} and Exogenous Block MDPs with weakly correlated noise \citep{mhammedi2024power}. Our main result is as follows.
\begin{restatable}[Pushforward-coverable MDPs are statistically modular]{theorem}{pushforwardgolf}\label{thm:pushforwardgolf}
Let $\cMlat$ be a base MDP class such that %
	each $\Mlat \in \cMlat$ has pushforward coverability bounded by %
	$\Cpush(\Mlat)\leq{}\Cpush$. Then, for any decoder class $\Phi$, we have:
	\begin{enumerate}
		\item $\comp(\cMlat, \veps,\delta) \leq \poly(\Cpush,\abs{\cA},H,\log\abs{\cMlat},\vepsinv,\logdelinv)$, and
		\item $\comp(\dtri*{\cMlat,\Phi},\veps,\delta) \leq \poly(\Cpush,\abs{\cA},H,\log\abs{\cMlat}, \log\abs{\Phi},\vepsinv,\logdelinv, \log\log\abs{\cS})$.
	\end{enumerate}
\end{restatable}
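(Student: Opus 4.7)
The plan is to reduce the latent-dynamics problem to a model-based RL problem on an induced model class that inherits pushforward coverability, then invoke existing machinery for pushforward-coverable MDPs.

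\textbf{Step 1: Pushforward coverability is inherited by the observation-space MDP.} First I would show that if $\Cpush(\Mlat)\leq\Cpush$ as witnessed by some latent measure $\mulath \in \Delta(\cS)$, then $\Cpush(\dtri*{\Mlat,\psi})\leq \Cpush$ as well, witnessed by
\[
\muobsh(x) \ldef \mulath(\psiinvh(x)) \cdot \psi_h(x \mid \psiinvh(x)).
\]
This uses decodability to factor $\Pobsh(x'\mid x,a) = \Plath(\psiinv_{h+1}(x') \mid \psiinvh(x),a) \cdot \psi_{h+1}(x' \mid \psiinv_{h+1}(x'))$, after which the emission factor cancels in the pushforward ratio and leaves the latent ratio, bounded by $\Cpush$.

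\textbf{Step 2: Base-class guarantee (part 1).} For the first conclusion, I would invoke existing model-based algorithms for pushforward-coverable MDPs (e.g., the MLE-with-exploration procedures of \citet{amortila2024scalable,mhammedi2024power}, or a pushforward-adapted variant of GOLF). Applied to the finite model class $\cMlat$ with uniform pushforward coverability $\Cpush$, these yield a policy with risk at most $\varepsilon$ with probability $1-\delta$ using $\poly(\Cpush, |\cA|, H, \log|\cMlat|, \vepsinv, \logdelinv)$ trajectories, which is exactly the stated bound.

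\textbf{Step 3: Latent-class guarantee (part 2).} For the second conclusion, I would apply the same model-based algorithm to the induced model class $\dtri*{\cMlat,\Phi}$, which by Step 1 is uniformly pushforward-coverable with parameter $\Cpush$. The log-cardinality of the induced class is $\log|\cMlat| + \log|\Phi|$, plus an additional $\log\log|\cS|$ overhead arising from a covering/discretization of the candidate pushforward measures in the observation space (needed because the likelihood now ranges over pairs $(\Mlat',\psi')$ and the pushforward measure itself depends on the learned emission). Plugging this into the Step-2 bound yields the claimed $\poly(\Cpush,|\cA|,H,\log|\cMlat|,\log|\Phi|,\vepsinv,\logdelinv,\log\log|\cS|)$ rate.

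\textbf{Main obstacle.} The heavy lifting is on the MLE side in Step 3: unlike in the base class, the likelihood factorizes into a latent transition and an emission density over the potentially very large space $\cX$, so one must argue that MLE-in-Hellinger on the induced class still transfers to on-policy value error via pushforward coverability, and the corresponding covering of the pushforward measures (which must be selected adaptively as $\psi$ is learned) is the source of the somewhat unusual $\log\log|\cS|$ factor. Verifying that the pushforward change-of-measure only charges $\Cpush$ rather than a worse quantity once the decoder is misspecified is the most delicate piece.
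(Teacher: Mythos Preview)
Your Step 1 is correct and matches the paper (it is \cref{lem:pushforward-invariant}). Step 2 is also fine. The gap is in Step 3.

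The induced class $\dtri*{\cMlat,\Phi}$ is \emph{not} a model class you can hand to a model-based algorithm. Each element is determined by a pair $(\Mlat,\psi)$ with $\psi^{-1}\in\Phi$, but $\Phi$ is a class of \emph{decoders} $\cX\to\cS$, not emissions $\cS\to\Delta(\cX)$. A decoder does not determine an emission (infinitely many emissions share the same decoder), so you cannot write down the observation-level transition density $\Pobsh(x'\mid x,a)$ for a candidate, cannot compute likelihoods, and cannot form an MLE over this class. The paper explicitly calls this out as the crux of the problem: given only $\cMlat$ and $\Phi$, there is no obvious way to construct a realizable model class, a Bellman-complete value class, or a density-ratio class for $\Mstarobs$. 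Your ``apply the same algorithm to the induced class'' plan silently assumes this obstacle away.

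The paper's route is quite different. It builds a value-function class $\cF=\{Q^{\Mlat,\star}\circ\phi\}$, which is realizable. The hard part is Bellman completeness: for $f=f_\lat\circ\phi$, the observation-level backup $\cT^{\Mstarobs}_h f$ is not obviously in any class built from $\cMlat$ and $\Phi$. Two ingredients resolve this. First, the \emph{mismatch functions} $\hphi$ rewrite $\cT^{\Mstarobs}_h(f_\lat\circ\phi)$ as a \emph{latent} backup $\cT^{\Mstarlat}_h(\hphi\circ V_{f_\lat})$ composed with $\phistar$ (\cref{lem:hphi-bellman}); but $\hphi$ depends on the unknown emission, so this alone does not give a usable helper class. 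Second, a Johnson--Lindenstrauss-type embedding (\cref{lem:jl_cov_push}) shows that under pushforward coverability, \emph{any} bounded function's latent backup is approximated by a low-dimensional linear function in features that depend only on $\Mlat$, not on the function. This lets you build an approximately complete helper class $\cG$ without knowing $\hphi$. Then \Golf with on-policy misspecification (\cref{lem:golf-onpolicy}) finishes. The $\log\log\abs{\cS}$ factor arises not from covering pushforward measures, but from the feature-norm bound $\nrm{\psi_h(s,a)}_2^2\lesssim\Cpush\log(\abs{\cS}\abs{\cA}H)$, which enters the covering number of the linear class $\cG$ as $\log$ of that norm.
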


\cref{thm:pushforwardgolf} shows that\paedit{, modulo a term that is doubly-logarithmic in $\abs{\cS}$,} latent pushforward coverability enables statistical modularity. That is, when the base (latent) dynamics satisfy pushforward coverability, there exists an algorithm for the latent-dynamics setting which scales with the statistical complexity of the base MDP class and $\log\abs{\Phi}$. \paedit{We suspect that the additional $\log\log\abs{\cS}$ factor is not essential and can be removed with a more sophisticated analysis.} We note that the complexity $\comp$ chosen above is not the minimax complexity for $\cMlat$, since every set of pushforward coverable MDPs is also a set of \emph{coverable} MDPs with a potentially smaller coverability parameter~\citep{amortila2024scalable}.  \dfdelete{\footnote{Here, we are using that the realizable model class can be used to construct a complete value function class \citep{chen2019information}, and that $\Ccov \leq \Cpush \abs{\cA}$. } 
  }

  Let us provide some intuition for this result. We firstly note that when $\Mstarlat$ has pushforward coverability parameter $\Cpush$, it holds that for any emission process $\psistar$, the observation-level MDP $\Mstarobs \coloneqq \dtri*{\Mstarlat,\psistar}$ also satisfies pushforward coverability with the same parameter $\Cpush$ (\cref{lem:pushforward-invariant}). Yet, despite access to realizable base MDP class $\cMlat$ and decoder class $\Phi$, it is unclear whether the latent-dynamics MDP $\Mstarobs$ %
  	satisfies any of the 
  \emph{observation-level} function approximation conditions required by existing approaches that provide sample complexity guarantees under pushforward coverability. In particular, known algorithms for this setting either require a Bellman-complete value function class \citep{xie2022role}, a class realizing certain density ratios  \citep{amortila2023harnessing,amortila2024scalable}, or a realizable model class \citep{amortila2024scalable}, and it is highly nontrivial to construct these \emph{for the latent-dynamics MDP $\Mstarobs = \dtri{\Mstarlat,\psistar}$} given only the base MDP class $\cMlat$ and the decoder class $\Phi$. Intuitively, this is because the former observation-level function approximation classes capture properties of the observation-level dynamics which cannot be obtained without some knowledge of the emission process.\loose%
        \paragraph{Technical overview: Low-dimensional embeddings for pushforward-coverable MDPs.}  The idea behind our positive result is to show that under the conditions of \cref{thm:pushforwardgolf}, it is possible to construct an (approximately) Bellman-complete value function class for the latent-dynamics MDP $\Mobsstar$, at which point we can apply the \textsc{Golf} algorithm of  \citet{jin2021bellman}. %
  	We achieve this via two technical contributions. The first is the introduction of the \emph{mismatch functions} $\hphi$, formally defined as follows.
  	
  	\begin{definition}[Mismatch functions]\label{def:h_phi}
	For a decodable emission process $\psistar$ and decoder $\phi \in \Phi$, the \emph{mismatch function} for $\phi$, $\hphi = \crl{\hphih: \cS \rightarrow \Delta(\cS)}_{h=1}^H$, is defined, for every $h \in [H]$, as the probability kernel 
	\[
		\hphih(s'_h \mid s_h) \coloneqq \bbP_{x_h \sim \psistarh(s_h)}\prn*{\phih(x_h) = s'_h}.
	\]
\end{definition}
  	
	The mismatch functions allow us to express functions of the decoders as latent objects, and we will revisit them in the context of self-predictive estimation (\cref{sec:online-rl}). %
  	  For the present result, we show (\cref{lem:hphi-bellman}) that the mismatch functions can capture the observation-level Bellman backups for any function of the decoders. That is, for any $x_h,a_h$, letting $s_h = (\psistar)^{-1}(x_h)$ denote the true latent state, we have that for any $f_\lat: \cS \times \cA \rightarrow \bbR$ and $\phi \in \Phi$:
   \begin{equation}\label{eq:hphi-explanation-golf}
  	\brk{\cT^{\Mobsstar}_h(f_\lat \circ \phi_{h+1})}(x_h,a_h) = \brk{\cT^{\Mstarlat}_h(\hphihpo \circ V_{f_\lat})}(s_h,a_h).
\end{equation} 	  	
 That is, the Bellman update of $f_\lat \circ \phi_{h+1}$ in the latent-dynamics MDP $\Mstarobs$ can be expressed as a Bellman update in the base MDP $\Mstarlat$ for a different (latent) function $\hphihpo \circ V_{f_\lat}(s_{h+1}) \coloneqq \sum_{s'_{h+1}} \hphihpo(s'_{h+1} \mid s_{h+1}) \max_{a'}f_\lat(s'_{h+1},a')$. %
 
 However, the mismatch functions $\hphi$ embed some knowledge of the emission process, and (with only decoder and base model realizability) are unknown to the learner. Our second technical contribution bypasses this by establishing a new structural property for pushforward-coverable MDPs (\cref{lem:jl_cov_push}): there exist low-dimensional linear embeddings of their transition kernels which can approximate Bellman backups for an arbitrary and \emph{potentially unknown} set of functions, as long as the set is not too large.
 
 	\begin{restatable}[MDPS with pushforward coverability admit low-dimensional embeddings]{lemma}{jlcovpush}
  \label{lem:jl_cov_push}
  Let $M $ be a known MDP with reward function $r$, transition kernel $P$, and pushforward coverability parameter $\Cpush$. Let $\mu = \crl{\mu_h}_{h \in [H]}$ denote its pushforward coverability distribution (i.e. the minimizer of \cref{def:pushforward_def_main}) and %
  $\cF\subseteq(\cS \times [H] \to[0,1])$ be an arbitrary class of functions. Suppose that we
  sample $W\in\pmo^{d\times\cS}$ as a matrix of independent Rademacher
  random variables, and define
  \begin{align*}
    \psi_h(s,a) = r_h(s,a) \oplus \frac{1}{\sqrt{d}}W\prn*{P_h(\cdot \mid s,a)/\mu^{1/2}_h(\cdot)}_{\cdot \in \cS} \,\,\text{ and }\,\,   w_{f,h} = 1 \oplus \frac{1}{\sqrt{d}}W\prn*{\mu^{1/2}_h(\cdot)f_{h+1}(\cdot)}_{\cdot \in \cS}\in\bbR^{d+1}.
  \end{align*}
  Then for any $\vepsapx \in(0,1)$, as long as we set
  \[
 	d \geq 2^9\frac{\Cpush\log(16\abs{\cF}H\deltainv/\vepsapx)}{\vepsapx},
 \]
 we have that for all $f\in\cF$ and $h \in [H]$, with probability at least $1-\delta$:
  \begin{align*}
	\En_{\mu_h \otimes \unif(\cA)}\brk*{\prn*{\clip\brk*{\tri*{w_{f,h},\psi_h(s,a)}}-\cT_h f_{h+1}(s,a)}^2}\leq\vepsapx,
  \end{align*}
  as well as $\max_{s,a,h} \nrm*{\psi_h(s,a)}_2^2 \leq \Cpush(16\log\prn{\abs{\cS}\abs{\cA}H} + 11)$ and $\max_{f,h} \nrm*{w_{f,h}}_2^2\leq{} 16\log\prn{\abs{\cF}H}+ 11$. We emphasize that the feature map $\psi = \crl*{\psi_h}_{h=1}^H$ is
  oblivious to $\cF$, in the sense that it can
  be computed directly from $M$ without any knowledge of $\cF$.
\end{restatable}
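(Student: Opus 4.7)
The plan is to exploit a Johnson--Lindenstrauss-style identity. For a Rademacher matrix $W \in \pmo^{d \times \cS}$ with i.i.d.\ rows, the quantity $\frac{1}{d}\tri*{Wu, Wv}$ is an unbiased estimator of $\tri*{u,v}$ for any fixed $u, v \in \bbR^{\cS}$, with concentration rate controlled by $\nrm*{u}_2 \nrm*{v}_2$. The key design choice is to set $u_{s,a,h} \ldef P_h(\cdot\mid s,a)/\mu_h^{1/2}(\cdot)$ and $v_{f,h} \ldef \mu_h^{1/2}(\cdot)f_{h+1}(\cdot)$, for which $\tri*{u_{s,a,h}, v_{f,h}} = \sum_{s'} P_h(s'\mid s,a) f_{h+1}(s') = \brk{\cT_h f_{h+1}}(s,a) - r_h(s,a)$. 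Thus $\tri*{w_{f,h}, \psi_h(s,a)} = r_h(s,a) + \frac{1}{d}\tri*{Wu_{s,a,h}, W v_{f,h}}$ is automatically unbiased for the Bellman backup $\brk{\cT_h f_{h+1}}(s,a)$, which is the first identity to verify.

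\textbf{Pushforward coverability $\Rightarrow$ uniform $\ell_2$ control.} The essential use of pushforward coverability is the pointwise bound
\begin{align*}
\nrm*{u_{s,a,h}}_2^2 = \sum_{s'} \frac{P_h(s'\mid s,a)^2}{\mu_h(s')} \leq \Cpush \sum_{s'} P_h(s'\mid s,a) = \Cpush,
\end{align*}
valid uniformly over $(s,a,h)$. Combined with $\nrm*{v_{f,h}}_2^2 = \sum_{s'} \mu_h(s') f_{h+1}(s')^2 \leq 1$ (since $f_{h+1}\in[0,1]$), this yields $\nrm*{u_{s,a,h}}_2 \nrm*{v_{f,h}}_2 \leq \sqrt{\Cpush}$ and hence, via standard Khintchine/Hanson--Wright moment inequalities for Rademacher bilinear forms, a pointwise second-moment bound $\bbE_W\brk*{\prn*{\tri*{w_{f,h}, \psi_h(s,a)} - \brk{\cT_h f_{h+1}}(s,a)}^2} \leq C \Cpush/d$. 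Integrating over $(s,a)\sim \mu_h \otimes \unif(\cA)$ gives $\bbE_W[\mathrm{err}_{f,h}] \leq C\Cpush/d$, where $\mathrm{err}_{f,h}$ denotes the averaged squared error after clipping (clipping only reduces the error, since $\brk{\cT_h f_{h+1}}(s,a)\in[0,2]$).

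\textbf{High-probability bound and norm control.} To promote the in-expectation bound to a high-probability statement with logarithmic dependence on $|\cF|H/\delta$, I would observe that $\mathrm{err}_{f,h}$ is a polynomial of degree at most four in the entries of $W$, and apply moment bounds for Rademacher chaos (via hypercontractivity or direct computation using Hanson--Wright) to obtain $\nrm{\mathrm{err}_{f,h}}_{L^p} \lesssim \mathrm{poly}(p) \cdot \Cpush/d$. Applying Markov's inequality at $p \asymp \log(|\cF|H\delta^{-1}\vepsapx^{-1})$ gives $\mathrm{err}_{f,h} \leq \vepsapx$ with probability $\geq 1 - \delta/(|\cF|H)$ once $d \geq C\Cpush \log(|\cF|H\delta^{-1}/\vepsapx)/\vepsapx$, and a union bound over $(f,h)\in\cF\times[H]$ yields the main claim. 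The bounds on $\nrm*{\psi_h(s,a)}_2^2$ and $\nrm*{w_{f,h}}_2^2$ follow from a separate pointwise Hoeffding argument: each Rademacher sum $W_i u_{s,a,h}$ has variance $\leq \Cpush$, so $|W_i u_{s,a,h}|^2 \lesssim \Cpush \log(|\cS||\cA|H)$ uniformly over $(s,a,h,i)$ by union bound, and summing over $i\in[d]$ and normalizing by $d$ gives the stated norm bound; the argument for $w_{f,h}$ is analogous.

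\textbf{Main obstacle.} The principal technical challenge is obtaining concentration of the averaged error $\mathrm{err}_{f,h}$ with only $\log(|\cF|H\delta^{-1}/\vepsapx)$ dependence, without the $\log|\cS|$ penalty that a naive pointwise concentration of $\prn*{\tri*{w_{f,h}, \psi_h(s,a)} - \brk{\cT_h f_{h+1}}(s,a)}^2$ followed by a union bound over $(s,a)$ would incur. The resolution is to concentrate the integrated quantity $\mathrm{err}_{f,h}$ directly as a Rademacher polynomial, rather than uniformly over the sample space; pushforward coverability is precisely the structural property that makes this polynomial's moments independent of $|\cS|$, through the uniform bound $\nrm*{u_{s,a,h}}_2^2 \leq \Cpush$.
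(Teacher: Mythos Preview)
Your proposal is correct in its overall architecture and identifies the right structural ingredients: the inner-product identity $\tri*{u_{s,a,h},v_{f,h}}=\brk{\cT_h f_{h+1}}(s,a)-r_h(s,a)$, the uniform bound $\nrm{u_{s,a,h}}_2^2\leq \Cpush$ from pushforward coverability, and the central obstacle of avoiding $\log\abs{\cS}$ in the averaged error. However, your resolution of that obstacle differs from the paper's.

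You propose to concentrate the integrated quantity $\mathrm{err}_{f,h}=\En_{\mu_h\otimes\unif(\cA)}[(\cdot)^2]$ directly as a degree-four Rademacher polynomial via hypercontractivity. The paper explicitly sidesteps this, remarking that a direct $L_2(\mu_h\otimes\unif(\cA))$ guarantee ``would require establishing a fourth-order (rather than second-order) equivalent'' of its concentration lemma. Instead, it uses a \emph{sampling trick}: draw $n$ i.i.d.\ points $(s_h^{(i)},a_h^{(i)})\sim\mu_h\otimes\unif(\cA)$, apply the degree-two sub-Gamma tail bound (its Lemma~E.2) pointwise at those $n$ points with a union bound over $n\times\abs{\cF}\times H$, and then invoke Hoeffding to pass from the empirical average to the population. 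Choosing $n\asymp\log(\abs{\cF}H\delta^{-1})/\vepsapx^2$ balances the two errors and produces the stated $d$; this is also where the $\log(1/\vepsapx)$ inside the logarithm originates. Your direct route via $\nrm{\En_{(s,a)}[Z_{s,a}^2]}_{L^p}\leq \En_{(s,a)}\nrm{Z_{s,a}}_{L^{2p}}^2\lesssim p^2\Cpush/d$ is valid, but a naive application yields $d\gtrsim \Cpush\log^2(\cdot)/\vepsapx$ rather than $\log(\cdot)$; matching the paper's dependence would need sharper chaos tail bounds.

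For the norm bounds, the paper also proceeds differently: rather than pointwise Hoeffding on $|W_i u_{s,a,h}|$ with a union bound over $(s,a,h,i)$ (which would incur an extra $\log d$), it proves a max-over-$\cV$ bound for $\frac{1}{d}\sum_i\sum_{j\neq k}W_{i,j}W_{i,k}v_jv_k$ directly via a log-sum-exp argument on the sub-exponential MGF, yielding the $16\log\abs{\cV}+O(1)$ constants in one shot.
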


 We use this property, in conjunction with latent model realizability, to construct linear features that can approximate the %
 	right-hand-side of \eqref{eq:hphi-explanation-golf}, thus yielding an (approximately) Bellman-complete value function class for the latent-dynamics MDP $\Mstarobs$. A fascinating open question is whether a similar approach can be used to establish that standard (as opposed to pushforward) coverable MDPs are statistically modular, which would encompass all other known positive cases of statistical modularity (cf. \cref{fig:lb-table}).

  \akcomment{Above discussion needs to more gently introduce new concepts. "the mismatch functions" makes it sound like one should know what these are. What is $V_{f_\lat}$? Even the Bellman backup operators are a bit scary and I don't expect a reader will get it? Linear embeddings of models feels like it doesn't type-check, maybe we can write ``linear embeddings derived from random projections of the model transition operators'' or something.} 

  \,

\section{Algorithmic Modularity}\label{sec:algorithmic-results}

We now turn our attention to \emph{algorithmic modularity}. Specifically, we aim for \emph{observable-to-latent
reductions}, whereby---via representation learning---RL under latent dynamics can be efficiently reduced to the simpler problem of RL with latent states directly observed. Since algorithmic modularity is a stronger property than statistical modularity,
we sidestep the previous lower bounds in
\cref{sec:statistical-results} through additional feedback and
modeling assumptions. Our main result for this section is a new
meta-algorithm, \obslatreduction, which, under these assumptions
(and when equipped with an appropriately designed representation
learning oracle), acts as a \emph{universal} reduction in the sense
that, whenever the representation learning oracle has low risk, the
reduction transforms \emph{any} sample-efficient algorithm for
\emph{any} base MDP class into a sample-efficient algorithm for the
induced latent-dynamics MDP class.\loose  

\arxiv{\subsection{Setup and \obslatreduction meta-algorithm}\label{sec:obs-to-lat-reduce}}
\neurips{\paragraph{Setup and \obslatreduction meta-algorithm.}}

\dfedit{For the results in this section, we denote the (unknown)
  latent-dynamics MDP of interest by
  $\Mstarobs \ldef{} \dtri*{\Mlat^{\star},\psi^{\star}}$, and use
  $\phistar\ldef{}(\psi^{\star})^{-1}$ to denote the true decoder.}
The \olr meta-algorithm (\cref{alg:obs-to-lat}) learns a near-optimal
policy for $\Mstarobs$ by alternating between
performing representation learning and executing a black-box ``base'' RL
algorithm (designed for the base MDP) on the learned representation; this approach is inspired by
empirical methods that blend representation learning and RL in the latent space (e.g.,  \cite{gelada2019deepmdp,schwarzer2020data,ni2024bridging}).

Concretely, the algorithm takes as input a \emph{representation
  learning oracle} \GenReplearn and a \emph{base RL algorithm}
\Alglat that operates in the latent space. In each \emph{epoch} $t \in
[T]$, \GenReplearn produces a new
representation $\wh{\phi}\ind{t}:\cX\to\cS$ based on data observed
so far (potentially using additional side information, which we will
elaborate on in the sequel). Then, the reduction invokes \Alglat, using $\wh{\phi}\ind{t}$ to simulate access to the true latent states. In
particular, \Alglat runs for $K$ episodes, where at each
episode $k$: (i) \Alglat produces a latent policy
$\pilat\ind{t,k}:\cS \times [H] \to\Delta(\cA)$, (ii) the latent policy is transformed into
an observation-level policy via \akedit{composition with $\phihat\ind{t}$, i.e.} $\pilat\ind{t,k} \circ
\phihat\ind{t}$, which is then deployed to produce a trajectory $\{x\ind{t,k}_h,a\ind{t,k}_h,
          r\ind{t,k}_h\}_{h=1}^H$, and %
          (iii) the trajectory is 
          \emph{compressed through $\phihat\ind{t}$} and used to
          update \Alglat via $\{\wh{\phi}\ind{t}_h(x\ind{t,k}_h),
          a\ind{t,k}_h,
          r\ind{t,k}_h\}_{h=1}^H$
          (cf. \cref{line:phi-compressed-dynamics} of \cref{alg:obs-to-lat}).\footnote{Note that, %
            if $\wh{\phi}$ is inaccurate, the compressed trajectory cannot necessarily be viewed as
            being generated by a latent MDP, and must
            instead be viewed as coming from a \textit{Partially
              Observed} MDP (\cref{sec:confounded-pomdps}).} After the $K$ rounds conclude, $\Alglat$
            produces a final latent policy $\pihat\ind{t}_\lat:\cS \times [H] \to\Delta(\cA)$. The final policy $\pihat$ chosen by the \olr algorithm is a uniform mixture of $\pihat\ind{t}_\lat \circ \phihat\ind{t}$ over all the epochs. \loose

          The central assumption behind \olr is that the
            base algorithm $\Alglat$ can achieve low-risk in the underlying
            base MDP $\Mlat^{\star}$ \emph{if given access to the
              true latent states $s_h=\phistar(x_h)$.}
            Beyond this assumption, we require that the
            representation learning oracle \GenReplearn can learn a
            sufficiently high-quality representation. In our applications, this will be made possible by assuming access
            to a realizable decoder class $\Phi$ and two distinct assumptions: \emph{hindsight observability}
            (\cref{sec:hindsight-rl}) and conditions enabling \emph{self-predictive
              representation learning} (\neurips{\cref{sec:online-rl-teaser}}\arxiv{\cref{sec:online-rl}}).
            We will show that under these conditions, we can instantiate a representation learning
            oracle such that \olr inherits the sample complexity
            guarantee for $\Alglat$, thereby achieving algorithmic modularity.\loose

   \begin{algorithm}[t]
    \begin{algorithmic}[1]
      \State \textbf{input}: %
        Epochs $T\in \bbN$, episodes
        $K\in\bbN$, decoder set $\Phi$, \arxiv{rep. learning oracle}\neurips{rep. learning oracle} \GenReplearn, base \arxiv{alg.}\neurips{alg.} \Alglat.
  \For{$t=1, 2, \cdots, T$} 
  \State $\GenReplearn$ chooses a representation
  $\wh{\phi}\ind{t}:\cX\to\cS \in \Phi$ based on data collected so far.
  \State Initialize new instance of \Alglat.
	\For{$k=1, 2, \cdots, K$} \hfill\algcommentlight{\Alglat plays $K$ rounds in the ``$\wh{\phi}\ind{t}$-compressed dynamics.''}
          \State \Alglat chooses policy $\pi\ind{t,k}_\lat:\cS \times [H] \to\Delta(\cA)$.
          \State Deploy $\pi_{\lat} \circ \wh{\phi}\ind{t}$
          to collect trajectory $\{x\ind{t,k}_h,a\ind{t,k}_h,
          r\ind{t,k}_h\}_{h=1}^H$. \label{line:composed-policies}
         \State Update \Alglat with compressed trajectory $\{\wh{\phi}\ind{t}_h(x\ind{t,k}_h), a\ind{t,k}_h, r\ind{t,k}_h\}_{h=1}^H$. \label{line:phi-compressed-dynamics}
                 
    \EndFor
    \State \Alglat returns final policy $\hatpi\ind{t}:\cS \times [H] \to\Delta(\cA)$, \paedit{deploy $\hatpi\ind{t} \circ \hatphi\ind{t}$ to collect one more trajectory}.
\EndFor
\State  \textbf{return} $\hatpi =
\unif(\hatpi\ind{1} \circ \hatphi\ind{1},\ldots,\hatpi\ind{T}\circ\hatphi\ind{T})$. \label{line:pac-output}
\end{algorithmic}
\caption{\obslatreduction: Observable-to-Latent Reduction}
\label{alg:obs-to-lat}
\end{algorithm}

\subsection{Algorithmic modularity via hindsight observability}\label{sec:hindsight-rl} %

Our first algorithmic result bypasses the hardness in
\cref{sec:statistical-results} by considering the setting of
\emph{hindsight observability},
which has garnered recent interest \dfedit{in the context of
  POMDPs} \citep{lee2023learning,guo2023sample,shi2023theoretical,lanier2024learning}. Here, we assume that at training time (but not during deployment),
the algorithm has access to additional feedback in the form of the
true latent states, which are revealed at the end of each
episode.\neurips{\footnote{We emphasize that \arxiv{in the hindsight observability
  framework, }the learner must still execute \emph{observation-space
    policies} $\pi:\cX\to\Delta(\cA)$, as the latent states are only
  revealed \emph{at the end of each episode}.}} 
  \begin{assumption}[Hindsight Observability \citep{lee2023learning}]
  The latent states $(\phi^\star_1(x_1),\ldots,\phi^\star_H(x_H))$ are revealed to the learner
  after each episode $(x_1, a_1,r_1,\ldots,x_{H},a_H,r_H)$ concludes. 
\end{assumption}
\arxiv{ We emphasize that \arxiv{in the hindsight observability
  framework, }the learner must still execute \emph{observation-space
    policies} $\piobs:\cX\to\Delta(\cA)$, as the latent states are only
  revealed \emph{at the end of each episode}.} Under hindsight observability, we can instantiate the representation
learning oracle in \olr so that the reduction achieves low risk for
\emph{any choice of black-box base algorithm $\Alglat$.}
In particular, we make use of \emph{online} classification oracles, which use the revealed latent states
to achieve low classification loss with respect to $\phistar$ under
adaptively generated data. We first state a guarantee based on generic
classification oracles, then instantiate it to give a
concrete end-to-end sample complexity bound.\loose

Formally, at each step $t$, the online classification oracle, denoted %
	via $\Hsightlearn$, is given the states and hindsight
        observations collected so far and produces a \dfedit{deterministic}
        estimate $\hatphi\ind{t} =
        \Hsightlearn\prn{\crl{x\ind{i}_h,\phistarh(x\ind{i}_h)}_{i<t,h\leq
            H}}$ for the true decoder $\phistar$. We measure the regret of the oracle via the $0/1$ loss for classification: %
\[
	\Regrep(T) \coloneqq \sum_{t=1}^T \sum_{h=1}^H \En_{\pi\ind{t} \sim p\ind{t}} \En^{\pi\ind{t}}\brk*{\indic\crl[\big]{\hatphi\ind{t}_h(x_h) \neq \phistarh(x_h)}},
\]
where $p\ind{t}$ represents a randomization distribution over the
policy $\pi\ind{t}$. %
Our reduction %
succeeds under the assumption that the oracle has low expected regret. 
\begin{assumption}\label{ass:online-classifier}
For any (possibly adaptive) sequence $\pi\ind{t}$, with $\pi\ind{t} \sim p\ind{t}$, the %
	online classification oracle $\Hsightlearn$ has expected regret bounded by \neurips{$\En\brk*{\Regrep(T)} \leq \Estrep(T),$}
\arxiv{
\[
	\En\brk*{\Regrep(T)} \leq \Estrep(T),
\]
}
\dfedit{where $\Estrep(T)$ is a known upper bound.}%
\end{assumption}

We apply such an oracle within \olr as follows: at the end of each iteration
$t\in\brk{T}$ in \olr, we sample $k\sim{}\brk{K}$ uniformly, and
update the classification oracle with the trajectory
$(x_1\ind{t,k},a_1\ind{t,k},r_1\ind{t,k}), \ldots,
(x_H\ind{t,k}a_H\ind{t,k},r_H\ind{t,k})$; see the proof of
\cref{thm:hindsightreduction} for details. We let $\Riskobs(TK)$ denote the risk of
the \obslatreduction reduction when run for $T$ epochs of $K$ episodes, and we let $\Riskstar(K) \coloneqq \En\brk{\Risk(K,\Alglat,\Mstarlat)}$ denote the expected risk of $\Alglat$ when
            executed on $\Mstarlat$ with access to the true latent states $s_h = \phistar(x_h)$ for $K$
            episodes. %

\begin{restatable}[Risk bound for \olr under hindsight observability]{theorem}{hindsightreduction}\label{thm:hindsightreduction}
Let \Alglat be a base algorithm with base risk $\Riskstar(K)$, and $\Hsightlearn$ a representation learning oracle satisfying \cref{ass:online-classifier}. %
Then \cref{alg:obs-to-lat}, with inputs $T,K,\Phi$, $\Hsightlearn$, and \Alglat,\,%
has expected risk 
\[
	\En\brk*{\Risk_\obs(TK)} \leq \Riskstar(K) + \frac{2K}{T}\Estrep(T).
      \]
\end{restatable}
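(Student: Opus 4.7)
My strategy is to decompose the per-epoch suboptimality of \olr into a base-algorithm risk term plus two decoder-misclassification terms, and then aggregate the latter via the regret guarantee of $\Hsightlearn$. Since $\hatpi$ is the uniform mixture over $\hatpi\ind{t}\circ\hatphi\ind{t}$, linearity gives $\En\brk*{\Risk_\obs(TK)} = \tfrac{1}{T}\sum_{t=1}^T \En\brk*{J^{\sMstarobs}(\pistarobs) - J^{\sMstarobs}(\hatpi\ind{t}\circ\hatphi\ind{t})}$. Using decodability of $\psistar$ (which implies any optimal observation-space policy factors as $\pistarlat\circ\phistar$, so in particular $J^{\sMstarobs}(\pistarobs) = J^{\sMstarlat}(\pistarlat)$), I split each per-epoch term as
\begin{align*}
  \underbrace{J^{\sMstarlat}(\pistarlat) - J^{\sMstarlat}(\hatpi\ind{t})}_{(A_t)} \;+\; \underbrace{J^{\sMstarlat}(\hatpi\ind{t}) - J^{\sMstarobs}(\hatpi\ind{t}\circ\hatphi\ind{t})}_{(B_t)}.
\end{align*}

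To bound $(B_t)$, I use the natural coupling between a trajectory of $\hatpi\ind{t}\circ\hatphi\ind{t}$ in $\Mstarobs$ and a trajectory of $\hatpi\ind{t}$ in $\Mstarlat$ (sharing $s_1$, emissions, transitions, and action noise). The two trajectories agree step-by-step until $\hatphi\ind{t}$ first mislabels the current observation; since cumulative rewards lie in $[0,1]$, this gives $(B_t) \leq \sum_{h=1}^H \Pr^{\hatpi\ind{t}\circ\hatphi\ind{t},\sMstarobs}\brk*{\hatphi\ind{t}_h(x_h)\neq\phistarh(x_h)}$. To bound $(A_t)$, I couple the execution of \Alglat \emph{inside} \olr (fed compressed observations $\hatphi\ind{t}(x_h)$) with a hypothetical execution of \Alglat in $\Mstarlat$ with direct latent access; the two executions receive identical transcripts---and hence produce identical output policies $\hatpi\ind{t}$---so long as $\hatphi\ind{t}_h(x_h\ind{t,k})=\phistar_h(x_h\ind{t,k})$ at every step of every one of the $K$ training episodes. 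On the coupling-success event $\En[(A_t)]$ is upper-bounded by $\Riskstar(K)$ via the base-algorithm assumption, and on the complementary event $(A_t)\leq 1$ by the reward bound; a union bound over the $K$ episodes then yields $\En[(A_t)] \leq \Riskstar(K) + \sum_{k=1}^K\sum_{h=1}^H \En^{\pilat\ind{t,k}\circ\hatphi\ind{t},\sMstarobs}\brk*{\indic\crl*{\hatphi\ind{t}_h(x_h)\neq\phistarh(x_h)}}$.

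Combining these two bounds, the excess cost beyond $\Riskstar(K)$, summed over $t$, is exactly the expected $0/1$ classification loss of $\hatphi\ind{t}$ along the $K{+}1$ trajectories deployed in epoch $t$ (the $K$ training episodes of \Alglat plus the final deployment of $\hatpi\ind{t}\circ\hatphi\ind{t}$). To invoke \cref{ass:online-classifier} I take $p\ind{t}$ to be the uniform mixture over these $K{+}1$ policies and feed the sampled trajectory into $\Hsightlearn$ at the end of each epoch; the regret bound $\En\brk*{\Regrep(T)}\leq\Estrep(T)$, after rescaling by $K{+}1\leq 2K$ and dividing by $T$, yields the claimed $\tfrac{2K}{T}\Estrep(T)$ contribution and completes the proof. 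The main subtlety lies in the coupling argument for $(A_t)$: one must carefully check that under correct decoding at every visited step, the input stream fed to \Alglat inside the compressed dynamics is \emph{identical}, bit for bit, to what it would receive in $\Mstarlat$, so that its internal randomness and hence its intermediate policies $\pilat\ind{t,k}$ and final output $\hatpi\ind{t}$ evolve identically in the two coupled processes.
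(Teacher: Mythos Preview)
Your argument is correct and follows the same coupling idea as the paper, but your explicit two-term split $(A_t)+(B_t)$ is actually more careful than the paper's own presentation. The paper works with a single good event $G_t$ over all $K{+}1$ episodes and asserts the equality
$\En_{\hatphi\ind{t}}\brk*{\indic\crl{G_t}\,\Risk(K,\Alglat,\hatphi\ind{t},\Mstarobs)}=\En_{\phistar}\brk*{\indic\crl{G_t}\,\Risk(K,\Alglat,\phistar,\Mstarobs)}$; the coupling does force the output latent policy $\hatpi\ind{t}_\lat$ to have the same law on $G_t$, but the two risk expressions compose $\hatpi\ind{t}_\lat$ with \emph{different} decoders, and $\hatphi\ind{t}$ may still disagree with $\phistar$ off the $K{+}1$ visited trajectories, so $J^{\sMstarobs}(\hatpi\ind{t}_\lat\circ\hatphi\ind{t})$ and $J^{\sMstarobs}(\hatpi\ind{t}_\lat\circ\phistar)$ need not coincide on $G_t$. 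Your $(B_t)$ term is precisely what accounts for this residual gap, bounding it by the misclassification probability along the $(K{+}1)$-th (deployment) episode. After that, both arguments collapse to the same union bound over the $K{+}1$ per-epoch trajectories and the same invocation of \cref{ass:online-classifier} with $p\ind{t}$ uniform over those trajectories, yielding the stated $\tfrac{2K}{T}\Estrep(T)$.
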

This result shows that we can achieve sublinear risk under latent
dynamics as long as (i) the
base algorithm achieves sublinear risk $\Riskstar(K)$ given access to the true latent states, and (ii) the classification oracle achieves sublinear regret
$\Estrep(T)$. Notably, the result is fully modular, meaning we
require no explicit conditions on \paedit{the latent dynamics or }the base algorithm, and is
computationally efficient whenever the base algorithm and
classification oracle are efficient.

To make \cref{thm:hindsightreduction} concrete, we next provide a
representation learning oracle (\ExpWeights; \cref{alg:derandomized-expweights} in \cref{app:implementing-hsightlearn}) based on a derandomization of the classical exponential weights
mechanism, which satisfies \cref{ass:online-classifier} with
$\Estrep \approxleq H\log|\Phi|$ whenever it has access to a class
$\Phi$ that satisfies decoder realizability.\neurips{; %
	 for the pseudocode, see \cref{alg:derandomized-expweights} in \cref{app:implementing-hsightlearn}.} %

\begin{restatable}[Online classification via \ExpWeights]{lemma}{detonlineclassifier}\label{lem:det-online-classifier}
Under decoder realizability $(\phistar \in \Phi)$, %
\ExpWeights (\cref{alg:derandomized-expweights}) satisfies \cref{ass:online-classifier} with\footnote{In this section, the notations $\wt\cO, \approx,$ and $\lesssim$ ignore only constants and logarithmic factors of $H$.}  \neurips{
$\Estrep(T) \leq \wt\cO\prn*{H\log\abs*{H\Phi}}$.\loose
} %
\arxiv{
\[
	\Estrep(T) = \wt\cO\prn*{H\log\abs*{\Phi}}.
\]
}
\end{restatable}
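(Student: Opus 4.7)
The plan is to analyze \ExpWeights as a derandomized exponential weights algorithm for realizable online classification, run independently across the $H$ layers.

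For each layer $h \in [H]$, \ExpWeights maintains weights $w\ind{t}_h(\phi)$ over $\Phi$, initialized to $w\ind{1}_h(\phi) = 1$. At round $t$, the deterministic prediction at any state $x$ is the weighted majority vote
\[
\hatphi\ind{t}_h(x) = \argmax_{s \in \cS} \sum_{\phi \in \Phi:\, \phi_h(x) = s} w\ind{t}_h(\phi),
\]
and upon observing the hindsight label $s\ind{t}_h = \phistar_h(x\ind{t}_h)$ at the end of round $t$, the weights are updated multiplicatively via $w\ind{t+1}_h(\phi) = w\ind{t}_h(\phi) \cdot \beta^{\indic\{\phi_h(x\ind{t}_h) \neq s\ind{t}_h\}}$ for a constant $\beta \in (0, 1/2)$.

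First I would fix a layer $h$ and apply the classical weighted majority argument. Whenever $\hatphi\ind{t}_h(x\ind{t}_h) \neq s\ind{t}_h$, at least half of the total weight must lie on decoders $\phi$ with $\phi_h(x\ind{t}_h) \neq s\ind{t}_h$ (otherwise the weighted majority would have output $s\ind{t}_h$). Consequently, the total weight at layer $h$ contracts by a factor of at least $(1+\beta)/2$ after every mistake. Under decoder realizability, $\phistar$ never errs on hindsight labels, so $w\ind{t}_h(\phistar) = 1$ for all $t$, and hence the total weight is always at least $1$. Since the initial total weight is $|\Phi|$, the pathwise mistake count at layer $h$ satisfies $M_h \leq \log|\Phi| / \log(2/(1+\beta)) = O(\log|\Phi|)$ for every realization of the trajectory sequence.

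Finally, I would convert the pathwise bound into the expected regret bound via the tower property. For each $(t, h)$, conditioning on the history through round $t-1$, the per-round regret contribution $\En_{\pi\ind{t} \sim p\ind{t}} \En^{\pi\ind{t}}[\indic\{\hatphi\ind{t}_h(x_h) \neq \phistar_h(x_h)\}]$ equals the conditional expectation of the realized mistake indicator $\indic\{\hatphi\ind{t}_h(x\ind{t}_h) \neq \phistar_h(x\ind{t}_h)\}$. Summing over $h$ and $t$ and invoking the pathwise layerwise bound yields $\En[\Regrep(T)] \leq O(H\log|\Phi|)$, matching the claim. The main subtlety is the derandomization step---that a wrong weighted-majority vote implies at least half of the total weight is on incorrect hypotheses---which is standard for $0/1$ losses; the $\log H$ factor hidden by $\wt\cO$ can arise from tuning $\beta$ or from minor concentration losses.
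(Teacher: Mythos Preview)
Your proposal is correct and in fact more direct than the paper's argument. Both analyze the same algorithm (the paper's \ExpWeights is weighted majority with $\beta=e^{-1}$, and its mode output coincides with your weighted vote), but the proofs differ. The paper first bounds the \emph{randomized} exponential-weights regret by $2\log|\Phi|$ per layer, then applies a martingale concentration step (multiplicative Freedman with a union bound over $h$) to pass from the realized sequence to the conditional-expectation form of $\Regrep$, and finally derandomizes by showing via case analysis that the mode's $0/1$ loss is at most twice the randomized loss. Your route instead uses the classical Littlestone--Warmuth contraction argument directly on the deterministic predictor, obtaining a \emph{pathwise} mistake bound $M_h=O(\log|\Phi|)$ for every input sequence, after which the tower property gives the expected bound immediately. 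Your approach avoids the concentration step and the separate derandomization lemma, and in fact yields $O(H\log|\Phi|)$ without the extra $\log H$ factor that the paper picks up from the union bound in its concentration step. The ``half the weight is wrong'' step you flag is indeed valid in the multiclass setting: if the mode $s_{\max}\neq s^{\star}$, then $W(s^{\star})\leq W(s_{\max})\leq W_{\mathrm{total}}-W(s^{\star})$, so incorrect hypotheses carry at least half the total weight.
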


Instantiating \cref{thm:hindsightreduction} with the above representation learning oracle, we obtain the following algorithmic modularity result. 

\begin{corollary}[Algorithmic modularity under hindsight
  observability]
  \label{cor:hindsight}
  For any base algorithm $\Alglat$, under decoder realizability
  $(\phistar \in \Phi)$, \olr with inputs $T,K,\Phi,\ExpWeights,$ and $ \Alglat$ achieves %
  \[
	\En\brk*{\Risk_\obs(TK)} \lesssim \Riskstar(K) +
        \frac{HK\log\abs{\Phi}}{T}.
      \]
   Consequently, for any $\Alglat$, setting $T\approx KH\log\abs{\Phi}/\Riskstar(K)$ achieves $\En\brk*{\Riskobs(TK)} \lesssim \Riskstar(K)$ with a number of trajectories $TK = \wt\cO\prn{\nicefrac{K^2H\log\abs{\Phi}}{\Riskstar(K)}}$.
   \end{corollary}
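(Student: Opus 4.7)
The plan is a direct plug-in of \cref{lem:det-online-classifier} into \cref{thm:hindsightreduction}, followed by optimizing the resulting bound in $T$.

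First, I would verify that the preconditions of \cref{thm:hindsightreduction} are met when \GenReplearn is instantiated as \ExpWeights. The only nontrivial condition is \cref{ass:online-classifier}, and this is exactly the content of \cref{lem:det-online-classifier} under decoder realizability $\phistar \in \Phi$. The key subtlety to confirm is that the sequence of (observation-space) policies fed to \ExpWeights within \olr---namely $\pi\ind{t,k}_\lat \circ \hatphi\ind{t}$ for a uniformly sampled $k\sim[K]$ at the end of each epoch, as prescribed in the discussion preceding \cref{thm:hindsightreduction}---is exactly the type of adaptive sequence covered by the ``for any possibly adaptive sequence'' clause of \cref{ass:online-classifier}, so the bound $\Estrep(T) \lesssim H\log\abs{\Phi}$ applies as stated.

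Second, I would directly substitute this bound into the conclusion of \cref{thm:hindsightreduction}:
\[
\En\brk*{\Risk_\obs(TK)} \;\leq\; \Riskstar(K) + \frac{2K}{T}\,\Estrep(T) \;\lesssim\; \Riskstar(K) + \frac{HK\log\abs{\Phi}}{T},
\]
which yields the first displayed inequality of the corollary.

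For the second claim, I would balance the two terms: setting $T \asymp KH\log\abs{\Phi}/\Riskstar(K)$ makes the residual term $\frac{HK\log\abs{\Phi}}{T}$ of the same order as $\Riskstar(K)$, giving $\En\brk*{\Risk_\obs(TK)} \lesssim \Riskstar(K)$. The total trajectory count is then $TK \asymp K^2 H \log\abs{\Phi}/\Riskstar(K)$, recovering the stated sample complexity up to $\wt\cO$ factors.

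The main obstacle is purely bookkeeping: making sure that the choice of $T$ is well-defined (i.e., that $\Riskstar(K)$ is nonzero and known, or that $T$ can be chosen by a doubling trick if only an upper bound on $\Riskstar(K)$ is available); otherwise this is a direct corollary with no additional technical content beyond what has already been established in \cref{thm:hindsightreduction,lem:det-online-classifier}.
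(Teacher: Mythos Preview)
Your proposal is correct and matches the paper's approach exactly: the paper treats this corollary as an immediate consequence of plugging \cref{lem:det-online-classifier} into \cref{thm:hindsightreduction} and then balancing terms, with no additional argument given.
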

    Beyond achieving algorithmic modularity, this result shows that under hindsight observability, we
    can achieve
strong statistical modularity (\paedit{modulo possible $H$ factors}) for \emph{every} base MDP class
$\cMlat$, an important result in its own right.\footnote{\paedit{Formally, while we have defined the statistical modularity condition in terms of \emph{high-probability} risk bounds, it is straightforward to extend it to instead consider \emph{expected} risk bounds.}} %
As an example, suppose that $\Riskstar(K) = \cO\prn{K^{-1/2}}$, which
is satisfied by many standard algorithms of interest \citep{jiang2017contextual,jin2020provably,jin2021bellman,foster2021statistical}. Then, setting $T$ according to \cref{cor:hindsight} obtains an expected risk bound of $\varepsilon$ using $\cO(\nicefrac{H\log\abs{\Phi}}{\veps^5})$ trajectories.

  \begin{remark}[Online versus offline oracles]
    \cref{thm:hindsightreduction} critically uses that
    assumption that $\Hsightlearn$ satisfies an \emph{online}
    classification error bound to handle the fact that data is
    generated adaptively based on the estimators
    $\phihat\ind{1},\ldots,\phihat\ind{T}$ it produces, which is by
    now a relatively standard technique in the design of interactive
    decision making algorithms
    \citep{foster2020beyond,foster2021statistical,foster2023foundations}. We
    note that under coverability and other exploration conditions,
    online oracles for classification can be directly obtained from
    \emph{offline} (i.e. supervised) classification oracles
    \citep{xie2022role,block2024performance,foster2024online}. %
  \end{remark}

\arxiv{}

\neurips{    
\subsection{Algorithmic modularity via self-predictive estimation}\label{sec:online-rl-teaser}

We complement the above results by studying the general online RL setting \emph{without} hindsight observations. 
To address this more challenging setting, we design an \emph{optimistic self-predictive estimation} objective (\eqref{eq:optim-self-prediction}), and prove that any representation learning oracle that attains low regret with respect to this objective can be used in \olr to obtain observable-to-latent reductions for any low-risk base algorithm \Alglat (for a formal statement, see \cref{thm:online-reduction-main}). 
This objective learns a representation by jointly fitting a decoder together with a latent model.
Accordingly, we provide a (computationally inefficient) estimator which minimizes the objective under certain statistical conditions (specifically, coverability of the base MDP and a function approximation condition enabling us to express the self-prediction target as a latent model), thereby obtaining an end-to-end reduction for the general online RL setting.
For lack of space, these results are deferred to \cref{sec:online-rl}.\loose
}

\subsection{Algorithmic modularity via self-predictive estimation}\label{sec:online-rl}

In this section, we remove the assumption of hindsight observability used in \cref{sec:hindsight-rl} and instantiate \olr in the general \emph{online RL} setting. Rather than assume access to additional side-information, we adopt a \emph{model-based representation learning} approach, and augment our ability to perform representation learning by equipping the representation learning algorithm with a \emph{set of base MDPs} $\cMlat$ %
	in addition to the decoder class $\Phi$. We will learn a representation by jointly fitting a decoder and the base (latent) dynamics, which is a common approach in practice \citep{gelada2019deepmdp,hafner2019dream,hafner2019learning,hafner2020mastering,schrittwieser2020mastering,schwarzer2020data,guo2022byol}.%
We firstly present in \cref{sec:self-pred-estimation} a new notion of \emph{optimistic self-predictive regret} %
	which combines self-predictive representation learning with a form of optimism over a learned latent model. We then show in \cref{sec:self-pred-main-result} that any representation learning oracle that attains low regret, when used within \obslatreduction (\cref{alg:obs-to-lat}), leads to observable-to-latent reductions that ensure low risk for \textit{any} base algorithm $\Alglat$, thereby achieving algorithmic modularity. Lastly, in \cref{sec:self-pred-example}, we instantiate this oracle under natural structural and function approximation conditions, yielding end-to-end modularity and sample complexity guarantees.

\neurips{\paragraph{Self-predictive estimation.}}
\arxiv{\subsubsection{Self-predictive estimation}\label{sec:self-pred-estimation}}

Our self-predictive representation learning oracles learn to fit a representation $\phi$ such that the \emph{induced latent transitions} ($\phi_h(x_h)$ to $\phi_{h+1}(x_{h+1})$) can be accurately modeled by some base (latent) MDP $\Mlat\in\cMlat$. To describe the objective, let us first introduce some notation. For a given MDP $M$ over either $\cS$ (resp. $\cX$), we write $M_h(r_h,s_{h+1} \mid s_h,a_h)$ (resp. $M_h(r_h,x_{h+1} \mid x_h,a_h)$) for the joint conditional distribution over rewards and next states. Next, for any $\phi \in \Phi$, we define the \emph{pushforward model} for $\Mstarobsh$ induced by $\phi$ via: %
	\begin{equation}\label{eq:pushforward-model}
\brk*{\phi_{h+1}\sharp\Mstarobsh}(r,s' \mid x,a) \coloneqq %
	\sum_{x':\phi_{h+1}(x')=s'} \Mstarobsh(r,x' \mid x,a). %
\end{equation}
The \emph{\pamodel} for $\phi$ captures the forward probability of the estimated latent state $\phi(x')$ given a current observation $x$.
To measure distance between models, we will use squared Hellinger distance (e.g, \citet
{foster2021statistical}), defined via $\Dhels{\bbP}{\bbQ} = \int \prn[\big]{\sqrt{\frac{\dd\bbP}{\dd\nu}} - \sqrt{\frac{\dd\bbQ}{\dd\nu}}}^2 \dd \nu$ for a common dominating measure $\nu$.
\arxiv{
Then, for a base model $\Mlat$ and a decoder $\phi$, the \emph{self-predictive error} of $(M_\lat,\phi)$, at state-action pair $x_h,a_h$, is given by
\[
	\brk*{\Delta_h(\Mlat, \phi)}(x_h,a_h) \coloneqq \Dhels{M\sub{\lat,h}(\phi\sub{h}(x_h),a_h)}{\brk[\big]{\phi\sub{h+1}\sharp M^\star_{\obs,h}}(x_h,a_h)}.
\]
This term captures the %
ability of $M\sub{\lat,h}(\phi\sub{h}(x_h),a_h)$ to predict the next latent state $\phi_{h+1}(x_{h+1})$ which is obtained by the pushforward model $\brk[\big]{\phi\sub{h+1}\sharp M^\star_{\obs,h}}(x_h,a_h)$. Formally, in our model-based representation learning setup, we consider oracles which, \dfedit{for each iteration $t$ within \olr}, take as input the trajectories collected so far and produce an estimate $(\wh{M}\ind{t}_\lat,\hatphi\ind{t})$ for the decoder and base model. The representation learning oracle's \emph{self-predictive regret}, for the sequence $(\wh{M}\ind{t}_\lat,\hatphi\ind{t})$, is then defined as
\[
\Regsim(T) = \sum_{t=1}^T	\sum_{h=0}^H \En_{\pi\ind{t} \sim p\ind{t}}\En^{\pi\ind{t}}\brk*{\brk{\Delta_h(\wh{M}\ind{t}_{\lat}, \hatphi\ind{t})}(x_h,a_h)},
\]
where $p\ind{t}$ represents a randomization distribution over the policy $\pi\ind{t}$. 

On its own, minimizing this regret may lead to degenerate solutions, a widely observed phenomenon in practice \citep{tang2023understanding}. For example, in a standard combination lock MDP (e.g., \citet{agarwal2019reinforcement,misra2020kinematic}), a degenerate decoder-model pair that maps all observations to a single latent state will have zero self-predictive loss until we reach the goal, which can take exponentially long.\footnote{This is similar to the observation that naive value function approximation methods, such as Fitted Q-Iteration, can fail to explore in online RL without optimism. We expect that given access to additional exploratory data (e.g., in the \emph{Hybrid RL} setting of \citet{song2022hybrid}), the latent optimism term can be removed.} We address this via the notion of \emph{optimistic estimation} used in \citet{zhang2021feel,foster2023model}, which biases the objective towards latent models with high return. This leads to the following \textit{optimistic self-predictive regret}, defined for a parameter $\gamma > 0$, via
\begin{align}
  \label{eq:regselfopt}
\Regsimopt(T,\gamma) = \sum_{t=1}^T	 \sum_{h=0}^H \En_{\pi\ind{t} \sim p\ind{t}}\En^{\pi\ind{t}}\brk*{\brk{\Delta_h(\wh{M}\ind{t}_{\lat}, \hatphi\ind{t})}(x_h,a_h)} + \gammainv \prn{J^{\Mstarlat}(\pi_{\Mstarlat}) - J^{\wh{M}\ind{t}_\lat}(\pi_{\wh{M}\ind{t}_\lat})}.
\end{align}
}

\neurips{
Formally, we consider an online representation learning oracle \OptReplearn which, \dfedit{for each iteration $t$ within \olr}, takes as input the trajectories collected so far and produce an estimate $(\wh{M}\ind{t}_\lat,\hatphi\ind{t})$ for the decoder and latent model. \dfedit{We note that only the decoder $\hatphi\ind{t}$ is used within \olr; the model $\wh{M}_\lat$ is only used for analysis (and possibly within the representation learner \OptReplearn).} We measure the algorithm's performance through regret according to the following \emph{optimistic self-prediction objective}, defined for a parameter $\gamma>0$, via \neurips{$\Regsim(T,\gamma) \coloneq$ 
{\small
\begin{equation}\label{eq:optim-self-prediction}
	 {\small \sum_{t=1}^T \sum_{h=0}^H\En^{\pi\ind{t}}\brk*{\Dhels{\wh{M}\ind{t}\sub{\lat,h}(\hatphi\ind{t}\sub{h}(x_h),a_h)}{\brk[\big]{\hatphi\ind{t}\sub{h+1}\sharp M^\star_{\obs,h}}(x_h,a_h)}} +\gamma^{-1}\prn{J^{\Mstarlat}(\pi_{\Mstarlat}) - J^{\wh{M}\ind{t}_\lat}(\pi_{\wh{M}\ind{t}_\lat})} }.
\end{equation}}
}
\arxiv{
\begin{equation}\label{eq:optim-self-prediction}
	 {\small \Regsimopt(T,\gamma) \coloneqq  \sum_{t=1}^T \sum_{h=0}^H\En^{\pi\ind{t}}\brk*{\Dhels{\wh{M}\ind{t}\sub{\lat,h}(\hatphi\ind{t}\sub{h}(x_h),a_h)}{\brk[\big]{\hatphi\ind{t}\sub{h+1}\sharp M^\star_{\obs,h}}(x_h,a_h)}} +\gamma^{-1}\prn{J^{\Mstarlat}(\pi_{\Mstarlat}) - J^{\wh{M}\ind{t}_\lat}(\pi_{\wh{M}\ind{t}_\lat})} }.
\end{equation}
}%
Here, the first term captures the \emph{self-predictive} error of $(\wh{M}\ind{t}_\lat,\hatphi\ind{t})$; that is, can $\wh{M}\ind{t}\sub{\lat,h}(\hatphi\ind{t}\sub{h}(x_h),a_h)$ predict the next  latent state $\hatphi\ind{t}$ which is sampled from the pushforward model $\brk[\big]{\hatphi\ind{t}\sub{h+1}\sharp M^\star_{\obs,h}}(x_h,a_h)$? %
	The second term, which is inspired by the notion of \emph{optimistic estimation} used in \citet{zhang2021feel,foster2023model}, biases the objective toward a latent model with high return. This is necessary to avoid degenerate solutions in the self-predictive objective, a widely observed phenomenon in practice \citep{tang2023understanding}.\footnote{For example, in a standard combination lock MDP (e.g., \citet{agarwal2019reinforcement,misra2020kinematic}), a degenerate decoder-model pair that maps observations to a single latent state will have zero self-predictive loss until we reach the goal, which can take exponentially long. This is similar to the observation that naive value function approximation methods, such as Fitted Q-Iteration, can fail to explore in online RL without optimism. We expect that given access to additional exploratory data (e.g., in the \emph{Hybrid RL} setting of \citet{song2022hybrid}), the latent optimism term can be removed.}
}

We assume going forward that \Algsim obtains low %
	optimistic self-predictive regret; in \cref{sec:self-pred-example} we provide a maximum-likelihood-type estimator and conditions under which this holds.
	
\begin{assumption}\label{ass:optim-replearn}
 For a parameter $\gamma > 0$ and any (possibly adaptive) sequence $\pi\ind{t}$, with $\pi\ind{t} \sim p\ind{t}$, the online representation learning oracle \Algsim is proper (i.e. outputs $\wh{M}\ind{t}_\lat \in \cMlat$ for all $t \in [T]$) and satisfies %
 	\neurips{$\En\brk*{\Regsimopt(T,\gamma)} \leq \Estsimopt(T,\gamma)$,}\arxiv{
 \begin{align*}
		\En\brk*{\Regsimopt(T,\gamma)} \leq \Estsimopt(T,\gamma),
	\end{align*}
 } where $\Estsimopt(T,\gamma)$ is a known upper bound. %
      \end{assumption}
      
\dfedit{We note that only the decoder $\hatphi\ind{t}$ is used within \olr; the model $\wh{M}\ind{t}_\lat$ is only used for analysis (and possibly within the representation learner \OptReplearn).}

     \neurips{\paragraph{Main result.}}\arxiv{\subsubsection{Main result}\label{sec:self-pred-main-result}} We now state the main guarantee for \olr with self-predictive representation learning. Recall that $\Riskobs(TK)$ denotes the risk of the \obslatreduction reduction. Compared to the hindsight-observable setting, we require a slightly stronger performance guarantee from the base algorithm $\Alglat$: our result scales with the \emph{worst-case} expected risk for $\Alglat$ over all $\Mlat\in\cMlat$, defined via $\Riskbase(K)\ldef{}\sup_{\Mlat\in\cMlat}\En\brk*{\Risk(K,\Alglat,\Mlat))}$. \neurips{\footnote{Intuitively, our result scales with $\Riskbase(K)$ instead of $\Riskstar(K)$ due to potential symmetries in the self-predictive objective. For example, there might be a representation $\hatphi$ (resp. latent model $\wh{M}_\lat$) that are identical to $\phistar$ (resp. $\Mstarlat$) \emph{up to} permutations of the latent state space. These cannot be distinguished without observing the latent states.}}
\begin{restatable}[Risk bound for \obslatreduction under self-predictive estimation]{theorem}{onlinereductionmain}\label{thm:online-reduction-main}
	Suppose \OptReplearn satisfies \cref{ass:optim-replearn} \dfedit{with parameter $\gamma>0$}. Then \cref{alg:obs-to-lat}, with inputs $T,K,\Phi$, $\OptReplearn$, and $\Alglat$\,%
has expected risk 	
		\[
		\En\brk*{\Riskobs(TK)} \leq c_1\cdot{}\Riskbase(K) + c_2\gamma \cdot \frac{K}{T}\Estsimopt(T,\gamma) + c_3\gammainv\cdot KH,
	\]
	for absolute constants $c_1,c_2,c_3>0$.
\end{restatable}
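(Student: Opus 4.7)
The plan is to bound the per-epoch suboptimality of the mixture output $\hatpi = \unif(\hatpi\ind{1}\circ\hatphi\ind{1},\ldots,\hatpi\ind{T}\circ\hatphi\ind{T})$. Since decodable emissions preserve optimal values, $J^{\sMstarobs}(\pi_{\sMstarobs}) = J^{\sMstarlat}(\pi_{\sMstarlat})$, and for each epoch $t$ I telescope through $J^{\wh{M}\ind{t}_\lat}(\pi_{\wh{M}\ind{t}_\lat})$ and $J^{\wh{M}\ind{t}_\lat}(\hatpi\ind{t})$ to get
\begin{align*}
  J^{\sMstarobs}(\pi_{\sMstarobs}) - J^{\sMstarobs}(\hatpi\ind{t}\circ\hatphi\ind{t}) &= \underbrace{\prn*{J^{\sMstarlat}(\pi_{\sMstarlat}) - J^{\wh{M}\ind{t}_\lat}(\pi_{\wh{M}\ind{t}_\lat})}}_{(A)_t} + \underbrace{\prn*{J^{\wh{M}\ind{t}_\lat}(\pi_{\wh{M}\ind{t}_\lat}) - J^{\wh{M}\ind{t}_\lat}(\hatpi\ind{t})}}_{(B)_t} \\
  &\quad + \underbrace{\prn*{J^{\wh{M}\ind{t}_\lat}(\hatpi\ind{t}) - J^{\sMstarobs}(\hatpi\ind{t}\circ\hatphi\ind{t})}}_{(C)_t}.
\end{align*}

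Term $(A)_t$ is handled directly by the optimism piece of $\Regsimopt(T,\gamma)$: dropping the non-negative Hellinger summands yields $\sum_t\En[(A)_t]\leq\gamma\cdot\Estsimopt(T,\gamma)$. Term $(C)_t$ is a standard observable-vs-latent simulation gap for the single policy $\hatpi\ind{t}\circ\hatphi\ind{t}$; the chain rule for Hellinger together with $|\Delta J|\leq\sqrt{2\Dhelshort^2}$ gives $(C)_t\leq\sqrt{2H\cdot\mathrm{SP}_{\hatpi\ind{t}}(t)}$, where $\mathrm{SP}_\pi(t)\coloneqq\sum_h\En^{\pi,\sMstarobs}\brk*{\Dhels{\wh{M}\ind{t}_{\lat,h}(\hatphi\ind{t}_h(x_h),a_h)}{[\hatphi\ind{t}_{h+1}\sharp M^\star_{\obs,h}](x_h,a_h)}}$ is precisely the self-predictive loss that appears inside $\Regsimopt$.

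For $(B)_t$, the delicate point is that $\Alglat$ runs on the $\hatphi\ind{t}$-compressed dynamics---a POMDP in general---rather than on $\wh{M}\ind{t}_\lat$, so $\Riskbase(K)$ does not apply directly. I introduce a phantom parallel run of $\Alglat$ on the base model $\wh{M}\ind{t}_\lat$ producing a policy $\hatpi\ind{t}_{\star}$, and split $(B)_t$ into (i) the suboptimality of $\hatpi\ind{t}_{\star}$ on $\wh{M}\ind{t}_\lat$, at most $\Riskbase(K)$ in expectation by properness of \OptReplearn (which ensures $\wh{M}\ind{t}_\lat\in\cMlat$), plus (ii) the value gap $|J^{\wh{M}\ind{t}_\lat}(\hatpi\ind{t}_{\star})-J^{\wh{M}\ind{t}_\lat}(\hatpi\ind{t})|$. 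Since both $\hatpi\ind{t}$ and $\hatpi\ind{t}_{\star}$ are deterministic functions of their respective $K$-trajectory sample paths, (ii) is bounded by the total variation between those coupled paths, which by the chain rule across the $K$ exploration policies $\pi\ind{t,k}$ is at most $\sqrt{HK\sum_{k=1}^K\mathrm{SP}_{\pi\ind{t,k}}(t)}$.

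To finish, I apply $\sqrt{ab}\leq\lambda a + b/(4\lambda)$ with $\lambda\asymp\gamma$ to each square-root bound, linearizing the Hellinger sums into a regret contribution (weighted by $\gamma$, absorbable into $\Estsimopt$) plus a complementary penalty of order $\gammainv KH$. Summing over $t$, averaging by $1/T$, and invoking the regret bound with $\pi\ind{t}$ chosen as a uniform mixture over $\{\pi\ind{t,k}\}_{k=1}^K\cup\{\hatpi\ind{t}\circ\hatphi\ind{t}\}$ (natural since the algorithm already deploys the latter as one extra trajectory at the end of each epoch) recovers the bound $c_1\Riskbase(K)+c_2\gamma\cdot\Estsimopt(T,\gamma)\cdot K/T+c_3\gammainv KH$. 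The main obstacle I anticipate is arranging this mixture so that the regret jointly controls the self-predictive errors of \emph{all} per-epoch exploration policies (and not just one representative), which is responsible for the explicit $K$ factor in the regret term; one must also verify that the AM-GM balancing routes the $\gammainv T$ slack inherent in using $\Regsimopt$ one-sidedly on the $J^{\sMstarlat}-J^{\wh{M}\ind{t}_\lat}$ term into the $\gammainv KH$ penalty cleanly, so that no residual additive term of order $K$ is left over.
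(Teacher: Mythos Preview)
Your proposal is essentially correct and tracks the paper's proof closely. The three-term telescoping decomposition is identical (your $(A)_t,(B)_t,(C)_t$ are the paper's ``first term'', $A_t$, $B_t$ respectively); the paper handles $(B)_t$ via a Hellinger change-of-measure lemma ($\En_{\bbQ}[h]\leq 3\En_{\bbP}[h]+4\Dhelshort^2(\bbP,\bbQ)$) followed by the Hellinger chain rule, whereas your phantom-run coupling is the TV analogue and is equivalent for bounded losses. The paper additionally packages this step into a general \textsf{CorruptionRobust} abstraction (proving a theorem parametrized by a robustness level $\alpha$, then showing every algorithm has $\alpha\asymp\sqrt{KH}$), but your direct argument is exactly that instantiation.

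One technical point you gloss over deserves care. When you claim the TV/Hellinger between the coupled sample paths is controlled by $\mathrm{SP}_{\pi\ind{t,k}}(t)$, the chain rule across steps actually produces terms $\Dhels{\wh{M}\ind{t}_{\lat,h}(s_h,a_h)}{\wtM^{\star}_{\phi,h}(\cdot\mid s_{1:h},a_{1:h})}$, where the second argument is the \emph{non-Markovian} conditional of the $\hatphi\ind{t}$-compressed POMDP---not the pushforward $[\hatphi\ind{t}_{h+1}\sharp M^\star_{\obs,h}](x_h,a_h)$ that appears in $\mathrm{SP}$. Passing from the former to the latter requires observing that this POMDP conditional is a mixture (over the hidden $x_h$) of pushforward models and then invoking joint convexity of squared Hellinger; the paper isolates this as a separate ``near-Markov'' lemma, and it is really the technical reason self-prediction controls everything. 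With that step made explicit, your AM--GM balancing and mixture choice $\bar p\ind{t}=\unif(\pi\ind{t,1},\ldots,\pi\ind{t,K},\hatpi\ind{t}\circ\hatphi\ind{t})$ are exactly what the paper does to close the argument.
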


\cref{thm:online-reduction-main} achieves sublinear risk as long as (i) the
latent algorithm achieves sublinear risk $\Riskbase(K)$ given access to the true
states, and (ii) the self-predictive representation learning oracle achieves sublinear regret
$\Estsimopt(T,\gamma)$ \dfedit{for an appropriate choice of $\gamma$.}\footnote{For example, in our estimator of \cref{sec:self-pred-example}, we can first set $\gamma \approx KH / \Riskbase(K)$ so that the third term matches $\Riskbase(K)$, and then set $T$ so that the second term does.} %
Intuitively, our result scales with $\Riskbase(K)$ instead of $\Riskstar(K)$ due to potential symmetries in the self-predictive objective. For example, there might be a representation-model pair $(\wh{M}_\lat,\hatphi)$ that is identical to $(\Mstarlat,\phistar)$ \emph{up to} permutations of the latent state space; these cannot be distinguished by a representation learning oracle that does not observe the latent states directly, and thus the base algorithm may be tasked with solving either of these base MDPs. As with \cref{thm:hindsightreduction}, this result achieves algorithmic modularity (since \olr inherits the risk of the base algorithm), and is computationally efficient whenever the base algorithm and self-predictive representation learning oracle are efficient.%

Let us provide some intuition behind the proof of \cref{thm:online-reduction-main}. Recall that, within the inner loop of \olr, the latent algorithm $\alglat$ interacts with the $\hatphi\ind{t}$-compressed dynamics generated by compressing the observations $x_h,a_h$ through the current decoder $\hatphi\ind{t}_h$ (\cref{line:phi-compressed-dynamics}). The crux of the analysis is the following observation: by the self-predictive representation learning guarantee, these dynamics, %
	 despite being possibly non-Markovian and generated from a POMDP (\cref{def:phi-compressed-pomdp}), are well approximated in squared Hellinger distance by the base model $\wh{M}_\lat\ind{t}$ estimated by \OptReplearn (cf. \cref{lem:near-markov}). We can then show that \Alglat, when given data from the $\hatphi\ind{t}$-compressed dynamics, has risk (for solving $\wh{M}_\lat\ind{t}$) that is proportional to: i) its base risk if it were to observe states from $\wh{M}_\lat\ind{t}$, and ii) the Hellinger distance between $\wh{M}_\lat\ind{t}$ and the process induced by its $\hatphi\ind{t}$-compressed dynamics. The last ingredient is the use of latent optimism in \eqref{eq:regselfopt}, through which the risk on $\Mstarlat$ is upper bounded by the risk on $\wh{M}_\lat\ind{t}$.

In the above, showing that $\Alglat$ obtains low risk for $\wh{M}_\lat\ind{t}$ (despite given data from a different process) is done by establishing a certain form of \emph{corruption robustness} (\cref{def:alglat-robust}). Indeed, 
\cref{thm:online-reduction-main} is a special case of a more general theorem (\cref{thm:online-reduction-alpha}), which provides a bound that adapts to \Alglat's level of robustness. 
We obtain \cref{thm:online-reduction-main} by showing that \emph{any algorithm} satisfies the property we require (for a suitably slow rate), but we further show that tighter rates can be achieved by analyzing the specifics of various algorithms of interest (\cref{sec:crobustexamples}). %

\subsubsection{Instantiating the self-predictive estimation oracle}\label{sec:self-pred-example}

We now present an algorithm, \SelfPred (\cref{alg:debiased-mle-optimistic} in \cref{sec:implementing-replearn}), which satisfies \cref{ass:optim-replearn} under additional technical conditions, allowing us to instantiate \cref{thm:online-reduction-main} to give end-to-end guarantees. %
	Before stating the main guarantee, we highlight a few technical difficulties regarding obtaining finite-sample guarantees for (online) self-predictive estimation, and use them to motivate our statistical assumptions and algorithm design.
\neurips{
\begin{enumerate}
\item The first challenge is a realizability issue: when
  $\phi \neq \phistar$, we may not even be able to represent the
  objective $\phi\sharp\Mstarobs$ as a latent model using only
  decoder and latent model realizability. Since we can never guarantee that $\phi \neq \phistar$ in the presence of statistical errors, we must introduce a modelling assumption which lets us capture the pushforward models $\phi\sharp\Mstarobs$. We give a new function
  approximation condition (\emph{mismatch function completeness},
  \cref{ass:hphi-completeness}) which we show is sufficient for
  realizing the pushforward models of \eqref{eq:pushforward-model} 
  (\cref{lem:phi-compressed-model-realizable}). We view this assumption as a minimal way to realize the pushforward models $\phi\sharp\Mstarobs$ as latent models. 
  
\item The second challenge is a
  \emph{double-sampling} issue, which appears because the decoders are
  coupled at different horizons. We address this with a novel
  ``debiased'' maximum likelihood procedure that subtracts a form of
  excess risk to recover an unbiased estimator \cite{jiang2024note}. 
  \item The last issue is one
  related to online learning: the policies chosen by the latent
  algorithm are a function of the estimated decoders, which precludes
  the use of randomized estimators for online learning. We show that
  assuming \emph{coverability} (of the latent MDP, over a suitable set
  of latent policies) is sufficient for bypassing this, since we can
  use offline model estimation to obtain online model
  estimation. %
\end{enumerate}
}
\arxiv{

\paragraph{The statistics of (online) self-predictive estimation.} %
	The first challenge is a realizability issue: when
  $\phi \neq \phistar$, we may not even be able to represent the
  objective $\phi\sharp\Mstarobs$ as a latent model using only
  decoder and latent model realizability. Since we can never guarantee that $\phi =\phistar$ exactly in the presence of statistical errors, we must introduce a modelling assumption which lets us capture the pushforward models $\phi\sharp\Mstarobs$. To this end, we recall the mismatch functions  of \cref{def:h_phi}, which are defined via 
  \[
		\hphih(s'_h \mid s_h) \coloneqq \bbP_{x_h \sim \psistarh(s_h)}\prn*{\phih(x_h) = s'_h}.
		\]

In the context of self-prediction, we show that the following \emph{mismatch completeness} assumption suffices to capture the pushforward models $\phi\sharp\Mstarobs$.
\begin{assumption}[Mismatch completeness]\label{ass:hphi-completeness}
We have a model class $\cL$ such that, for each $\phi \in \Phi$, and $\Mlat \in \cMlat$, we have $\hphi \circ \Mlat \in \cL$, where 
\[
\brk*{\hphi \circ \Mlat}_h(r_h,s_{h+1} \mid s_h,a_h) \coloneqq \sum_{s'_{h+1} \in \cS} \Mlath(r_h,s'_{h+1} \mid s_h,a_h)
 \hphihpo(s_{h+1} \mid s'_{h+1}).
  \]
\end{assumption}

In particular, \cref{lem:phi-compressed-model-realizable} establishes that
\[
\brk{\phi_{h+1}\sharp\Mstarobsh}(\cdot \mid x,a) = \brk*{\hphi \circ \Mstarlat}_h(\cdot \mid \phistarh(x),a).
\]

Accordingly, we view this assumption as a minimal way to realize the pushforward models $\phi\sharp\Mstarobs$.

The second challenge is a
  \emph{double-sampling} issue, which appears because the decoders in \eqref{eq:regselfopt} are
  coupled at different horizons. We address this with a novel
  ``debiased'' maximum likelihood procedure that subtracts a form of
  excess risk (cf. \eqref{eq:debiasing}) to recover an unbiased estimator \cite{jiang2024note}. Our debiased estimator and the mismatch completeness assumption can be viewed as analogous to the techniques and assumptions that are required for squared Bellman error minimization in the context of value function approximation \cite{chen2019information,jin2021bellman}.  
  
The last issue stems from seeking an \emph{online} estimation guarantee:
  the policies chosen by the latent
  algorithm are a function of the estimated decoders, which precludes
  the use of randomized estimators (e.g. exponential weights). We bypass this issue by appealing to the structural condition of coverability \citep{xie2022role}, which allows us to restrict our attention to estimators that achieve low \emph{offline} estimation error (via \cref{lem:offline-to-online}).\footnote{\paedit{More generally, we expect that our results can be extended to any ``decoupling coefficient'' \cite{zhang2021feel,agarwal2022model}.}}
\begin{definition}[State Coverability ]\label{def:coverability} The state coverability coefficient for an MDP $M$ and a policy class $\Pi$ defined over a state space $\cZ$, $\Ccovs(M,\Pi)$, is given by
	\begin{equation}\label{eq:coverability}
		\Ccovs(M,\Pi) \coloneqq \max_{h \in [H]} \min_{\mu \in \Delta(\cZ)}\max_{\pi \in \Pi} \max_{z \in \cZ} \crl*{\frac{d^{\sM,\pi}_h(z)}{\mu(z)}}.
	\end{equation}
\end{definition}

 We require coverability in $\Mstarobs$ over the set of (observation-space) policies played by the \olr reduction (cf. \cref{line:composed-policies}). Again appealing to the mismatch functions, we can express this as an assumption about the base dynamics $\Mstarlat$; we show (\cref{lem:covinvariance}) that the latter is \emph{equivalent} to assuming coverability in $\Mstarlat$ over the set of stochastic policies 

\begin{equation}\label{eq:hPhi-Pilat-policies}
	\hPhi \circ \Pilat \coloneqq \crl*{[\hphi\circ \pilat]_h(a \mid s) = \sum_{s' \in \cS} \hphih(s' \mid s) \pi_{\lat,h}(a \mid s') \mid \phi \in \Phi, \pilat \in \Pilat}, %
\end{equation}
where $\Pilat$ denotes the set of policies that $\Alglat$ may execute. While this set may appear complicated, it is sufficient to assume coverability over the set of all \emph{deterministic} non-stationary policies on $\Mstarlat$.\footnote{This follows from \cref{lem:cov-state-reachability} by noting that each maximum on the right hand side of \eqref{eq:state-cov-equivalence} is attained by a deterministic non-stationary policy.}

}

\paragraph{Guarantee for our self-predictive estimation oracle.} With these prerequisites, the main guarantee for our estimator, \SelfPred (\cref{alg:debiased-mle-optimistic}), is as follows.
\begin{restatable}[Optimistic self-predictive estimation via \SelfPred]{lemma}{optreplearn}\label{lem:implementing-optim-replearn}
Let $\Pilat$ denote the set of policies played by $\Alglat$, and $\Ccovs = \Ccovs(\Mstarlat, \hPhi \circ \Pilat)$ be the state coverability parameter on $\Mstarlat$ over the set of stochastic policies $\hPhi \circ \Pilat$ (\eqref{eq:hPhi-Pilat-policies}). Then, for any $\gamma > 0$, under decoder realizability $(\phistar \in \Phi)$, base model realizability $(\Mstarlat \in \cMlat)$, and mismatch function completeness with class $\cLlat$ (\cref{ass:hphi-completeness}), the estimator in \cref{alg:debiased-mle-optimistic} with inputs $\Phi, \cMlat, \cL_\lat,$ and $\gamma$ satisfies \cref{ass:optim-replearn} with\footnote{In this section, the notations $\wt\cO$ and $\lesssim$ ignores constants and logarithmic factors of: $H, \Ccovs, \abs{\cA},T,$ and $\log\prn{\abs{\cMlat}\abs{\cLlat}\abs{\Phi}}$.}
\[
  \Estsimopt(T,\gamma) = \wt{\cO}\prn*{\sqrt{H\Ccovs\abs{\cA} T}\log(\abs{\cMlat}\abs{\cL_\lat}\abs{\Phi})}.
\]
      \end{restatable}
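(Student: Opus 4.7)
The plan is to combine three ingredients: (i) a realizable target latent model for the pushforward objective via mismatch completeness, (ii) a debiased maximum likelihood estimator over the product class $\Phi \times \cM_\lat \times \cL_\lat$ that handles the coupling between $\phi_h$ and $\phi_{h+1}$, and (iii) a conversion from an offline (on-distribution) estimation guarantee to an online one via state coverability. Optimism is handled by adding $\gamma^{-1}(J^{\Mstar_\lat}(\pi_{\sMstarlat}) - J^{\sM_\lat}(\pi_{\sM_\lat}))$ to the MLE criterion when selecting candidates, so that the selected $\wh{M}_\lat\ind{t}$ trades off fit to the trajectory against latent optimism.

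The first step is to rewrite the per-step Hellinger term in the objective \eqref{eq:optim-self-prediction} using mismatch completeness. By \cref{lem:phi-compressed-model-realizable}, for every $\phi \in \Phi$ we have the identity $[\phi_{h+1}\sharp \Mstarobsh](\cdot\mid x,a) = [\hphi \circ \Mstarlat]_h(\cdot\mid \phistarh(x),a)$. Thus the target of the self-prediction loss is a distribution in the class $\cL_\lat$ (indexed jointly by $\phi$ and $\Mlat$) evaluated under $\phistar$, which is the key to reducing the problem to MLE on a finite class. Define, for each triple $(\phi,\Mlat,L) \in \Phi\times\cMlat\times\cL_\lat$ and for each sampled trajectory, a conditional likelihood $L_h(r_h, \phi_{h+1}(x_{h+1}) \mid \phi_h(x_h), a_h)$ to be compared against the ``reference'' likelihood at the plug-in choice $(\phi,\Mlat,\hphi\circ\Mlat)$. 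The debiasing step takes the negative log-likelihood under the triple and subtracts the negative log-likelihood at $L = \hphi \circ \Mlat$ (both evaluated on the \emph{same} sample), which \cite{jiang2024note} shows produces an unbiased loss whose expectation is the squared Hellinger between the candidate and $\phi_{h+1}\sharp \Mstarobsh$, cleanly decoupling the $\phi_h$ and $\phi_{h+1}$ randomness (see the discussion after \eqref{eq:debiasing}).

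With this unbiased criterion in place, I would first prove an offline guarantee: for any fixed distribution over $(x_h,a_h,x_{h+1})$ generated by an oblivious policy $\pi$, the (optimism-penalized) debiased MLE over $\Phi\times\cMlat\times\cL_\lat$ outputs $(\hatphi,\wh{M}_\lat)$ satisfying
\begin{align*}
\En^\pi\brk*{\Dhels{\wh{M}_{\lat,h}(\hatphi_h(x_h),a_h)}{[\hatphi_{h+1}\sharp\Mstarobsh](x_h,a_h)}} + \gamma^{-1}\prn{J^{\sMstar_\lat}(\pi_{\sMstarlat}) - J^{\sM_\lat}(\pi_{\sM_\lat})} \lesssim \frac{\log(|\cMlat||\cL_\lat||\Phi|)}{n},
\end{align*}
via the standard MLE-in-Hellinger analysis applied to the debiased loss plus the optimism penalty (mirroring \cite{zhang2021feel,foster2023model}). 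Summing over $h \in [H]$ contributes one factor of $H$.

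The second step lifts this to an online (adaptive) guarantee. Here is the main obstacle: the policies $\pi\ind{t}$ played by \olr depend on previous estimators $\hatphi\ind{<t}$, so a naive online-to-batch or exponential-weights reduction is not available for proper estimators. I would invoke \cref{lem:offline-to-online} to convert the offline bound into an online bound, at the cost of a factor of $\Ccovs \cdot |\cA|$ (the extra $|\cA|$ arises because the self-prediction involves a conditional over next latent states which requires converting state-coverability to state-action coverability). Because the coverability of $\Mstar_\obs$ over the composed policies $\Pilat \circ \Phi$ equals $\Ccovs(\Mstar_\lat, \hPhi \circ \Pilat)$ by \cref{lem:covinvariance}, this yields the stated dependence on $\Ccovs$. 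Combining the $\sqrt{n}$-rate in-distribution bound with the coverability conversion, summing the $T$ rounds, and taking square-roots (via Cauchy–Schwarz) produces $\En\brk*{\Regsimopt(T,\gamma)} \lesssim \sqrt{H \Ccovs |\cA| T \log(|\cMlat||\cL_\lat||\Phi|)}$, as claimed.
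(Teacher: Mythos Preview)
Your high-level plan---debiased MLE over $\Phi\times\cM_\lat\times\cL_\lat$ with an optimism penalty, followed by an offline-to-online conversion via state coverability (\cref{lem:offline-to-online}) and the invariance \cref{lem:covinvariance}---matches the paper's approach. Two details in your proposal do not quite work as written, though.

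First, your description of the debiasing term is off. You propose subtracting the log-likelihood at the ``reference'' $L=\hphi\circ\Mlat$, but $\hphi$ depends on the unknown emission $\psi^\star$ and is not available to the algorithm. The paper's \SelfPred (\cref{alg:debiased-mle-optimistic}, \eqref{eq:debiasing}) instead subtracts $\max_{(M',\phi')\in\cL_\lat\circ\Phi}\sum_i\log[M'_h\circ\phi'_h](r^{(i)}_h,\phi_{h+1}(x^{(i)}_{h+1})\mid x^{(i)}_h,a^{(i)}_h)$, i.e., the maximum log-likelihood over the \emph{entire} mismatch-complete class. The analysis then uses that the true pushforward target $\hphi\circ\Mstarlat\circ\phistar$ lives in $\cL_\lat\circ\Phi$ (via \cref{lem:phi-compressed-model-realizable} and \cref{ass:hphi-completeness}) to argue the max dominates the unknown reference, and a separate martingale concentration step (\cref{lem:log-exp-concentration} plus \cref{lem:log-exp-stuff-unperturbed}) to show the max does not overshoot by more than $\log(|\cL_\lat||\Phi|H/\delta)$.

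Second, the ``offline'' guarantee you need is not for a single oblivious policy. The offline-to-online lemma requires, for each round $t$, control of $\sum_{i<t}\En^{\pi^{(i)}}[\Delta_h(\wh{M}^{(t)},\hatphi^{(t)})]$, i.e., the error of the \emph{current} estimate summed over the \emph{past, adaptively chosen} data distributions. The paper (\cref{lem:debiased-mle-offline}) proves exactly this cumulative bound directly on the adaptive data via the decoupling inequality of \citet{agarwal2020flambe} (their Lemma~24), not via a fixed-distribution intermediate. Your sketch of a ``fixed oblivious $\pi$'' bound followed by a separate lifting step would require additional work to handle the dependence of $\pi^{(i)}$ on $\hatphi^{(<i)}$. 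Once the cumulative offline bound is in place, the final combination is via AM-GM with a free parameter $\eta$ (rather than Cauchy--Schwarz), balanced at $\eta=T^{-1/2}$ to produce the stated $\sqrt{H\Ccovs|\cA|T}$ rate.
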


Instantiating \cref{thm:online-reduction-main} with the above representation learning oracle, we obtain the following algorithmic modularity result. 

\begin{restatable}[Algorithmic modularity via \SelfPred]{corollary}{selfpredmodular}\label{cor:selfpredmodular}
Under the same conditions as in \cref{lem:implementing-optim-replearn}, and for any base algorithm $\Alglat$,  \olr with inputs $T,K,\Phi,\SelfPred,$ and $ \Alglat$ achieves %
		\[
	\En\brk*{\Riskobs(TK)} \lesssim c_1\cdot{}\Riskbase(K) + c_2 \gamma \cdot \frac{K}{\sqrt{T}} \sqrt{H\Ccovs\abs{\cA}}\log(\abs{\cMlat}\abs{\cL_\lat}\abs{\Phi}) + c_3\gammainv\cdot KH,
	\]
	for absolute constants $c_1,c_2,c_3$. Consequently, for any $\Alglat$ with base risk $\Riskbase(K)$, setting $\gamma$ and $T$ appropriately gives
	\[
	\En\brk*{\Riskobs(TK)} \lesssim \Riskbase(K),
	\]
	with a number of trajectories $TK = \wt\cO\prn{\nicefrac{K^5H^3 \Ccovs\abs{\cA}\log^2(\abs{\cMlat}\abs{\cL_\lat}\abs{\Phi})}{(\Riskbase(K))^4}}$.
	\loose
\end{restatable}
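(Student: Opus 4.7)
The proof will proceed by directly composing the two prior results. First, I would invoke \cref{thm:online-reduction-main}, which guarantees that for any representation learning oracle satisfying \cref{ass:optim-replearn} with parameter $\gamma > 0$, \olr enjoys the risk bound
\[
\En\brk*{\Riskobs(TK)} \leq c_1 \cdot \Riskbase(K) + c_2 \gamma \cdot \frac{K}{T}\Estsimopt(T,\gamma) + c_3 \gammainv \cdot KH.
\]
Then I would apply \cref{lem:implementing-optim-replearn}, which certifies that \SelfPred, under decoder realizability, base model realizability, and mismatch function completeness, fulfills \cref{ass:optim-replearn} with $\Estsimopt(T,\gamma) = \wt{\cO}(\sqrt{H\Ccovs\abs{\cA}T}\log(\abs{\cMlat}\abs{\cL_\lat}\abs{\Phi}))$. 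I would need to verify that the prerequisites of \cref{lem:implementing-optim-replearn} are met by \olr's trajectory collection process, in particular that the state coverability parameter $\Ccovs$ is defined with respect to the class $\hPhi \circ \Pilat$ which contains all policies deployed in \cref{line:composed-policies} of \cref{alg:obs-to-lat}. Substituting the oracle rate into the risk bound and simplifying $\frac{K}{T}\sqrt{H\Ccovs\abs{\cA}T} = \frac{K}{\sqrt{T}}\sqrt{H\Ccovs\abs{\cA}}$ immediately yields the first displayed bound of the corollary.

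For the consequent (end-to-end) statement, I would tune $\gamma$ and $T$ so that each of the three terms in the risk bound is at most a constant multiple of $\Riskbase(K)$. The third term $c_3 \gammainv \cdot KH$ dictates $\gamma \asymp KH / \Riskbase(K)$. Substituting this choice into the middle term, the requirement that $c_2 \gamma \cdot \frac{K}{\sqrt{T}}\sqrt{H\Ccovs\abs{\cA}}\log(\abs{\cMlat}\abs{\cL_\lat}\abs{\Phi}) \lesssim \Riskbase(K)$ becomes $\sqrt{T} \gtrsim \frac{K^2 H \sqrt{H\Ccovs\abs{\cA}}\log(\abs{\cMlat}\abs{\cL_\lat}\abs{\Phi})}{\Riskbase(K)^2}$, giving
\[
T \gtrsim \frac{K^4 H^3 \Ccovs\abs{\cA}\log^2(\abs{\cMlat}\abs{\cL_\lat}\abs{\Phi})}{\Riskbase(K)^4},
\]
and therefore the total trajectory count $TK$ matches the claim $\wt\cO(K^5 H^3 \Ccovs\abs{\cA}\log^2(\abs{\cMlat}\abs{\cL_\lat}\abs{\Phi}) / \Riskbase(K)^4)$.

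There is essentially no new obstacle in the proof; the corollary is a direct composition. The only subtle point I would double-check is that the coverability parameter $\Ccovs$ appearing in \cref{lem:implementing-optim-replearn} is with respect to the correct policy class, namely the stochastic policies $\hPhi \circ \Pilat$ induced by mismatch functions and the set of latent policies that $\Alglat$ may play. Since $\Pilat$ is prescribed by the choice of base algorithm $\Alglat$, this quantity is well-defined for any concrete instantiation, and is bounded by the coverability over all deterministic non-stationary policies by the discussion following \eqref{eq:hPhi-Pilat-policies}. The remainder is purely algebraic parameter optimization, with no additional probabilistic or algorithmic content required.
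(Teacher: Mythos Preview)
Your proposal is correct and follows essentially the same approach as the paper: plug the oracle rate from \cref{lem:implementing-optim-replearn} into \cref{thm:online-reduction-main} for the first bound, then set $\gamma \asymp KH/\Riskbase(K)$ to control the third term and solve for $T$ so that the middle term matches $\Riskbase(K)$. The paper's proof is slightly terser (it does not pause to re-verify the coverability policy class), but the parameter choices and resulting trajectory count are identical.
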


For example, if \Alglat is a base algorithm with $\Riskbase(K) = \cO(K^{-1/2})$, %
	setting $\gamma$ and $T$ appropriately gives an expected risk of $\veps$ with a number of trajectories $TK = \wt\cO\prn*{\nicefrac{H^3\Ccovs\abs{\cA}(\log(\abs{\cMlat}\abs{\cLlat}\abs{\Phi}))^2}{\veps^{14}}}$. 	This result shows that statistical modularity can be achieved \emph{up to $\log(\abs{\cLlat})$ factors} for every base MDP class
$\cMlat$ which is subsumed by coverability, including tabular MDPs and low-rank MDPs.\footnote{This provides a partial answer to the ``Model Class + Coverability'' open question of \cref{fig:lb-table}.} Compared to our positive result for the case of pushforward coverability (\cref{sec:statistical-results-positive}), this imposes less dynamics assumptions (since coverability is implied by pushforward coverability) but requires more representational assumptions (namely, access to the mismatch-complete class $\cLlat$). We further remark that the mismatch completeness assumption always holds for i) the Block MDP setting, since we can always construct $\cLlat$ such that $\log(\abs{\cLlat})=\cO(HS^2)$, and ii) every MDP class $\cMlat$ whenever we also have a realizable set of emission processes ($\psistar \in \Psi$), since we can construct $\cLlat$ such that $\log(\abs{\cLlat})=\log(\abs{\Phi}\abs{\cMlat}\abs{\Psi})$. However, the mismatch completeness assumption may be more general than either of these settings. %

Our results can be viewed as providing a theoretical justification for self-predictive representation learning, which has been widely used in empirical works  \cite{gelada2019deepmdp,schwarzer2020data}. We consider self-prediction's ability to obtain universal observable-to-latent reductions as a strong indicator that it merits further theoretical study. In particular, many empirical works propose heuristics to alleviate the degeneracy/non-uniqueness issues inherent with self-prediction \cite{gelada2019deepmdp,schwarzer2020data,hafner2023mastering,tang2023understanding}. Our methods provide a principled way to address these, and it would be interesting to investigate whether this is also \emph{empirically} effective. In general, however, it is unclear whether our loss admits a computationally efficient implementation, due to the presence of optimism. Towards this, 
 a fascinating direction for future work is understanding how self-predictive estimation can be used to obtain algorithmic modularity \emph{without} the addition of optimism over the base (latent) models.  %

\section{Discussion}\label{sec:discussion}
Our work initiates the study of statistical and algorithmic modularity
for reinforcement learning under general latent dynamics. Our positive and
negative results serve as a first step toward a unified theory for 
reinforcement learning in the presence of high-dimensional observations. To this
end, we close with some important future directions and open problems.

\paragraph{Statistical modularity.} 
  
Can we obtain a unified characterization for the statistical
  complexity of RL under latent dynamics with a given class of base
  MDPs $\cMlat$? Our results in \cref{sec:statistical-results} suggest
  that this will require new tools that go beyond existing notions of
  statistical complexity. Toward resolving this problem, concrete questions that
  are not yet understood include: (i) Is coverability
  \citep{xie2022role} (as
  opposed to pushforward coverability) sufficient for learnability
  under latent dynamics? (ii) Is the \emph{Exogenous Block MDP}
  problem \citep{efroni2021provable,mhammedi2024power}---a special case of our general framework---statistically
  tractable? Lastly, are there additional types of feedback that are weaker than
  hindsight observability, yet suffice to bypass the hardness results
  in \cref{sec:statistical-results}?

\textbf{Algorithmic modularity.} Can we derive a unified representation learning objective that enables algorithmic modularity whenever statistical modularity is
  possible? Ideally, such an objective would be computationally tractable. Alternatively, can we show that algorithmic modularity
  fundamentally requires stronger modeling assumptions than
  statistical modularity? Toward addressing the problems above, a first step might be to
  understand: (i) What are the minimal statistical assumptions under
  which we can minimize the self-predictive objective in
  \neurips{\cref{sec:online-rl-teaser}}\arxiv{\cref{sec:online-rl}}? (ii)  How can we encourage finding good representations via
  self-prediction beyond the use of optimism over the base (latent)
  models; and (iii) When can we minimize self-prediction in a computationally efficient fashion?

\subsubsection*{Acknowledgements} Nan Jiang acknowledges funding support from NSF IIS-2112471, NSF CAREER IIS-2141781, Google Scholar Award, and Sloan Fellowship.

\newpage 

\printbibliography

\newpage

\appendix

\renewcommand{\contentsname}{Contents}
\addtocontents{toc}{\protect\setcounter{tocdepth}{2}}
{\hypersetup{hidelinks}
\tableofcontents
}

\neurips{
\newpage
\section{Omitted Results from \cref{sec:algorithmic-results}: Algorithmic Modularity via Self-predictive Estimation}\label{sec:online-rl}

}

\newpage
\section{Additional Discussion of Related Work}\label{app:additional_related}
In this section, we discuss aspects of related work not already covered
in greater detail.

\paragraph{Reinforcement learning under latent dynamics (or, with rich observations).}
Reinforcement learning under latent dynamics (or, with rich observations) has received extensive
investigation in recent years, however most works have been focused on the \emph{Block MDP} model in which the latent state space is tabular/finite
\citep{krishnamurthy2016pac,du2019latent,misra2020kinematic,zhang2022efficient,mhammedi2023representation} (see also the the closely related framework of \emph{Low-Rank MDPs}
\citep{agarwal2020flambe,modi2021model,zhang2022efficient,uehara2022representation,mhammedi2023efficient}). Beyond tabular spaces, \citet{dean2019robust,dean2020certainty,mhammedi2020learning} consider continuous \emph{linear}
dynamics, \citet{misra2021provable} considers factored (but
discrete) latent dynamics, 
\citet{efroni2021provable,efroni2022sample,mhammedi2024power} consider the Exogenous Block MDP problem 
in which a tabular latent state space is augmented with a non-controllable (``exogenous'') factor, and \citet{song2024rich} consider Lipshitz continuous dynamics. To our knowledge, our work is
the first to: i) explore reinforcement learning under general latent dynamics, in particular in settings where the latent space itself admits function approximation, and ii) take a more modular approach (cf. the taxonomy of \cref{sec:statistical-results}).

On the algorithmic side, the works of \citet{uehara2022representation}
and \citet{zhang2022efficient}, which consider Low-Rank MDPs and Block
MDPs respectively, can be viewed as interleaving representation
learning with ``latent'' reinforcement learning algorithms that assume
access to a good representation, and were an inspiration for this
work. However, the algorithmic details and analyses are highly
specialized to Block/Low-Rank MDPs, and unlikely to be directly
applicable to reinforcement learning under general latent
dynamics. Other works with a modular flavor include:
\begin{itemize}
\item \citet{feng2020provably} solve tabular Block MDPs by combining
  a black-box latent algorithm with an ``unsupervised learning
  oracle'' for representation learning. This approach only leads to
  guarantees for tabular Block MDPs, and it is unclear whether the unsupervised learning oracle
  their approach requires can be constructed in natural settings.
\item \citet{wu2021reinforcement} solve tabular block MDPs by
  combining a corruption-robust latent algorithm with a representation
  learning procedure based on clustering. Again, this work is
  restricted to the tabular setting, and requires a separation
  condition which may not be satisfied in general.
\end{itemize}

\paragraph{General complexity measures for reinforcement learning.}
Another line of research provides general complexity measures that
enable sample-efficient reinforcement learning, including Bellman rank
\citep{jiang2017contextual,sun2019model,du2021bilinear,jin2021bellman},
eluder dimension \citep{russo2013eluder}, coverability
\citep{xie2022role}, and the Decision-Estimation Coefficient (DEC)
\citep{foster2021statistical,foster2023tight,foster2023model}. Bellman
rank and other complexity measures based on average Bellman error \citep{jiang2017contextual,sun2019model,du2021bilinear,jin2021bellman} are
insufficient to characterize learnability under general latent
dynamics, as there are classes $\cMlat$ that are known to be
learnable, yet do not have bounded Bellman rank or Bellman-Eluder
dimension \citep{efroni2021provable,xie2022role}. Meanwhile, variants
of Bellman rank based on squared Bellman error or related notions of
error can \citep{xie2022role,amortila2023harnessing} address this
problem for some settings, but satisfying the modeling/realizability
assumptions (e.g., Bellman completeness) required by these methods in
the latent-dynamics setting is non-trivial. For example, the crux
of our sample complexity bounds under latent pushforward coverability
in \cref{sec:statistical-results} (\cref{thm:pushforwardgolf}) is to
prove a rather involved structural result which shows that Bellman
completeness can indeed be satisfied under this assumption, but it is
unclear whether these techniques can be applied to more general latent
dynamics classes. We expect that it is possible to bound
the Decision-Estimation Coefficient
\citep{foster2021statistical,foster2023tight,foster2023model} for the
framework, but deriving efficient algorithms using this
framework is non-trivial.

\newpage
\section{Technical Tools}
 \begin{lemma}\label{lem:log-exp-concentration}
 For any sequence of real-valued random variables $(X_t)_{t \leq T}$ adapted to a filtration $\prn{\filt_t}_{t\leq T}$, it holds that with probability at least $1-\delta$,
 	\[
 	\sum_{t=1}^T X_t \leq \sum_{t=1}^T \log\prn*{\En_{t-1}\brk*{e^{X_t}}} + \log(\deltainv).
 	\]
 \end{lemma}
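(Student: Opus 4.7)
The plan is to establish the result via the standard exponential supermartingale (Ville/Markov) argument. Define the random process
\[
Z_t \;\coloneqq\; \exp\!\left(\sum_{s=1}^{t} X_s \;-\; \sum_{s=1}^{t} \log \En_{s-1}\!\left[e^{X_s}\right]\right),
\]
with the convention $Z_0 = 1$. The first step is to verify that $(Z_t)_{t \leq T}$ is a non-negative $\filt_t$-martingale. Non-negativity and adaptedness are immediate. For the martingale property, I would compute
\[
\En_{t-1}[Z_t] \;=\; Z_{t-1}\cdot \En_{t-1}\!\left[\exp\!\left(X_t - \log \En_{t-1}[e^{X_t}]\right)\right] \;=\; Z_{t-1}\cdot \frac{\En_{t-1}[e^{X_t}]}{\En_{t-1}[e^{X_t}]} \;=\; Z_{t-1},
\]
where I pull the $\filt_{t-1}$-measurable normalizer outside the conditional expectation. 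In particular $\En[Z_T] = \En[Z_0] = 1$.

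The second step is to apply Markov's inequality to the non-negative random variable $Z_T$:
\[
\Pr\!\left(Z_T \geq \delta^{-1}\right) \;\leq\; \delta \cdot \En[Z_T] \;=\; \delta.
\]
Taking logarithms on the complementary event (which has probability at least $1-\delta$) gives
\[
\sum_{t=1}^{T} X_t \;-\; \sum_{t=1}^{T} \log \En_{t-1}\!\left[e^{X_t}\right] \;\leq\; \log(\delta^{-1}),
\]
which is exactly the claimed inequality after rearranging.

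\textbf{Obstacles.} There is no real obstacle here; the only subtlety is implicitly assuming $\En_{t-1}[e^{X_t}] < \infty$ almost surely so that the logarithm is well-defined (otherwise the right-hand side of the claimed bound is vacuously $+\infty$ and nothing needs to be shown). No additional integrability hypothesis on the $X_t$ is required beyond that, since the argument only uses that $Z_t$ is a well-defined non-negative martingale with mean one.
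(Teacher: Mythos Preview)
Your proof is correct and is the standard exponential-supermartingale/Ville argument for this inequality. The paper states this lemma as a technical tool without supplying a proof, so there is no alternative approach to compare against; your argument is exactly the canonical one and would be accepted as the omitted proof.
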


  \begin{lemma}[Freedman's inequality (e.g., \citet{agarwal2014taming})]
  \label{lem:freedman}
  Let $(X_t)_{t\leq{T}}$ be a real-valued martingale difference
  sequence adapted to a filtration $\prn{\filt_t}_{t\leq{}T}$. If
  $\abs*{X_t}\leq{}R$ almost surely, then for any $\eta\in(0,1/R)$, with probability at least $1-\delta$,
    \[
      \sum_{t=1}^{T}X_t \leq{} \eta\sum_{t=1}^{T}\En_{t-1}\brk*{X_t^{2}} + \frac{\log(\delta^{-1})}{\eta}.
    \]
  \end{lemma}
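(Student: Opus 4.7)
The plan is to derive Freedman's inequality directly from the preceding Lemma \ref{lem:log-exp-concentration} (the log-MGF master inequality) by applying it to the rescaled sequence $\eta X_t$ and then controlling the one-step conditional MGF using a standard elementary inequality plus the martingale-difference property.

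Concretely, I would first apply Lemma \ref{lem:log-exp-concentration} to the sequence $Y_t \vcentcolon= \eta X_t$, which yields, with probability at least $1-\delta$,
\[
\eta \sum_{t=1}^T X_t \;\leq\; \sum_{t=1}^T \log\!\prn*{\En_{t-1}\brk*{e^{\eta X_t}}} + \log(\delta^{-1}).
\]
Next I would bound each conditional log-MGF on the right-hand side. Since $|\eta X_t|\leq \eta R \leq 1$ by assumption, I invoke the elementary inequality $e^{x}\leq 1+x+x^2$ valid for all $x\leq 1$ (a direct consequence of the Taylor series), which gives
\[
\En_{t-1}\brk*{e^{\eta X_t}} \;\leq\; 1 + \eta \En_{t-1}\brk*{X_t} + \eta^2 \En_{t-1}\brk*{X_t^2} \;=\; 1 + \eta^2 \En_{t-1}\brk*{X_t^2},
\]
where the final equality uses that $(X_t)$ is a martingale difference sequence, so $\En_{t-1}[X_t]=0$. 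Combining this with $\log(1+y)\leq y$ for $y\geq 0$ yields $\log\prn{\En_{t-1}[e^{\eta X_t}]}\leq \eta^2 \En_{t-1}[X_t^2]$. Summing and dividing through by $\eta>0$ delivers the claimed bound.

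There is not really a hard step here, as this is the classical argument; the only care needed is verifying that the elementary inequality $e^x\leq 1+x+x^2$ applies on the full range of $\eta X_t$, which is guaranteed by the assumption $\eta\in(0,1/R)$ together with $|X_t|\leq R$. Everything else is bookkeeping.
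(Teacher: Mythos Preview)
Your proof is correct and is the standard derivation; the paper itself does not give a proof of this lemma at all, simply citing it as a known result from \citet{agarwal2014taming}. Your argument is exactly the intended way to obtain it from the log-MGF master inequality (Lemma~\ref{lem:log-exp-concentration}) that precedes it in the paper, so nothing is missing.
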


  \begin{lemma}[Corollary of \cref{lem:freedman}]
      \label{lem:multiplicative_freedman}
            Let $(X_t)_{t\leq{T}}$ be a sequence of random
      variables adapted to a filtration $\prn{\filt_{t}}_{t\leq{}T}$. If
  $0\leq{}X_t\leq{}R$ almost surely, then with probability at least
  $1-\delta$,
  \begin{align*}
    &\sum_{t=1}^{T}X_t \leq{}
                        \frac{3}{2}\sum_{t=1}^{T}\En_{t-1}\brk*{X_t} +
                        4R\log(2\delta^{-1}),
    \intertext{and}
      &\sum_{t=1}^{T}\En_{t-1}\brk*{X_t} \leq{} 2\sum_{t=1}^{T}X_t + 8R\log(2\delta^{-1}).
  \end{align*}
    \end{lemma}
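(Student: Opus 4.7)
The plan is to apply the additive Freedman's inequality (\cref{lem:freedman}) to the martingale difference sequence $Y_t \ldef X_t - \En_{t-1}[X_t]$, and use the boundedness of $X_t$ to convert the variance term into a linear term in the conditional expectation. Since $X_t \in [0,R]$ almost surely, we have $\En_{t-1}[X_t] \in [0,R]$ and hence $\abs{Y_t} \leq R$, so $Y_t$ meets the hypothesis of \cref{lem:freedman}. The key additional observation is that
\[
\En_{t-1}[Y_t^2] \leq \En_{t-1}[X_t^2] \leq R\,\En_{t-1}[X_t],
\]
where the last inequality is $X_t^2 \leq R\cdot X_t$ for $X_t \in [0,R]$. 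This ``self-bounding'' step is what turns the additive Freedman bound into a multiplicative one.

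For the first inequality, I would apply \cref{lem:freedman} to $(Y_t)$ with some $\eta \in (0,1/R)$: with probability at least $1-\delta/2$,
\[
\sum_{t=1}^{T} X_t - \sum_{t=1}^{T}\En_{t-1}[X_t] \leq \eta R \sum_{t=1}^{T}\En_{t-1}[X_t] + \frac{\log(2/\delta)}{\eta}.
\]
Setting $\eta = 1/(2R)$ and rearranging yields
\[
\sum_{t=1}^{T} X_t \leq \tfrac{3}{2}\sum_{t=1}^{T}\En_{t-1}[X_t] + 4R\log(2/\delta),
\]
absorbing the slight loss in constants into $4R$ (rather than $2R$). For the second inequality, I would apply \cref{lem:freedman} to the sequence $(-Y_t)$, which is again a martingale difference sequence bounded by $R$ in absolute value, using the same variance bound. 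This gives, with probability at least $1-\delta/2$,
\[
\sum_{t=1}^{T}\En_{t-1}[X_t] - \sum_{t=1}^{T} X_t \leq \eta R\sum_{t=1}^{T}\En_{t-1}[X_t] + \frac{\log(2/\delta)}{\eta}.
\]
Choosing $\eta = 1/(2R)$ and rearranging yields $\tfrac{1}{2}\sum_{t}\En_{t-1}[X_t] \leq \sum_t X_t + 2R\log(2/\delta)$, which gives the claimed bound $\sum_t \En_{t-1}[X_t] \leq 2\sum_t X_t + 8R\log(2/\delta)$. A union bound over the two applications of \cref{lem:freedman} (each with failure probability $\delta/2$) completes the argument.

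There is no real obstacle here; the only ``trick'' is the self-bounding inequality $\En_{t-1}[Y_t^2] \leq R\,\En_{t-1}[X_t]$, and the rest is bookkeeping with the choice $\eta = 1/(2R)$ and a two-event union bound. The stated constants ($4R$ and $8R$ rather than $2R$ and $4R$) leave slack, so one does not need to optimize $\eta$ carefully.
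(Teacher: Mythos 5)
Your proposal is correct: the self-bounding step $\En_{t-1}[Y_t^2]\leq \En_{t-1}[X_t^2]\leq R\,\En_{t-1}[X_t]$, the choice $\eta=1/(2R)$ in \cref{lem:freedman} applied to $\pm(X_t-\En_{t-1}[X_t])$, and the union bound over the two tails is exactly the standard derivation the paper relies on (it states the lemma as a corollary of Freedman without writing out the proof). Your constants come out even slightly better ($2R$ and $4R$) than the stated $4R$ and $8R$, so the slack is harmless.
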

    
  \begin{lemma}[Lemma D.2 of \citet{foster2024online}]\label{lem:chain-rule-hellinger}
	Let $(\cX_1, \fF_1), \ldots, (\cX_n, \fF_n)$ be a sequence of measurable spaces, and let $\cX\ind{i} = \prod_{t=1}^i \cX_t$ and $\fF\ind{i} = \otimes_{t=1}^i \fF_t$. For each $i$, let $P\ind{i}$ and $Q\ind{i}$ be probability kernels from $(\cX\ind{i-1},\fF\ind{i-1})$ to $(\cX_i, \fF_i)$. Let $P$ and $Q$ be the laws of $X_1, \ldots, X_n$ under $X_i \sim P\ind{i}(\cdot \mid X_{1:i-1})$ and $X_i \sim Q\ind{i}(\cdot \mid X_{1:i-1})$, respectively. Then it holds that
  		\[
  		\Dhels{P}{Q} \leq 7 \En_P\brk*{\sum_{i=1}^n \Dhels{P\ind{i}(\cdot \mid X_{1:i-1})}{Q\ind{i}(\cdot \mid X_{1:i-1})}}
  		\]
  \end{lemma}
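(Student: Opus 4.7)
The plan is to prove this chain-rule inequality by combining a telescoping decomposition of the Hellinger affinity over the time index with a change-of-measure argument that converts ``tilted'' expectations back to expectations under $P$. Let $W_i \ldef \sqrt{\dd Q\ind{i}/\dd P\ind{i}}(X_i \mid X_{<i})$ and $h_i(X_{<i}) \ldef H(P\ind{i}(\cdot|X_{<i}), Q\ind{i}(\cdot|X_{<i})) = 1 - \tfrac{1}{2}\Dhels{P\ind{i}}{Q\ind{i}}$ denote the conditional Hellinger affinity, where $H(\cdot,\cdot)$ denotes the Hellinger affinity.

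First, iterating the identity $H(P \otimes K, Q \otimes K') = \int \sqrt{\dd P \cdot \dd Q}\, H(K(\cdot|x), K'(\cdot|x))\,\dd x$ (which follows directly from the definition of $H$) over the $n$ conditional kernels and using $\Dhels{\cdot}{\cdot} = 2(1 - H)$, I would obtain the telescoping representation
\[
\tfrac{1}{2}\Dhels{P}{Q} \;=\; \sum_{i=1}^n \En_P\brk*{\sqrt{\tfrac{\dd Q_{<i}}{\dd P_{<i}}}\cdot \bigl(1 - h_i(X_{<i})\bigr)},
\]
where $P_{<i}, Q_{<i}$ are the marginals on $X_{1:i-1}$ and the weight $\sqrt{\dd Q_{<i}/\dd P_{<i}}$ has $L^2(P)$-norm equal to $1$ (since $\En_P[\dd Q_{<i}/\dd P_{<i}] = 1$).

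Second, for each $i$ I would convert the tilted expectation into one under $P$. Writing $\sqrt{\dd Q_{<i}/\dd P_{<i}} = 1 + (\sqrt{\dd Q_{<i}/\dd P_{<i}} - 1)$ and applying Cauchy-Schwarz, together with the identity $\En_P\brk[\big]{(\sqrt{\dd Q_{<i}/\dd P_{<i}} - 1)^2} = \Dhels{P_{<i}}{Q_{<i}}$ and the boundedness $1 - h_i \in [0,1]$ (so that $\En_P[(1-h_i)^2] \leq \En_P[1-h_i]$), I obtain a one-step bound
\[
\En_P\brk*{\sqrt{\tfrac{\dd Q_{<i}}{\dd P_{<i}}}(1-h_i)} \;\leq\; \En_P[1-h_i] + \sqrt{\Dhels{P_{<i}}{Q_{<i}} \cdot \En_P[1-h_i]}.
\]

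Third, I would sum the one-step bounds over $i$, invoke the data-processing inequality $\Dhels{P_{<i}}{Q_{<i}} \leq \Dhels{P}{Q}$ to replace each marginal Hellinger by the global one, and apply AM-GM to the cross term to produce a self-referential inequality of the form $\tfrac{1}{2}\Dhels{P}{Q} \leq c_0 E + c_1 \Dhels{P}{Q}$ with $c_1 < \tfrac{1}{2}$, where $E \ldef \En_P\brk[\big]{\sum_i \Dhels{P\ind{i}}{Q\ind{i}}}$. Rearranging yields $\Dhels{P}{Q} \leq \tfrac{c_0}{1/2 - c_1} E$, and careful optimization of the AM-GM splits (together with factor-of-two bookkeeping between the Hellinger affinity deficit and the Hellinger distance squared) produces the constant $7$.

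The main obstacle is ensuring that the final constant is independent of $n$. Naive term-by-term inductive applications of the one-step change-of-measure bound produce a multiplicative blow-up per step (either exponential or polynomial in $n$), and equally, bounding the sum $\sum_i \sqrt{\En_P[1-h_i]}$ by $\sqrt{n\sum_i \En_P[1-h_i]}$ via Cauchy-Schwarz would introduce an $n$-factor. The critical trick is to sum all one-step bounds \emph{globally} first, then exploit the data-processing inequality to replace each $\Dhels{P_{<i}}{Q_{<i}}$ by the single quantity $\Dhels{P}{Q}$, producing an inequality where $\Dhels{P}{Q}$ appears on both sides; this self-reference is then closed by a single application of AM-GM. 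A subtle point is that one must avoid breaking the sum over $i$ at intermediate stages, as any per-term Cauchy-Schwarz on $\sum_i \sqrt{a_i}$ reintroduces $n$-dependence and spoils the constant.
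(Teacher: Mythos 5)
First, a point of comparison: the paper does not actually prove this statement—it is imported verbatim as Lemma D.2 of \citet{foster2024online} and used as a black-box tool—so there is no in-paper proof to match against; your attempt has to stand on its own. Its first two steps do: the telescoping identity $1-A(P,Q)=\sum_{i=1}^n\En_P\brk[\big]{\sqrt{dQ_{<i}/dP_{<i}}\,(1-h_i(X_{<i}))}$ (with $A$ the Hellinger affinity and $P_{<i},Q_{<i}$ the laws of $X_{1:i-1}$) is exact and is the standard entry point, and the one-step estimates $\En_P[(\sqrt{dQ_{<i}/dP_{<i}}-1)^2]\le \Dhels{P_{<i}}{Q_{<i}}$ and $\En_P[(1-h_i)^2]\le\En_P[1-h_i]$ are valid.

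The gap is in step 3, and it is fatal as described. After summing your one-step bounds, the cross term is $\sum_i\sqrt{\Dhels{P_{<i}}{Q_{<i}}\,e_i}$ with $e_i:=\En_P[1-h_i]$; data processing only replaces each $\Dhels{P_{<i}}{Q_{<i}}$ by $\Dhels{P}{Q}$, leaving $\sqrt{\Dhels{P}{Q}}\sum_i\sqrt{e_i}$, and no AM--GM closes this with an $n$-free constant: a per-index split forces either the total coefficient of $\Dhels{P}{Q}$ or the coefficient of $e_i$ to grow with $n$, while a global split costs $(\sum_i\sqrt{e_i})^2\le n\sum_i e_i$. Indeed the intermediate bound you would need, $\sum_i\sqrt{\Dhels{P_{<i}}{Q_{<i}}\,e_i}\le c_0\sum_i e_i+c_1\Dhels{P}{Q}$ with constants independent of $n$, is false: take $P$ and $Q$ to be products of $n$ i.i.d.\ coins with biases $1/2$ and $1/2+\eta$, with $n\eta^2\le 1$. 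Then $e_i\asymp\eta^2$ and $\sum_i e_i\asymp\Dhels{P}{Q}\asymp n\eta^2$, while $\Dhels{P_{<i}}{Q_{<i}}\asymp i\eta^2$, so the cross term is $\asymp\sum_i\sqrt{i}\,\eta^2\asymp n^{3/2}\eta^2$, i.e.\ a factor $\sqrt{n}$ larger than both quantities on the right. The loss is already incurred in your step 2: in this example $\En_P[(\sqrt{dQ_{<i}/dP_{<i}}-1)(1-h_i)]\le 0$ (the weight has $P$-mean at most one and $1-h_i$ is deterministic), so Cauchy--Schwarz discards exactly the sign/cancellation structure the lemma depends on. This also contradicts your own ``critical trick'' paragraph: the per-$i$ Cauchy--Schwarz in step 2 is precisely the per-term split you warn against. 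A correct argument must handle the prefix weight differently—for instance via the pointwise bound $\sqrt{dP_{<i}\,dQ_{<i}}\le\tfrac12(dP_{<i}+dQ_{<i})$, which produces an expectation under $Q$ that must then be controlled using the multiplicative structure of the affinity (as in the cited source), rather than by comparing prefix marginals to $\Dhels{P}{Q}$ through Cauchy--Schwarz.
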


	\begin{lemma}[Lemma A.11 of \citet{foster2021statistical}]\label{lem:a-11-dec}
		Let $\bbP$ and $\bbQ$ be probability measures on $(\cX, \fF)$. For all $h: \cX \rightarrow \bbR$ with $0 \leq h(X) \leq R$ almost surely under $\bbP$ and $\bbQ$, we have
		\[
			\En_\bbP\brk*{h(X)} \leq 3\En_\bbQ\brk*{h(X)} + 4R\Dhels{\bbP}{\bbQ}.
		\]
	\end{lemma}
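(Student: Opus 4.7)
The plan is to prove the one-sided change-of-measure inequality by the standard Hellinger-to-total-variation trick, using the identity $p = q + 2\sqrt{q}(\sqrt{p}-\sqrt{q}) + (\sqrt{p}-\sqrt{q})^{2}$, where $p = d\bbP/d\nu$ and $q = d\bbQ/d\nu$ for a common dominating measure $\nu$ (such a $\nu$ always exists, e.g. $\nu = (\bbP+\bbQ)/2$). Multiplying by $h$ and integrating gives
\begin{align*}
\En_\bbP[h(X)] = \En_\bbQ[h(X)] + 2\int h(x)\sqrt{q(x)}\bigl(\sqrt{p(x)}-\sqrt{q(x)}\bigr)\,d\nu(x) + \int h(x)\bigl(\sqrt{p(x)}-\sqrt{q(x)}\bigr)^{2}\,d\nu(x),
\end{align*}
and this reduces the lemma to controlling the two error terms in terms of $\En_\bbQ[h(X)]$ and $\Dhels{\bbP}{\bbQ}$.

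Next, I would handle each of the two remainder terms. The third (square) term is bounded directly by the almost-sure bound $0\le h\le R$, giving $\int h(\sqrt{p}-\sqrt{q})^{2}\,d\nu \le R\,\Dhels{\bbP}{\bbQ}$. For the middle (cross) term, I would apply Cauchy–Schwarz with the splitting $h\sqrt{q}\cdot (\sqrt{p}-\sqrt{q})$, yielding
\begin{align*}
2\int h\sqrt{q}\bigl(\sqrt{p}-\sqrt{q}\bigr)\,d\nu \;\le\; 2\sqrt{\int h^{2} q\,d\nu}\cdot\sqrt{\Dhels{\bbP}{\bbQ}} \;\le\; 2\sqrt{R\,\En_\bbQ[h(X)]}\cdot\sqrt{\Dhels{\bbP}{\bbQ}},
\end{align*}
where the last step uses $h^{2}\le Rh$ (which holds because $0\le h\le R$ almost surely under $\bbQ$).

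The final step is to combine these bounds using the AM–GM inequality $2\sqrt{ab}\le a + b$ (or a scaled version) applied to $a = \En_\bbQ[h(X)]$ and $b = R\,\Dhels{\bbP}{\bbQ}$. This produces a bound of the form $\En_\bbP[h(X)] \le c_1\,\En_\bbQ[h(X)] + c_2 R\,\Dhels{\bbP}{\bbQ}$ with absolute constants that can be adjusted by the choice of weights in AM–GM; in particular, choosing the weights so that the coefficient on $\En_\bbQ[h(X)]$ comes out to $3$ yields a bound of the form $3\En_\bbQ[h(X)] + c R\,\Dhels{\bbP}{\bbQ}$, and $c\le 4$ is easily achieved (in fact $c=2$ suffices with the tightest choice; the looser constant $4$ stated in the lemma leaves room to spare). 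There is no main obstacle here; the only care needed is the bookkeeping in the Cauchy–Schwarz/AM–GM step, and verifying that the identity $p = q + 2\sqrt{q}(\sqrt{p}-\sqrt{q}) + (\sqrt{p}-\sqrt{q})^{2}$ holds $\nu$-almost-everywhere so that the integrations are justified.
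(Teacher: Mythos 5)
Your proof is correct, and since the paper does not prove this lemma itself (it is quoted as Lemma A.11 of \citet{foster2021statistical}), the relevant comparison is with the standard argument in that reference, which uses the equivalent factorization $d\bbP-d\bbQ=(\sqrt{d\bbP}-\sqrt{d\bbQ})(\sqrt{d\bbP}+\sqrt{d\bbQ})$ followed by Cauchy--Schwarz, $h^2\le Rh$, and AM--GM — essentially the same approach as yours. One trivial remark: with the weight in AM--GM fixed so that the coefficient on $\En_\bbQ[h(X)]$ is $3$, your decomposition actually yields $\tfrac{3}{2}R\,\Dhels{\bbP}{\bbQ}$ for the second term, comfortably within the stated constant $4$.
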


\begin{lemma}[Lemma 1 of \citet{jiang2017contextual}]\label{lem:lemma1jiang}
For any $f: \cX \times \cA \rightarrow [0,1]$, $\pi: \cS \times [H] \rightarrow \Delta(\cA)$, we have
	\[
		\En_{x_1}\brk*{f(x_1,\pi(x_1))} - J(\pi) = \sum_{h=1}^H \En^\pi\brk*{f(x_h,a_h) - \cT^\pi f(x_h,a_h)}.
	\]
\end{lemma}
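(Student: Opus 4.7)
The plan is to establish the identity by a direct telescoping argument, writing the right-hand side as an expectation of a sum of one-step errors and then cancelling adjacent terms under the trajectory law $\bbP^\pi$. I will adopt the natural boundary convention $f(x_{H+1},\cdot)\equiv 0$ (or, equivalently, truncate the sum cleanly at $h=H$), and I will freely use the shorthand $f(x,\pi(x))\ldef \En_{a\sim\pi(x)}[f(x,a)]$, so that $\En_{x_1}[f(x_1,\pi(x_1))]=\En^{\pi}[f(x_1,a_1)]$.

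The first step is to unpack the Bellman evaluation operator: by definition,
\[
[\cT^{\pi}_h f](x_h,a_h) = \En\brk*{r_h + \En_{a_{h+1}\sim\pi(x_{h+1})}[f(x_{h+1},a_{h+1})] \mid x_h,a_h}.
\]
Taking expectation under $\pi$ and using the tower property, I get
\[
\En^{\pi}\brk*{[\cT^{\pi}_h f](x_h,a_h)} = \En^{\pi}\brk*{r_h} + \En^{\pi}\brk*{f(x_{h+1},a_{h+1})}.
\]
Summing the single-step Bellman error $f(x_h,a_h)-[\cT^{\pi}_h f](x_h,a_h)$ over $h=1,\dots,H$ and taking the $\pi$-expectation therefore yields
\[
\sum_{h=1}^H \En^{\pi}\brk*{f(x_h,a_h) - [\cT^{\pi}_h f](x_h,a_h)}
= \sum_{h=1}^H \En^{\pi}\brk*{f(x_h,a_h)} - \sum_{h=1}^H \En^{\pi}\brk*{r_h} - \sum_{h=1}^H \En^{\pi}\brk*{f(x_{h+1},a_{h+1})}.
\]

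The second step is the telescoping cancellation: the first and third sums on the right are shifted copies of each other, so all intermediate terms cancel, leaving $\En^{\pi}[f(x_1,a_1)] - \En^{\pi}[f(x_{H+1},a_{H+1})]$. Under the convention $f(x_{H+1},\cdot)\equiv 0$, the boundary term vanishes, and the middle sum is exactly $J(\pi)=\En^{\pi}[\sum_{h=1}^H r_h]$. Combining with $\En^{\pi}[f(x_1,a_1)] = \En_{x_1}[f(x_1,\pi(x_1))]$ gives the claimed identity. There is no substantive obstacle — the only care needed is with the $h=H$ boundary term, which I handle by the convention above (or equivalently by redefining $\cT^{\pi}_H f(x,a)=\En[r_H\mid x,a]$, which is how the operator is already used throughout the paper); all other manipulations are just linearity of expectation and the tower property applied to the trajectory law $\bbP^{\pi}$.
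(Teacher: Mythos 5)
Your proof is correct and is exactly the standard telescoping argument behind Lemma 1 of \citet{jiang2017contextual}, which this paper simply cites without reproving: unpack $\cT^{\pi}_h$, apply the tower property under $\bbP^{\pi}$, and telescope with the boundary convention $f(x_{H+1},\cdot)\equiv 0$. The only care needed is the $h=H$ boundary term, which you handle appropriately, so there is nothing to add.
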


\begin{lemma}[Offline-to-online conversion under coverability \cite{xie2022role,foster2024online}]\label{lem:offline-to-online}
	Let $M$ be an MDP over state space $\cZ$, $\Pi$ be a policy set, and $\Ccov = \Ccov(M,\Pi)$ be the (state-action) coverability coefficient for $M$ and $\Pi$ (\cref{def:coverability}). Let $p\ind{t} \in \Delta(\Pi)$ be a sequence of distributions over $\Pi$, and $g\ind{t}_h: \cZ \times \cA \rightarrow [0,1]$ be a sequence of functions. Then we have that %
	\[
		\sum_{t=1}^T \sum_{h=1}^H \En_{\pi\ind{t} \sim p\ind{t}} \En^{\pi\ind{t}}\brk*{g\ind{t}_h(x_h,a_h)} \leq \cO\prn*{\sqrt{H\Ccov \log(T)\sum_{t=1}^T \sum_{h=1}^H \sum_{i=1}^{t-1}\En_{\pi\ind{i} \sim p\ind{i}}\En^{\pi\ind{i}}\brk*{g\ind{t}_h(x_h,a_h)}} + H\Ccov}.
	\]
\end{lemma}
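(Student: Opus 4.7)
\textbf{Proof plan for \cref{lem:offline-to-online}.}
The plan is to reduce the online sum to an offline sum via a single Cauchy--Schwarz step with a potential-style weighting, in the spirit of the coverability arguments in \citet{xie2022role,foster2024online}. Let $d_h^{(t)}(z,a) = \En_{\pi^{(t)}\sim p^{(t)}}[d_h^{M,\pi^{(t)}}(z,a)]$, and, for each $h$, let $\mu_h^{\star}\in\Delta(\cZ\times\cA)$ be a distribution achieving the infimum in \cref{def:coverability}, so that $d_h^{M,\pi}(z,a)/\mu_h^{\star}(z,a)\le \Ccov$ for every $\pi\in\Pi$, and in particular $d_h^{(t)}\le \Ccov\mu_h^{\star}$. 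Define the cumulative weight
\[
W_h^{(t)}(z,a) \;\coloneqq\; \mu_h^{\star}(z,a) + \sum_{i<t} d_h^{(i)}(z,a),
\]
which is designed so that (i) it is bounded below by $\mu_h^{\star}$, giving $d_h^{(t)}/W_h^{(t)}\le \Ccov$, and (ii) the increments $d_h^{(t)}$ telescope in the potential $\log W_h^{(t)}$.

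The first step is the factorization $d_h^{(t)} g_h^{(t)} = \sqrt{(d_h^{(t)})^2/W_h^{(t)}}\cdot \sqrt{W_h^{(t)} (g_h^{(t)})^2}$, followed by Cauchy--Schwarz over the joint index $(t,h,z,a)$:
\[
\sum_{t,h}\En_{\pi^{(t)}\sim p^{(t)}}\En^{\pi^{(t)}}[g_h^{(t)}(x_h,a_h)] \;\le\; \sqrt{\sum_{t,h,z,a}\tfrac{(d_h^{(t)}(z,a))^2}{W_h^{(t)}(z,a)}}\cdot \sqrt{\sum_{t,h,z,a} W_h^{(t)}(z,a)\,(g_h^{(t)}(z,a))^2}.
\]
For the second factor, bound $(g_h^{(t)})^2\le g_h^{(t)}$ and expand $W_h^{(t)}$: the sum splits as $\sum_{t,h}\En_{\mu_h^{\star}}[g_h^{(t)}] + \sum_{t,h}\sum_{i<t}\En_{\pi^{(i)}\sim p^{(i)}}\En^{\pi^{(i)}}[g_h^{(t)}]$; the first piece is bounded by $H\Ccov$ by interpreting $\mu_h^{\star}$ through the coverability inequality (equivalently, we perform the factorization with $\mu_h^{\star}/T$ in place of $\mu_h^{\star}$ and pay a $\sqrt{H\Ccov}$ root-term, which gets absorbed into the additive $H\Ccov$ after an AM--GM step $\sqrt{ab}\le a + b$).

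For the first factor, the plan is a standard potential argument layer by layer: for each $(h,z,a)$, write the per-round increment as $(d_h^{(t)})^2/W_h^{(t)} \le \Ccov\mu_h^{\star}\cdot d_h^{(t)}/W_h^{(t)}$ and use the elementary inequality $x/(1+x)\le \log(1+x)$, applied to $x = d_h^{(t)}(z,a)/W_h^{(t)}(z,a)$, to telescope $\sum_t d_h^{(t)}/W_h^{(t)}\le (1+\Ccov)\log(W_h^{(T+1)}(z,a)/\mu_h^{\star}(z,a))\le (1+\Ccov)\log(1+T\Ccov)$. Summing this against $\mu_h^{\star}(z,a)$ over $(z,a)$ and then over $h$ yields $\sum_{t,h,z,a}(d_h^{(t)})^2/W_h^{(t)} \;\lesssim\; H\Ccov\log(T\Ccov)$. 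Combining with the second-factor bound and using $\sqrt{ab+ac}\le \sqrt{ab}+\sqrt{ac}$ produces precisely the advertised bound $\cO(\sqrt{H\Ccov\log(T)\,\textsc{Off}} + H\Ccov)$, where $\textsc{Off} = \sum_{t,h,i<t}\En^{\pi^{(i)}}[g_h^{(t)}]$.

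The main obstacle I anticipate is the bookkeeping in the potential step: with $W_h^{(t)}=\mu_h^{\star}+S_h^{(t)}$ the ratio $d_h^{(t)}/W_h^{(t)}$ can be as large as $\Ccov$, so naively $\log(1+x)\ge x/2$ is insufficient and we must carry the $(1+\Ccov)$ factor through the telescoping, ensuring that summing $\mu_h^{\star}$ over $(z,a)$ collapses the state-action dependence into a single $\Ccov\log(T)$ per layer rather than a $|\cZ||\cA|\Ccov\log T$ blowup. Equivalently, one can replace $\mu_h^{\star}$ by $\mu_h^{\star}/T$ in $W_h^{(t)}$ to make the second factor cleanly equal to $\textsc{Off}$ up to a constant, at the cost of an extra $\log T$ in the potential estimate, which is absorbed by the $\wt{\cO}$ already hidden inside the $\cO(\cdot)$ of the statement.
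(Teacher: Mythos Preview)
The paper does not give its own proof of this lemma; it is quoted as a technical tool from \citet{xie2022role,foster2024online}. That said, the paper does carry out essentially the same argument inside its \textsc{Golf} analyses (see the ``burn-in phase / stable phase'' decomposition in the proof of \cref{lem:golf-onpolicy}, and \eqref{eq:burn-in-time}--\eqref{eq:cov-potential} in the proof of \cref{thm:latentgolfcrobust}), and that argument differs from your plan in one crucial respect.

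Your smoothed-potential approach loses a factor of $\Ccov$. After the step $(d_h^{(t)})^2/W_h^{(t)} \le \Ccov\mu_h^{\star}\cdot d_h^{(t)}/W_h^{(t)}$ and the telescoping $\sum_t d_h^{(t)}/W_h^{(t)} \le (1+\Ccov)\log(1+T\Ccov)$, summing $\Ccov\mu_h^{\star}(z,a)$ over $(z,a)$ and $h$ yields $H\Ccov(1+\Ccov)\log(1+T\Ccov)\asymp H\Ccov^{2}\log T$, not $H\Ccov\log T$; the $(1+\Ccov)$ factor you flag does not get absorbed. The $\mu_h^{\star}/T$ variant is worse, since then $d_h^{(t)}/W_h^{(t)}\le T\Ccov$ and the telescoping picks up $(1+T\Ccov)$. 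Replacing $\mu_h^{\star}$ by $\Ccov\mu_h^{\star}$ makes the ratio $\le 1$ and fixes the first factor, but then the second factor acquires an additive $\sum_{t,h}\En_{\Ccov\mu_h^{\star}}[g_h^{(t)}]\le TH\Ccov$, which after the square root blows up the bound to order $TH\Ccov\sqrt{\log T}$.

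The fix, which is exactly what \citet{xie2022role} and the paper's \textsc{Golf} proofs do, is to split on the burn-in time $\tau_h(z,a)=\min\{t:\sum_{i<t}d_h^{(i)}(z,a)\ge \Ccov\mu_h^{\star}(z,a)\}$. The pre--burn-in contribution is bounded by $2H\Ccov$ directly (using $g_h^{(t)}\le 1$). In the stable phase you apply Cauchy--Schwarz with the \emph{unregularized} weight $\tilde d_h^{(t)}=\sum_{i<t}d_h^{(i)}$; there $d_h^{(t)}/\tilde d_h^{(t)}\le 1$ (because $\tilde d_h^{(t)}\ge \Ccov\mu_h^{\star}\ge d_h^{(t)}$), so the telescoping incurs only a constant, giving the potential bound $\sum_{t,z,a}(d_h^{(t)})^2\indic\{t\ge\tau_h\}/\tilde d_h^{(t)}\le O(\Ccov\log T)$ per layer, while the second Cauchy--Schwarz factor is exactly $\textsc{Off}$ with no additive term. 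This is the missing ingredient in your plan.
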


\newpage
\section{Structural Properties of Coverability and Mismatch Functions}
This appendix contains structural results regarding coverability and the mismatch functions. We firstly recall the definition of the mismatch functions.

\begin{definition}[Mismatch functions]\label{def:h_phi_redux}
	For decodable emission process $\psistar$, decoder $\phi \in \Phi$ and $h \in [H]$, we define the \emph{mismatch function} for $\phi$, $\hphih:\cS \rightarrow \Delta(\cS)$, as the probability kernel
	\[
		\hphih(s'_h \mid s_h) \coloneqq \bbP_{x_h \sim \psistarh(s_h)}\prn*{\phih(x_h) = s'_h}.
	\]
\end{definition}
We also recall the definition of state coverability.
\begin{definition}[State Coverability]\label{def:state-cov-redux}\label{def:coverability} The coverability coefficient for an MDP $M$ and a policy class $\Pi$ defined over a state space $\cZ$, $\Ccovs(M,\Pi)$, is given by
	\begin{equation}\label{eq:coverability_redux}
		\Ccovs(M,\Pi) \coloneqq \max_{h \in [H]} \min_{\mu \in \Delta(\cZ)}\max_{\pi \in \Pi} \max_{z \in \cZ} \crl*{\frac{d^{\sM,\pi}_h(z)}{\mu(z)}}.
	\end{equation}
\end{definition}

We also define the related notion of state-action coverability. 
\begin{definition}[State-Action Coverability]\label{def:state-action-cov}\label{def:coverability} The coverability coefficient for an MDP $M$ and a policy class $\Pi$ defined over a state space $\cZ$ and action space $\cA$, $\Ccov(M,\Pi)$, is given by
	\begin{equation}\label{eq:coverability_redux}
		\Ccov(M,\Pi) \coloneqq \max_{h \in [H]} \min_{\mu \in \Delta(\cZ \times \cA)}\max_{\pi \in \Pi} \max_{z,a \in \cZ \times \cA} \crl*{\frac{d^{\sM,\pi}_h(z,a)}{\mu(z,a)}}.
	\end{equation}
\end{definition}

In the remainder of the section, we let $\Pilat \subseteq \crl{\cS \times [H] \rightarrow \Delta(\cA)}$ denote an arbitrary set of latent policies, and 
\begin{equation}\label{eq:hPhi-Pilat-policies-redux}
	\hPhi \circ \Pilat = \crl*{[\hphi\circ \pilat]_h(a \mid s) \coloneqq \sum_{s' \in \cS} \hphih(s' \mid s) \pi_{\lat,h}(a \mid s') \mid \phi \in \Phi, \pilat \in \Pilat}. %
\end{equation}
\begin{restatable}[State coverability is invariant to rich observations]{lemma}{covinvariance}\label{lem:covinvariance}
Let $\Mstarobs = \dtri*{\Mstarlat, \psistar}$. Then, we have 
	\[
	\Ccovs(\Mstarobs, \Pilat \circ \Phi) = \Ccovs(\Mstarlat, \hPhi \circ \Pilat).
	\]
Furthermore, letting $\crl*{\mu_{\lat,h} \in \Delta(\cS)}_{h \in [H]}$ denote the distribution which witnesses the right-hand-side, the left-hand-side is witnessed by the distribution 
\[
\mu_{\obs,h}(x) = \psistarh(x \mid \phistarh(x))\mu_{\lat,h}(\phistarh(x)).
\]
\end{restatable}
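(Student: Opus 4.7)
The plan is to establish the equality by proving each inequality separately, with the bridge being a pointwise relationship between occupancy measures in $\Mstarobs$ and $\Mstarlat$. The key structural observation is that, under decodability, any observation-space policy of the form $\pi_\lat \circ \phi \in \Pilat \circ \Phi$ induces the \emph{same} latent-state trajectory distribution in $\Mstarobs$ as the policy $\hphi \circ \pi_\lat \in \hPhi \circ \Pilat$ induces in $\Mstarlat$. Indeed, at latent state $s_h$ in $\Mstarobs$, the observation is $x_h \sim \psistarh(s_h)$ and the action is $a_h \sim \pi_{\lat,h}(\cdot \mid \phi_h(x_h))$, so marginalizing over $x_h$ shows that $a_h$ is distributed according to $\sum_{s'} \hphih(s' \mid s_h)\pi_{\lat,h}(\cdot\mid s') = [\hphi \circ \pi_\lat]_h(\cdot \mid s_h)$. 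Combining this with decodability (which implies that $\phistarh(x_h) = s_h$ almost surely), we obtain
\[
d^{\sMstarobs, \pi_\lat \circ \phi}_h(x) = d^{\sMstarlat, \hphi \circ \pi_\lat}_h(\phistarh(x)) \cdot \psistarh(x \mid \phistarh(x)),
\]
which is the main identity underlying both directions.

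For the inequality $\Ccovs(\Mstarobs, \Pilat \circ \Phi) \leq \Ccovs(\Mstarlat, \hPhi \circ \Pilat)$, let $\crl{\mu_{\lat,h}}_h$ be the distribution that witnesses the right-hand side, and define $\mu_{\obs,h}(x) \coloneqq \psistarh(x \mid \phistarh(x))\mu_{\lat,h}(\phistarh(x))$ as in the lemma statement (note $\mu_{\obs,h}$ is a valid probability measure since $\psistar$ is decodable). Using the identity above,
\[
\frac{d^{\sMstarobs, \pi_\lat \circ \phi}_h(x)}{\mu_{\obs,h}(x)} = \frac{d^{\sMstarlat, \hphi \circ \pi_\lat}_h(\phistarh(x))}{\mu_{\lat,h}(\phistarh(x))},
\]
and taking the supremum over $x$, $\phi \in \Phi$, and $\pi_\lat \in \Pilat$ shows the desired inequality, simultaneously certifying that $\mu_{\obs,h}$ is the promised witness for $\Mstarobs$.

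For the reverse inequality, let $\mu_{\obs,h}$ be an arbitrary distribution and define its latent marginal $\mu_{\lat,h}(s) \coloneqq \sum_{x : \phistarh(x) = s} \mu_{\obs,h}(x)$. Summing the identity over $\{x : \phistarh(x) = s\}$ yields $d^{\sMstarlat, \hphi \circ \pi_\lat}_h(s) = \sum_{x : \phistarh(x) = s} d^{\sMstarobs, \pi_\lat \circ \phi}_h(x)$, and applying the elementary bound $\frac{\sum_x a_x}{\sum_x b_x} \leq \max_x \frac{a_x}{b_x}$ gives
\[
\frac{d^{\sMstarlat, \hphi \circ \pi_\lat}_h(s)}{\mu_{\lat,h}(s)} \leq \max_{x : \phistarh(x) = s} \frac{d^{\sMstarobs, \pi_\lat \circ \phi}_h(x)}{\mu_{\obs,h}(x)}.
\]
Taking $\mu_{\obs,h}$ to be the minimizer for $\Ccovs(\Mstarobs, \Pilat \circ \Phi)$ and then taking suprema over $s$, $\phi$, and $\pi_\lat$ yields the remaining inequality, completing the proof.

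I expect no substantial obstacles: the only subtlety is in correctly handling decodability when defining $\mu_{\obs,h}$ (ensuring normalization and that the identity holds pointwise almost surely), and keeping track of the mapping between policy classes $\Pilat \circ \Phi$ and $\hPhi \circ \Pilat$ via the mismatch functions. Both of these are essentially bookkeeping given the definitions in \cref{def:h_phi_redux} and~\eqref{eq:hPhi-Pilat-policies-redux}.
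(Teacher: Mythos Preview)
Your proposal is correct. You and the paper both rely on the same core occupancy identity $d^{\sMstarobs,\pi_\lat\circ\phi}_h(x)=\psistarh(x\mid\phistarh(x))\,d^{\sMstarlat,\hphi\circ\pi_\lat}_h(\phistarh(x))$ (which the paper isolates as \cref{lem:hphi-state-occ}), but you diverge from there. The paper invokes the equivalent ``cumulative reachability'' characterization $\Ccovs(M,\Pi)=\max_h\sum_z\max_\pi d^{M,\pi}_h(z)$ (\cref{lem:cov-state-reachability}), which turns the proof into a single chain of equalities by partitioning $\sum_x$ as $\sum_s\sum_{x:\phistarh(x)=s}$ and using $\sum_{x:\phistarh(x)=s}\psistarh(x\mid s)=1$. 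You instead stay with the original $\min_\mu\max_\pi\max_z$ definition and prove both inequalities by constructing candidate distributions; the mediant-type bound $\frac{\sum a_x}{\sum b_x}\le\max_x\frac{a_x}{b_x}$ replaces the role of the reachability lemma in the $\ge$ direction. Your route is a bit longer but more self-contained, while the paper's is slicker at the cost of the auxiliary equivalence; both establish the witness claim in the same way.
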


The lemma follows from the following two observations.

\begin{restatable}{lemma}{hphistateocc}\label{lem:hphi-state-occ}
	Let $\crl*{\hphi}_{\phi \in \Phi}$ denote the mismatch functions for emission $\psistar$, and let $\Mobs = \dtri*{\Mlat, \psistar}$. Then, for any $\pilat \in \Pilat$, $\phi \in \Phi$, $h \in [H]$, $x \in \cX$, we have
	\[
		 d_{h}^{\Mobs, \pilat \circ \phi}(x) = \psistarh(x \mid \phistarh(x))d_{h}^{\Mlat, \hphi \circ \pilat}(\phistarh(x)).
	\]
\end{restatable}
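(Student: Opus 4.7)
The plan is to decompose the observation occupancy into a product of the emission probability and an induced latent occupancy, and then show that this induced latent occupancy coincides with $d_h^{\Mlat,\hphi\circ\pilat}$ by verifying that the two latent-space processes agree transition-by-transition.

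First, I would use decodability of $\psistar$ to write, for any observation $x$,
\[
d_h^{\Mobs,\pilat\circ\phi}(x) \;=\; \bbP^{\Mobs,\pilat\circ\phi}\!\brk*{\phi^{\star}_h(x_h)=\phi^{\star}_h(x)}\cdot\psi^{\star}_h(x\mid \phi^{\star}_h(x)),
\]
so the problem reduces to showing that the induced latent occupancy $\wt d_h(s)\ldef\bbP^{\Mobs,\pilat\circ\phi}\brk*{\phi^{\star}_h(x_h)=s}$ equals $d_h^{\Mlat,\hphi\circ\pilat}(s)$ for every $s$.

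Second, I would show by induction on $h$ that the law of the latent trajectory $(s_1,a_1,\dots,s_H,a_H)$ with $s_h\ldef\phi^{\star}_h(x_h)$ under $(\Mobs,\pilat\circ\phi)$ coincides with the law of the corresponding latent trajectory under $(\Mlat,\hphi\circ\pilat)$. The state-transition kernels already match by the definition of the latent-dynamics MDP $\dtri*{\Mlat,\psi^{\star}}$, and the initial distributions match for the same reason, so the only thing to verify is that the conditional action distributions agree. For this, I would compute directly:
\begin{align*}
\bbP^{\Mobs,\pilat\circ\phi}\brk*{a_h=a\mid s_h=s}
&=\sum_{x\in\cX}\psi^{\star}_h(x\mid s)\,\pi_{\lat,h}(a\mid \phi_h(x))\\
&=\sum_{s'\in\cS}\pi_{\lat,h}(a\mid s')\sum_{x:\phi_h(x)=s'}\psi^{\star}_h(x\mid s)\\
&=\sum_{s'\in\cS}\hphih(s'\mid s)\,\pi_{\lat,h}(a\mid s')\;=\;[\hphi\circ\pilat]_h(a\mid s),
\end{align*}
where the penultimate equality is the definition of $\hphih$ (\cref{def:h_phi_redux}). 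Plugging this equality back into the inductive step then propagates the match to horizon $h+1$.

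Combining the two pieces yields $\wt d_h(s)=d_h^{\Mlat,\hphi\circ\pilat}(s)$, and substituting into the factorization from the first step gives the claim. The only non-routine step is the action-distribution calculation, which is really just unpacking the definition of the mismatch function, so I do not expect any substantive obstacle; the main care is simply to keep the bookkeeping between observations, true latent states $\phi^{\star}_h(x)$, and decoded states $\phi_h(x)$ clean.
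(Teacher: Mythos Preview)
Your proposal is correct and follows essentially the same approach as the paper's proof: both proceed by induction on $h$ and both hinge on the same key computation showing that the effective latent action distribution is $[\hphi\circ\pilat]_h(a\mid s)=\sum_{x:\phistar(x)=s}\psistar_h(x\mid s)\,\pilat(a\mid\phi(x))$. The only cosmetic difference is that the paper inducts directly on the identity $d_{\obs,h}(x)=\psistar_h(x\mid\phistar_h(x))\,d_{\lat,h}(\phistar_h(x))$ via the Bellman flow equations, whereas you first factor out the emission (using that $x_h\mid s_h\sim\psistar_h(\cdot\mid s_h)$ regardless of policy) and then match the induced latent occupancies; the substance is the same.
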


\begin{proof}[\pfref{lem:hphi-state-occ}]
	Below, we write $s_h = \phistar(x_h)$. We proceed by induction, simply writing $d_{\obs,h} \coloneqq d_{h}^{\Mobs,\pilat \circ \phi}$ and $d_{\lat,h} \coloneqq  d_{h}^{\Mlat,\hphi \circ \pilat}$. The base case $(h=1)$ is obtained by noting that $d_{\lat,1}(s) = P_{\lat,1}(s \mid \emptyset)$ while $d_{\obs,1}(x) = P_{\obs,1}(x \mid \emptyset) = \psistar_1(x \mid s) P_{\lat,1}(s \mid \emptyset)$. For the general case, via the Bellman flow equations, we have
	\begin{align*}
	d_{\obs,h}(x_h) &= \sum_{x_{h-1},a_{h-1} \in \cX \times \cA} \Pobsh(x_h \mid x_{h-1},a_{h-1}) d_{\obs,h-1}(x_{h-1})\pilat(a_{h-1} \mid \phi(x_{h-1})) \\
	&=  \psi(x_h \mid s_h) \sum_{x_{h-1},a_{h-1} \in \cX \times \cA} \Plath(s_h \mid s_{h-1},a_{h-1}) d_{\lat,h-1}(s_{h-1}) \psi(x_{h-1} \mid s_{h-1})\pilat(a_{h-1} \mid \phi(x_{h-1})) \\
	&= \psi(x_h \mid s_h) \sum_{s_{h-1},a_{h-1} \in \cS \times \cA} \Plath(s_h \mid s_{h-1},a_{h-1}) d_{\lat,h-1}(s_{h-1}) \times \\
		&\hspace{2in} \prn*{\sum_{x_{h-1}: \phistar(x_{h-1}) = s_{h-1}}\psi(x_{h-1} \mid s_{h-1})\pilat(a_{h-1} \mid \phi(x_{h-1}))}.
	\end{align*}
	The result is obtained by noting that 
	\begin{align*}
	\hphi \circ \pilat(a_{h-1} \mid s_{h-1}) &= \sum_{s' \in \cS} \hphi(s' \mid s_{h-1}) \pilat(a_{h-1} \mid s') \\
	&= \sum_{s' \in \cS} \sum_{x_{h-1}: \phistar(x_{h-1})=s_{h-1}} \psi(x_{h-1} \mid s_{h-1}) \indic\crl*{\phi(x_{h-1}) = s'} \pilat(a_h \mid s') \\
	&= \sum_{x_{h-1}: \phistar(x_{h-1}) = s_{h-1}}\psi(x_{h-1} \mid s_{h-1}) \pilat(a_{h-1} \mid \phi(x_{h-1})),
	\end{align*}
where the second line follows from the definition of the mismatch functions. 
\end{proof}

\begin{lemma}[Equivalence of state coverability and cumulative state reachability]\label{lem:cov-state-reachability}
	Let $M$ be an MDP defined over a state space $\cZ$.  The following definition is equivalent to \cref{def:state-cov-redux}:
	\begin{equation}\label{eq:state-cov-equivalence}
		\Ccovs(M,\Pi) \coloneqq \max_{h \in [H]} \sum_{z \in \cZ} \max_{\pi \in \Pi} d^{M,\pi}_h(z). 
	\end{equation}
\end{lemma}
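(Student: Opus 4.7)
The plan is to prove the equivalence by establishing matching upper and lower bounds on each side of the identity, for each fixed layer $h \in [H]$. Let me denote the original definition's per-layer value as $C_h := \min_{\mu \in \Delta(\cZ)} \max_{\pi \in \Pi} \max_{z \in \cZ} \frac{d^{\sM,\pi}_h(z)}{\mu(z)}$ and the new definition's value as $\widetilde{C}_h := \sum_{z \in \cZ} \max_{\pi \in \Pi} d^{\sM,\pi}_h(z)$. It suffices to show $C_h = \widetilde{C}_h$ for all $h$, since both definitions then agree after taking the outer maximum over $h$.

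For the direction $C_h \leq \widetilde{C}_h$, the strategy is to exhibit an explicit distribution $\mu^\star$ which witnesses the infimum essentially optimally. Namely, set
\begin{equation*}
\mu^\star(z) \;=\; \frac{\max_{\pi \in \Pi} d^{\sM,\pi}_h(z)}{\widetilde{C}_h},
\end{equation*}
which is well-defined and a valid probability distribution on $\cZ$ by the definition of $\widetilde{C}_h$ (assuming $\widetilde{C}_h > 0$; the degenerate case is trivial). For any $\pi \in \Pi$ and $z \in \cZ$, we then have
\begin{equation*}
\frac{d^{\sM,\pi}_h(z)}{\mu^\star(z)} \;\leq\; \frac{\max_{\pi' \in \Pi} d^{\sM,\pi'}_h(z)}{\mu^\star(z)} \;=\; \widetilde{C}_h,
\end{equation*}
so taking $\max$ over $\pi$ and $z$ and then the infimum over $\mu$ yields $C_h \leq \widetilde{C}_h$.

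For the reverse direction $\widetilde{C}_h \leq C_h$, fix any $\mu \in \Delta(\cZ)$ and let $C(\mu) := \max_{\pi \in \Pi} \max_{z \in \cZ} d^{\sM,\pi}_h(z)/\mu(z)$. By definition, for every $z \in \cZ$ and every $\pi \in \Pi$, $d^{\sM,\pi}_h(z) \leq C(\mu)\,\mu(z)$; taking the maximum over $\pi \in \Pi$ first preserves the inequality, then summing over $z$ gives $\sum_{z} \max_{\pi} d^{\sM,\pi}_h(z) \leq C(\mu) \sum_z \mu(z) = C(\mu)$. Since this holds for every $\mu$, infimizing the right-hand side over $\mu \in \Delta(\cZ)$ gives $\widetilde{C}_h \leq C_h$.

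Neither direction presents a significant obstacle; the only subtlety is the construction of the witnessing distribution $\mu^\star$ in the first direction and the justification that it is a valid probability measure (which is immediate from the normalization by $\widetilde{C}_h$). Combining the two inequalities yields $C_h = \widetilde{C}_h$ for every $h$, and maximizing over $h \in [H]$ on both sides completes the proof.
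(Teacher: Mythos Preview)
Your proof is correct and follows the standard argument: the paper itself does not spell out the details, instead citing Lemma~3 of \citet{xie2022role}, whose proof proceeds by exactly the two directions you give (constructing the witnessing distribution $\mu^\star(z)\propto \max_{\pi}d_h^{\sM,\pi}(z)$ for one inequality, and summing the pointwise bound $d_h^{\sM,\pi}(z)\leq C(\mu)\mu(z)$ for the other).
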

\begin{proof}[\pfref{lem:cov-state-reachability}]
Straightforward adaptation of the proof of Lemma 3 from \citet{xie2022role}. 
\end{proof}

\begin{proof}[\pfref{lem:covinvariance}] Using \cref{lem:hphi-state-occ} and \cref{lem:cov-state-reachability}, we have
\begin{align*}
\Ccovs(\Mobs,\Pilat \circ \Phi) &= \max_{h \in [H]} \sum_{x \in \cX} \max_{\pilat, \phi} d^{\pilat \circ \phi}_\obs(x) \\
&= \max_{h \in [H]} \sum_{x \in \cX} \max_{\pilat, \phi} \psistar(x \mid \phistar(x)) d^{\hphi \circ \pilat}_\lat(\phistar(x)) \\
&= \max_{h \in [H]} \sum_{s \in \cS} \sum_{x : \phistar(x)=s} \max_{\pilat, \phi} \psistar(x \mid s) d^{\hphi \circ \pilat}_\lat(s) \\
&= \max_{h \in [H]} \sum_{s \in \cS} \max_{\pilat, \phi} d^{\hphi \circ \pilat}_\lat(s) \sum_{x : \phistar(x)=s}  \psistar(x \mid s) \\
&= \Ccovs(\Mlat, \hPhi \circ \Pilat).
\end{align*}
\end{proof}

Lastly, we show that state-action coverability is bounded by state coverability times the size of the action set.

\begin{lemma}[State-action coverability bound]\label{lem:state-action-cov-state-cov}
	For any MDP $M$ and policy set $\Pi$, we have
	\[
		\Ccov(M,\Pi) \leq \Ccovs(M,\Pi)\abs{\cA}.
	\]
\end{lemma}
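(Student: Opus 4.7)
The plan is to exhibit a feasible distribution $\nu_h \in \Delta(\cZ\times\cA)$ for the state-action coverability problem whose induced ratio is at most $\abs{\cA}$ times that of the optimal state-coverability distribution.

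Concretely, for each $h \in [H]$, let $\mu^\star_h \in \Delta(\cZ)$ denote a distribution achieving the infimum in \cref{def:state-cov-redux} for state coverability, so that $d_h^{M,\pi}(z)/\mu^\star_h(z) \leq \Ccovs(M,\Pi)$ for all $\pi \in \Pi$ and $z \in \cZ$. I would define the candidate state-action distribution
\[
\nu_h(z,a) \;\ldef\; \frac{\mu^\star_h(z)}{\abs{\cA}},
\]
which lies in $\Delta(\cZ\times\cA)$ since it is nonnegative and sums to $1$. For any policy $\pi \in \Pi$ and any pair $(z,a)$, I would use the trivial bound $d_h^{M,\pi}(z,a) \leq d_h^{M,\pi}(z)$ (since $d_h^{M,\pi}(z) = \sum_{a'} d_h^{M,\pi}(z,a')$) to get
\[
\frac{d_h^{M,\pi}(z,a)}{\nu_h(z,a)} \;=\; \abs{\cA}\cdot\frac{d_h^{M,\pi}(z,a)}{\mu^\star_h(z)} \;\leq\; \abs{\cA}\cdot\frac{d_h^{M,\pi}(z)}{\mu^\star_h(z)} \;\leq\; \abs{\cA}\cdot \Ccovs(M,\Pi).
\]
Taking the maximum over $h,\pi,z,a$ and noting that $\Ccov(M,\Pi)$ is the infimum over all choices of $\nu_h$, the candidate $\nu_h$ is a valid choice and the inequality follows.

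There is essentially no obstacle here: the argument is just that spreading the state-coverability distribution uniformly over actions yields a state-action distribution whose ratio against occupancies degrades by at most a factor of $\abs{\cA}$. The only point worth double-checking is that $\mu^\star_h$ is actually attained (as opposed to merely an infimum); if not, one can instead pick $\mu^\star_h$ achieving ratio $\Ccovs(M,\Pi) + \epsilon$ and let $\epsilon \to 0$, which gives the same bound.
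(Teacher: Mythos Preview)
Your proof is correct and essentially identical to the paper's: both take $\nu_h(z,a) = \mu^\star_h(z)/\abs{\cA}$ and bound the ratio using $d_h^{M,\pi}(z,a) \leq d_h^{M,\pi}(z)$ (the paper writes this as $d^{M,\pi}(z)\pi(a\mid z)$ and bounds $\pi(a\mid z)\leq 1$, which is the same thing). Your remark on attainment of the infimum is a minor technical point the paper omits.
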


\begin{proof}[\pfref{lem:state-action-cov-state-cov}]
	Let $\mu_s \in \Delta(\cZ)$ witness $\Ccovs(M,\Pi)$. Fix $h \in [H]$, which we omit below for cleanliness. Then, we have
	\begin{align*}
		\min_{\mu_{s,a} \in \Delta(\cZ \times \cA)} \max_{\pi \in \Pi} \max_{z,a \in \cZ \times \cA} \crl*{\frac{d^{M,\pi}(z,a)}{\mu_{s,a}(z,a)}} &\leq \max_{\pi \in \Pi} \max_{z,a \in \cZ \times \cA} \crl*{\frac{d^{M,\pi}(z)\pi(a \mid z)}{\mu_{s}(z) \nicefrac{1}{\abs{\cA}}}}\\
		&\leq \abs{\cA} \max_{\pi \in \Pi} \max_{z \in \cZ} \crl*{\frac{d^{M,\pi}(z)}{\mu_{s}(z)}} \\
		&= \Ccovs(M,\Pi)\abs{\cA}.
	\end{align*}

\end{proof}

\begin{lemma}[Pushforward coverability is invariant to rich observations]\label{lem:pushforward-invariant}
Let $\Cpush(M)$ denote the pushforward coverability parameter for an MDP $M$ (\cref{def:pushforward_def_main}), and $\Mobsstar \coloneqq \dtri*{\Mstarlat,\psistar}$. Then, we have
\[
	\Cpush(\Mstarobs) = \Cpush(\Mstarlat).
\]	
Furthermore, letting $\crl*{\mu_{\lat,h} \in \Delta(\cS)}_{h \in [H]}$ denote the distribution which witnesses the right-hand-side, the left-hand-side is witnessed by the distribution 
\[
\mu_{\obs,h}(x) = \psistarh(x \mid \phistarh(x))\mu_{\lat,h}(\phistarh(x)).
\]
\end{lemma}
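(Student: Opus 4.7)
\textbf{Proof plan for \cref{lem:pushforward-invariant}.}

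The plan is to prove both directions of the equality $\Cpush(\Mstarobs) = \Cpush(\Mstarlat)$ by explicit manipulation of the ratio $\frac{P_{\obs,h-1}(x' \mid x, a)}{\mu_{\obs,h}(x')}$ that defines pushforward coverability. Recall from the definition of the latent-dynamics MDP that
\[
P_{\obs,h-1}(x' \mid x, a) = \Pstarlat_{h-1}(\phistar_h(x') \mid \phistar_{h-1}(x), a) \cdot \psistar_h(x' \mid \phistar_h(x')),
\]
so the entire argument reduces to showing that the infimum over observation-level distributions $\mu_{\obs,h}$ is attained (up to equality) at distributions of the factored form proposed in the statement.

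For the upper bound $\Cpush(\Mstarobs) \leq \Cpush(\Mstarlat)$, I would fix $h$, let $\mu_{\lat,h}$ be any distribution witnessing the latent pushforward coefficient, and plug the proposed $\mu_{\obs,h}(x) = \psistar_h(x \mid \phistar_h(x))\mu_{\lat,h}(\phistar_h(x))$ directly into the definition. The emission factors cancel exactly, leaving
\[
\frac{P_{\obs,h-1}(x' \mid x, a)}{\mu_{\obs,h}(x')} = \frac{\Pstarlat_{h-1}(\phistar_h(x') \mid \phistar_{h-1}(x), a)}{\mu_{\lat,h}(\phistar_h(x'))},
\]
and taking the supremum over $(x,a,x')$ is bounded by the supremum over $(s,a,s')$ of the right-hand side, yielding the latent pushforward coefficient.

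For the lower bound $\Cpush(\Mstarobs) \geq \Cpush(\Mstarlat)$, the key step is to take an arbitrary $\mu_{\obs,h} \in \Delta(\cX)$ and push it forward through $\phistar_h$ to define $\nu_h(s') \coloneqq \sum_{x' : \phistar_h(x') = s'} \mu_{\obs,h}(x') \in \Delta(\cS)$. For any $s, a, s'$, picking an arbitrary $x$ with $\phistar_{h-1}(x) = s$ and applying the elementary inequality $\max_i \frac{a_i}{b_i} \geq \frac{\sum_i a_i}{\sum_i b_i}$ (valid for $a_i, b_i \geq 0$) to the sum over $x'$ with $\phistar_h(x')=s'$ gives
\[
\max_{x' : \phistar_h(x') = s'} \frac{P_{\obs,h-1}(x' \mid x, a)}{\mu_{\obs,h}(x')} \geq \frac{\Pstarlat_{h-1}(s' \mid s, a) \sum_{x'} \psistar_h(x' \mid s')}{\nu_h(s')} = \frac{\Pstarlat_{h-1}(s' \mid s, a)}{\nu_h(s')}.
\]
Taking the supremum over $(s,a,s')$ then over $\mu_{\obs,h}$ (equivalently, over induced $\nu_h$) gives the claimed lower bound.

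The only mildly subtle point is ensuring the weighted-average inequality applies, which requires the sum of the $\mu_{\obs,h}(x')$ over the fiber $\{x' : \phistar_h(x') = s'\}$ to be nonzero whenever $\Pstarlat_{h-1}(s' \mid s, a) > 0$; this can be handled either by restricting to $s'$ with $\nu_h(s') > 0$ (ratios with zero denominator are already $+\infty$) or by noting that the infimum need only be taken over distributions with full support on reachable states. Otherwise the proof is a direct calculation, so I do not expect any real obstacles.
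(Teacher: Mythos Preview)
Your proof is correct and takes a genuinely different route from the paper's. The paper first proves an auxiliary characterization (\cref{lem:push-reachability}) showing that $\Cpush(M) = \max_h \sum_{z'} \max_{z,a} P_h(z' \mid z,a)$, i.e., that the infimum over $\mu$ is attained at $\mu(z') \propto \max_{z,a} P(z'\mid z,a)$. With this ``cumulative conditional reachability'' form in hand, the proof of the lemma becomes a single chain of equalities: partition the sum over $x'$ by latent state $s' = \phistar(x')$, pull the emission factor $\psistar(x'\mid s')$ out of the $\max_{x,a}$ (since it does not depend on $x,a$), and observe that it sums to one over each fiber. Your approach instead works directly with the inf--sup definition, handling the two inequalities separately via the explicit witness $\mu_{\obs,h}$ for $\leq$ and the mediant inequality applied to the pushforward $\nu_h = \phistar_h\sharp\mu_{\obs,h}$ for $\geq$. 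Your route is more self-contained and makes the witnessing distribution in the ``furthermore'' clause explicit (the paper leaves this implicit in \cref{lem:push-reachability}); the paper's route is shorter once the reachability lemma is available and reuses a characterization that is independently useful elsewhere.
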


This follows from an analogous equivalence of pushforward coverability and cumulative \textit{conditional reachability}.

\begin{lemma}[Equivalence of pushforward coverability and cumulative conditional reachability]\label{lem:push-reachability}
Let $M$ be an MDP defined over a state space $\cZ$ with transition kernel $P$. The following definition is equivalent to pushforward coverability (\cref{def:pushforward_def_main}):
\[
	\Cpush(M) \coloneqq \max_{h \in [H]}\sum_{z' \in \cZ} \max_{z,a \in \cZ \times \cA} P_h(z' \mid z,a).
\]
\end{lemma}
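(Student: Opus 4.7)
The equivalence is a standard minimax/averaging argument, essentially mirroring the proof of the state coverability version (Lemma 3 of Xie et al.~2022, invoked here as \cref{lem:cov-state-reachability}), but with $\max_{\pi} d^{M,\pi}_h(z)$ replaced throughout by $\max_{z,a} P_h(z' \mid z, a)$. The plan is to fix an arbitrary $h \in [H]$ and show, after writing $f(z') \ldef \max_{z,a \in \cZ \times \cA} P_h(z' \mid z,a)$ and $Z \ldef \sum_{z' \in \cZ} f(z')$, that
\[
\inf_{\mu \in \Delta(\cZ)} \sup_{(z,a,z') \in \cZ \times \cA \times \cZ} \frac{P_h(z' \mid z,a)}{\mu(z')} \;=\; Z.
\]
Taking $\max$ over $h$ on both sides then gives the claim.

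For the upper bound direction, I would simply plug in the natural candidate $\mu^\star(z') = f(z')/Z$ (assuming $Z > 0$; the degenerate case is trivial). For this choice, $P_h(z'\mid z,a) / \mu^\star(z') = Z \cdot P_h(z'\mid z,a)/f(z') \leq Z$ for every $(z,a,z')$, by the very definition of $f$. Hence the infimum is at most $Z$.

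For the lower bound direction, fix an arbitrary $\mu \in \Delta(\cZ)$. Restricting the supremum to $(z,a)$ that achieve the max defining $f(z')$ gives
\[
\sup_{(z,a,z')} \frac{P_h(z'\mid z,a)}{\mu(z')} \;\geq\; \sup_{z' \in \cZ} \frac{f(z')}{\mu(z')}.
\]
The main step is then the averaging inequality $\sup_{z'} f(z')/\mu(z') \geq \sum_{z'} \mu(z') \cdot f(z')/\mu(z') = Z$, where the inequality is justified because the right-hand side is a convex combination (with weights $\mu(z')$) of the quantities $f(z')/\mu(z')$ being maxed over, restricted to $z'$ with $\mu(z') > 0$ (terms with $\mu(z')=0$ and $f(z')>0$ make the supremum infinite, which only strengthens the bound). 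Since $\mu$ was arbitrary, the infimum is at least $Z$, completing the equivalence.

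I do not anticipate a real obstacle here; the only mild care needed is to handle the boundary cases where some $\mu(z')$ vanishes (which forces the ratio to $+\infty$ whenever $f(z')>0$, automatically beating $Z$) and where $Z=0$ (in which case all $P_h(z' \mid z,a)$ vanish and both sides are zero).
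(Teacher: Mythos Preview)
The proposal is correct and follows essentially the same approach as the paper: both directions are proved by (i) plugging in the candidate distribution $\mu^\star(z') \propto \max_{z,a} P_h(z'\mid z,a)$ to upper bound the infimum by $Z$, and (ii) using the averaging inequality $\sup_{z'} f(z')/\mu(z') \geq \sum_{z'} f(z') = Z$ to lower bound it. Your treatment of the boundary cases ($\mu(z')=0$ or $Z=0$) is a bit more careful than the paper's, but the core argument is identical.
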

\begin{proof}[\pfref{lem:push-reachability}]
Fix $h \in [H]$, whose dependence we omit below. For the first direction, letting $\mu$ denote the pushforward coverability distribution, we have:
\[
\sum_{z' \in \cZ} \max_{z,a \in \cZ \times \cA} P(z' \mid z,a) = \sum_{z' \in \cZ} \max_{z,a \in \cZ \times \cA} \frac{P(z' \mid z,a)}{\mu(z')} \mu(z') \leq \Cpush \sum_{z' \in \cZ} \mu(z') = \Cpush.
\]
For the second direction, taking $\mu(z') \propto \max_{z,a} P(z' \mid z,a)$,
we have
\[
\min_{\mu \in \Delta(\cZ)} \max_{z,a,z' \in \cZ \times \cA \times \cZ} \frac{P(z' \mid z,a)}{\mu(z')} \leq \max_{z,a,z' \in \cZ \times \cA \times \cZ} \frac{P(z' \mid z,a)}{\max_{\tilde{z},\tilde{a}} P(z' \mid \tilde{z},\tilde{a})} \sum_{\tilde{z}'
}  \max_{\tilde{z},\tilde{a}}P(\tilde{z}' \mid \tilde{z},\tilde{a}) \leq  \sum_{z'}  \max_{z,a}P(z' \mid z,a).
\] 
\end{proof}

\begin{proof}[\pfref{lem:pushforward-invariant}]
This result follows by \cref{lem:push-reachability} since,
\begin{align*}
\Cpush(\Mobs) &= \sum_{x' \in \cX} \max_{x,a} \Pobs(x' \mid x,a) \\
	&= \sum_{s' \in \cS} \sum_{x': \phistar(x') = s'} \max_{x,a} \psistar(x' \mid s')\Plat(s' \mid \phistar(x),a)\\
	&= \sum_{s' \in \cS} \max_{x,a} \Plat(s' \mid \phistar(x),a) \sum_{x': \phistar(x') = s'}  \psistar(x' \mid s')\\
	&= \sum_{s' \in \cS} \max_{s,a} \Plat(s' \mid s,a) = \Cpush(\Mlat).
\end{align*}

\end{proof}

We next show that the mismatch functions can be used to express the observation-level backups for any function of the decoders. For any $g: \cS \rightarrow \bbR$, $h \in [H]$, we define the function $\brk{\hphih \circ g}: \cS \rightarrow \bbR$
\[
\brk{\hphih \circ g}(s) \coloneqq \sum_{s' \in \cS} \hphih(s' \mid s) g(s').
\]
We further overload the Bellman operator notation and define, for any $g: \cS \rightarrow \bbR$ and $\Mlat = (\rlat,\Plat)$, 
\[
\brk{\cT^{\Mlat}_h g}(s,a) = \rlat(s,a) + \En_{s' \sim \Plat(s,a)}\brk*{g(s')}.
\]

\begin{lemma}\label{lem:hphi-bellman}
Let $\Mobs = \dtri*{\Mlat, \psistar}$, $\phistar \coloneqq (\psistar)^{-1}$, $\phi \in \Phi$, and $\hphi$ be the mismatch function for emission $\psistar$ (\cref{def:h_phi_redux}). Then, for any $f_\lat: \cS \times \cA \rightarrow \bbR$, $h \in [H]$, and $(x,a) \in \cX \times \cA$, we have
\[
\brk*{\cT^{\Mobs}_h(f_\lat \circ \phi_{h+1})}(x,a) = \brk*{\cT^{\Mlat}_h(\hphihpo \circ V_{f_\lat})}(\phistar_h(x),a).
\]
\end{lemma}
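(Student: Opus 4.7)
The proof plan is to unwind both sides by the definition of the Bellman operator and the decodable emission dynamics, and then recognize the inner expectation over emissions as exactly the mismatch kernel $\hphihpo$.

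First, I would expand the left-hand side using the definition $[\cT^{\Mobs}_h g](x,a) = r_{\obs,h}(x,a) + \En_{x' \sim P_{\obs,h}(\cdot\mid x,a)}[V_g(x')]$ applied to $g = f_\lat \circ \phi_{h+1}$. Note that $V_{f_\lat \circ \phi_{h+1}}(x') = \max_a f_\lat(\phi_{h+1}(x'),a) = V_{f_\lat}(\phi_{h+1}(x'))$. Next, using the definition of the latent-dynamics MDP (in particular the identities $r_{\obs,h}(x,a) = r_{\lat,h}(\phistar_h(x),a)$ and the factorization $P_{\obs,h}(x' \mid x,a) = \psistar_{h+1}(x' \mid \phistar_{h+1}(x')) \cdot \Plath(\phistar_{h+1}(x') \mid \phistar_h(x), a)$), I would rewrite the expectation over $x'$ as a two-stage expectation: first sample $s' \sim \Plath(\cdot \mid \phistar_h(x), a)$, then sample $x' \sim \psistar_{h+1}(\cdot \mid s')$.

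The key step is then to recognize that the inner expectation over $x' \sim \psistar_{h+1}(\cdot \mid s')$ of $V_{f_\lat}(\phi_{h+1}(x'))$ is, by definition of the mismatch function, exactly $[\hphihpo \circ V_{f_\lat}](s') = \sum_{s''} \hphihpo(s'' \mid s') V_{f_\lat}(s'')$. Substituting back, the right-hand side collapses to
\[
r_{\lat,h}(\phistar_h(x),a) + \En_{s' \sim \Plath(\cdot \mid \phistar_h(x),a)}\brk*{[\hphihpo \circ V_{f_\lat}](s')},
\]
which is precisely $[\cT^{\Mlat}_h(\hphihpo \circ V_{f_\lat})](\phistar_h(x),a)$ by the definition of the latent Bellman operator.

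I do not expect any serious obstacle: the proof is really just a careful unwinding of definitions, with the essential content being that the decodability of $\psistar$ allows emissions to be integrated against $\phi_{h+1}$ in closed form via $\hphihpo$. The only care required is in correctly propagating the three layers of randomness (next latent state, emission, and the $\max_a$ inside $V$), and in applying the decoder identity $\phistar_{h+1}(x') = s'$ almost surely under $x' \sim \psistar_{h+1}(\cdot \mid s')$ when decomposing the observation transition.
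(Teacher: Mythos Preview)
Your proposal is correct and follows essentially the same approach as the paper: expand the observation-level Bellman operator, factor the transition into the latent step followed by emission, and collapse the inner emission expectation into $\hphihpo$ via its definition. The paper's proof is line-for-line the same computation you outline.
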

\begin{proof}[\pfref{lem:hphi-bellman}]
Let $f \coloneqq f_\lat$, $h \in [H]$, and $(x,a) \in \cX \times \cA$ be given. Then, we have:  
\begin{align*}
	\brk*{\cT^{\Mobs}_{h}(f \circ \phi_{h+1})}(x,a) &= \rlath(\phistarh(x),a) + \En_{s_{h+1}\sim\Plath(\phistarh(x),a)}\En_{x_{h+1}\sim{}\psistar_{h+1}(s_{h+1})}\brk*{V_f(\phi(x_{h+1}))}\\
	&= \rlath(\phistarh(x),a) + \En_{s_{h+1}\sim\Plath(\phistarh(x),a)}\brk*{ \sum_{x_{h+1} \in \cX} \psistar(x_{h+1} \mid s_{h+1}) V_f(\phi(x_{h+1})) } \\
	&= \rlath(\phistarh(x),a) + \En_{s_{h+1}\sim\Plath(\phistarh(x),a)}\brk*{ \sum_{s' \in \cS} \hphi(s' \mid s_{h+1}) V_f(s') } \\
	&= \rlath(\phistarh(x),a) + \En_{s_{h+1}\sim\Plath(\phistarh(x),a)}\brk*{ \hphi \circ V_f (s_{h+1})} \\
	&= \brk*{\cT^{\Mlat}_h(\hphi \circ V_f)}(\phistarh(x),a),
 \end{align*}
 where the third line follows from the definition of the mismatch function $\hphi$. 
\end{proof}

We next show that the mismatch functions can be used to realize the pushforward dynamics $\phi\sharp\Mstarobs$, which we recall are defined as:
	\begin{equation}\label{eq:pushforward-model-redux}
\brk*{\phi\sharp\Mstarobsh}(r,s' \mid x,a) = %
	\sum_{x':\phi(x')=s'} \Mstarobsh(r,x' \mid x,a). %
\end{equation}
We also recall the notation $\brk*{\hphihpo \circ \Mlat}_h$, defined via:
\[
\brk*{\hphi \circ \Mlat}_h(r_h,s_{h+1} \mid s_h,a_h) \coloneqq \sum_{s'_{h+1} \in \cS} \Mlath(r_h,s'_{h+1} \mid s_h,a_h)
 \hphihpo(s_{h+1} \mid s'_{h+1}).
 \]

\begin{restatable}[Pushforward model realizability via mismatch functions]{lemma}{phicompressedrealizable}\label{lem:phi-compressed-model-realizable}
	For all $\phi \in \Phi$, $h \in [H]$, we have:
	\begin{equation}\label{eq:phi-compressed-model-realizable}
\brk{\phi_{h+1}\sharp\Mstarobsh}(\cdot \mid x,a) = \brk*{\brk*{\hphi \circ \Mstarlat}_h \circ \phistarh}(\cdot \mid x,a)
	\end{equation}
\end{restatable}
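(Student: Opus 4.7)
The plan is to start from the definition of the pushforward model on the left-hand side, expand the observation-level dynamics $\Mstarobsh$ using the fact that $\Mstarobs = \dtri{\Mstarlat,\psistar}$, and then partition the sum over next-observations according to the true latent state. More concretely, by the definition of a latent-dynamics MDP we have
\begin{align*}
\Mstarobsh(r,x' \mid x,a) = R^\star_{\lat,h}(r \mid \phistarh(x),a)\,\Pstarlath(\phistar_{h+1}(x') \mid \phistarh(x),a)\,\psistar_{h+1}(x' \mid \phistar_{h+1}(x')),
\end{align*}
so plugging this into $\brk*{\phi_{h+1}\sharp\Mstarobsh}(r,s \mid x,a) = \sum_{x':\phi_{h+1}(x')=s}\Mstarobsh(r,x' \mid x,a)$ and factoring out the reward kernel yields
\begin{align*}
\brk*{\phi_{h+1}\sharp\Mstarobsh}(r,s \mid x,a) = R^\star_{\lat,h}(r \mid \phistarh(x),a)\sum_{x':\phi_{h+1}(x')=s}\Pstarlath(\phistar_{h+1}(x') \mid \phistarh(x),a)\,\psistar_{h+1}(x' \mid \phistar_{h+1}(x')).
\end{align*}

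Next, I would group the sum over $x'$ according to the true latent state $s' \ldef \phistar_{h+1}(x')$. This is the one spot where the decodability of $\psistar$ matters: because $\supp\psistar_{h+1}(s')\cap\supp\psistar_{h+1}(s'')=\emptyset$ for $s'\neq s''$, the kernel $\psistar_{h+1}(x' \mid s')$ is automatically supported on $\{x' : \phistar_{h+1}(x') = s'\}$, so
\begin{align*}
\sum_{x':\phi_{h+1}(x')=s,\,\phistar_{h+1}(x')=s'}\psistar_{h+1}(x' \mid s') = \sum_{x':\phi_{h+1}(x')=s}\psistar_{h+1}(x' \mid s') = \bbP_{x'\sim\psistar_{h+1}(s')}\prn*{\phi_{h+1}(x')=s} = \hphihpo(s \mid s'),
\end{align*}
where the last equality is the definition of the mismatch function (\cref{def:h_phi_redux}). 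Substituting back, the left-hand side becomes
\begin{align*}
R^\star_{\lat,h}(r \mid \phistarh(x),a)\sum_{s'\in\cS}\Pstarlath(s' \mid \phistarh(x),a)\,\hphihpo(s \mid s').
\end{align*}

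Finally, I would recognize this expression as exactly $\brk*{\hphi\circ\Mstarlat}_h(r,s \mid \phistarh(x),a)$ by the definition of $\brk*{\hphi\circ\Mlat}_h$, which is the right-hand side of \eqref{eq:phi-compressed-model-realizable}. The proof is essentially bookkeeping; the only non-mechanical step is the re-indexing of the sum over $x'$ by the true latent $s' = \phistar_{h+1}(x')$, and the only substantive input is decodability of $\psistar$, which guarantees that we do not need an indicator $\indic\{\phistar_{h+1}(x')=s'\}$ inside the emission kernel. There is no real obstacle here; the main thing to get right is keeping straight the two roles of $\phi$ (used in the pushforward) and $\phistar$ (used in the decomposition of the true emission).
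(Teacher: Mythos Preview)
Your proposal is correct and follows essentially the same approach as the paper: expand the pushforward via the latent-dynamics decomposition of $\Mstarobsh$, re-index the sum over $x'$ by the true latent state $s'=\phistar_{h+1}(x')$, and identify the inner sum as $\hphihpo(s\mid s')$. The only cosmetic difference is that you factor out the reward kernel $R^\star_{\lat,h}$ explicitly while the paper works with the joint model $\Mstarlath(r,s'\mid\cdot,\cdot)$ throughout; your treatment is arguably cleaner and more explicit about where decodability enters.
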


\begin{proof}[\pfref{lem:phi-compressed-model-realizable}]
Note that $\hphi$ can alternatively be written as:
\[
	\hphih(s'_h \mid s_h) = \sum_{x_h: \phi(x_h) = s'_h} \psistarh(x_h \mid s_h). 
\]
We have
\begin{align*}
\phi_{h+1}\sharp\Mstarobsh(r_{h+1}, s_{h+1} \mid x_h,a_h) &= \sum_{x_{h+1}: \phi_{h+1}(x_{h+1})=s_{h+1}} \Mstarobsh(r_{h+1}, x_{h+1} \mid x_h,a_h) \\
	&= \sum_{x_{h+1}: \phi_{h+1}(x_{h+1})=s_{h+1}} \prn*{\sum_{r,s' \in \bbR \times \cS}\Mstarlath(r,s' \mid \phistarh(x_h),a_h) \psistarhpo(x_{h+1} \mid s')} \\
	&=  \sum_{r, s' \in \bbR \times \cS}\Mstarlath(r,s' \mid \phistarh(x_h),a_h) \sum_{x_{h+1}: \phi_{h+1}(x_{h+1})=s_{h+1}} \psistarhpo(x_{h+1} \mid s') \\
	&=  \sum_{r,s' \in \bbR \times \cS}\Mstarlath(r,s' \mid \phistarh(x_h),a_h) \hphihpo(s' \mid s_{h+1}) \\
	&=  \brk*{\hphi \circ \Mstarlat}_{h}(r,s_{h+1} \mid \phistarh(x_h),a_h),
\end{align*}
as desired. 
\end{proof}

\newpage
\section{Proofs and Additional Results for \cref{sec:statistical-lower}: Impossibility Results}\label{app:impossibility}
This section contains additional information and proofs related to our impossibility results regarding statistical modularity (\cref{sec:statistical-lower}), and is organized as follows:
\begin{itemize}
\item \cref{app:additional-lb} contains the statement for an additional lower bound that is useful for establishing the impossibility results of \cref{fig:lb-table}.
\item \cref{sec:filling-out-lb-table} contains details for each entry of \cref{fig:lb-table}.
\item \cref{app:lb-proofs} contains for proofs for our main lower bound (\cref{thm:stochastic-tree-lb}) and the additional lower bound (\cref{thm:stochastic-cb-lb}).
\end{itemize}

\subsection{Additional Lower Bound}\label{app:additional-lb}

\begin{restatable}[Alternative lower bound]{theorem}{cblb}\label{thm:stochastic-cb-lb}
For every $N \geq 4$, there exists an emission class $\Psi$ and a decoder class $\Phi$ with $|\Psi|=|\Phi| = N$ and a family of latent MDPs $\cMlat$
	satisfying (i) $|\cMlat|=1$, (ii) $H = 1$, (iii) $|\cS| = |\cX| = N$, (iv) $|\cA| = N$, and %
	such that
	  \begin{enumerate}
  \item For all $\veps,\delta>0$, we have $\comp(\cMlat,\veps,\delta) = 0$.
  \item For an absolute constant $c>0$, $\comp(\dtri*{\cMlat,\Phi},c,c) \geq \Omega(N/\log(N))$.
  \end{enumerate}
\end{restatable}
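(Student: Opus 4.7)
My plan is to take $\cS = \cX = \cA = [N]$ and $H=1$, with a single (hence fully known) base MDP $\Mlat$ having uniform initial distribution and deterministic reward $r(s,a) = \indic\{a=s\}$. The emission and decoder classes will be cyclic shifts: $\psi_\sigma(s) = (s+\sigma)\bmod N$ and $\phi_\sigma(x) = (x-\sigma)\bmod N$ for $\sigma \in [N]$; each $\psi_\sigma$ is a decodable deterministic permutation and the cardinality constraints $|\Psi|=|\Phi|=|\cS|=|\cX|=|\cA|=N$, $|\cMlat|=1$ are immediate. The upper bound is then trivial: since $\Mlat$ is known and the latent $s$ is observed, the policy $\pi(s) = s$ achieves reward $1$ without any samples, giving $\comp(\cMlat,\veps,\delta) = 0$.

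\textbf{Lower bound.} For $\sigma \neq \sigma'$, the decoders $\phi_\sigma$ and $\phi_{\sigma'}$ disagree at \emph{every} $x \in \cX$, so any policy with risk $\leq c$ under the true shift $\sigma$ must essentially identify $\sigma$. I therefore plan to place a uniform prior $\sigma \sim \Unif([N])$ and invoke Fano's inequality to lower bound the sample complexity of identifying $\sigma$ from the transcript $D_T = (x_t,a_t,r_t)_{t \leq T}$. Under this prior, the observations $x_t$ are marginally uniform and independent of $\sigma$, while the reward $r_t = \indic\{\sigma = (x_t-a_t)\bmod N\}$ is conditionally $\mathrm{Bernoulli}(1/N)$ given the past, regardless of the learner's choice of $a_t$. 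This is the ``needle-in-a-haystack'' structure used in the proof of \cref{thm:stochastic-tree-lb}: each round contributes at most $h_2(1/N) = O(\log(N)/N)$ bits of information about $\sigma$, so the chain rule yields $I(\sigma;D_T) \leq O(T\log(N)/N)$, and Fano then forces $T = \Omega(N/\log N)$ (indeed $\Omega(N)$) for a constant error probability. An alternative route is to ``collapse the horizon'' of \cref{thm:stochastic-tree-lb}: interpreting the depth-$\log N$ binary-action tree construction of that theorem as an $H=1$ MDP whose $N$-ary actions encode full leaf paths recovers exactly the construction above, and transports the $\Omega(N/\log N)$ bound verbatim.

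\textbf{Main obstacle.} The delicate step will be justifying the ``$r_t \mid \mathrm{past} \sim \mathrm{Bernoulli}(1/N)$ regardless of $a_t$'' claim under full adaptivity: one must show that the Bayes posterior of $\sigma$ conditioned on the event $\{r_{1:t-1}=0\}$ remains uniform on the set of shifts $[N] \setminus \{(x_s-a_s)\bmod N : s < t\}$, so that each adaptive query reduces the viable candidate set by at most one. I expect this to be handled by a careful filtration argument that exploits the symmetry of cyclic shifts under the uniform prior (equivalently, by pairing each $\sigma$ with the ``permuted'' learner that plays the relabeled action), mirroring the corresponding step in the proof of \cref{thm:stochastic-tree-lb}.
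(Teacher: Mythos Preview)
Your construction (uniform context, reward $\indic\{a=s\}$, cyclic-shift emissions) coincides with the paper's. The lower-bound technique differs: the paper does not use Fano but replays the constrained-DEC argument from the proof of \cref{thm:stochastic-tree-lb}, constructing a zero-reward reference $\Mbar$, showing that for any $q\in\Delta(\Pi)$ at least $N/2$ of the shifts satisfy $\En_{\pi\sim q}\brk{\Dhelshort(M^\sigma(\pi),\Mbar(\pi))}\leq 4/N$, that any $p$ incurs risk $\geq 1/2$ on one of these, and then invoking the PAC-DEC lower bound of \citet{foster2023tight}. Your Fano route is correct and more elementary---it avoids the DEC machinery and actually yields the $\Omega(N)$ you note parenthetically, tighter than the paper's $\Omega(N/\log N)$---but two small points deserve care: (i) the ``conditionally $\mathrm{Bernoulli}(1/N)$'' claim is not literally true, since the hit probability at round $t$ can be as large as $1/(N-t+1)$; this is harmless for $T\leq N/2$ and your final paragraph correctly identifies the posterior-uniformity fix; (ii) the step ``risk $\leq c$ implies identification of $\sigma$'' needs the observation that $\sum_\sigma \En_x[\hat\pi(x-\sigma\mid x)]=1$, so at most one shift can have risk below $1/2$. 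Your ``collapse the horizon'' reduction to \cref{thm:stochastic-tree-lb} is also valid and is in spirit exactly what the paper does, except the paper redoes the DEC calculation in the contextual-bandit geometry rather than reducing formally.
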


\begin{proof}[Proof of \cref{thm:stochastic-cb-lb}]
See \cref{app:cb-lb-proof}. 
\end{proof}

\subsection{Details for \cref{fig:lb-table}}\label{sec:filling-out-lb-table}
\newcommand{\paragraphu}[1]{\paragraph{\underline{#1}}}

Below, we provide details on each entry in \cref{fig:lb-table}. More precisely, for each latent class $\cMlat$, we will give a (brief) description of the MDP class $\cMlat$, give our choice of latent complexity $\comp$ for $\cMlat$, and prove that the class is or is not statistically modular for that choice of latent complexity. We view our choices of latent complexities as natural complexities for the respective classes.

\paragraphu{Tabular MDPs (\greencheck).} 
\begin{itemize}
	\item Latent class $\cMlat$: Tabular MDPs $\Mlat = (\cS,\cA,\Plat,\Rlat,H)$. \citep{azar2017minimax}
	\item Latent complexity $\comp$: We take $\comp(\cMlat,\veps,\delta) = \poly(\abs{\cS},\abs{\cA},H,\varepsilon^{-1},\log\delta^{-1})$, which is attainable, for example, via the \textsc{Ucb-Vi} algorithm of \citet{azar2017minimax}
	\item Statistical modularity (\greencheck): Known Block MDP algorithms (e.g. \textsc{Musik} \citep{mhammedi2023representation}, \textsc{Briee} \citep{zhang2022efficient}) have sample complexities of $\poly(\abs{\cS},\abs{\cA},H,\varepsilon^{-1},\log\delta^{-1},\log|\Phi|)$.
\end{itemize}

\paragraphu{Contextual Bandits (\greencheck).} 
\begin{itemize}
	\item Latent class $\cMlat$: Contextual bandits with context space $\cS$, action space $\cA$, reward function $\rstarlat: \cS \times \cA \rightarrow [0,1]$ and a finite realizable function class satisfying $r^\star \in \cFlat$. 
	\item Latent complexity $\comp$: We take $\comp(\cMlat,\veps,\delta) = \poly(\abs{\cA},\log\abs{\cFlat},\varepsilon^{-1},\log\delta^{-1})$, attainable via, e.g., the \textsc{Square-Cb} algorithm \citep{foster2020beyond}.
	\item Statistical modularity (\greencheck): We note that $\cFlat \circ \Phi = \crl*{\brk*{f \circ \phi} \mid f \in \cF, \phi \in \Phi}$ is a realizable function class for the observation-level reward function $\rstarobs$, since $\rstarobs = \brk*{\rstarlat \circ \phistar} \in \cFlat \circ \Phi$. Thus, applying the \textsc{Square-Cb} algorithm directly on the observations $x\ind{t},a\ind{t},r\ind{t}$ will give complexity $\poly(\abs{\cA}\log(|\cFlat||\Phi|),\varepsilon^{-1},\log\delta^{-1}) = \poly(\abs{\cA},\log|\cFlat|,\log|\Phi|,\varepsilon^{-1},\log\delta^{-1})$.
\end{itemize}

\paragraphu{Low-rank MDP (\greencheck).}
	\begin{itemize}
	\item Latent class $\cMlat$: MDPs $\Mlat = (\cS,\cA,H,\Plat,\rlat)$ such that there exists $\mu^\star_{\lat,h} \in \bbR^d$, $\theta^\star_{\lat,h} \in \bbR^d$, and a known set of features $\Xi_\lat = \crl*{\xi_\lat = \crl*{\xi_{\lat,h}: \cS \times \cA \rightarrow \bbR^d}_{h=1}^H}$ such that for all $h \in [H]$ we have $r_\lat(s_h,a_h) = \langle \xi^\star_{\lat,h}(s_h,a_h), \theta^\star_{\lat,h} \rangle$ as well as
	\begin{equation}\label{eq:low-rank-mdp}
 \Plath(s_{h+1} \mid s_h,a_h) = \langle \xi^\star_{\lat,h}(s_h,a_h), \mu_{\lat,h+1}^\star(s_{h+1}) \rangle 
\end{equation}
 for some $\xi^\star_\lat \in \Xi_\lat$. 
 	\item Latent complexity $\comp$: We take $\comp(\cMlat,\veps,\delta) = \poly(d, \abs{\cA}, H, \log|\Xi_\lat|, \varepsilon^{-1},\log\delta^{-1})$, which is attainable via the \textsc{VoX} algorithm of \citet{mhammedi2023efficient}.
 	\item Statistical modularity (\greencheck): This is obtained by noting that the observation-level dynamics also satisfy the low-rank property with the same dimension. Formally, letting $\Pobs$ be the transition kernel for $\dtri*{\Mlat, \psistar}$ and $\phistar = (\psistar)^{-1}$, we have
\begin{align*}
	\Pobsh(x_{h+1} \mid x_h,a_h) &= \sum_{s_{h+1} \in \cS} \Plath(s_{h+1} \mid \phistarh(x_h),a_h) \psistarhpo(x_{h+1} \mid s_{h+1}) \\
		&= \sum_{s_{h+1} \in \cS} \tri*{\xi^\star_{\lat,h}(\phistarh(x),a), \mu^\star_{\lat,h+1}(s_{h+1})} \psistarhpo(x_{h+1} \mid s_{h+1}) \\
		&= \tri*{\xi^\star_{\lat,h}(\phistarh(x),a),\sum_{s_{h+1} \in \cS}\mu^\star_{\lat,h+1}(s_{h+1})\psistarhpo(x_{h+1} \mid s_{h+1}) }.
\end{align*}
Thus, the transition kernel $\Pobs$ is a low-rank MDP with $\mu_{\obs,h+1}(x_{h+1}) \coloneqq \sum_{s_{h+1}}\mu^\star_{\lat,h+1}(s_{h+1})\psistarhpo(x_{h+1} \mid s_{h+1})$ and feature class \[
	\Xi_\lat \circ \Phi = \crl*{ \xi_\lat \circ \phi = \crl*{\xi_h \circ \phi_h: x,a \mapsto \xi_h(\phi_h(x),a)}_{h=1}^H \mid \xi_\lat \in \Xi_\lat, \phi \in \Phi}.
	\]
	Lastly, since $r_\obs = \brk*{\rlat \circ \phistar}$, the reward function is also linear with the same unknown feature class. Thus we can apply \textsc{Vox} directly on top of the observations, with the feature class $\Xi_\lat \circ \Phi$, which will achieve a complexity $\poly(d,\abs{\cA},H,\log\abs{\Xi_\lat},\log\abs{\Phi},\vepsinv,\logdelinv)$.
	\end{itemize}
	
\paragraphu{Known Deterministic MDP ($\abs{\cMlat}=1$) (\greencheck).}
\begin{itemize}
	\item Latent class $\cMlat$: $\cMlat = \crl{\Mlat=(\cS,\cA,\Plat,\Rlat,H)}$ is a set of MDPs of size 1 with both deterministic rewards and deterministic transitions. 
	\item Latent complexity $\comp$: We take $\comp(\cMlat,\veps,\delta) = 0$, which is attainable as $\Mlat$ is known and we can simply deploy its optimal policy.
	\item Statistical modularity (\greencheck): We note that, due to determinism, the latent optimal policy can be chosen to be \emph{open-loop} without loss of generality, and thus will always experience the same trajectory $(s^\star_1, a^\star_1, \ldots, s^\star_H, a^\star_H)$. We can define the observation-level policy which commits to this same sequence of actions, i.e. $\pi_{\obs,h}(x_h)=a^\star_h$ for all $x_h$. This will be an optimal policy for any $\Mobs = \dtri*{\Mlat,\psi}$, and can also be learned in $0$ samples.
\end{itemize}

\paragraphu{Low State Occupancy ($\forall\,\pi:\cS \rightarrow \Delta(\cA))$ (\greencheck).}
\begin{itemize}
	\item Latent class $\cMlat$: $\cMlat = \crl{\Mlat=(\cS,\cA,\Plat,\Rlat,H)}$ is a set of MDPs for which we have a realizable value function class, and %
		such that there exists a feature map $\zeta_{\lat} = \crl*{\zeta_{\lat,h}: \cS\rightarrow \bbR^d}_{h=1}^H$ such that for all $\pi: \cS \rightarrow \Delta(\cA)$ and for all $\Mlat \in \cMlat$, we have
	\[
		\forall h \in [H] \,\, \exists \theta^{\Mlat,\pi}_h: \quad d^{\Mlat,\pi}_h(s) = \tri*{\zeta_{\lat,h}(s), \theta^{\Mlat,\pi}_h}.
	\]
	Note that the feature map does not need to be known.
      \item Latent complexity $\comp$: We take $\comp(\cMlat,\veps,\delta) = \poly(d,\abs{\cA},H,\log\abs{\cFlat},\vepsinv,\logdelinv)$, which is attainable by the \textsc{Bilin-Ucb} algorithm of \citeauthor{du2021bilinear}, since i) MDPs with this property have Bilinear rank bounded by $d \abs{\cA}$ (see Definition 4.3 and Lemma 4.6 of \citet{du2021bilinear}), and ii) one can construct the value function class $\cFlat = \crl{Q^{\Mlat,\star} \mid \Mlat \in \cMlat}$, which is realizable and has size $\log\abs{\cFlat} = \log\abs{\cMlat}$.
	\item Statistical modularity (\greencheck): We firstly note that one can construct a realizable value function class for the set $\dtri*{\cMlat,\Phi}$, via the set $\cF_\obs = \crl*{Q^{\Mlat,\star} \circ \phi \mid \Mlat \in \cMlat, \phi \in \Phi}$. This is realizable since, for any $\Mobs \coloneqq \dtri*{\Mlat,\psi}$, letting $\phistar = \psiinv$, we have $Q^{\Mobs,\star} = Q^{\Mlat,\star} \circ \phistar$, and that this class has size $\log\abs{\cMlat}\abs{\Phi}$. We can then show that the occupancies $d^{\Mobs,\pi_{f_\obs}}$, for $f_\obs \in \cF_\obs$, can also be expressed as $d$-dimensional linear function for an appropriate choice of features, which will imply that the \textsc{Bilin-Ucb} algorithm run directly on $\Mobs$ will attain a complexity of $\poly(d,\abs{\cA},H,\log\cMlat,\log\Phi,\vepsinv,\logdelinv)$. To obtain this, we recall the following lemma:
		\hphistateocc*
	Thanks to the above lemma, we have
	\begin{align*}
	d_\obs^{\pi_{f \circ \phi}}(x_h) &= 	\psi(x_h \mid \phistar(x_h))d_\lat^{\hphi \circ \pi_f}(\phistar(x_h)) \\
	&= \psi(x_h \mid \phistar(x_h))\tri*{ \brk*{\zeta_{\lat,h} \circ \phistarh}(x_h), \theta^{\Mlat,\hphi \circ \pi_f}_h} \\
	&= \tri*{\psi(x_h \mid \phistar(x_h))\brk*{\zeta_{\lat,h} \circ \phistarh}(x_h), \theta^{\Mlat,\hphi \circ \pi_f}_h}
	\end{align*}
	and so $d_\obs^{\pi_{f \circ \phi}}$ is linear with feature mapping $\psi(x_h \mid \phistar(x_h))\brk*{\zeta_{\lat,h} \circ \phistarh}$ and parameter $\theta^{\Mlat,\hphi \circ \pi_f}$. Recall that the feature map need not be known, so that \textsc{Bilin-Ucb} can still be applied despite not knowing $\psi$ and $\phistar$. 
\end{itemize}

\paragraphu{Model class + Pushforward Coverability (\greencheck).} 
\begin{itemize}
	\item Latent class $\cMlat$: $\cMlat = \crl{\Mlat=(\cS,\cA,\Plat,\Rlat,H)}$ is a set of MDPs that all satisfy pushforward coverability $\Cpush(\Mlat) \leq \Cpush$ (cf. \eqref{eq:pushforward-coverability} for the definition). 
	\item Latent complexity $\comp$: We take $\comp(\cMlat,\veps,\delta) = \poly(\Cpush,\abs{\cA},H,\log\abs{\cMlat},\vepsinv,\logdelinv)$, which is attainable by the \Golf algorithm via the results of \citet{xie2022role} (see also \cref{lem:golf-onpolicy}). We obtain this by noting that i) $\Ccov \leq \Cpush\abs{\cA}$, where $\Ccov$ is defined in Definition 2 of \citet{xie2022role}, and ii) a realizable model class can be used to construct a realizable value function class $\cF$ and a Bellman-complete value function helper class $\cG$ with sizes $\log\abs{\cF} = \log\abs{\cM}$ and $\log\abs{\cG} = \cO(\log\abs{\cM})$.
	\item Statistical modularity (\greencheck): This is obtained via \cref{thm:pushforwardgolf}.
\end{itemize}

\paragraphu{Linear CB/MDP (\redx$^\star$).}
	\begin{itemize}
		\item Latent class $\cMlat$: MDPs $\Mlat = (\cS,\cA,\Plat,\Rlat,H)$ that are linear with respect to a known feature map $\xi^\star_\lat: \cS \times \cA \rightarrow \bbR^d$ (i.e. such that \eqref{eq:low-rank-mdp} holds for $\xi^\star_\lat$).
		\item Latent complexity $\comp$: We take $\comp(\cMlat,\veps,\delta) = \poly(d,H,\vepsinv,\logdelinv)$, which is attainable via the \textsc{Lsvi-Ucb} algorithm of \citet{jin2020provably}. \dfedit{Note that this guarantee does not depend on the number of actions.}
		\item Statistical intractability (\redx): The latent model used in the construction of \cref{thm:stochastic-cb-lb} is a set (of size 1) of linear MDPs with $d=1$. In particular, that construction was a contextual bandit so we only have to realize a reward function, and since there is only one latent model so we can trivially embed this with $d=1$ via $\xi^\star_\lat(s,a) = r_\lat(s,a)$, where $r_\lat$ is the reward function of the MDP used in \cref{thm:stochastic-cb-lb}.
		\item Statistical modularity with additional $\abs{\cA}$-dependence: As in the Low-rank MDP case above, $\dtri*{\Mlat,\psi}$ is low-rank with unknown feature set $\Phi' = \crl*{\xi^\star_\lat \circ \phi \mid \phi \in \Phi}$. Thus, by the same conclusion, a the \textsc{Vox} algorithm will have complexity $\poly(d, \abs{\cA}, H, \log|\Phi|)$, which is of the desired form if we allow suboptimal dependence on $\abs{\cA}$.
	\end{itemize}

\paragraphu{Model class + Coverability ($\forall\, \pi_{\sss{M}}: M \in \cM$) (\redx).} 
	\begin{itemize}
	 \item Latent assumption: $\cMlat = \crl*{\Mlat = (\cS,\cA,\Plat,\Rlat,H)}$ is a set of MDPs that all satisfy coverability with respect to the policy class $\Pi_\cM = \{\pi_{\sM} \mid M \in  \cM\}$, i.e. we have
\[
	\forall \Mlat \in \cMlat: \quad \Ccov(\Mlat) = \inf_{\mu_h \in \Delta(\cS \times \cA)}\sup_{h \in [H]}  \sup_{\pi\in \Pi_{\cM}} \nrm*{\frac{d^{\sMlat,\pi}_{h}}{\mu_h}}_\infty < \infty 
\]
	\item Latent complexity $\comp$: We take $\comp(\cMlat,\veps,\delta) = \poly(\Ccov,H,\log\abs{\cMlat},\vepsinv,\logdelinv)$, which is attainable by the \Golf algorithm via the results of \citet{xie2022role} (see also \cref{lem:golf-onpolicy}). We obtain this by noting that a realizable model class can be used to construct a realizable value function class $\cF$ and a complete value function class $\cG$ of sizes $\log\abs{\cF} = \log\abs{\cM}$ and $\log\abs{\cG} = \cO(\log\abs{\cM})$.
		\item Statistical intractability (\redx): The latent models used in the construction of \cref{thm:stochastic-tree-lb} are a set of coverable MDPs -- in particular, these are trivially coverable with $\Ccov = 1$ since there is a single latent model and we can take $\mu = d^{\Mstarlat,\pi_{\Mstarlat}}$. \dfedit{We remark that it is an interesting open question whether this impossibility result continues to hold if we require coverability with respect to the class $\Pi$ of all possible latent policies.}
	\end{itemize}

\paragraphu{Known Stochastic MDP ($\abs{\cMlat}=1$) (\redx).}
\begin{itemize}
	\item Latent class $\cMlat$: $\cMlat = \crl{\Mlat=(\cS,\cA,\Plat,\Rlat,H)}$ is a set of MDPs of size 1.
	\item Latent complexity $\comp$: We take $\comp(\cMlat,\veps,\delta) = 0$, which is attainable as $\Mlat$ is known and we can simply deploy its optimal policy.
	\item Statistical intractability (\redx): This is precisely the setting of \cref{thm:stochastic-tree-lb}, which shows that at least $\Omega(N/\log(N))$ samples will be needed, where $N = \abs{\Phi}$.
	\end{itemize}

\paragraphu{Bellman rank ($Q$-type or $V$-type) (\redx)} 
\begin{itemize}
\item Latent assumption: $\cMlat = \crl*{\Mlat = (\cS,\cA,\Plat,\Rlat,H)}$ is a set of latent models such that each $\Mlat \in \cMlat$ has $Q$-type Bellman rank $d$ or $V$-type Bellman rank $d$ \citep{jin2021bellman}. Letting $\cF$ be a realizable value function class for $\cMlat$, in the $Q$-type case, this means that the $\abs{\Pi_\cF} \times \abs{\cF}$ matrix %
	\[
		\cE^{Q}_h(\pi,f) = \En^{\pi}\brk*{f_h(s_h,a_h) - r_h - \max_{a'}f_{h+1}(s_{h+1},a')},
	\]
	admits a rank $d$ factorization. In the $V$-type case, the matrix
	\[
		\cE^{V}_h(\pi,f) =\En_{s_h \sim d^\pi_h, a_h \sim \pi_f}\brk*{f_h(s_h,a_h) - r_h - \max_{a'}f_{h+1}(s_{h+1},a')}
	\]
	admits a rank-$d$ matrix factorization. 
\item Latent complexity $\comp$: We take $\comp(\cMlat,\veps,\delta) = \poly(d,H,\abs{\cA}\log\abs{\cF},\vepsinv,\logdelinv)$ for the $V$-type Bellman rank case, which is achievable by the \textsc{Olive} algorithm of \citet{jiang2017contextual}, and $\comp(\cMlat,\veps,\delta) = \poly(d,H,\log\abs{\cF},\vepsinv,\logdelinv)$ for $Q$-type Bellman rank, which is achievable by the \textsc{Bilin-Ucb} algorithm of \citet{du2021bilinear}.
\item Statistical intractability (\redx): We note that the construction in \cref{thm:stochastic-tree-lb} has $\abs*{\cMlat}=1$, which trivially has Bellman rank equal to $1$, so \cref{thm:stochastic-tree-lb} precludes statistical modularity with complexity $\comp$. %
\end{itemize}

\paragraphu{Eluder dimension + Bellman Completeness (\redx)} 
\begin{itemize}
\item Latent class $\cMlat$: $\cMlat = \crl*{\Mlat = (\cS,\cA,\Plat,\Rlat,H)}$ is a set of MDPs such that there is a function class $\cFlat$ satisfying
	\[
	\forall f_\lat \in \cFlat, \Mlat \in \cMlat: \quad		\cT^{\Mlat} f_\lat \in \cFlat.
	\] 
	Furthermore, each $\Mlat \in \cMlat$ has Bellman-Eluder dimension bounded by $d$ (see Definition 8 of \cite{jin2021bellman}). %
\item Latent complexity $\comp$: We take $\comp(\cMlat, \veps,\delta) = \poly(d, H, \log\abs{\cF}, \vepsinv,\logdelinv)$, which is attainable by the \Golf algorithm of \citet{jin2021bellman}.
\item Statistical intractability (\redx): As in the Bellman rank case, the construction in \cref{thm:stochastic-tree-lb} %
	has $\abs{\cMlat} = 1$, so we can take $\cFlat = \{Q^{\Mlat,\star} \mid \Mlat \in \cMlat\}$ which is evidently complete for $\cT^{\Mlat}$, and has Eluder dimension $1$, so \cref{thm:stochastic-tree-lb} precludes statistical modularity with complexity $\comp$. %
\end{itemize}

\paragraphu{$Q^\star$-irrelevant State Abstraction (\redx)} 
\begin{itemize}
\item Latent class $\cMlat$: $\Mlat = (\cS,\cA,\Plat,\Rlat,H)$ such that there is a known state abstraction function $\zeta_\lat : \cS \rightarrow \cZ$ such that %
	$\zeta_\lat(s) = \zeta_\lat(s')$ implies that $Q^{\sMlat,\star}(s,a) = Q^{\sMlat,\star}(s',a)$ for all $a \in \cA$. 
\item Latent complexity $\comp$: We take $\comp(\cMlat,\veps,\delta) = \poly(|\cZ|,|\cA|,H,\vepsinv,\logdelinv)$ which is attainable by the \textsc{Olive} algorithm of \citet{jiang2017contextual}.
\item Statistical intractability (\redx): We take $\cMlat = \crl*{\Mlat}$ as the MDP class from the construction of \cref{thm:stochastic-tree-lb}. Let $Q^\star_\lat \coloneqq Q^{\sMlat,\star}$. Note that we have $Q^\star_\lat(s,a) \in \{0,1\}$ for all $s,a$, so we can take a latent abstract state space $\cZ = \{(0,0),(0,1),(1,0),(1,1)\}$ and a state abstraction function $\zeta_\lat$ such that $\zeta_\lat(s) = (i,j)$ if $\Qstarlat(s,0) = i$ and $\Qstarlat(s,1) = j$. This satisfies the property of a $\Qstar$-irrelevant abstraction, since $\zeta_\lat(s) = \zeta_\lat(s') = (i,j)$ implies that $\Qstarlat(s,0) = \Qstarlat(s',0) = i$ and $\Qstarlat(s,1) = \Qstarlat(s',1) = j$. This has a constant-sized abstract space ($|\cZ| = 4$) and $|\cA|=2$, so \cref{thm:stochastic-tree-lb} precludes statistical modularity with complexity $\comp$. %
\end{itemize}

\paragraphu{Linear Mixture MDP (\redx).}
\begin{itemize}
\item Latent class $\cMlat$: MDPs $\Mlat = (\cS, \cA, \Plat, \Rlat, H)$ such that there is a known feature map $\zeta_\lat = \{\zeta_{\lat,h}: s',s,a \mapsto \bbR^d \}_{h=1}^H$ such that
	\[
		\forall h \in [H], \exists \theta_h \in \bbR^d : \quad \Plath(s' \mid s,a) = \tri*{\zeta_{\lat,h}(s' \mid s,a), \theta_h}
	\]
\item Latent complexity $\comp$: We take $\comp(\cMlat,\veps,\delta) = \poly(d,H,\vepsinv,\logdelinv)$, which is attainable by the \textsc{Ucrl-Vtr$^+$} algorithm of \citet{zhou2021nearly}
\item Statistical intractability (\redx): We take $\cMlat = \crl{\Mlat}$ to be the construction of \cref{thm:stochastic-tree-lb}. Here, there is a single latent model, so this is trivially embeddable with $\zeta_{\lat,h}(s' \mid s,a) = \Pstarlath(s'\mid s,a) \in \bbR^1$. This has dimension $d=1$, so \cref{thm:stochastic-tree-lb} precludes statistical modularity with complexity $\comp$. 
\end{itemize}

\paragraphu{Linear $\Qstar/\Vstar$ (\redx).}\,
\begin{itemize}
\item Latent class $\cMlat$: MDPs $\Mlat = (\cS,\cA,\Plat,\Rlat,H)$ such that there are known features maps $\alpha_\lat: \cS \times \cA \rightarrow \bbR^d$ and $\beta_\lat: \cS \rightarrow \bbR^d$  such that for all $\Mlat \in \cMlat$, there exists unknown parameters $\theta_Q, \theta_V \in \bbR^d$ such that $Q^{\sMlat,\star}(s,a) = \tri*{\alpha_\lat(s,a), \theta_Q}$ and $V^{\sMlat,\star}(s) = \tri*{\beta_\lat(s), \theta_V}$. 
\item Latent complexity $\comp$: We take $\comp(\cMlat,\veps,\delta)  = \poly(d,H,\vepsinv,\logdelinv)$, which is attainable by the \textsc{Bilin-Ucb} algorithm of \citet{du2021bilinear}.
\item Statistical intractability (\redx): We can take $\cMlat$ to be the latent MDP class from the construction of \cref{thm:stochastic-tree-lb}. Since there is a single latent model, this is trivially embeddable with dimension $1$, i.e. we can take $\zeta_\lat(s,a) = \Qstarlat(s,a)$ and $\beta_\lat(s) = \Vstarlat(s)$. This has dimension $d=1$, so \cref{thm:stochastic-tree-lb} precludes statistical modularity with complexity $\comp$. %
\end{itemize}

\paragraphu{Low State or State-Action Occupancy ($\forall\,\pi_\sM: M \in \cM)$  ({\redx}).}
\begin{itemize}
	\item Latent class $\cMlat$: In the Low State Occupancy model, $\cMlat = \crl{\Mlat=(\cS,\cA,\Plat,\Rlat,H)}$ is a set of MDPs such that there exists a feature map $\zeta^V_{\lat} = \crl*{\zeta_{\lat,h}: \cS\rightarrow \bbR^d}_{h=1}^H$ such that for all $\pi \in \crl{\pi_{\Mlat} \mid \Mlat \in \cMlat}$ and for all $\Mlat \in \cMlat$, we have
	\[
		\forall h \in [H] \,\, \exists \theta^{\Mlat,\pi}_h: \quad d^{\Mlat,\pi}_h(s) = \tri*{\zeta^V_{\lat,h}(s), \theta^{\Mlat,\pi}_h}.
	\]
	For the State-Action Occupancy model, we have that there exists a feature map $\zeta^Q_{\lat} = \crl*{\zeta_{\lat,h}: \cS \times \cA \rightarrow \bbR^d}_{h=1}^H$ such that for all $\pi \in \crl{\pi_{\Mlat} \mid \Mlat \in \cMlat}$ and for all $\Mlat \in \cMlat$, we have
	\[
		\forall h \in [H] \,\, \exists \theta^{\Mlat,\pi}_h: \quad d^{\Mlat,\pi}_h(s,a) = \tri*{\zeta^Q_{\lat,h}(s,a), \theta^{\Mlat,\pi}_h}.
	\]
	Note that the feature map does not need to be known in either case.
      \item Latent complexity $\comp$: We take $\comp(\cMlat,\veps,\delta) = \poly(d,\abs{\cA},H,\log\abs{\cFlat},\vepsinv,\logdelinv)$ for the state occupancy case and  $\comp(\cMlat,\veps,\delta) = \poly(d,H,\log\abs{\cMlat},\vepsinv,\logdelinv)$. Both are attainable by the \textsc{Bilin-Ucb} algorithm of \citeauthor{du2021bilinear}, since i) MDPs with this property have Bilinear rank bounded by $d \abs{\cA}$  and $d$ respectively (see Definition 4.3 and Lemma 4.6 of \citep{du2021bilinear}), and ii) one can construct the value function class $\cFlat = \crl{Q^{\Mlat,\star} \mid \Mlat \in \cMlat}$ which is realizable and has size $\log\abs{\cFlat} = \log\abs{\cMlat}$.

\item Intractability: We can take the construction of \cref{thm:stochastic-tree-lb}, which has $\abs{\cMlat}=1$ and thus is trivially embeddable with dimension $1$, i.e. we can take $\zeta^V_{\lat}(s) = d^{\Mlat,\pi_{\Mlat}}(s)$ and $\zeta^Q_\lat(s,a) = d^{\Mlat,\pi_{\Mlat}}(s,a)$.
\end{itemize}

\paragraphu{Bisimulation (\yellowq)}
\begin{itemize}
\item Latent class $\cMlat$: MDPs $\Mlat = (\cS,\cA,\Plat,\Rlat,H)$ such that there is a known state abstraction function $\zeta_\lat : \cS \rightarrow \cZ$ such that %
	$\zeta_\lat(s) = \zeta_\lat(\wt{s})$ implies that $\Rlat(s,a) = \Rlat(\wt{s},a)$ for all $a \in \cA$ as well as $\sum_{s': \zeta_\lat(s') = z'}\Plat(s' \mid s,a) = \sum_{s': \zeta_\lat(s') = z'}\Plat(s' \mid \wt{s},a)$ for all $z'$. 
\item Latent complexity $\comp$: We take $\comp(\cMlat,\veps,\delta) = \poly(|\cZ|,|\cA|,H,\vepsinv,\logdelinv)$ which is attainable by the \textsc{Olive} algorithm of \cite{jiang2017contextual}.
\item Openness (\yellowq): A negative result does not follow from existing constructions, since the dynamics from the tree-based construction of \cref{thm:stochastic-tree-lb} are not bisimilar unless $\abs{\cZ} = \abs{\cS}$, which allows for the application of tabular methods. At the same time, a positive result does not follow from existing methods, since it is non-trivial to extend existing Block MDP methods to use the bisimulation state abstraction in a way that only pays for $\abs{\cZ}$.
\end{itemize}

\paragraphu{Low State-Action Occupancy ($\forall \pi: \cS \rightarrow \Delta(\cA)$) (\yellowq$^\star$)} 
\begin{itemize}
	\item Latent class $\cMlat$: $\cMlat = \crl{\Mlat=(\cS,\cA,\Plat,\Rlat,H)}$ is a set of MDPs such that there exists a feature map $\zeta^Q_{\lat} = \crl*{\zeta_{\lat,h}: \cS \times \cA \rightarrow \bbR^d}_{h=1}^H$ such that for all $\pi: \cS \rightarrow \Delta(\cA)$ and for all $\Mlat \in \cMlat$, we have
	\[
		\forall h \in [H] \,\, \exists \theta^{\Mlat,\pi}_h: \quad d^{\Mlat,\pi}_h(s,a) = \tri*{\zeta^Q_{\lat,h}(s,a), \theta^{\Mlat,\pi}_h}.
	\]
	Note that the feature map does not need to be known.
	\item We take $\comp(\cMlat,\veps,\delta) = \poly(d,H,\log\abs{\cMlat},\vepsinv,\logdelinv)$, which is attainable by the \textsc{Bilin-Ucb} algorithm of \citeauthor{du2021bilinear}, since i) MDPs with this property have Bilinear rank bounded by $d$ (see Definition 4.3 and Lemma 4.6 of \citep{du2021bilinear}), and ii) one can construct a realizable value function class of size $\log\abs{\cF} = \log\abs{\cM}$. 
	\item Openness (\yellowq): A negative result does not follow from existing constructions, since the dynamics from the tree-based construction of \cref{thm:stochastic-tree-lb} do not have linear occupancies for all $\pi:\cS\to\Delta(\cA)$ unless $d = \abs{\cS}$, which allows for the application of tabular methods, and the dynamics from the bandit-based construction \cref{thm:stochastic-cb-lb} do not have linear occupancies for all $\pi:\cS\to\Delta(\cA)$ unless $d=\abs{\cA}$. At the same time, unlike the low state occupancy case, a positive result does not follow as it is unclear if we can express the observation-space occupancies linearly. 
	\item Statistical tractability with additional (suboptimal) $\abs{\cA}$-dependence (\greencheck): Note that we can reduce to the Low State Occupancy case (\greencheck), since 
		\[
		d^\pi(s) = \sum_{a \in \cA} d^\pi(s,a) = \tri*{\theta^\pi, \sum_{a \in \cA} \zeta^Q_\lat(s,a)} \coloneqq \tri*{\theta^\pi, \zeta^V_\lat(s)}.
		\]
		However, this blows up the feature norm bound of the feature map $\zeta^V_\lat(s)$ by a factor of $\abs{\cA}$, which will appear logarithmically in the bound obtained by \textsc{Bilin-Ucb}. 
\end{itemize}

\paragraphu{Model class + Coverability ($\forall\, \pi: \cS \rightarrow \Delta(\cA)$) (\yellowq).} 
	\begin{itemize}
	 \item Latent class $\cMlat$: $\cMlat = \crl*{\Mlat = (\cS,\cA,\Plat,\Rlat,H)}$ is a set of MDPs that all satisfy coverability with respect to all policies $\pilat: \cS \rightarrow \Delta(\cA)$, i.e. we have
\[
	\forall \Mlat \in \cMlat: \quad \Ccov(\Mlat) = \inf_{\mu_h \in \Delta(\cS \times \cA)}\sup_{h \in [H]}  \sup_{\pi: \cS \rightarrow \Delta(\cA)} \nrm*{\frac{d^{\sMlat,\pi}_{h}}{\mu_h}}_\infty < \infty 
\]
	\item Latent complexity $\comp$: We take $\comp(\cMlat,\veps,\delta) = \poly(\Ccov,H,\log\abs{\cMlat},\vepsinv,\logdelinv)$, which is attainable by the \Golf algorithm via the results of \citeauthor{xie2022role} (see also \cref{lem:golf-onpolicy}). We obtain this by noting that a realizable model class can be used to construct a realizable value function class $\cF$ and a complete value function class $\cG$ of sizes $\log\abs{\cF} = \log\abs{\cM}$ and $\log\abs{\cG} = \cO(\log\abs{\cM})$.
		\item Openness (\yellowq): A negative result does not follow from the existing constructions. The tree-based construction of \cref{thm:stochastic-tree-lb} satisfies coverability with $\Ccov = \exp(\Omega(H))$ and the bandit-based construction of \cref{thm:stochastic-cb-lb} satisfies coverability with $\Ccov = \abs{\cA}$. In both cases, the lower bounds cannot be used to rule out statistical modularity with the above latent complexity. Similarly, it unclear how to obtain a positive result for the latent-dynamics class $\dtri{\Mlat,\Phi}$.
	\end{itemize}

\subsection{Proofs for Lower Bounds (\cref{thm:stochastic-tree-lb,thm:stochastic-cb-lb})}\label{app:lb-proofs}

\subsubsection{Main lower bound (\cref{thm:stochastic-tree-lb})}\label{app:tree-lb-pf}

We will prove the following result.

\mainlb*

\begin{proof} \, Let $N$ be given and assume without loss
  of generality that it is a power of $2$. We first construct the
  class of latent-dynamics MDPs, following \citet{song2024rich}.

\paragraph{Latent MDP.}  The construction has a single ``known''
latent MDP $\Mlat$, so that the only uncertainty in the family of latent-dynamics MDPs we
        construct arises from the emission processes. We set $\cMlat = \crl*{\Mlat}$. Set
        $H=\log_2(N)+1$ and $\cA=\crl{0,1}$. We define the state space
        and latent transition dynamics as follows.
        \begin{itemize}
        \item The state space can be partitioned as
          $\cS=\cS^{1},\ldots,\cS^{N}$.
        \item Each block $\cS^{i}$ corresponds to a standard
          depth-$H$ binary
          tree MDP with deterministic dynamics (e.g.,
          \citet{osband2016lower,domingues2021episodic}). There is a
          single ``root'' node at layer $h=1$, which we denote by
          $\sinit^{i}$, and $N$ ``leaf'' nodes at layer $H$, which
          we denote by $\crl[\big]{\sfin^{i,j}}_{j\in\brk{N}}$. For
          each $h=1,\ldots,H-1$, choosing action $0$ leads to the left
          successor of the current state deterministically, and
          choosing action $1$ leads to the right sucessor; this
          process continues until we reach a leaf node at layer $H$.\loose
        \item The initial state distribution is
          $P_{\lat,1}(\emptyset)=\unif(\sinit^{1},\ldots,\sinit^{N})$.
        \item There are no rewards for layers $1,\ldots,H-1$. For
          layer $H$, the reward is
          \begin{align}
            \label{eq:lower_rewar}
            R_H(\sfin^{i,j},\cdot) = \indic\crl*{j=i}.
          \end{align}
        \end{itemize}
This construction can summarized as follows. At layer $1$,
we draw the index of one of $N$ binary trees uniformly at random, and
initialize into the root of the tree. From here, we receive a reward
of $1$ if we successfully navigate to the leaf node whose index agrees with the
index of the tree itself, and receive a reward of $0$ otherwise.

Note that the total number of latent states in this construction is
$\abs{\cS}=N\cdot\abs{\cS_1}=N(2N-1)$	

\paragraph{Observation space and decoder class.} 

Let us introduce some additional notation. For each block
$\cS^{i}$, let
$\cS_h^{i}\ldef{}\crl{s_h^{i,j}}_{j\in\brk{2^{h-1}}}$ denote the
states in block $i$ that are reachable at layer $h$, so that
$\cS^{i}_1=\crl*{\sinit^{i}}$ and
$\cS^{i}_H=\crl{\sfin^{i,j}}_{j\in\brk{N}}$. We define
$\cX=\cS$ so that $\abs{\cX}\leq{}4N^2$, and consider a class of emission processes corresponding to
deterministic maps. Let $\Sigma$ denote the set of cyclic permutations on $N$ elements, excluding the identity permutation. That is, each $\sigma_i \in \Sigma$ takes the form
\[
\sigma_i:k\mapsto{}k+i\mod{}N \quad \text{ for } i\in\crl{1,\ldots,N}.
\]
  For each
$\sigma \in \Sigma$, we consider the emission process
\[
\psi^{\sigma}_h(\cdot\mid{}s_h\ind{i,j}) = \indic_{s_h\ind{\sigma(i),j}}.
\]
That is, $\psi^{\sigma}$ shifts the index of the binary tree containing
$s_h\ind{i,j}$ according to $\sigma$. Let $\Psi = \{\psi^\sigma \mid \sigma \in \Sigma\}$. Consider the decoder class
\begin{align*}
  \Phi = \Psi^{-1} \coloneqq \crl[\big]{s^{i}\mapsto{}s^{\psi^{-1}(i)}\mid{}\psi\in\Psi},
\end{align*}
which has $\abs{\Phi}=N$.  We consider the class of rich-observation MDPs given by
\begin{align}
  \label{eq:lower_bound_class}
  \dtri*{\cMlat, \Phi} \ldef{} \crl*{M^{i}\ldef{} \llangle \Mlat, \psi^{\sigma_i} \rrangle \mid{}\sigma_i\in \Sigma}.
\end{align}
It is clear that this class of rich-observation MDPs satisfies the
decodability assumption for emissions $\Psi$.

\paragraph{Sample complexity lower bound.}%
To lower bound the sample
complexity, we prove a lower bound on the constrained PAC
Decision-Estimation Coefficient (DEC) of
\cite{foster2023tight}. For an arbitrary MDP $\Mbar$ (defined over
the space $\cX$) and $\veps\in\brk{0,2^{1/2}}$, define\footnote{For
  measures $\bbP$ and $\bbQ$, we define squared Hellinger distance by $\Dhels{\bbP}{\bbQ}=\int(\sqrt{d\bbP}-\sqrt{d\bbQ})^2$.}
\begin{align*}
  \dec(\cM,\Mbar)
  = \inf_{p,q\in\Delta(\Pi)}\sup_{M\in\cM}
  \crl*{\En_{\pi\sim{}p}\brk*{\Jm(\pim)-\Jm(\pi)}
  \mid{} \En_{\pi\sim{}q}\brk*{\Dhels{M(\pi)}{\Mbar(\pi)}}\leq\veps^2},
\end{align*}
where $M(\pi)$ denotes the law over trajectories
$(x_1,a_1,r_1),\ldots,(x_H,a_H,r_H)$ induced by executing
the policy $\pi$ in the MDP $M$, $\Jm(\pi)$ denotes the expected
reward for policy $\pi$ under $M$, and $\pim$ denotes the optimal
policy for $M$. We further define
\begin{align*}
  \dec(\cM) = \sup_{\Mbar}\dec(\cM,\Mbar),
\end{align*}
where the supremum ranges over all MDPs defined over $\cX$ and
$\cA$. We now appeal to the following technical lemma.
\begin{lemma}
  \label{lem:dec_lower_tree}
  For all $\veps^2\geq{}4/N$, we have that
  $\dec(\dtri*{\cMlat, \Phi}) \geq \frac{1}{2}$.
\end{lemma}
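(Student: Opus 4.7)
\textbf{Proof plan for \cref{lem:dec_lower_tree}.}

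The plan is to exhibit a reference model $\Mbar$ such that, for any $p, q \in \Delta(\Pi)$, many ``shifts'' $\sigma_i$ simultaneously (i) are well-hidden under $\Mbar(\pi)$ in expectation over $\pi\sim q$, and (ii) force large regret under $\pi\sim p$. I would take $\Mbar$ to be the MDP with the same state space, action space, and (deterministic) transition dynamics as any $M^{i}$, but with the reward at layer $H$ deterministically equal to $0$. Since the initial observation is uniform over the $N$ roots $\{\sinit^{k}\}_{k\in[N]}$ under every $M^{i}$ and under $\Mbar$, and since transitions in each tree are deterministic, the observation-action trajectory depends only on $k$ and $\pi$; only the terminal reward differs between $M^{i}(\pi)$ and $\Mbar(\pi)$.

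The main technical step is a counting argument exploiting the cyclic structure of $\Sigma$. For any deterministic policy $\pi$ and starting tree $k$, let $j_k(\pi) \in [N]$ denote the leaf reached; the reward at layer $H$ under $M^{i}$ is the Bernoulli $\text{Ber}(\indic\{k - j_k(\pi) \equiv i \bmod N\})$, while under $\Mbar$ it is $\delta_0$. A direct computation using the chain rule for Hellinger distance (the marginal over observation trajectories is shared) gives
\[
  \Dhels{M^{i}(\pi)}{\Mbar(\pi)} \;=\; \frac{2}{N}\bigl|\{k \in [N] : k - j_k(\pi) \equiv i \bmod N\}\bigr|.
\]
Summing over $i \in [N]$ yields $\sum_{i=1}^{N}\Dhels{M^{i}(\pi)}{\Mbar(\pi)} \le 2$, since each $k$ contributes to exactly one shift $i$; the same conclusion holds for randomized $\pi$ by taking expectations, and hence $\sum_{i} \En_{\pi\sim q}\Dhels{M^{i}(\pi)}{\Mbar(\pi)} \le 2$. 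An entirely analogous argument gives $\sum_{i=1}^{N}\En_{\pi\sim p}[J^{M^{i}}(\pi)] \le 1$, since the reward is $1$ for at most one shift per starting tree.

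To conclude, I would apply Markov's inequality to both sums. When $\veps^{2}\ge 4/N$, at most $2/\veps^{2}\le N/2$ indices $i$ violate the Hellinger constraint, and at most $2$ indices $i$ satisfy $\En_{\pi\sim p}[J^{M^{i}}(\pi)] \ge \tfrac{1}{2}$. For $N\ge 4$, a union bound leaves at least one ``good'' index $i^{\star}$ satisfying both. Since $J^{M^{i^{\star}}}(\pi_{M^{i^{\star}}}) = 1$ (the optimal policy, knowing $\sigma_{i^{\star}}$, navigates from each root $\sinit^{k}$ to the correct leaf), we obtain
\[
  \En_{\pi\sim p}\bigl[J^{M^{i^{\star}}}(\pi_{M^{i^{\star}}}) - J^{M^{i^{\star}}}(\pi)\bigr] \;\ge\; 1 - \tfrac{1}{2} \;=\; \tfrac{1}{2},
\]
proving $\dec(\dtri{\cMlat,\Phi}, \Mbar) \ge \tfrac{1}{2}$ for this choice of $\Mbar$, which in turn lower-bounds the supremum. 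The main subtlety is justifying the Hellinger decomposition: because the observation-action marginal is identical under $M^{i}$ and $\Mbar$, the squared Hellinger reduces to an expectation of conditional Hellinger distances on the terminal reward alone, which is what enables the clean counting over the cyclic shifts.
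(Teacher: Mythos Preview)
Your proposal is correct and follows essentially the same approach as the paper: the same reference model $\Mbar$ with zero terminal reward, the same observation that the trajectory law (excluding rewards) is shared, and the same counting argument over the cyclic shifts to show $\sum_{i}\En_{\pi\sim q}\brk{\Dhels{M^{i}(\pi)}{\Mbar(\pi)}}=2$ and $\sum_{i}\En_{\pi\sim p}\brk{J^{M^{i}}(\pi)}=1$.

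The only difference is in the final step. The paper applies Markov once to isolate a set $\cI_q$ of $\geq N/2$ Hellinger-good indices, and then \emph{averages} the regret over $\cI_q$, using mutual exclusivity again to get $\En_{i\sim\unif(\cI_q)}[\text{regret}]\geq 1-1/|\cI_q|\geq 1/2$. You instead apply Markov a second time to the regret sum and intersect the two good sets. This is fine, but as written your count is slightly loose at $N=4$: ``at most $2$ indices with $\En_{\pi\sim p}[J^{M^{i}}(\pi)]\geq 1/2$'' together with $|G_1|\geq N/2$ only yields $|G_1\cap G_2|\geq N/2-2=0$. The fix is immediate: since $\sum_i\En_p[J^{M^{i}}(\pi)]=1$, at most \emph{one} index can satisfy the \emph{strict} inequality $>1/2$, and any index with $\En_p[J^{M^{i}}(\pi)]\leq 1/2$ already gives regret $\geq 1/2$; this gives $|G_2|\geq N-1$ and $|G_1\cap G_2|\geq N/2-1\geq 1$ for $N\geq 4$. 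The paper's averaging argument sidesteps this bookkeeping.
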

In light of \cref{lem:dec_lower_tree}, it follows from Theorem 2.1 in \citet{foster2023tight}\footnote{Theorem 2.1 in
  \citet{foster2023tight} is stated with respect to
  $\sup_{\Mbar\in\texttt{conv}(\cM)}\dec(\cM,\Mbar)$, but the actual proof (Section
  2.2) gives a stronger result that scales with $\sup_{\Mbar}\dec(\cM,\Mbar)$.}
that any PAC RL algorithm that uses $T$ episodes
of interaction for $T\log(T)\leq{}c\cdot{}N$ must have
$\En\brk*{\Jm(\pim)-\Jm(\pihat)}\geq{}c'$ for a worst-case MDP in
$\cM$, where $c,c'>0$ are absolute constants. This implies that any PAC RL which has $\En\brk*{\Jm(\pim)-\Jm(\pihat)}\leq{}c'$ must have $T\log(T)\geq{}c\cdot{}N$ and thus $T \geq{} c \cdot N/\log(N)$.

\end{proof}

\begin{proof}[\pfref{lem:dec_lower_tree}]
  Define $\Mbarlat$ as the latent-space MDP that has identical
  dynamics to $\Mlat$ but, has zero reward in every state, and define
  $\Mbar \ldef \dtri*{\Mbarlat,\id}$ as the rich-observation MDP obtained
  by composing $\Mbarlatent$ with the ``identity'' emission process $\id$ that
  sets $x_h=s_h$. Observe that $\Mbar$ and $M^i$, induce identical dynamics
  in observation space if rewards are ignored: For all policies $\pi$,   %
  \begin{equation}
    \label{eq:identical_dynamics}
    \bbP^{\sss{\Mbar},\pi}\brk*{(x_1,a_1),\ldots,(x_H,a_H)=\cdot}
    = \bbP^{\sss{M^{i}},\pi}\brk*{(x_1,a_1),\ldots,(x_H,a_H)=\cdot}.
  \end{equation}
It follows that for each $i$, for all policies $\pi$, we have
  \begin{align}
    &\Dhels{M^i(\pi)}{\Mbar(\pi)}\notag\\
    &= \Dhels{(\dtri*{\Mlat,\psi_i})(\pi)}{(\dtri*{\Mbarlat,\id})(\pi)} \notag\\
    &= \sum_{j=1}^{N}
      \bbP^{\sMbar,\pi}\brk*{x_H=\sfin\ind{\psi_i(j),j}}\cdot\Dhels{\indic_{1}}{\indic_{0}}\notag\\
    &= 2\sum_{j=1}^{N}
      \bbP^{\sMbar,\pi}\brk*{x_H=\sfin\ind{\psi_i(j),j}}\notag\\
    &= \frac{2}{N}\sum_{j=1}^{N}
      \bbP^{\sMbar,\pi}\brk*{x_H=\sfin\ind{\psi_i(j),j}\mid{}x_1=\sinit\ind{\psi_i(j)}},\\
    &= \frac{2}{N}\sum_{j=1}^{N}
      \bbP^{\sMbar,\pi}\brk*{x_H=\sfin\ind{j,\psi_i^{-1}(j)}\mid{}x_1=\sinit\ind{j}},\label{eq:lower_step1}
  \end{align}
  since the learner receives identical feedback in the MDPs
  $M^i$ and $\Mbar$ unless they reach the observation
  $x_H=\sfin\ind{\psi_i(j),j}$ for some $j$ (corresponding to latent
  state $\sfin\ind{j,j}$ in $M^i$), in which case they receiver
  reward $1$ in $M^i$ but reward $0$ in $\Mbar$.
  We now claim that for any $q\in\Delta(\Pi)$, there exists a set of
  at least $N/2$ indices $\cI_q\subset\brk{N}$ such that
  \begin{align}
    \En_{\pi\sim{}q}\brk*{\Dhels{M^i(\pi)}{\Mbar(\pi)} } \leq
    \frac{4}{N}
    \label{eq:lower_step2}
  \end{align}
  for all $i\in\cI_q$. To see this, note that by
  \eqref{eq:lower_step1}, we have
  \begin{align*}
    \En_{i\sim\unif(\brk{N})}\En_{\pi\sim{}q}\brk*{
    \Dhels{M^i(\pi)}{\Mbar(\pi)}
    }
    &\leq{}
      \En_{\pi\sim{}q}\brk*{\frac{2}{N}\sum_{j=1}^{N}
      \frac{1}{N}\sum_{i=1}^{N}\bbP^{\sMbar,\pi}\brk*{x_H=\sfin\ind{j,\psi_i^{-1}(j)}\mid{}x_1=\sinit\ind{j}}
      }\\
    &\leq{}
      \En_{\pi\sim{}q}\brk*{\frac{2}{N}\sum_{j=1}^{N}
      \frac{1}{N}
      }
      = \frac{2}{N},
  \end{align*}
  where the second inequality uses that
  $\sum_{i=1}^{N}\bbP^{\sMbar,\pi}\brk*{x_H=\sfin\ind{j,\psi_i^{-1}(j)}\mid{}x_1=\sinit\ind{j}}\leq{}1$, %
  as the events in the sum are mutually exclusive (and the event we
  condition on does not depend on $i$). We conclude by Markov's
  inequality that
  $ \bbP_{i\sim\unif(\brk{N})}\brk*{\En_{\pi\sim{}q}\brk*{
      \Dhels{M^i(\pi)}{\Mbar(\pi)}}\geq{}4/N } \leq{} 1/2$,
  giving $\cI_q\geq{}N/2$.

  From \eqref{eq:lower_step2}, we conclude that for all
  $\veps^2\geq{}4/N$,
  \begin{align*}
    \dec(\cM,\Mbar)
    \geq{} \inf_{q\in\Delta(\Pi)}\inf_{p\in\Delta(\Pi)}\sup_{i\in\cI_q}
    \crl*{\En_{\pi\sim{}p}\brk*{J^{\sss{M^i}}(\pi_{\sss{M^i}})-J^{\sss{M^i}}(\pi)}}.
  \end{align*}
  To lower bound this quantity, observe that for any index $i$ and any
  policy $\pi$, we have
  \begin{align*}
    J^{\sss{M^i}}(\pi_{\sss{M^i}})-J^{\sss{M^i}}(\pi)
    &= \frac{1}{N}\sum_{j=1}^{N}
      \bbP^{\sss{M}\ind{i},\pi}\brk*{x_H\neq\sfin\ind{\psi_i(j),j}\mid{}x_1=\sinit\ind{\psi_i(j)}}\\
    &= 1-\frac{1}{N}\sum_{j=1}^{N}
      \bbP^{\sss{M}\ind{i},\pi}\brk*{x_H=\sfin\ind{\psi_i(j),j}\mid{}x_1=\sinit\ind{\psi_i(j)}}\\
    &= 1-\frac{1}{N}\sum_{j=1}^{N}
      \bbP^{\sMbar,\pi}\brk*{x_H=\sfin\ind{\psi_i(j),j}\mid{}x_1=\sinit\ind{\psi_i(j)}}\\
    &= 1-\frac{1}{N}\sum_{j=1}^{N}
      \bbP^{\sMbar,\pi}\brk*{x_H=\sfin\ind{j,\psi_i^{-1}(j)}\mid{}x_1=\sinit\ind{j}},
  \end{align*}
  where the third inequality uses \eqref{eq:identical_dynamics}.
  We conclude that for any distribution $p,q\in\Delta(\Pi)$,
  \begin{align*}
    &\sup_{i\in\cI_q}
      \crl*{\En_{\pi\sim{}p}\brk*{J^{\sss{M^i}}(\pi_{\sss{M^i}})-J^{\sss{M^i}}(\pi)}}
    \\
    &\geq\En_{i\sim\unif(\cI_q)}
      \crl*{\En_{\pi\sim{}p}\brk*{J^{\sss{M^i}}(\pi_{\sss{M^i}})-J^{\sss{M^i}}(\pi)}}\\
    &\geq
      1- \frac{1}{N}\sum_{j=1}^{N}
      \En_{i\sim\unif(\cI_q)}\bbP^{\sMbar,\pi}\brk*{x_H=\sfin\ind{j,\psi_i^{-1}(j)}\mid{}x_1=\sinit\ind{j}}\\
    &=
      1- \frac{1}{N}\sum_{j=1}^{N}
      \frac{1}{\abs*{\cI_q}}\sum_{i\in\cI_q}\bbP^{\sMbar,\pi}\brk*{x_H=\sfin\ind{j,\psi_i^{-1}(j)}\mid{}x_1=\sinit\ind{j}}
    \geq{} 
    1- \frac{1}{\abs*{\cI_q}} \geq{} \frac{1}{2}
  \end{align*}
  as long as $N\geq{}4$, where the second-to-last inequality uses that
  for all $j$, the events
  $\crl[\big]{x_H=\sfin\ind{j,\psi_i^{-1}(j)}\mid{}x_1=\sinit\ind{j}}$
  are disjoint for all $i$. Since this lower bound holds uniformly for
  all $q,p\in\Delta(\Pi)$, we conclude that
  \begin{align*}
    \dec(\dtri*{\cMlat, \Phi},\Mbar) \geq \frac{1}{2}.
  \end{align*}
\end{proof}

\subsubsection{Proof of alternative lower bound (\cref{thm:stochastic-cb-lb})}\label{app:cb-lb-proof}

We will prove the following result. 

\cblb*

\begin{proof}[\pfref{thm:stochastic-cb-lb}]
We repeat more or less repeat the same proof as
\cref{thm:stochastic-tree-lb}, but with the appropriate modifications
to translate from the contextual tree-based construction in
\cref{thm:stochastic-tree-lb} to the contextual bandit-based
construction in the theorem statement. Let $N$ be given and assume without loss of generality that it is a power of $2$. %

\paragraph{Latent MDP.}  Our construction has a single ``known'' latent MDP $\Mlat$; that is,
        the only uncertainty in the family of rich-observation MDPs we
        construct arises from the emission processes. Set $\cMlat = \crl*{\Mlat}$. Set
        $H=1$ and $\cA=\brk{N}$. We define the state space
        and latent transition dynamics as follows.
        \begin{itemize}
        \item The state space can be partitioned as
          $\cS=\cS^{1},\ldots,\cS^{N}$.
        \item Each block $\cS^{i}$ corresponds to a single state $s^i$ with $N$ actions denoted by $a^i$, $i \in [N]$.
        \item The initial state distribution is
          $P_{\lat,1}(\emptyset)=\unif(s^{1},\ldots,s^{N})$.
        \item The reward function is
          \begin{align}
            \label{eq:lower_reward_cb}
            R_1(s^{i},a^j) = \indic\crl*{j=i}.
          \end{align}
        \end{itemize}
Informally, this construction can summarized as a contextual bandit (with uniform context distribution), with a reward of $1$ if and only if we play the action corresponding to the index of the context drawn. 

Note that the total number of latent states in this construction is
$\abs{\cS}=N$ and the number of actions is $\abs{\cA} = N$. 	

\paragraph{Observation space and decoder class.} 

We define
$\cX=\cS$ so that $\abs{\cX}=\abs{\cS}$, and consider a class of emission processes corresponding to
deterministic maps. Let $\Sigma$ denote the set of cyclic permutations on $N$ elements, excluding the identity permutation. That is, each $\sigma_i \in \Sigma$ takes the form
\[
\sigma_i:k\mapsto{}k+i\mod{}N,  \quad \text{ for } i\in\crl{1,\ldots,N}.
\]
  For each
$\sigma \in \Sigma$, we consider the emission process
\[
\psi^{\sigma}(\cdot\mid{}s^{i}) = \indic_{s^{\sigma(i)}}(\cdot)
\]
That is, $\psi^{\sigma}$ shifts the context $s^i$ %
according to $\sigma$. Let $\Psi = \{\psi^\sigma \mid \sigma \in \Sigma\}$. Consider the decoder class
\begin{align*}
  \Phi = \Psi^{-1} \coloneqq \crl[\big]{s^{i}\mapsto{}s^{\psi^{-1}(i)}\mid{}\psi\in\Psi},
\end{align*}
which has $\abs{\Phi}=N$.  We consider the class of rich-observation MDPs given by
\begin{align}
  \label{eq:lower_bound_class_cb}
  \dtri*{\cMlat, \Phi} \ldef{} \crl*{M^{i}\ldef{} \llangle \Mlat, \psi^{\sigma_i} \rrangle \mid{}\sigma_i\in \Sigma}.
\end{align}
It is clear that this class of rich-observation MDPs satisfies the
decodability assumption for emissions $\Psi$.

\paragraph{Sample complexity lower bound.}%
To lower bound the sample
complexity, we prove a lower bound on the constrained PAC
Decision-Estimation Coefficient (DEC) of
\cite{foster2023tight}. For an arbitrary MDP $\Mbar$ (defined over
the space $\cX$) and $\veps\in\brk{0,2^{1/2}}$, define\footnote{For
  measures $\bbP$ and $\bbQ$, we define squared Hellinger distance by $\Dhels{\bbP}{\bbQ}=\int(\sqrt{d\bbP}-\sqrt{d\bbQ})^2$.}
\begin{align*}
  \dec(\cM,\Mbar)
  = \inf_{p,q\in\Delta(\Pi)}\sup_{M\in\cM}
  \crl*{\En_{\pi\sim{}p}\brk*{\Jm(\pim)-\Jm(\pi)}
  \mid{} \En_{\pi\sim{}q}\brk*{\Dhels{M(\pi)}{\Mbar(\pi)}}\leq\veps^2},
\end{align*}
where $M(\pi)$ denotes the law over observations
$(x_1,a_1,r_1)$ induced by executing
the policy $\pi$ in the MDP $M$, $\Jm(\pi)$ denotes the expected
reward for policy $\pi$ under $M$, and $\pim$ denotes the optimal
policy for $M$. We further define
\begin{align*}
  \dec(\cM) = \sup_{\Mbar}\dec(\cM,\Mbar),
\end{align*}
where the supremum ranges over all MDPs defined over $\cX$ and
$\cA$. We now appeal to the following technical lemma. %
\begin{lemma}
  \label{lem:dec_lower_cb}
  For all $\veps^2\geq{}4/N$, we have that
  $\sup_{\Mbar}\dec(\cM,\Mbar) \geq \frac{1}{2}$.
\end{lemma}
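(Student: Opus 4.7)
The plan is to mirror the proof of \cref{lem:dec_lower_tree}, adapting the calculations for the contextual bandit structure. Take the reference MDP $\Mbar \ldef \dtri*{\Mbarlat,\id}$, where $\Mbarlat$ has the same dynamics as $\Mlat$ but zero reward and $\id$ is the identity emission. Since the observation in $M^i$ is distributed uniformly over $\cS$ regardless of $i$ (the initial latent is uniform and $\psi^{\sigma_i}$ is a bijection), the marginal over $x_1$ under $\Mbar$ and $M^i$ coincide, and only the reward distributions differ. For observation $x_1=s^k$, the latent state in $M^i$ is $s^{\sigma_i^{-1}(k)}$, so the reward under policy $\pi$ is Bernoulli with mean $\pi(a^{\sigma_i^{-1}(k)}\mid s^k)$ in $M^i$ and degenerate at $0$ in $\Mbar$. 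This yields the clean identity
\begin{align*}
\Dhels{M^i(\pi)}{\Mbar(\pi)} = \tfrac{2}{N}\sum_{k=1}^N \pi\bigl(a^{\sigma_i^{-1}(k)} \mid s^k\bigr)= \tfrac{2}{N}\sum_{j=1}^N \pi\bigl(a^{j}\mid s^{\sigma_i(j)}\bigr).
\end{align*}

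Next I would average over $i$ to produce the set $\cI_q$. For each fixed $k$, as $i$ ranges over $[N]$ the values $\sigma_i^{-1}(k)$ enumerate all actions, so $\sum_{i=1}^N \pi(a^{\sigma_i^{-1}(k)}\mid s^k) = \sum_{a\in\cA}\pi(a\mid s^k) = 1$. Plugging in, $\En_{i\sim\unif([N])}\En_{\pi\sim q}\brk{\Dhels{M^i(\pi)}{\Mbar(\pi)}} \leq 2/N$, and Markov's inequality supplies a set $\cI_q\subseteq\brk{N}$ with $\abs{\cI_q}\geq N/2$ on which $\En_{\pi\sim q}\brk{\Dhels{M^i(\pi)}{\Mbar(\pi)}}\leq 4/N$. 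This is precisely the analogue of \eqref{eq:lower_step2}, and it controls the constraint in the DEC for $\veps^2\geq 4/N$.

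Finally, I would lower bound the regret term. The optimal policy of $M^i$ plays $a^{\sigma_i^{-1}(k)}$ on context $s^k$ and achieves $J^{M^i}(\pi_{M^i})=1$, so for any $\pi$ we have $J^{M^i}(\pi_{M^i})-J^{M^i}(\pi) = 1 - \frac{1}{N}\sum_k \pi(a^{\sigma_i^{-1}(k)}\mid s^k)$. Taking a worst case over $i\in\cI_q$ dominates the average, and the same ``column-sum-equals-one'' identity used above shows
\begin{align*}
\En_{i\sim\unif(\cI_q)}J^{M^i}(\pi) = \tfrac{1}{N\abs{\cI_q}}\sum_{i\in\cI_q}\sum_k \pi(a^{\sigma_i^{-1}(k)}\mid s^k) \leq \tfrac{1}{N\abs{\cI_q}}\sum_k 1 = \tfrac{1}{\abs{\cI_q}}\leq \tfrac{2}{N}.
\end{align*}
Hence for any $p\in\Delta(\Pi)$, $\sup_{i\in\cI_q}\En_{\pi\sim p}\brk{J^{M^i}(\pi_{M^i})-J^{M^i}(\pi)} \geq 1 - 2/N \geq 1/2$ whenever $N\geq 4$, giving $\dec(\cM,\Mbar)\geq 1/2$ and therefore $\sup_{\Mbar}\dec(\cM,\Mbar)\geq 1/2$.

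The main conceptual point (and the only step that could trip one up) is recognizing the right ``orthogonality'' identity: whereas the tree case uses disjointness of the leaf-reaching events across different shifts $\sigma_i$, here the corresponding fact is that for a fixed observation $s^k$, the rewarded action $a^{\sigma_i^{-1}(k)}$ sweeps through all of $\cA$ as $i$ varies, so that $\sum_i \pi(a^{\sigma_i^{-1}(k)}\mid s^k) = 1$. Once this is in hand, both the Hellinger average and the value average collapse to $1/N$, and the argument proceeds exactly as in \cref{lem:dec_lower_tree}. The rest (invoking Theorem 2.1 of \cite{foster2023tight} to convert the DEC bound into the $\Omega(N/\log N)$ sample complexity claim) is identical to the tree construction.
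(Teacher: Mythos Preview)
Your proof is correct and follows essentially the same approach as the paper's own proof: choose $\Mbar$ as the zero-reward version with identity emission, compute the squared Hellinger distance exactly via the shared $(x_1,a_1)$ marginal, use the ``column-sum-equals-one'' identity to average over $i$ and obtain $\cI_q$ via Markov, and then reuse the same identity on the regret side to get $\geq 1 - 1/\abs{\cI_q}\geq 1/2$. The only cosmetic difference is that you write $\pi(a\mid s)$ directly where the paper writes $\bbP^{\sMbar,\pi}[a_1=a\mid x_1=s]$, but the computations are identical.
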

In light of \cref{lem:dec_lower_cb}, it follows from Theorem 2.1 in \citet{foster2023tight}\footnote{Theorem 2.1 in
  \citet{foster2023tight} is stated with respect to
  $\sup_{\Mbar\in\texttt{conv}(\cM)}\dec(\cM,\Mbar)$, but the actual proof (Section
  2.2) gives a stronger result that scales with $\sup_{\Mbar}\dec(\cM,\Mbar)$.}
that any PAC RL algorithm that uses $T$ episodes
of interaction for $T\log(T)\leq{}c\cdot{}N$ must have
$\En\brk*{\Jm(\pim)-\Jm(\pihat)}\geq{}c'$ for a worst-case MDP in
$\cM$, where $c,c'>0$ are absolute constants. This implies that any PAC RL which has $\En\brk*{\Jm(\pim)-\Jm(\pihat)}\leq{}c'$ must have $T\log(T)\geq{}c\cdot{}N$ and thus $T \geq{} c \cdot N/\log(N)$.
\end{proof}

\begin{proof}[\pfref{lem:dec_lower_cb}]
  Define $\Mbarlat$ as the latent-space MDP that has identical
  dynamics to $\Mlat$ but, has zero reward for every state-action pair, and define
  $\Mbar \ldef \dtri*{\Mbarlat,\id}$ as the rich-observation MDP obtained
  by composing $\Mbarlatent$ with the identity emission process that
  sets $x_h=s_h$. In the rest of the proof, we use the shorthand $\psi_i \coloneqq \psi^{\sigma_i}$. Observe that $\Mbar$ and $M^i$, induce identical dynamics
  in observation space if rewards are ignored, i.e. for all policies $\pi: \cX \rightarrow \Delta(\cA)$,   %
  \begin{equation}
    \label{eq:identical_dynamics_cb}
    \bbP^{\sss{\Mbar},\pi}\brk*{(x_1,a_1)=\cdot}
    = \bbP^{\sss{M^{i}},\pi}\brk*{(x_1,a_1)=\cdot}.
  \end{equation}
It follows that for each $i$, for all policies $\pi$, we have
  \begin{align}
    &\Dhels{M^i(\pi)}{\Mbar(\pi)}\notag\\
    &= \Dhels{(\dtri*{\Mlat,\psi_i})(\pi)}{(\dtri*{\Mbarlat,\id})(\pi)} \notag\\
    &= \sum_{j=1}^{N}
      \bbP^{\sMbar,\pi}\brk*{x_1 = s^{\psi_i(j)}, a_1=a^j}\cdot\Dhels{\indic_{1}}{\indic_{0}}\notag\\
    &= 2\sum_{j=1}^{N}
      \bbP^{\sMbar,\pi}\brk*{x_1 = s^{\psi_i(j)}, a_1=a^j}\notag\\
     &= \frac{2}{N}\sum_{j=1}^{N}
      \bbP^{\sMbar,\pi}\brk*{a_1=a^j \mid x_1 = s^{\psi_i(j)}}\notag\\
     &= \frac{2}{N}\sum_{j=1}^{N}
      \bbP^{\sMbar,\pi}\brk*{a_1=a^{\psiinv_i(j)} \mid x_1 = s^j}\notag
  \end{align}

  since the learner receives identical feedback in the MDPs
  $M^i$ and $\Mbar$ unless they play the action
  $a_1=a^j$ given observation $x_1 = s^{\psi_i(j)}$ (corresponding to latent
  state $s^i$ in $M^i$), in which case they receiver
  reward $1$ in $M^i$ but reward $0$ in $\Mbar$.
  We now claim that for any $q\in\Delta(\Pi)$, there exists a set of
  at least $N/2$ indices $\cI_q\subset\brk{N}$ such that
  \begin{align}
    \En_{\pi\sim{}q}\brk*{\Dhels{M^i(\pi)}{\Mbar(\pi)} } \leq
    \frac{4}{N}
    \label{eq:lower_step2}
  \end{align}
  for all $i\in\cI_q$. To see this, note that by
  \eqref{eq:lower_step1}, we have
  \begin{align*}
    \En_{i\sim\unif(\brk{N})}\En_{\pi\sim{}q}\brk*{
    \Dhels{M^i(\pi)}{\Mbar(\pi)}
    }
    &\leq{}
      \En_{\pi\sim{}q}\brk*{\frac{2}{N}\sum_{j=1}^{N}
      \frac{1}{N}\sum_{i=1}^{N}\bbP^{\sMbar,\pi}\brk*{a_1=a^{\psiinv_i(j)} \mid x_1 = j}
      }\\
    &\leq{}
      \En_{\pi\sim{}q}\brk*{\frac{2}{N}\sum_{j=1}^{N}
      \frac{1}{N}
      }
      = \frac{2}{N}.
  \end{align*}
   We conclude by Markov's
  inequality that
  $ \bbP_{i\sim\unif(\brk{N})}\brk*{\En_{\pi\sim{}q}\brk*{
      \Dhels{M^i(\pi)}{\Mbar(\pi)}}\geq{}4/N } \leq{} 1/2$,
  giving $\cI_q\geq{}N/2$.

  From \eqref{eq:lower_step2}, we conclude that for all
  $\veps^2\geq{}4/N$,
  \begin{align*}
    \dec(\dtri*{\cMlat,\Phi},\Mbar)
    \geq{} \inf_{q\in\Delta(\Pi)}\inf_{p\in\Delta(\Pi)}\sup_{i\in\cI_q}
    \crl*{\En_{\pi\sim{}p}\brk*{J^{\sss{M^i}}(\pi_{\sss{M^i}})-J^{\sss{M^i}}(\pi)}}.
  \end{align*}
  To lower bound this quantity, observe that for any index $i$ and any
  policy $\pi$, we have
  \begin{align*}
    J^{\sss{M^i}}(\pi_{\sss{M^i}})-J^{\sss{M^i}}(\pi)
    &= 1-\frac{1}{N}\sum_{j=1}^{N}
      \bbP^{\sss{M}\ind{i},\pi}\brk*{a_1=a\ind{j}\mid{}x_1=s\ind{\psi_i(j)}}\\
    &= 1-\frac{1}{N}\sum_{j=1}^{N}
      \bbP^{\sMbar,\pi}\brk*{a_1=a\ind{j}\mid{}x_1=s\ind{\psi_i(j)}}\\
    &= 1-\frac{1}{N}\sum_{j=1}^{N}
      \bbP^{\sMbar,\pi}\brk*{a_1=a\ind{\psiinv_i(j)}\mid{}x_1=s\ind{j}},
  \end{align*}
  where the third inequality uses \eqref{eq:identical_dynamics_cb}.
  We conclude that for any distribution $p,q\in\Delta(\Pi)$,
  \begin{align*}
    &\sup_{i\in\cI_q}
      \crl*{\En_{\pi\sim{}p}\brk*{J^{\sss{M^i}}(\pi_{\sss{M^i}})-J^{\sss{M^i}}(\pi)}}
    \\
    &\geq\En_{i\sim\unif(\cI_q)}
      \crl*{\En_{\pi\sim{}p}\brk*{J^{\sss{M^i}}(\pi_{\sss{M^i}})-J^{\sss{M^i}}(\pi)}}\\
    &\geq
      1- \frac{1}{N}\sum_{j=1}^{N}
      \En_{i\sim\unif(\cI_q)}\bbP^{\sMbar,\pi}\brk*{a_1=a\ind{\psiinv_i(j)}\mid{}x_1=s\ind{j}}\\
    &=
      1- \frac{1}{N}\sum_{j=1}^{N}
      \frac{1}{\abs*{\cI_q}}\sum_{i\in\cI_q}\bbP^{\sMbar,\pi}\brk*{a_1=a\ind{\psiinv_i(j)}\mid{}x_1=s\ind{j}}    \geq{} 
    1- \frac{1}{\abs*{\cI_q}} \geq{} \frac{1}{2}
  \end{align*}
  as long as $N\geq{}4$. %
  Since this lower bound holds uniformly for
  all $q,p\in\Delta(\Pi)$, we conclude that
  \begin{align*}
    \dec(\dtri*{\cMlat, \Phi},\Mbar) \geq \frac{1}{2}.
  \end{align*}
\end{proof}

\newpage
\section{Proofs for \cref{sec:statistical-results-positive}: Positive Results}\label{app:statistical}
This section is dedicated to the proof of our upper bound establishing that pushforward-coverable MDPs are statistically modular (\cref{thm:pushforwardgolf}).

\subsection{Proofs for Latent Model Class + Pushforward Coverability (\cref{thm:pushforwardgolf})}\label{sec:pushforward-golf}

In this section, we establish positive results under latent MDP classes which satisfy pushforward coverability. %
We assume that every model in $\cMlat$ satisfies \emph{pushforward coverability}, defined as follows:
\begin{definition}[Pushforward coverability]\label{def:pushforward_def_app} The pushforward coverability coefficient $\Cpush$ for an MDP $M$ with transition kernel $P$ is defined by
	\begin{equation}\label{eq:pushforward-coverability}
	\Cpush(M) = \max_{h \in [H]} \inf_{\mu \in \Delta(\cS)} \sup_{(s,a,s') \in\cS \times \cA \times\cS} \frac{P_{h-1}(s' \mid s,a)}{\mu(s')}.
	\end{equation}
The pushforward coverability coefficient for an MDP class $\cM$ is defined by
\[
	\Cpush(\cM) = \max_{M \in \cM} \Cpush(M).
\]
\end{definition}

Note that for any MDP $M$ we always have
\begin{equation}\label{eq:coverability-pushforward-bound}
	\Ccov(M,\PiRNS) \leq \Cpush(M) \abs{\cA},
\end{equation}
where $\Ccov$ is the state-action coverability coefficient (\cref{def:state-action-cov}). Thus, an MDP with low pushforward coverability is also an MDP with low state-action coverability for all policies (upto a dependence on $\abs{\cA}$).

We will show the show the following result.

\pushforwardgolf*

The proof comes in three parts. We will firstly show that MDP that satisfies pushforward coverability admit low-dimensional feature maps that can approximate Bellman backups (\cref{sec:pushforward-structural}), then establish that a regret bound for the \Golf algorithm \citep{xie2022role} under misspecification (\cref{sec:golf-onpolicy-misspecification}), and then combine these ingredients (\cref{sec:sample-efficient-pushforward}).

\subsubsection{A structural result: Pushforward-coverable MDPs are
  approximately low-rank}\label{sec:pushforward-structural}
  
 Our central technical result for this section is \cref{lem:jl_cov_push}, which is
      based on a variant of the Johnson-Lindenstrauss lemma and establishes that under pushforward coverability, we can define a linear feature class which satisfies
      an approximate form of Bellman completeness. We define the clipping operator via
      \[
      	\clip(x) \coloneqq \max\crl*{\min\crl*{x,2},0}.
      \]

 	\begin{restatable}[Existence of a low-dimensional embedding]{lemma}{jlcovpush}
  \label{lem:jl_cov_push}
  Let $M $ be a known MDP with reward function $r$, transition kernel $P$, and pushforward coverability parameter $\Cpush$. Let $\mu = \crl{\mu_h}_{h \in [H]}$ denote its pushforward coverability distribution (i.e. the minimizer of \cref{def:pushforward_def_main}) and %
  $\cF\subseteq(\cS \times [H] \to[0,1])$ be an arbitrary class of functions. Suppose that we
  sample $W\in\pmo^{d\times\cS}$ as a matrix of independent Rademacher
  random variables, and define
  \begin{align*}
    \psi_h(s,a) = r_h(s,a) \oplus \frac{1}{\sqrt{d}}W\prn*{P_h(\cdot \mid s,a)/\mu^{1/2}_h(\cdot)}_{\cdot \in \cS} \in \bbR^{d+1}%
  \end{align*}
  and
  \begin{align*}
    w_{f,h} = 1 \oplus \frac{1}{\sqrt{d}}W\prn*{\mu^{1/2}_h(\cdot)f_{h+1}(\cdot)}_{\cdot \in \cS}\in\bbR^{d+1}.
  \end{align*}
  Then for any $\vepsapx \in(0,1)$, as long as we set
  \[
 	d \geq 2^9\frac{\Cpush\log(16\abs{\cF}H\deltainv/\vepsapx)}{\vepsapx},
 \]
 we have that with probability at least $1-\delta$, for all $f\in\cF$ and $h \in [H]$,
  \begin{align*}
	\En_{\mu_h \otimes \unif(\cA)}\brk*{\prn*{\clip\brk*{\tri*{w_{f,h},\psi_h(s,a)}-\cT_h f_{h+1}(s,a)}}^2}\leq\vepsapx,
  \end{align*}
  as well as $\max_{s,a,h} \nrm*{\psi_h(s,a)}_2^2 \leq \Cpush(16\log\prn{\abs{\cS}\abs{\cA}H} + 11)$ and $\max_{f,h} \nrm*{w_{f,h}}_2^2\leq{} 16\log\prn{\abs{\cF}H}+ 11$. We emphasize that the feature map $\psi = \crl*{\psi_h}_{h=1}^H$ is
  oblivious to $\cF$, in the sense that it can
  be computed directly from $M$ without any knowledge of $\cF$.
\end{restatable}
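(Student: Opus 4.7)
The plan is to recognize $\tri{w_{f,h},\psi_h(s,a)}$ as a Johnson--Lindenstrauss-type random projection estimator of the Bellman backup, and then exploit pushforward coverability to control the relevant vector norms. Writing $u_{s,a}(s') := P_h(s'\mid s,a)/\mu_h^{1/2}(s')$ and $v_f(s') := \mu_h^{1/2}(s')\,f_{h+1}(s')$, a direct expansion gives $\tri{w_{f,h},\psi_h(s,a)} = r_h(s,a) + \tfrac{1}{d}\tri{Wu_{s,a},Wv_f}$, whose $W$-expectation (using $\En[W^\top W] = dI$ for Rademacher $W$) is $r_h(s,a) + \tri{u_{s,a},v_f} = \cT_h f_{h+1}(s,a)$, so the estimator is unbiased for the Bellman backup. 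Pushforward coverability enters through the two deterministic norm bounds $\|u_{s,a}\|_2^2 = \sum_{s'} P_h(s'\mid s,a)^2/\mu_h(s') \leq \Cpush$ (using $P_h(s'\mid s,a)/\mu_h(s') \leq \Cpush$ and $\sum_{s'} P_h(s'\mid s,a) = 1$) and $\|v_f\|_2^2 = \En_{\mu_h}[f_{h+1}^2] \leq 1$.

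From here I would establish the two norm bounds in the conclusion via standard Bernstein-type concentration of $\|Wx\|_2^2$ around its mean $d\|x\|_2^2$: with $x=u_{s,a}$ and a union bound over $(s,a,h)$ this produces $\|\psi_h(s,a)\|_2^2 \leq \Cpush(16\log(|\cS||\cA|H)+11)$, and with $x=v_f$ and a union bound over $(f,h)$ it gives $\|w_{f,h}\|_2^2 \leq 16\log(|\cF|H)+11$. Also, since $\cT_h f_{h+1}(s,a)\in[0,2]$ (non-negative rewards in $[0,1]$ and $f\in[0,1]$), clipping the estimator to $[0,2]$ can only bring it closer to $\cT_h f_{h+1}(s,a)$, so it suffices to control the unclipped $L^2$ error
\[
Y_f \;:=\; \En_{\mu_h\otimes\unif(\cA)}\!\brk*{(u_{s,a}^\top E v_f)^2} \;=\; v_f^\top E^\top \Sigma E v_f,
\]
where $E := \tfrac{1}{d}W^\top W - I$ and $\Sigma := \En_{\mu_h\otimes\unif(\cA)}[u_{s,a}u_{s,a}^\top]$. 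A short computation using only the second moments of the off-diagonal entries of $W^\top W$ shows $\En_W[Y_f] \leq (\|v_f\|_2^2\,\operatorname{tr}(\Sigma) + v_f^\top\Sigma v_f)/d \leq 2\Cpush/d$, because $\operatorname{tr}(\Sigma) = \En_{s,a}\|u_{s,a}\|^2 \leq \Cpush$ and $\|\Sigma\|_{\mathrm{op}}\leq \operatorname{tr}(\Sigma)$.

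The main obstacle is upgrading this variance-level bound to a high-probability bound $Y_f \leq \vepsapx$ without paying any $\log|\cS|$ or $\log|\cA|$ factor, since a naive pointwise union bound over $(s,a)$ would inflate the required $d$. I plan to do this by observing that for each fixed $(s,a,f)$ the scalar $u_{s,a}^\top E v_f$ is a mean-zero Rademacher chaos of degree $2$, so hypercontractivity (or Hanson--Wright applied to the bilinear form $\tri{Wu,Wv}$) yields
\[
\En_W\!\brk*{(u_{s,a}^\top E v_f)^{2p}} \;\lesssim\; (cp)^{2p}\,\|u_{s,a}\|_2^{2p}\|v_f\|_2^{2p}/d^p
\]
for every integer $p\geq 1$. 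Taking the $(s,a)$-average \emph{before} raising to the $p$-th power (via Jensen, $Y_f^p \leq \En_{s,a}[(u^\top Ev)^{2p}]$) and then using $\|u_{s,a}\|_2^{2p}\leq\Cpush^p$ and $\|v_f\|_2\leq 1$ yields $\En_W[Y_f^p]\lesssim (cp)^{2p}(\Cpush/d)^p$. A Markov bound at the $p$-th moment, optimized at $p \asymp \log(|\cF|H/\delta)$, then produces $\Pr[Y_f > \vepsapx]\leq \delta/(|\cF|H)$ as soon as $d$ is at least the stated $\Cpush\log(|\cF|H/(\delta\vepsapx))/\vepsapx$ (up to universal constants), after which a final union bound over $f\in\cF$ and $h\in[H]$ completes the proof. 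The crucial step is performing the moment bound \emph{after} the $(s,a)$-average rather than union-bounding \emph{before}, so that $|\cS|$ and $|\cA|$ enter only through the tame deterministic bound $\|u_{s,a}\|_2^2\leq\Cpush$ and never through the logarithm.
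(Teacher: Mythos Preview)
Your approach is sound and genuinely different from the paper's. You correctly identify the unbiasedness of $\tri{w_{f,h},\psi_h(s,a)}$ for the Bellman backup, the crucial norm bounds $\|u_{s,a}\|_2^2\leq\Cpush$ and $\|v_f\|_2^2\leq 1$, and---most importantly---that the way to avoid a $\log|\cS|$ blowup is to average over $(s,a)$ \emph{before} invoking concentration. Your mechanism for this is a moment argument: bound $\En_W[(u_{s,a}^\top Ev_f)^{2p}]$ pointwise, push the $(s,a)$-average inside via Jensen, then apply Markov at $p\asymp\log(|\cF|H/\delta)$.

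The paper takes a completely different route to the same end. Rather than moments, it uses a \emph{dummy-sampling} device: draw $n\asymp\log(|\cF|H/\delta)/\vepsapx^2$ i.i.d.\ points from $\mu_h\otimes\unif(\cA)$, apply the pointwise sub-Gamma tail (their Lemma on Rademacher chaos) with a union bound over only these $n$ points together with $\cF$ and $H$, and then use Hoeffding to transfer the empirical $L^2$ error to the population one. The union bound is thus over $n|\cF|H$ rather than $|\cS||\cA||\cF|H$, which is how $\log|\cS|$ is avoided. Your moment-plus-Jensen argument is arguably more direct and dispenses with the auxiliary sample entirely.

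One quantitative caveat: the specific moment inequality you display, $\En_W[(u^\top Ev)^{2p}]\lesssim (cp)^{2p}\|u\|^{2p}\|v\|^{2p}/d^p$, is the pure degree-$2$ hypercontractivity bound ($\|X\|_{L^{2p}}\lesssim p\|X\|_{L^2}$), and plugging it through your Jensen--Markov scheme yields $d\gtrsim\Cpush(\log(|\cF|H/\delta))^2/\vepsapx$---an extra logarithm compared to the stated threshold. To match the paper's linear $\log$ dependence you need the sharper sub-Gamma moment profile $\|u^\top Ev\|_{L^{2p}}\lesssim\sqrt{p}\,\|u\|\|v\|/\sqrt{d}+p\,\|u\|^2\|v\|^2/d$ (which is what the paper's Rademacher-chaos lemma actually proves, and which your parenthetical ``Hanson--Wright'' remark gestures at). With that refinement your argument recovers $d\gtrsim\Cpush\log(|\cF|H/\delta)/\vepsapx$, and in fact without the extra $\log(1/\vepsapx)$ that the paper picks up from its choice of $n$.
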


\begin{proof}[\pfref{lem:jl_cov_push}]%
Fix $h \in [H]$, whose dependence we omit for cleanliness. We begin by verifying that, in expectation, $\tri{w_f, \psi(s,a)}$ is equal to $\cT f(s,a)$. For this, note that 
  \begin{align*}
    \tri*{w_f,\psi(s,a)}
    &=
      r(s,a) + \frac{1}{d}\sum_{i=1}^{d}\prn*{\sum_{s' \in \cS}W_{i,s'}\frac{P(s'\mid s,a)}{\mu^{1/2}(s')}}\prn*{\sum_{s'' \in \cS}W_{i,s''}\mu^{1/2}(s'')f(s'')}\\
    &= r(s,a) + \sum_{s' \in \cS}P(s'\mid s,a) f(s') + \frac{1}{d}\sum_{i=1}^{d}\sum_{s' \in \cS}\sum_{\substack{s'' \in \cS\\ s'' \neq s'}}W_{i,s'}\frac{P(s'\mid s,a)}{\mu^{1/2}(s')}W_{i,s''}\mu^{1/2}(s'')f(s'').
  \end{align*}
  Consequently, we have 
  \begin{align}\label{eq:concentration-1}
    \abs*{\cT f(s,a)-\tri*{w_f,\psi(s,a)}}
    &=
      \abs*{\frac{1}{d}\sum_{i=1}^{d}\sum_{s' \in \cS}\sum_{\substack{s'' \in \cS\\ s'' \neq s'}}W_{i,s'}\frac{P(s'\mid s,a)}{\mu^{1/2}(s')}W_{i,s''}\mu^{1/2}(s'')f(s'')}.
  \end{align}
  Note that this remaining noise term is zero-mean -- we will show in the sequel that it can be made small by picking $d$ appropriately. We next examine the norms of the vectors $\psi(s,a)$ and $w_f$. Note that we have
\begin{align} 
\nrm*{\psi(s,a)}_2^2
  &=
    \frac{1}{d}\sum_{i=1}^d\prn*{\sum_{s'\in\cS}W_{i,s'}\frac{P(s' \mid s,a)}{\mu^{1/2}(s')}}^2 \nonumber \\
   &=
     \sum_{s' \in \cS} \frac{P^2(s' \mid s,a) }{\mu(s')} + \frac{1}{d}\sum_{i=1}^d\sum_{s' \in \cS}\sum_{\substack{s'' \in \cS\\ s'' \neq s'}}W_{i,s'}W_{i,s''}\frac{P(s' \mid s,a)}{\mu^{1/2}(s')}\frac{P(s'' \mid s,a)}{\mu^{1/2}(s'')} \nonumber \\ 
   &\leq
     \Cpush  + \frac{1}{d}\sum_{i=1}^d\sum_{s' \in \cS}\sum_{\substack{s'' \in \cS\\ s'' \neq s'}}W_{i,s'}W_{i,s''}\frac{P(s' \mid s,a)}{\mu^{1/2}(s')}\frac{P(s'' \mid s,a)}{\mu^{1/2}(s'') },\label{eq:concentration-2}
\end{align}
where we have used that 
\[
\sum_{s' \in \cS} \frac{P^2(s' \mid s,a) }{\mu(s')} \leq \Cpush \sum_{s' \in \cS} P(s' \mid s,a) = \Cpush
	 \]
	 by definition of pushforward coverability. Further note that we have
    \begin{align}
  \nrm*{w_f}^2_2
  &=
    \frac{1}{d}\sum_{i=1}^d\prn*{\sum_{s'\in\cS}W_{i,s'}\mu^{1/2}(s')f(s')}^2 \nonumber \\
  &= \En_{s' \sim \mu}[f(s')] + \frac{1}{d}\sum_{i=1}^d\sum_{s' \in \cS}\sum_{\substack{s'' \in \cS\\ s'' \neq s'}}W_{i,s'}W_{i,s''}\mu^{1/2}(s')f(s')\cdot \mu^{1/2}(s'')f(s'') \nonumber \\
  &\leq 1 + \frac{1}{d}\sum_{i=1}^d\sum_{s' \in \cS}\sum_{\substack{s'' \in \cS\\ s'' \neq s'}}W_{i,s'}W_{i,s''}\mu^{1/2}(s')f(s')\cdot \mu^{1/2}(s'')f(s''). \label{eq:concentration-3}
\end{align}
We will now appeal to the following technical lemma to upper bound \eqref{eq:concentration-1}, \eqref{eq:concentration-2}, and \eqref{eq:concentration-3} by establishing that the Rademacher noise terms concentrate to their expectations. The proof of the lemma will be given in the sequel.
  \begin{lemma}
    \label{lem:rademacher}
    Let $u,v\in\bbR^{n}$, and let $W\in\pmo^{d\times{}n}$ have
    independent Rademacher entries. Then with probability at least $1-\delta$,
    \begin{equation}\label{eq:rademacher}
      \abs*{\frac{1}{d}\sum_{i \in [d]}\sum_{j \in [n]}\sum_{\substack{k \in [n]\\k\neq{}j}}W_{i,j}W_{i,k}u_jv_k}
      \leq{} \nrm*{u}_2\nrm*{v}_2\cdot\sqrt{\frac{32\log(2\delta^{-1})}{d}}
      + \nrm*{u}^2_2\nrm*{v}^2_2\cdot\frac{64\log(2\delta^{-1})}{d}.
    \end{equation}
   Furthermore, for any set of vectors $\cV \subset \bbR^n$, we also have  
       \[
   \frac{1}{d}\max_{v \in \cV}\sum_{i \in [d]}\sum_{j \in [n]}\sum_{\substack{k \in [n]\\k\neq{}j}}W_{i,j}W_{i,k}v_jv_k
      \leq{} \max_{v \in \cV} \nrm{v}_2^2 (16\log\abs{\cV} + 9) + \max_{v \in \cV}\nrm*{v}^2_2\cdot\sqrt{\frac{32\log(2\delta^{-1})}{d}}
      + \max_{v \in \cV}\nrm*{v}^4_2\cdot\frac{64\log(2\delta^{-1})}{d}.
    \]
  \end{lemma}
	Let $(s,a) \in \cS \times \cA$ and $f \in \cF$. To bound $\abs{\tri*{\psi(s,a),w_f} - \cT f(s,a)}$ (cf. \eqref{eq:concentration-1}), we apply the first bound of
        \cref{lem:rademacher} with $u = \prn*{P(s' \mid s,a)/
          \mu^{1/2}(s')}_{s' \in \cS}$ and $v =
        \prn*{\mu^{1/2}(s')f(s')}_{s' \in \cS}$, which gives
	\begin{equation}\label{eq:northamericanscum}
		\abs*{\tri*{\psi(s,a),w_f} - \cT f(s,a)} \leq \sqrt{\frac{32\Cpush \log(2\deltainv)}{d}} + 64\Cpush\frac{\log(2\deltainv)}{d} \coloneqq \veps(\deltainv), 
	\end{equation}
	where we have again used that $
	 \nrm*{u}^2_2 = \sum_{s' \in \cS} \frac{P^2(s' \mid s,a) }{\mu(s')} \leq \Cpush
	 $ and also that $\nrm*{v}^2_2 = 1$ since $\nrm{f}_\infty \leq 1$ for all $f \in \cF$. To bound \eqref{eq:concentration-2}, we apply the second bound of  \cref{lem:rademacher} with $\cV = \crl*{\prn*{\frac{P_{h-1}(s' \mid s,a)}{\mu^{1/2}_{h}(s')}}_{s' \in \cS}}_{\substack{s,a \in \cS \times \cA \\
	 	h\times [H]}}$, which gives
	\[
	\max_{s,a \in \cS \times \cA, h\in [H]} \nrm*{\psi_h(s,a)}_2^2
  \leq
         \Cpush(16\log\abs{\cS}\abs{\cA}H+9) + \Cpush\sqrt{\frac{32\log(2\delta^{-1})}{d}}
      + \Cpush^2\frac{64\log(2\delta^{-1})}{d} \coloneqq B_1.
         \]
	Lastly, to bound \eqref{eq:concentration-3}, we take $\cV = \crl*{\prn*{\mu^{1/2}_h(s')f_h(s')}_{s' \in \cS}}_{\substack{f \in \cF \\ h \in [H]}}$ in \cref{lem:rademacher}, which establishes that
	\[
	\max_{f \in \cF, h \in [H]} \nrm{w_{f,h}}^2_2 \leq 9 + 16\log\abs{\cF}H + \sqrt{\frac{32\log(2\delta^{-1})}{d}}
      + \frac{64\log(2\delta^{-1})}{d} \coloneqq B_2.
	\]
Note that \eqref{eq:northamericanscum} establishes that the Bellman backup $\cT f(s,a)$ is well-approximated by  $\tri{\psi(s,a),w_f}$ only at a single state-action pair $(s,a)$. We can obtain an $L_\infty$-approximation guarantee by taking a union bound over $\cS$ and $\cA$, which would incur a dependence on $\log\abs{\cS}$ in the final sample complexity. Here, we bypass this by instead requiring only an approximation guarantee under the $L_2(\mu \otimes \unif(\cA))$ norm.  Via (pushforward) coverability, this will ensure that $\En^{\pi}\brk*{\prn*{\tri{w_f,\psi(s,a)} - \cT f (s,a)}^2}$ is well-controlled for all policies $\pi$, which will be sufficient for our downstream sample-complexity analysis of \Golf. However, directly establishing an $L_2(\mu \otimes \unif(\cA))$ approximation guarantee is technically challenging since it would require establishing a fourth-order (rather than second-order) equivalent of \eqref{eq:rademacher}. 
The remainder of the proof will obtain an $L_2(\mu \otimes \unif(\cA))$ approximation guarantee by instead sampling a dataset of size $n$ from $\mu \otimes \unif(\cA)$ and taking a union bound over that dataset to ensure a uniform bound on all state-action pairs in that dataset. Via an additional concentration bound, this will ensure that the error is well-behaved under the $L_2(\mu \otimes \unif(\cA))$ norm. %
        
For each $h \in [H]$, sample a dataset $D = \{ (s\ind{i}_h,a\ind{i}_h) \}_{i=1}^n$ i.i.d. from $\mu_h \otimes \unif(\cA)$. By a union bound over $n$, $\cF$, and $H$, we have that
		\begin{equation}\label{eq:theparty}
	\forall i \in [n], f \in \cF, h \in [H]: \quad	\abs*{\tri*{\psi_h(s\ind{i}_h,a\ind{i}_h),w_{f,h}} - \cT_h f_{h+1}(s\ind{i}_h,a\ind{i}_h)} \leq \veps(n\abs{\cF}H\deltainv), 
	\end{equation}
	where we recall the definition of $\veps(\cdot)$ from \eqref{eq:northamericanscum}.
	Now, let 
               	\[
	X_{f,h}(s,a) \coloneqq
        \prn*{\clip\brk*{\tri*{\psi_h(s,a),w_{f,h}}} - \cT_h
          f_{h+1}(s,a)}^2.
          \]         
          Note that $\abs{X_{f,h}(s,a)} \leq 4$ and 
           \[
           X_{f,h}(s,a) \leq\prn*{\tri*{\psi_h(s,a),w_{f,h}} - \cT_h
          f_{h+1}(s,a)}^2,
                \]
                since $\cT_h f_{h+1}(s,a) \in [0,2]$ and the clipping operator is 1-Lipshitz. Note that 
        \[
        \En_{(s,a) \sim \mu_h \otimes \unif(\cA)}\brk*{X_{f,h}(s,a)} \coloneqq
        \En_{\mu_h \otimes \unif(\cA)}\brk*{\prn*{\clip\brk*{\tri*{\psi_h(s,a),w_f}} - \cT_h
          f_{h+1}(s,a)}^2},
          \]
          where this expectation is only over the sampling of the data point $(s,a)$ (and not the Rademacher matrix $W$).
        Let 
        \[
        X_{i,f,h} \coloneqq X_{f,h}(s\ind{i}_h,a\ind{i}_h).
          \]
          By boundedness of $X_{f,h}(s,a)$ and Hoeffding's inequality, we have that with probability at least
       $1-\delta$: %
	\[
		\abs*{\frac{1}{n} \sum_{i=1}^n X_{i,f,h} - \En_{\mu \otimes \unif(\cA)}\brk*{X_{f,h}(s,a)}} \leq 4\sqrt{\frac{\log(2\deltainv)}{n}}.
              \]

	Taking another union bound over $\cF$ and $H$ as well as the event in \eqref{eq:theparty} gives that
		\begin{align}
		\forall f \in \cF, h \in [H]: \qquad &\abs*{\frac{1}{n} \sum_{i=1}^n X_{i,f,h} - \En_{\mu \otimes \unif(\cA)}\brk*{X_{f,h}(s,a)}} \leq 4\sqrt{\frac{\log(2\abs{\cF}H\deltainv)}{n}}\label{eq:usvthem}, \\
		\text{ and }\forall i \in [n], f \in \cF, h \in [H]:\qquad  &X_{i,f,h} \leq  \veps^2(n\abs{\cF}H\deltainv),
		\end{align}
		recalling the definition of $\veps(\cdot)$ from \eqref{eq:northamericanscum}.
      Then, re-arranging \eqref{eq:usvthem} gives us that
\begin{align}
 \En_{\mu \otimes \unif(\cA)}\brk*{\prn*{\clip\brk{\tri*{\psi_h(s_h,a_h),w_f}} - \cT_h
          f_{h+1}(s_h,a_h)}^2} &\leq \frac{1}{n} \sum_{i=1}^n X_{i,f,h} + 4\sqrt{\frac{\log(2\abs{\cF}H\deltainv)}{n}} \nonumber \\
		&\leq \veps^2(n\abs{\cF}H\deltainv) + 4\sqrt{\frac{\log(2\abs{\cF}H\deltainv)}{n}} \label{eq:blackmessiah},
	\end{align}

	 We now conclude the proof by picking $n$ and $d$ appropriately to ensure that the right-hand-side is bounded by $\vepsapx$, which will ensure the desired claim that
	\[
	 \En_{\mu \otimes \unif(\cA)}\brk*{\prn*{\clip\brk*{\tri*{\psi_h(s_h,a_h),w_f}} - \cT_h
          f_{h+1}(s_h,a_h)}^2} \leq \vepsapx.
	\]
	For convenience, we introduce absolute constants $c$ and $c'$ whose precise values may change from line to line. We pick $n = 64\log(2\abs{\cF}H\deltainv) / \vepsapx^2$. %
Plugging this into \cref{eq:blackmessiah} gives
		\begin{align}
 \En_{\mu \otimes \unif(\cA)}\brk*{\prn*{\clip\brk*{\tri*{\psi_h(s_h,a_h),w_f}} - \cT_h
          f_{h+1}(s_h,a_h)}^2} \leq  \veps^2(n\abs{\cF}H\deltainv) + c\cdot\veps
	\end{align}
	 Noting that $n \leq 128\frac{\abs{\cF}H\deltainv}{\vepsapx^2}$ and plugging this into $\veps$ (\eqref{eq:northamericanscum}) gives
	 \begin{equation}\label{eq:sanctified}
	\veps(n\abs{\cF}H\deltainv) \leq \Cpush^{1/2}\sqrt{\frac{64\log(16\abs{\cF}H\deltainv/\vepsapx)}{d}} + \Cpush\frac{128\log(16\abs{\cF}H\deltainv/\vepsapx)}{d}.
	 \end{equation}
	 Setting%
	 	 \[
	 	d \geq 2^9 \frac{\Cpush \log(16\abs{\cF}H\deltainv/\vepsapx)}{\vepsapx}
	 \]
	ensures that 
	\begin{equation}\label{eq:sunglasses}
	\veps^2(n\abs{\cF}H\deltainv)\leq \veps(n\abs{\cF}H\deltainv) \leq \frac{\vepsapx}{2}
	\end{equation}
	by \eqref{eq:sanctified}. %
	 Combining \eqref{eq:blackmessiah} and \eqref{eq:sunglasses}, we get 
\begin{align}
	\En_{\mu \otimes \unif(\cA)}\brk*{\prn*{\clip\brk*{\tri*{\psi_h(s_h,a_h),w_f}} - \cT_h f_{h+1}(s_h,a_h)}^2} \leq \vepsapx,
	\end{align}
as desired. It only remains to establish the concentration results of \cref{lem:rademacher}. %
\end{proof}

 \begin{proof}[\pfref{lem:rademacher}]
    We establish the first claim. Let $i \in [d]$ be fixed, and consider the random variable
    \begin{align*}
      Z_i\ldef{}
      \sum_{j \in [n]} \sum_{\substack{k\in[n]\\k\neq{}j}}W_{i,j}W_{i,k}v_ju_k. %
    \end{align*}
    Note that $\En\brk*{Z_i} = 0$ by independence of $W_{i,j}$ and $W_{i,k}$ for every $j \neq k$. By Exercise 6.9 of \citet{boucheron2013concentration}, we have that
    \begin{align*}
      \log\En\brk*{\exp\prn*{\lambda{}Z_i}}
      &\leq{}
        \frac{16\lambda^2}{2(1-64\nrm{u}_2^2\nrm{v}_2^2\lambda)}\nrm{u}_2^2\nrm{v}_2^2.
    \end{align*}
    Since $Z_i$ are independent, it follows that
    \begin{align*}
      \log\En\brk*{\exp\prn*{\lambda{}\sum_{i=1}^{d}Z_i}}
      &\leq{}
        \frac{16\lambda^2}{2(1-64\nrm{u}_2^2\nrm{v}_2^2\lambda)}\nrm{u}_2^2\nrm{v}_2^2d.
    \end{align*}
Hence, $\sum_{i=1}^{d}Z_i$ is a sub-Gamma random variable with parameters
$\nu=16\nrm*{u}_2^2\nrm*{v}_2^2d$ and
$c=64\nrm*{u}_2^2\nrm*{v}_2^2$, and it follows from Equation (2.5) on page 29 of
\citet{boucheron2013concentration} that for all $\veps>0$,
\begin{align*}
  \bbP\prn*{
  \sum_{i=1}^{d}Z_i
\geq{} \nrm{u}_2\nrm{v}_2\sqrt{32d\veps}+64\nrm*{u}_2^2\nrm*{v}_2^2\veps
  }\leq{}e^{-\veps}.
\end{align*}
Taking a union bound, and using that the random variable is symmetric, we obtain the desired claim. 

We now establish the second claim. Let $\cV \subset \bbR^{n}$ be a subset of vectors. Let $i \in [d]$ be fixed, and re-consider the random variable
    \begin{align*}
      Z_i\ldef{} \max_{v \in \cV}
      \sum_{j \in [n]} \sum_{\substack{k \in [n]\\k\neq j}}W_{i,j}W_{i,k}v_jv_k.
    \end{align*}
    Again appealing to Exercise 6.9 of \citet{boucheron2013concentration}, %
    we have that
    \begin{align*}
      \log\En\brk*{\exp\prn*{\lambda{}\prn*{Z_i-\En\brk{Z_i}}}}
      &\leq{}
        \frac{16\lambda^2}{2(1-64B\lambda)}\En\brk*{\max_{v \in \cV} \sum_{j \in [n]} \sum_{\substack{k \in [n]\\k\neq j}} W_{i,j}W_{i,k}v^2_jv^2_k} \\
      &\leq{}
        \frac{16\lambda^2}{2(1-64B\lambda)}\En\brk*{\max_{v \in \cV}\sum_{j,k=1}^n v^2_jv^2_k} \\
      &=
        \frac{16\lambda^2}{2(1-64B\lambda)}\max_{v \in \cV}\nrm*{v}_2^4
    \end{align*}
    where $B \ldef \max_{v \in \cV} \nrm{v}_2^4$.
    Since $Z_i$ are independent, it follows that
    \begin{align*}
      \log\En\brk*{\exp\prn*{\lambda{}\sum_{i=1}^{d}\prn*{Z_i - \En\brk{Z_i}}}}
      &\leq{}
        \frac{16\lambda^2}{2(1-64B\lambda)}\max_{v \in \cV}\nrm*{v}_2^4d.
    \end{align*}
Hence, $\sum_{i=1}^{d}Z_i$ is a sub-Gamma random variable with parameters
$\nu=16\max_{v \in \cV}\nrm*{v}_2^4 d$ and
$c=64\max_{v \in \cV}\nrm*{v}_2^4$, and it follows from Equation (2.5) on page 29 of
\citet{boucheron2013concentration} that for all $\veps>0$,
\begin{align*}
  \bbP\prn*{
  \frac{1}{d} \sum_{i=1}^{d}Z_i
\geq{} \En[Z_i] + \max_{v \in \cV}\nrm*{v}^2_4\sqrt{\frac{32\veps}{d}}+64\max_{v \in \cV}\nrm*{v}_2^4\frac{\veps}{d}
  }\leq{}e^{-\veps}.
\end{align*}

To conclude, it remains only to show the bound $\En\brk{Z_i} \leq \max_v \nrm{v}_2^2 (16\log\abs{\cV} + 9)$. %
 This follows by a standard log-sum-exp approach. Below, we abbreviate $\rho_j \coloneqq W_{i,j}$. We can observe that for any $\lambda > 0$:
\begin{align}
 	\En\brk{Z_i} &= \En\brk*{\max_{v \in \cV} \sum_{j \in [n]}\sum_{\substack{k \in [n] \\ k \neq j}}\rho_j \rho_k v_j v_k} \nonumber \\
 	&\leq \frac{1}{\lambda} \log\prn*{\sum_{v \in \cV} \En\brk*{\exp(\lambda \sum_{j \in [n]} \sum_{\substack{k \in [n] \\ k \neq j}} \rho_j \rho_k v_j v_k)} } \nonumber\\
 	&\leq \frac{1}{\lambda} \log\prn*{\sum_{v \in \cV} \En\brk*{\exp(\lambda \prn*{\sum_{j=1}^n \rho_j v_j}^2)}} \label{eq:squaredrademacher}
\end{align}
Note that $X \coloneqq \sum_j \rho_j v_j$ is subGaussian with parameter $\nrm{v}_2^2$, since:
\[
	\En\brk*{\exp(\lambda \sum_{j=1}^n \rho_j v_j )} = \prod_{j=1}^n \En\brk*{\exp(\lambda \rho_j v_j)} \leq \prod_{j=1}^n \exp(\frac{\lambda^2 v^2_j}{2}) = \exp(\frac{\lambda^2}{2} \nrm{v}_2^2).
\]

Then, it follows (e.g. Lemma 1.12 of \citet{rigollet2023high}) that $X^2 - \En\brk{X^2}$ satisfies a sub-exponential MGF bound with parameter $16\nrm{v}_2^2$, i.e.
\[
\En\brk{\exp(\lambda(X^2 - \En\brk{X^2}))} \leq \exp(\frac{256}{2}\lambda^2 \nrm{v}_2^4) \qquad \forall \abs{\lambda} \leq \frac{1}{16\nrm{v}_2^2}.
\]

We also note that
\[
\En\brk{X^2} = \sum_{i,j=1}^n v_i v_j \En\brk{\veps_i \veps_j} = \nrm{v}_2^2. 
\]
Adding and subtracting $\En\brk{X^2}$ in \eqref{eq:squaredrademacher} gives
\begin{align*}
	&\leq \frac{1}{\lambda} \log\prn*{\sum_{v \in \cV} \En\brk*{\exp(\lambda \prn*{X^2 - \nrm{v}_2^2} + \lambda\nrm{v}_2^2)}} \\
	 &= \frac{1}{\lambda} \log\prn*{\sum_{v \in \cV} \En\brk*{\exp(\lambda \prn*{X^2 - \nrm{v}_2^2})}\exp(\lambda\nrm{v}_2^2)} \\
	 &\leq \frac{1}{\lambda} \log\prn*{\sum_{v \in \cV} \exp(128\lambda^2\nrm{v}_2^4 + \lambda\nrm{v}_2^2)} \qquad \forall \abs{\lambda} \leq \frac{1}{16\max_v \nrm{v}_2^2}\\
	&\leq \frac{1}{\lambda}\log\abs{\cV} + \max_v 128\lambda\nrm{v}_2^4 + \max_v \nrm{v}_2^2 \qquad \forall \abs{\lambda} \leq \frac{1}{16\max_v \nrm{v}_2^2}
\end{align*}
Picking $\lambda = \frac{1}{16 \max_v \nrm{v}_2^2}$ concludes the proof.
\end{proof}

\subsubsection{\textsc{Golf} with on-policy misspecification}\label{sec:golf-onpolicy-misspecification}
Consider the version of \Golf \cite{jin2021bellman} in
\cref{alg:golf}. We have the following guarantee for the regret of
\Golf, which extends \citet{jin2021bellman} to allow for \emph{on-policy} misspecification. 

\begin{algorithm}[t]
\caption{\textsc{GOLF} \citep{jin2021bellman}}
\label{alg:golf}
{\bfseries input:} Function classes $\cF$ and $\cG$, confidence width $\beta>0$. \\
{\bfseries initialize:} $\cF\ind{0} \leftarrow \cF$, $\cD_{h}\ind{0} \leftarrow \emptyset\;\;\forall h \in [H]$. 
\begin{algorithmic}[1]
\For{episode $t = 1,2,\dotsc,T$}
    \State Select policy $\pi\ind{t} \leftarrow \pi_{f\ind{t}}$, where $f\ind{t} \ldef{} \argmax_{f \in \cF\ind{t-1}}f(x_1,\pi_{f,1}(x_1))$. \label{step:glof_optimism}
    \State Execute $\pi\ind{t}$ for one episode and obtain trajectory $(x_1\ind{t},a_1\ind{t},r_1\ind{t}),\ldots,(x_H\ind{t},a_H\ind{t},r_H\ind{t})$. \label{step:glof_sampling}
    \State Update dataset: $\cD_{h}\ind{t} \leftarrow \cD_{h}\ind{t-1} \cup \crl[\big]{\prn[\big]{x_h\ind{t},a_h\ind{t},x_{h+1}\ind{t}}}\;\;\forall h \in [H]$.
    \State Compute confidence set:
    \begin{gather}
      \nonumber
      \cF\ind{t} \leftarrow \crl[\bigg]{ f \in \cF: \cL_{h}\ind{t}(f_h,f_{h+1}) - \min_{g_h \in \cG_h} \cL_{h}\ind{t}(g_h,f_{h+1}) \leq \beta\;\;\forall h \in [H] },
    \\
    \nonumber
    \text{where \quad } \cL_{h}\ind{t}(f,f') \coloneqq \sum_{(x,a,r,x') \in \cD_{h}\ind{t}}\prn[\Big]{ f(x,a) - r - \max_{a' \in \cA} f'(x',a') }^2 ,~\forall f,f' \in \cF.
    \end{gather}
\EndFor
\State Output $\wh{\pi} = \unif(\pi\ind{1:T})$. 
\end{algorithmic}
\end{algorithm}

\newcommand{\apx}{\mathsf{apx}}

\begin{lemma}
  \label{lem:golf-onpolicy}
  
Suppose that $Q^{\Mobsstar,\star} \in\cF$ and $\cG$ satisfies
  $\vepsapx$-completeness in the sense that for all $h \in [H]$ and  $f\in\cF_{h+1}$,
  there exists $g \in\cG_h$ such that
  $\En^{\pi}\prn*{g - \cT^{\Mstarobs}_h f}^2\leq{}\vepsapx^2$ for all $\pi \in \Pi_\cF \coloneqq \crl*{\pi_f : f \in \cF}$.  Let $\Ccov \coloneqq \Ccov(\Mobsstar,\Pi_\cF)$ (\cref{def:coverability}). Then for an
  appropriate choice of $\beta$, \cref{alg:golf} ensures that
  \begin{align*}
    \Reg
    \leq{} H\sqrt{\Ccov{}T\log(\abs{\cF}\abs{\cG}HT/\delta)} + HT\sqrt{\Ccov \log(T)}\vepsapx.  \end{align*}
  
\end{lemma}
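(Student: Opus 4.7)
\textbf{Proof proposal for \cref{lem:golf-onpolicy}.}
The plan is to follow the standard \textsc{Golf} analysis \citep{jin2021bellman,xie2022role}, modifying only the steps where exact Bellman completeness is used so as to accommodate the on-policy misspecification assumption. The critical observation is that \textsc{Golf} only ever rolls in under policies $\pi\ind{t}=\pi_{f\ind{t}}\in\Pi_\cF$, so the on-policy completeness assumption is exactly strong enough to control the quantities that appear in the analysis.

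First, I would establish the usual uniform Freedman-type concentration: with probability at least $1-\delta$, for every $f\in\cF$, $h\in[H]$ and $t\leq T$,
\[
\Bigl|\cL_h\ind{t}(f_h,f_{h+1}) - \cL_h\ind{t}(\cT^{\Mstarobs}_h f_{h+1},f_{h+1}) - \sum_{i<t}\En^{\pi\ind{i}}\bigl(f_h - \cT^{\Mstarobs}_h f_{h+1}\bigr)^2\Bigr| \lesssim \log(|\cF|HT/\delta),
\]
using \cref{lem:freedman} and the fact that the Bellman backup is the conditional mean of the regression target. Next, I would show that $Q^{\Mstarobs,\star}\in\cF\ind{t}$ for all $t$ with high probability. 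Here the on-policy completeness assumption is used: for any $h$, there exists $g\in\cG_h$ with $\En^{\pi\ind{i}}(g-\cT^{\Mstarobs}_h Q^{\Mstarobs,\star}_{h+1})^2\leq\vepsapx^2$ for each $i$. Applying concentration to the gap between $\cL_h\ind{t}(g,Q^\star_{h+1})$ and $\cL_h\ind{t}(\cT^{\Mstarobs}_hQ^{\Mstarobs,\star}_{h+1},Q^\star_{h+1})$, and using that $Q^{\Mstarobs,\star}_h=\cT^{\Mstarobs}_h Q^{\Mstarobs,\star}_{h+1}$, one obtains $\cL_h\ind{t}(Q^\star_h,Q^\star_{h+1})-\min_{g\in\cG_h}\cL_h\ind{t}(g,Q^\star_{h+1})\lesssim \log(|\cF||\cG|HT/\delta)+T\vepsapx^2$, which determines the choice $\beta\asymp\log(|\cF||\cG|HT/\delta)+T\vepsapx^2$.

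From here, a symmetric argument combined with the confidence-set constraint gives that every $f\in\cF\ind{t}$ satisfies
\[
\sum_{i<t}\En^{\pi\ind{i}}\bigl(f_h - \cT^{\Mstarobs}_h f_{h+1}\bigr)^2 \lesssim \beta \asymp \log(|\cF||\cG|HT/\delta)+T\vepsapx^2.
\]
By optimism (Step~\ref{step:glof_optimism} of \cref{alg:golf}), along with $Q^{\Mstarobs,\star}\in\cF\ind{t-1}$, we have $f\ind{t}(x_1,\pi_{f\ind{t}}(x_1))\geq V^{\Mstarobs,\star}(x_1)$, so by \cref{lem:lemma1jiang} the per-step regret decomposes as $J^{\Mstarobs}(\pi_{\Mstarobs})-J^{\Mstarobs}(\pi\ind{t})\leq \sum_{h=1}^H \En^{\pi\ind{t}}\bigl[f_h\ind{t}-\cT^{\Mstarobs}_h f_{h+1}\ind{t}\bigr]$. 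Applying Cauchy--Schwarz after summing in $t$ and invoking coverability via \cref{lem:offline-to-online} (with $g_h\ind{t}(x,a)=\bigl(f_h\ind{t}(x,a)-\cT^{\Mstarobs}_h f_{h+1}\ind{t}(x,a)\bigr)^2/4$ to keep things in $[0,1]$) yields
\[
\sum_{t=1}^T \En^{\pi\ind{t}}\bigl[f_h\ind{t}-\cT^{\Mstarobs}_h f_{h+1}\ind{t}\bigr] \lesssim \sqrt{\Ccov\log(T)\sum_{t=1}^T\sum_{i<t}\En^{\pi\ind{i}}\bigl(f_h\ind{t}-\cT^{\Mstarobs}_h f_{h+1}\ind{t}\bigr)^2} + \Ccov.
\]
Plugging in the $O(\beta)$ bound on the inner sum and summing over $h$ gives total regret $H\sqrt{\Ccov T\beta\log T}+\Ccov H\lesssim H\sqrt{\Ccov T \log(|\cF||\cG|HT/\delta)} + HT\vepsapx\sqrt{\Ccov\log T}$.

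The main subtlety is ensuring that the on-policy-only misspecification assumption suffices in the two places where completeness is invoked (certifying $Q^\star\in\cF\ind{t}$, and converting the confidence-set inequality into a sum of true squared Bellman errors). Both of these quantities are averages over trajectories collected by $\pi\ind{i}=\pi_{f\ind{i}}\in\Pi_\cF$, so the hypothesis applies directly; the price is only the additive $T\vepsapx^2$ in $\beta$, which propagates into the additive $HT\vepsapx\sqrt{\Ccov\log T}$ term. Aside from this bookkeeping, the analysis is essentially the coverability-based \textsc{Golf} argument of \citet{xie2022role}.
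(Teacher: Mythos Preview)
Your overall plan is correct and matches the paper's proof: establish Freedman-type concentration, show $Q^{\Mstarobs,\star}\in\cF\ind{t}$ (with the $T\vepsapx^2$ entering $\beta$ through the approximate-completeness argument), show that confidence-set membership implies small in-sample squared Bellman errors, and then combine optimism, \cref{lem:lemma1jiang}, and coverability. The paper follows exactly this route, with the minor cosmetic difference that it introduces $\apx[f_{h+1}]\in\cG_h$ and works with the surrogate residual $\tilde\delta_h\ind{t}=f_h\ind{t}-\apx[f_{h+1}\ind{t}]$, peeling off the $TH\vepsapx$ term up front rather than carrying it through $\beta$.

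There is, however, one step that will not work as you describe it. The displayed bound
\[
\sum_{t=1}^T \En^{\pi\ind{t}}\bigl[f_h\ind{t}-\cT^{\Mstarobs}_h f_{h+1}\ind{t}\bigr] \lesssim \sqrt{\Ccov\log(T)\sum_{t=1}^T\sum_{i<t}\En^{\pi\ind{i}}\bigl(f_h\ind{t}-\cT^{\Mstarobs}_h f_{h+1}\ind{t}\bigr)^2} + \Ccov
\]
is correct, but it does \emph{not} follow from \cref{lem:offline-to-online} applied with $g_h\ind{t}=(\delta_h\ind{t})^2/4$ plus a Cauchy--Schwarz in $t$. That route only yields a $T^{3/4}$-type bound: \cref{lem:offline-to-online} controls $\sum_t\En^{\pi\ind{t}}[(\delta_h\ind{t})^2]$ by $\sqrt{\Ccov\log(T)\cdot T\beta}$, and then $\sum_t\En^{\pi\ind{t}}[\delta_h\ind{t}]\leq\sqrt{T\sum_t\En^{\pi\ind{t}}[(\delta_h\ind{t})^2]}$ loses another factor. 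To obtain the displayed inequality you need the direct burn-in plus \emph{pointwise} Cauchy--Schwarz argument of \citet{xie2022role} (which is also how the paper proceeds): split off indices $t<\tau_h(x,a)$ at a cost of $2\Ccov$, and for $t\geq\tau_h(x,a)$ apply Cauchy--Schwarz jointly over $(t,x,a)$ with the weights $d_h\ind{t}(x,a)/\sqrt{\tilde d_h\ind{t}(x,a)}$ and $\sqrt{\tilde d_h\ind{t}(x,a)}\,\delta_h\ind{t}(x,a)$, bounding the first factor by the potential $\sqrt{\Ccov\log T}$. Once you make that substitution, your argument goes through and matches the paper's.
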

\begin{proof}[\pfref{lem:golf-onpolicy}]%
  For each $f_{h+1} \in \cF_{h+1}$, let $\apx\brk{f_h} = \argmin_{g_h \in \cG_h} \sup_{\pi \in \Pi} \En^\pi\brk*{\prn*{g_h - \cT_h f_{h+1}}^2}$. Let 
\[
\delta\ind{t}_h(\cdot,\cdot) \coloneqq f\ind{t}_h(\cdot,\cdot) - \cT_f f\ind{t}_{h+1}(\cdot,\cdot) \quad \& \quad \wt\delta\ind{t}_h(\cdot,\cdot) \coloneqq f\ind{t}_h(\cdot,\cdot) - \apx\brk*{f\ind{t}_{h+1}}(\cdot,\cdot),
\]
and note that by Jensen's inequality we have that for all $\pi$, $\En^\pi\brk*{\delta\ind{t}_h(\cdot,\cdot)} \leq \En^\pi\brk*{\wt\delta\ind{t}_h(\cdot,\cdot)} + \vepsapx$. We further adopt the shorthand $d\ind{t}_h(x,a) \coloneqq d^{\pi\ind{t}}_h(x,a)$ and $\wtd\ind{t}_h(x,a) \coloneqq \sum_{i<t}d\ind{t}_h(x,a)$. 
As a consequence of realizability ($\Qstarobsh \in \cF_h$) and
approximate Bellman completeness, standard concentration arguments
(proved in the sequel) lead to the following result.

\begin{lemma}[Optimism and small in-sample squared Bellman errors]\label{lem:golf-all-policy-concentration}
With probability at least $1-\delta$, by taking $\beta = c \log(TH\abs{\cF}\abs{\cG}/\delta) + T\vepsapx$, we have that for all $t \in [T]$,  
\[
	(i) \,\, \Qstarobsh \in \cF\ind{t}, \quad \text{and} \quad (ii) \,\, \sum_{x,a} \wtd\ind{t}_h(x,a) \prn*{\wt{\delta}\ind{t}_h(x,a)}^2 \leq \cO(\beta).
\]
\end{lemma}

The rest of the proof proceeds similarly to the analysis of Section
3.2 in \citet{xie2022role}. Namely, by optimism (\cref{lem:golf-all-policy-concentration}) and a standard Bellman error decomposition (\cref{lem:lemma1jiang}) we have %
\[
	\Reg \leq \sum_{t=1}^T \sum_{h=1}^H \En_{d\ind{t}_h}\brk*{\delta\ind{t}_h(x,a)} \leq TH\cdot \vepsapx + \sum_{t=1}^T\sum_{h=1}^H \En_{d\ind{t}_h}\brk*{\wt\delta\ind{t}_h(x,a)}.
\]
Let us defining the burn-in time
\[
	\tau_h(x,a) = \min\{ t \mid \wtd\ind{t}_h(x,a) \geq \Ccov \mustar_h(x,a)\},
\]
where $\mustarh$ is the coverability distribution for the set of
policies $\Pi_\cF$ (i.e., the distribution $\mustarh$ that achieves
the minimum in the coverability definition). Using the same decomposition into ``burn-in
phase'' and ``stable phase'' in \citet{xie2022role}, we have:
\[
\sum_{t=1}^T\sum_{h=1}^H \En_{d\ind{t}_h}\brk*{\wt\delta\ind{t}_h(x,a)} \leq 2H\Ccov + \sum_{t=1}^T\sum_{h=1}^H \En_{d\ind{t}_h}\brk*{\wt\delta\ind{t}_h(x,a) \indic\crl*{t \geq \tau_h(x,a)}}.
\]
Applying a change of measure argument on the second term then gives:
\begin{align*}
\sum_{t=1}^T\sum_{h=1}^H \En_{d\ind{t}_h}\brk*{\wt\delta\ind{t}_h(x,a) \indic\crl*{t \geq \tau_h(x,a)}} \leq H \underbrace{\sqrt{\sum_{t=1}^T\sum_{x,a} \frac{\prn*{\indic\{t \geq \tau_h(x,a)\}d\ind{t}_h(x,a)}^2}{\wtd\ind{t}_h(x,a)}}}_{\texttt{(A)}} \underbrace{\sqrt{\sum_{t=1}^T\sum_{x,a} \wtd\ind{t}_h(x,a) \prn*{\wt\delta\ind{t}_h(x,a)}^2}}_{\texttt{(B)}}
\end{align*}
By the same reasoning as in \citet{xie2022role}, we have $\texttt{(A)} \leq \cO(\sqrt{\Ccov\log(T)})$, and by \cref{lem:golf-all-policy-concentration} we have $\texttt{(B)} \leq \cO(\sqrt{\beta T})$. Using that $\beta = \log(TH\abs{\cF}/\delta) + T\vepsapx^2$  gives the desired result. It remains to establish the concentration results of \cref{lem:golf-all-policy-concentration}.  
\end{proof}

\begin{proof}[\pfref{lem:golf-all-policy-concentration}]
  For any function $f$, define a random variable
	\[
	X_t(h,f) = \prn*{f_h(s\ind{t}_h,a\ind{t}_h) - r\ind{t}_h - f_{h+1}(s\ind{t}_{h+1})}^2- \prn*{\cT_h f_{h+1}(s\ind{t}_h,a\ind{t}_h) - r\ind{t}_h - f_{h+1}(s\ind{t}_{h+1})}^2.
	\]
	Let  $\fF_{t,h} = \{s\ind{i}_1,a\ind{i}_1,r\ind{i}_1,\ldots,s\ind{i}_H,a\ind{i}_H,r\ind{i}_H\}_{i<t}$. Note that
	\begin{align}
		 \En\brk*{r\ind{t}_h + f_{h+1}(s\ind{t}_{h+1}) \mid \fF_{t,h}} =  \En^{\pi\ind{t}}\brk*{\cT_h f(s_h,a_h)}.
	\end{align}
	and thus that
	\[
	\En\brk*{X_t(h,f) \mid \fF_{t,h}} = \En^{\pi\ind{t}}\brk*{\prn*{f_h(s_h,a_h) - \cT_h f_h(s_h,a_h)}^2}.
	\]
	Next, note that
	\begin{align*}
			\Var\brk*{X_t(h,f) \mid \fF_{t,h}} &\leq \En\brk*{\prn*{X_t(h,f)}^2 \mid \fF_{t,h}} \\
			&\hspace{-3em}\leq \En\brk*{\prn*{f_h(s\ind{t}_h,a\ind{t}_h) - \cT_h f_h(s\ind{t}_h,a\ind{t}_h)}^2\prn*{f_h(s\ind{t}_h,a\ind{t}_h) + \cT_h f_h(s\ind{t}_h,a\ind{t}_h) + 2\prn*{r\ind{t}_h - f_{h+1}(s\ind{t}_{h+1})}}^2 \mid \fF_{t,h}} \\
			&\hspace{-3em}\leq 16\En\brk*{\prn*{f_h(s\ind{t}_h,a\ind{t}_h) - \cT_h f_h(s\ind{t}_h,a\ind{t}_h)}^2\mid \fF_{t,h}} = 16\En\brk*{X_t(h,f) \mid \fF_{t,h}}.
	\end{align*}
	By Freedman's inequality (\cref{lem:freedman}, \cref{lem:multiplicative_freedman}), we have that with probability at least $1-\delta$:
	\[
		\abs*{\sum_{i<t} X_i(h,f) - \sum_{i<t} \En\brk*{X_i(h,f) \mid \fF_{i,h}}} \leq \cO\prn*{\sqrt{\log(1/\delta)\sum_{i<t}\En\brk*{X_i(h,f) \mid \fF_{i,h}}} + \log(1/\delta)}
	\]
	Taking a union bound over $[T] \times [H] \times \cF$, we have
        that for all $t,h,f$, with probability at least $1-\delta$:
	\begin{equation}\label{eq:x-t-minus-bellman-errors-sec3} 
		\abs*{\sum_{i<t} X_i(h,f) - \sum_{i<t} \En^{\pi\ind{i}}\brk*{\prn*{f_h(s_h,a_h) - \cT_h f_h(s_h,a_h)}^2}} \leq \cO\prn*{\sqrt{\iota\sum_{i<t}\En^{\pi\ind{i}}\brk*{\prn*{f_h(s_h,a_h) - \cT_h f_h(s_h,a_h)}^2}} + \iota},
	\end{equation}
	where $\iota = \log(\abs*{\cF}HT/\delta)$. We now show that 
	\begin{equation}\label{eq:latent-golf-x-t-bounded-sec3}
		\sum_{i<t} X_i(h,f\ind{t}) \leq \beta + \cO\prn*{T\vepsapx^2 + \iota} = \cO\prn*{\beta},
	\end{equation}
	which will imply, from \eqref{eq:x-t-minus-bellman-errors-sec3}, that
	\[
	\sum_{i<t} \En^{\pi\ind{t}}\brk*{\prn*{f_h(s_h,a_h) - \cT_h f_h(s_h,a_h)}^2} \leq \cO\prn*{\iota + \beta} = \cO(\beta),
	\]
	as desired. To see \eqref{eq:latent-golf-x-t-bounded-sec3}, let 
	\[
	 \Delta_t = \sum_{i<t} \prn*{\apx\brk*{\cT_h f\ind{t}_{h+1}}(s\ind{i}_h,a\ind{i}_h) - r\ind{i}_h - f\ind{t}_{h+1}(s\ind{i}_{h+1})}^2	- \prn*{\cT_h f\ind{t}_h(s\ind{i}_h,a\ind{i}_h) - r\ind{i}_h - f\ind{t}_{h+1}(s\ind{i}_{h+1})}^2
	\]
	and then note that:
	\begin{align*}
	\sum_{i<t} X_i(h,f\ind{t}) &= \sum_{i<t} \prn*{f\ind{t}_h(s\ind{i}_h,a\ind{i}_h) - r\ind{i}_h - f\ind{t}_{h+1}(s\ind{i}_{h+1})}^2- \prn*{\cT_h f\ind{t}_h(s\ind{i}_h,a\ind{i}_h) - r\ind{i}_h - f\ind{t}_{h+1}(s\ind{i}_{h+1})}^2	\\
	&= \sum_{i<t} \prn*{f\ind{t}_h(s\ind{i}_h,a\ind{i}_h) - r\ind{i}_h - f\ind{i}_{h+1}(s\ind{i}_{h+1})}^2- \prn*{\apx\brk*{\cT_h f\ind{t}_{h+1}}(s\ind{i}_h,a\ind{i}_h) - r\ind{i}_h - f\ind{t}_{h+1}(s\ind{i}_{h+1})}^2	 + \Delta_t \\ 
	&\leq \sum_{i<t} \prn*{f\ind{t}_h(s\ind{i}_h,a\ind{i}_h) - r\ind{i}_h - f\ind{t}_{h+1}(s\ind{i}_{h+1})}^2- \inf_{g_h \in \cG_h}\sum_{i<t} \prn*{g(s\ind{i}_h,a\ind{i}_h) - r\ind{i}_h - f\ind{t}_{h+1}(s\ind{i}_{h+1})}^2 + \Delta_t\\ 
	&\leq \beta + \Delta_t.
	\end{align*}
where the second-to-last line follows from $\apx\brk*{\cT_h f\ind{t}_{h+1}} \in \cG$ and the last line follows from the definition of the confidence set. It remains to show that $\Delta_t \leq \cO(T\vepsapx^2 + \iota)$, which we do via a similar concentration argument. Namely, let
	\[
		Y_t(h,f) = \prn*{\apx\brk*{\cT_h f_{h+1}}(s\ind{t}_h,a\ind{t}_h) - r\ind{t}_h - f\ind{k}_{h+1}(s\ind{t}_{h+1})}^2	- \prn*{\cT_h f_h(s\ind{t}_h,a\ind{t}_h) - r\ind{t}_h - f\ind{k}_{h+1}(s\ind{t}_{h+1})}^2,
	\]
	and note that, as before,
	\[
		\En\brk*{Y_t(h,f) \mid \fF_{t,h}} = \En^{\pi\ind{t}}\brk*{\prn*{\apx\brk*{\cT_h f_{h+1}}(s_h,a_h) - \cT_h f_h(s_h,a_h)}^2},
	\]
	and 
	\[
		\Var\brk*{Y_t(h,f) \mid \fF_{t,h}} \leq 16\En\brk*{Y_t(h,f) \mid \fF_{t,h}},
	\]
	by the same calculation as earlier.
	Thus, by Freedman's inequality and a union bound, we have that, with probability at least $1-\delta$,
		\begin{align}\label{eq:y-t-minus-bellman-errors} 
&\abs*{\sum_{i<t} Y_t(h,f) - \sum_{i<t} \En^{\pi\ind{t}}\brk*{\prn*{\apx\brk*{\cT_h f_{h+1}}(s_h,a_h) - \cT_h f_h(s_h,a_h)}^2}} \\ 
	&\leq \cO\prn*{\sqrt{\iota\sum_{i<t}\En^{\pi\ind{t}}\brk*{\prn*{\apx\brk*{\cT_h f_{h+1}}(s_h,a_h) - \cT_h f_h(s_h,a_h)}^2}} + \iota},
	\end{align}
	where $\iota = \log(\abs*{\cF}HT/\delta)$. Recalling the misspecification assumption, this implies that
	\[
	\sum_{i<t} Y_t(h,f) \leq \cO\prn*{t \vepsapx^2 + \iota},
	\]
	for all $h,f,t$, with high probability. This concludes the result for $(ii)$. For $(i)$, this follows identically to the proof of Lemma 40 in \citet{jin2021bellman}, since this only uses the property that $\Qstar \in \cF$.

\end{proof}

\subsubsection{Sample-efficient latent-dynamics RL under pushforward coverability}\label{sec:sample-efficient-pushforward}

We conclude by combining the previous two results to obtain the main result for this section.

\pushforwardgolf*

\begin{proof}[\pfref{thm:pushforwardgolf}]%
Let $\Mobsstar \coloneqq \dtri*{\Mstarlat,\psistar} \in \cMlatphi$ be the unknown latent-dynamics MDP. Define observation-level value functions
\begin{align*}
  \cF = \crl*{\QMstarlat\circ\phi\mid{}\Mlat \in \cMlat, \phi\in\Phi},
\end{align*}
so that $Q^{\Mobsstar,\star}= Q^{\Mstarlat,\star} \circ \phistar \in\cF$ via decoder and model realizability, and
$\log\abs{\cF_h}\leq{}\log\abs{\cMlat}\abs{\Phi}$. Consider any function class $\cL\subseteq\crl{\cS\to\brk{0,1}}$ and MDP $\Mlat =(\rlat,\Plat)$. For a given value $\vepsapx>0$, setting $d$ according to \cref{lem:jl_cov_push} implies that there exists a $d$-dimensional feature map $\vphi_{\sMlat,h}(s,a)\in\bbR^{d+1}$ %
  such that for all $\ell\in\cL$ and $h \in [H]$,
  there exists $w_{\ell,h}\in\bbR^{d+1}$ %
  such that
  \begin{align}\label{eq:w-ell}
    \En_{\mu_{\Mlat} \otimes \unif(\cA)}\brk*{\prn*{\clip\brk*{\tri*{\vphi_{\sMlat,h}(s,a),w_{\ell,h}}}
    -\cT^{\Mlat}_h \ell_{h+1}(s,a)
    }^2} \leq \vepsapx,
  \end{align}
  where $\mu_{\Mlat}$ is the pushforward coverability distribution for $\Mlat$.
  Moreover, the map $\vphi_h$ is explicitly computed as a function of
  $\Mlat$ by a randomized algorithm with success probability $1-\delta$, with no knowledge of the class $\cL$ required. We
  consider the class
  \begin{align}\label{eq:L-function-class}
  	\cL = \crl*{\hphi \circ Q^{\Mlat,\star}(s,a) \coloneqq \sum_{s' \in \cS} \hphi(s' \mid s) Q^{\Mlat,\star}(s',a) \mid \phi \in \Phi, \Mlat \in \cMlat},
  \end{align}
  where $\hphi : \cS \rightarrow \Delta(\cS)$ is the mismatch function for decoder $\phi$ and emission $\psistar$, defined in \cref{def:h_phi}. %
  Note that $\cL$ has size $\log\abs*{\cL}\leq\log\abs{\cMlat}\abs{\Phi}$, and that we have
  \[
  	\cT^{\Mobsstar}_h(Q^{\Mlat,\star}_h \circ \phi_h)(x,a) = \cT^{\Mstarlat}_h(\hphihpo \circ V^{\Mlat,\star}_h)(\phistar_h(x),a)
  \]
  by \cref{lem:hphi-bellman}.
   By \cref{lem:covinvariance} we have that $\mu_{\Mobsstar,h}(x) = \psistarh(x \mid \phistarh(x)) \mu_{\Mstarlat,h}(\phistarh(x))$ is the coverability distribution for MDP $\Mstarobs$, and 
  \[
  	\En_{\mu_{\Mstarlat} \otimes \unif(\cA)}\brk*{f(s,a)} = \En_{\mu_{\Mobsstar} \otimes \unif(\cA)}\brk*{f(\phistar(x),a)}.
  \]
  Now, define 
  \begin{align*}
    \cG_{\sMlat,h} = \crl*{(x,a)\mapsto{}\clip\brk*{\tri*{\vphi_{\Mlat,h}(\phi(x),a),w}}\mid{}\phi\in\Phi,\nrm*{w}^2_2\leq{} 11+16\log\prn{\abs{\cMlat}\abs{\Phi}H}}.
  \end{align*}
  Recall the definition of $w_f$ (for $f :\cS \times \cA \rightarrow [0,1]$) from \cref{lem:jl_cov_push}, and note that by the norm bound $\max_{\ell \in \cL} \nrm{w_\ell}_2^2 \leq 11+16\log\prn{\abs{\cMlat}\abs{\Phi}H}$ given by \cref{lem:jl_cov_push}, we have $(x,a)\mapsto{}\tri*{\vphi_{\Mlat,h}(\phi(x),a),w_\ell} \in \cG_h$ for every $\ell \in \cL$. Next, note that by the norm bound $\max_{s,a} \nrm{\psi(s,a)}^2_2 \leq \Cpush(11+16\log\prn{\abs{\cS}\abs{\cA}H})$, given by \cref{lem:jl_cov_push}, we have every $g_h \in \cG_{\Mlat,h}$ satisfies $\nrm{g_h}_\infty \leq c\Cpush^{1/2}\log\prn*{\abs{\cMlat}\abs{\Phi}\abs{\cS}\abs{\cA}H} \coloneqq B$ for some absolute constant $c$. Therefore, $\cG_{\Mlat,h}$ has size \paedit{$\log\abs{\cG_{\Mlat,h}}\leq\bigoht\prn*{d \cdot \log(B) + \log\abs{\Phi}} = \bigoht\prn*{d \log\log(\abs{\cS}) + \log\abs{\Phi}}$}, where the $\bigoht$ notation ignores logarithmic factors of $\Cpush$, $\abs{\cA}$, $\log\abs{\cMlat}$, and $\log\abs{\Phi}$.\footnote{Formally, this requires a standard covering number argument; we omit the details.} Define $\cG_h = \cup_{\Mlat \in \cMlat} \cG_{\Mlat,h}$, which has size $\log\abs{\cG_h} \leq \log\abs{\Mlat}\dfedit{+}(\bigoht\prn*{d\log\log(\abs{\cS}) + \log\abs{\Phi}})$. Together, these results with \cref{lem:jl_cov_push} imply that for all $f_{h+1} \in\cF_{h+1}$, there exists $g_h \in \cG_{h}$ such that
  \begin{align*}
    \En_{\mu_{\Mobsstar,h} \otimes \unif(\cA)}\brk*{\prn*{g_h(x_h,a_h)
    - \brk*{\cT^{\Mobsstar}_{h}f_{h+1}}(x_h,a_h)}^2}\leq\vepsapx.
  \end{align*}

  This, in turn, implies that for all $\piobs \in \PiRNS$ we have
   \begin{align*}
    \En^{\piobs}\brk*{\prn*{g_h(x_h,a_h)
    - \brk*{\cT^{\sMstarobs}_{h}f_{h+1}}(x_h,a_h)}^2} \leq\Cpush \abs{\cA}\vepsapx,
  \end{align*}
  since $\mu_{\Mobsstar,h} \otimes \unif(\cA)$ satisfies coverability (\cref{def:coverability}) with parameter $\Ccov(\Mobsstar,\PiRNS) \leq \Cpush \abs{\cA}$ (\eqref{eq:coverability-pushforward-bound}). 
   
  Then, it follows by \cref{lem:golf-onpolicy} that if we run
  \cref{alg:golf} with the classes $\cF$ and $\cG$ we will get
       \begin{align*}
    \Reg
    &\leq{} H\sqrt{\Cpush\abs{\cA}{}T\log(\abs{\cMlat}\abs{\Phi}HT/\delta)(d\log\log(\abs{\cS})+\log\abs{\Phi})} + HT\sqrt{\Cpush^2\abs{\cA}^2\log(T)\vepsapx} \\
    &\leq{} H\sqrt{\Cpush^5\abs{\cA}{}T\log(\abs{\cMlat}\abs{\Phi}HT/\delta)\frac{\log(\Cpush^2\abs{\cMlat}\abs{\Phi}^2H\deltainv/\vepsapx)\log\log(\abs{\cS})}{\vepsapx}} + HT\sqrt{\Cpush^2\abs{\cA}^2\log(T)\vepsapx}
       \end{align*}

     Choosing $\vepsapx = \frac{1}{\sqrt{T}}$ to balance leads to
    \begin{align*}
          \Reg &\approxleq{} HT^{3/4}\sqrt{\Cpush^5\abs{\cA}{}\log(\abs{\cMlat}\abs{\Phi}HT/\delta)\log(\Cpush^2\abs{\cMlat}\abs{\Phi}^2H\deltainv T)\log\log(\abs{\cS})} + HT^{3/4}\sqrt{\Cpush^2\abs{\cA}^2\log(T)} \\
          &\lesssim HT^{3/4}\sqrt{\Cpush^5\abs{\cA}^2\log(\abs{\cMlat}\abs{\Phi}HT/\delta)\log(T\Cpush^2\abs{\cMlat}\abs{\Phi}^2H/\delta)\log\log(\abs{\cS})},
     \end{align*}
     which gives a risk bound of 
         \begin{align*}
          \Risk 
          &\lesssim \frac{1}{T^{1/4}}H\sqrt{\Cpush^5\abs{\cA}^2\log(\abs{\cMlat}\abs{\Phi}HT/\delta)\log(T\Cpush^2\abs{\cMlat}\abs{\Phi}^2H/\delta)\log\log(\abs{\cS})}.
     \end{align*}
     Equating this to $\veps$ gives a sample complexity of 
     \[
     T = \poly(\Cpush,A,H,\log\abs{\cMlat},\log\abs{\Phi}, \vepsinv,\logdelinv,\log\log(\abs{\cS})),
     \]
     as desired. 
    \paedit{We have not made much effort to optimize the rate; in particular, a faster rate is possible by using the \textsc{Golf.Dbr} algorithm of \citet{amortila2024mitigating}, which improves over the \textsc{Golf} algorithm when there is misspecification.}
\end{proof}

\newpage
\section{Proofs and Additional Information for \cref{sec:hindsight-rl}: Hindsight RL}\label{app:hindsight}
This appendix contains additional information and proofs related to algorithmic modularity under hindsight observations (\cref{sec:hindsight-rl}), and is organized as follows:

\begin{itemize}
\item \cref{app:implementing-hsightlearn} contains the pseudocode and proofs related to the online representation learning oracle \ExpWeights (\cref{lem:det-online-classifier}).
\item \cref{app:o2l-hindsight-proofs} contains the proof for our risk bound of the \olr algorithm under hindsight observability (\cref{thm:hindsightreduction}).
\end{itemize}

\subsection{Pseudocode and Proofs for \ExpWeights (\cref{lem:det-online-classifier})}\label{app:implementing-hsightlearn}

\begin{algorithm}[h]
    \begin{algorithmic}
		\State \textbf{input}: Decoder set $\Phi$
  \For{$t=1, 2, \cdots, T$} 
  	\State Get dataset $\crl*{x\ind{i}_h, \phistar(x\ind{i}_h)}_{i \in [t-1], h \in [H]}$
  	\For{$h=1,\ldots,H$}
  \State For $\phi \in \Phi$, compute 
  	\[
	q\ind{t}_h(\phi_h) \propto \exp\prn*{-\sum_{i=1}^{t-1}\indic\brk*{\phi_h(x\ind{i}_h) \neq \phistarh(x\ind{i}_h)} },
	\]
	\State and set 
	\begin{equation}\label{eq:derandom-expweights}
	\bar{\phi}\ind{t}_h(x) = \argmax_{s \in \cS} \bbP_{\phi_h \sim q\ind{t}_h}(\phi_h(x)=s).
	\end{equation}  
	\EndFor
	
    \State Return $\barphi\ind{t} = \crl{\barphi\ind{t}_h}_{h=1}^H$.
\EndFor
\end{algorithmic}
\caption{Derandomized Exponential Weights (\ExpWeights)}
\label{alg:derandomized-expweights}
\end{algorithm}

The main result for this estimator is the following.

\detonlineclassifier*

\begin{proof}[\pfref{lem:det-online-classifier}]
For each $h \in [H]$, consider the realizable online classification problem where $x\ind{t}_h \sim d^{\pi\ind{t}}_h$, for $\pi\ind{t}$ chosen adversarially, and $y\ind{t}_h = \phistarh(x\ind{t}_h)$. Consider the exponential weights estimator 
\[
	q\ind{t}_h(\phi) \propto \exp\prn*{-\sum_{i=1}^{t-1}\indic\brk*{\phi(x\ind{i}_h) \neq \phistarh(x\ind{i}_h)} }.
\]
For every sequence $(x\ind{t}_h)_{t=1}^T$, these %
distributions %
	satisfy the deterministic regret bound
\[
	\sum_{t=1}^T \En_{\hatphi\ind{t}_h \sim q\ind{t}_h} \brk*{\indic\brk*{\hatphi\ind{t}_h(x\ind{t}_h) \neq \phistarh(x\ind{t}_h)}} \leq  2\log|\Phi|,
      \]
by Corollary 2.3 of \citet{cesa2006prediction}. Taking conditional expectations over $x\ind{t}_h \sim d^{\pi\ind{t}}_h$ and using \cref{lem:multiplicative_freedman}
	gives that with probability at least $1-\delta$:
\[
	\sum_{t=1}^T \En_{\hatphi\ind{t}_h \sim q\ind{t}_h} \En^{\pi\ind{t}}\brk*{\indic\brk*{\hatphi\ind{t}_h(x_h) \neq \phistarh(x_h)}} \leq  4\log|\Phi| + 8\log(2\deltainv).
\]
Taking a union bound over $h \in [H]$ and summing over $h \in [H]$ we obtain that with probability at least $1-\delta$:
\[
	\sum_{t=1}^T  \sum_{h=1}^H \En_{\hatphi\ind{t}_h \sim q\ind{t}_h} \En^{\pi\ind{t}}\brk*{\indic\brk*{\hatphi\ind{t}_h(x_h) \neq \phistarh(x_h)}} \leq  4H\log|\Phi| + 8H\log(2H\deltainv).
\]

Now, recall that at each time $t$, we define the improper decoder $\barphi\ind{t}_h$ via:
\begin{equation}\label{eq:mode-feature}
	\bar{\phi}\ind{t}_h(x) = \argmax_{s \in \cS} \bbP_{\phi\ind{t}_h \sim q\ind{t}_h}(\phi\ind{t}_h(x)=s)
\end{equation}
Let $\ell_h(x_h,q\ind{t}_h) = \bbP_{\phi\ind{t}_h \sim q\ind{t}_h}(\phi\ind{t}_h(x_h) \neq \phistarh(x_h))$. Note that $\ell$ satisfies 
\begin{align}
	\sum_{t=1}^T \sum_{h=1}^H \En_{\phi\ind{t}_h \sim q\ind{t}_h}\En^{\pi\ind{t}} \brk*{\indic\brk*{\phi\ind{t}_h(x_h) \neq \phistarh(x_h)}} &= \sum_{t=1}^T \sum_{h=1}^H \En^{\pi\ind{t}} \En_{\phi\ind{t}_h \sim q\ind{t}_h}\brk*{\indic\brk*{\phi\ind{t}_h(x_h) \neq \phistarh(x_h)}}\\
		&= \sum_{t=1}^T \sum_{h=1}^H \En^{\pi\ind{t}}\brk{\ell_h(x_h,q\ind{t}_h)} \label{eq:bump}.
\end{align}
 By abuse of notation we also denote $\ell_h(x_h, \barphi_h) = \indic\brk*{\barphi_h(x) \neq \phistar(x)}$. We will show that
\begin{equation}\label{eq:boop}
 	\forall x, t, h: \ell_h(x_h,\barphi\ind{t}_h) \leq 2\ell_h(x_h,q\ind{t}_h),
\end{equation}
from which we will obtain that with probability at least $1-\delta$:
\[
	\Regrep(T) = \sum_{t=1}^T  \sum_{h=1}^H \En^{\pi\ind{t}}\brk*{\indic\brk*{\barphi\ind{t}_h(x_h) \neq \phistarh(x_h)}} \leq  8H\log|\Phi| + 16H\log(2H\deltainv).
\]
Integrating the high-probability regret bound gives
\[
	\En\brk*{\Regrep(T)} = \cO\prn*{H\log\prn{H|\Phi|}},
\]
as desired.
Towards establishing \eqref{eq:boop}, let us fix $x$ and let $s_{\max}$ denote the argmax in \eqref{eq:mode-feature}. There are two cases: 
\begin{itemize}
	\item $\bbP_{\phi\ind{t}_h \sim q\ind{t}_h}(\phi\ind{t}_h(x) = s_{\max}) \geq \frac{1}{2}$: 
	\begin{itemize}
		\item[$\rightarrow$] If $s_{\max} = \phi^\star(x)$, $\ell(x,\barphi\ind{t}_h) = 0$ so we are done. 
		\item[$\rightarrow$] Otherwise, $s_{\max} \neq \phistar(x)$ and we have $\ell(x,\barphi\ind{t}_h) = 1$. However, since $\phistar(x) \neq s_{\max}$ we have $\phi\ind{t}_h(x) = s_{\max} \implies \phi\ind{t}_h(x) \neq \phistarh(x)$ and so
		\[
			\bbP_{\phi\ind{t}_h \sim q\ind{t}_h}(\phi\ind{t}_h(x) \neq \phistarh(x)) \geq \bbP_{\phi\ind{t}_h \sim q\ind{t}_h}(\phi\ind{t}_h(x) = s_{\max}) \geq \frac{1}{2} = \frac{1}{2}\ell(x,\barphi\ind{t}_h).
		\]
	\end{itemize}
	\item $\bbP_{\phi\ind{t}_h \sim q\ind{t}_h}(\phi\ind{t}_h(x) = s_{\max}) < \frac{1}{2}$:
	\begin{itemize}
		\item[$\rightarrow$] If $s_{\max} = \phistarh(x)$, $\ell(x,\barphi\ind{t}_h) = 0$ so we are done. 
		\item[$\rightarrow$] Otherwise, $s_{\max} \neq \phistar(x)$ and we have $\ell(x,\barphi\ind{t}_h) = 1$. However, by definition of $s_{\max}$ as the mode we also have
		\[
			\bbP_{\phi\ind{t}_h \sim q\ind{t}_h}(\phi\ind{t}_h(x) = \phistarh(x)) \leq \bbP_{\phi\ind{t}_h \sim q\ind{t}_h}(\phi\ind{t}_h(x) = s_{\max}) < \frac{1}{2},
		\]
		so in particular we have 
		\[
			\ell(x,q\ind{t}_h) = \bbP_{\phi\ind{t}_h \sim q\ind{t}_h}(\phi\ind{t}_h(x) \neq \phistarh(x)) > \frac{1}{2} = \frac{1}{2}\ell(x,\barphi\ind{t}_h).
		\]
	\end{itemize}
\end{itemize}

\end{proof}

\subsection{Proofs for \olr Under Hindsight Observability (\cref{thm:hindsightreduction})}\label{app:o2l-hindsight-proofs}

\hindsightreduction*

\begin{proof}[\pfref{thm:hindsightreduction}]
Let $\prn{\hatphi\ind{t}}_{t \in [T]}$ denote the decoders chosen by $\Hsightlearn$, and let $\rho\ind{t}$ denote the distribution over decoders induced at time $t$ from the interaction of $\Hsightlearn, \Alglat, $ and $\Mstarobs$. Let $\pi\ind{t,k}_\obs \coloneqq \pi\ind{t,k}_\lat \circ \hatphi\ind{t}$ and $p\ind{t,k}_\obs$ denote the distribution over (observation-space) policies played at epoch $t$ and episode $k$, induced by the interaction of $\Hsightlearn, \Alglat$, and $\Mstarobs$. We adopt the notation $\pi\ind{t,K+1}_\lat \coloneqq \hatpi\ind{t}_\lat \sim p\ind{t,K+1}_\lat$ for the final policy output by \Alglat in epoch $t$ and $(x\ind{t,K+1}_h,a\ind{t,K+1}_h,r\ind{t,K+1}_h)$ for the trajectory collected from that (observation-level) policy $\hatpi\ind{t}_\lat \circ \hatphi\ind{t}$.  
We firstly note that by assumption, we have the guarantee
\begin{equation}\label{eq:teebs}
	\En\brk*{\sum_{t=1}^T \sum_{k=1}^{K+1} \sum_{h=1}^H \En_{\pi\ind{t,k}_\obs \sim p\ind{t,k}_\obs} \En^{\pi\ind{t,k}_\obs}\brk*{\indic\brk*{\hatphi\ind{t}_h(x_h) \neq \phistarh(x_h)}}} \leq (K+1) \Estrep(T) \leq 2K \Estrep(T).
\end{equation}
which follows by applying \cref{ass:online-classifier} to the distributions $\bar{p}\ind{t}_\obs = \frac{1}{(K+1)}\sum_{k=1}^{K+1} p\ind{t,k}_\obs$ and noting that 
\begin{align*}
&\sum_{t=1}^T \sum_{h=1}^H \frac{1}{K+1} \sum_{k=1}^{K+1} \En_{\pi\ind{t,k}_\obs \sim p\ind{t,k}_\obs}\En^{\pi\ind{t,k}_\obs}\brk*{\indic\brk*{\hatphi\ind{t}_h(x_h) \neq \phistarh(x_h)}} \\
	&\qquad = \sum_{t=1}^T \sum_{h=1}^H \En_{\bar{\pi}\ind{t}_\obs \sim \bar{p}\ind{t}_\obs}\En^{\bar{\pi}\ind{t}_\obs}\brk*{\indic\brk*{\hatphi\ind{t}_h(x_h) \neq \phistarh(x_h)}} \leq \Estrep(T).
\end{align*}
Let $\Risk(K,\Alglat,\phi,\Mstarobs) = J^{\Mstarobs}(\pistar_{\Mstarobs}) - J^{\Mstarobs}(\hatpilat \circ \phi)$ be the random variable denoting the risk of the final policy output by $\Alglat$ after $K$ rounds of interaction with $\Mstarobs$ when given feature $\phi$ in any epoch $t$. 
For any $\phi: \cX \rightarrow \cS$, let $\En_{\phi}$ denote the law over trajectories $(x\ind{k}_h,a\ind{k}_h,r\ind{k}_h)_{k \in [K+1],h \in [H]}$ and policies $(\pi\ind{k}_\lat \circ \phi)_{k \in [K+1]}$ generated after $K$ rounds of interaction when $\Alglat$ is given feature $\phi$ in any epoch. (Recall that, for all of the above definitions, a new instance of $\Alglat$ is initialized at every epoch, so we do not have to specify \emph{which epoch it is}, only the current feature $\phi$). %
	Finally, let $G_t$ be the ``good'' event
\[
	G_t = \crl*{ \forall k \in [K+1], \forall h \in [H]: \,\, \hatphi\ind{t}_h(x\ind{t,k}_h) = \phistarh(x\ind{t,k}_h)}.
      \]
   Recall that, in any round $t$, $\Alglat$ only observes the latent (``compressed'') trajectories $(\hatphi_h\ind{t}(x\ind{t,k}_h),a\ind{t,k}_h,r\ind{t,k}_h)$ as history for choosing policies. We can therefore conclude that, when $\hatphi\ind{t}(x\ind{t,k}_h) = \phistar(x\ind{t,k}_h)$ for all $k \in [K+1], h \in [H]$, the distribution over final policies $\hatpi\ind{t}_\lat$ chosen by $\Alglat$ will be identical as if we had chosen $\phistar$ as our decoder. In particular, this implies
\begin{align}
	\En_{\phihat\ind{t}}\brk*{\indic\crl*{G_t}\Risk(K,\Alglat,\hatphi\ind{t},\Mstarobs)} &= 	\En_{\phistar}\brk*{\indic\crl*{G_t}\Risk(K,\Alglat,\phistar,\Mstarobs)} \nonumber \\
		&\leq \Riskstar(K) \label{eq:caribou},
\end{align}
where the second line simply follows by removing the indicator function, recalling that $\Riskstar(K) = \En\brk*{\Risk(K,\Alglat,\Mstarlat)}$, and using that $ \Risk(K,\Alglat,\phistar,\Mstarobs) = \Risk(K,\Alglat,\Mstarlat)$. %

 Then, we have:
  \begin{align*}
\En\brk*{\Risk_\obs(TK)} &= \frac{1}{T}\sum_{t=1}^T \En_{\hatphi\ind{t} \sim \rho\ind{t}}\brk*{\En_{\hatphi\ind{t}}\brk*{\Risk(K,\Alglat,\hatphi\ind{t},\Mstarobs)}}\\
	&\leq \frac{1}{T}\sum_{t=1}^T \En_{\hatphi\ind{t} \sim \rho\ind{t}}\brk*{\En_{\hatphi\ind{t}}\brk*{\indic\crl*{G_t}\Risk(K,\Alglat,\hatphi\ind{t},\Mstarobs)}} \\
		&\qquad + \frac{1}{T}\sum_{t=1}^T  \En_{\hatphi\ind{t} \sim \rho\ind{t}}\brk*{\En_{\hatphi\ind{t}}\brk*{\indic\crl*{\neg G_t}}} \\
	&\leq \frac{1}{T}\sum_{t=1}^T \Riskstar(K) + \frac{1}{T}\sum_{t=1}^T \bbP(\neg G_t) \\
	&= \Riskstar(K) + \frac{1}{T}\sum_{t=1}^T \bbP(\neg G_t),
\end{align*}
where the first equality applies the tower rule for conditional expectation, the second equality applies linearity of conditional expectations and the upper bound $\Risk(K,\Alglat,\hatphi\ind{t},\Mstarobs) \leq 1$, and the third lines applies the upper bound \eqref{eq:caribou}.%
 It remains to bound the last term. Here, note that 
by a union bound,
\[
\bbP\prn{\neg G_t} \leq \En\brk*{\sum_{k=1}^{K+1} \sum_{h=1}^H  \En_{\pi\ind{t,k} \sim p\ind{t,k}}\En^{\pi\ind{t,k}}\indic\crl*{\hatphi\ind{t}(x\ind{t,k}_h) \neq \phistar(x\ind{t,k}_h)}},
\]
where we have used that trajectory $k$ in round $t$ is sampled from policy $\pi\ind{t,k}$, which is in turn sampled from $p\ind{t,k}$. Summing over $t$ and using the bound in \eqref{eq:teebs} concludes the proof.

\end{proof}

\newpage
\section{Proofs for \cref{sec:online-rl}: Self-Predictive Estimation}\label{app:online}

This appendix contains additional information and proofs related to algorithmic modularity under self-predictive estimation (\cref{sec:online-rl}), and is organized as follows:

\begin{itemize}
\item \cref{sec:implementing-replearn} contains the pseudocode and proofs related to the online representation learning oracle \SelfPred (\cref{lem:implementing-optim-replearn}).
\item \cref{app:olr-online-main-risk} contains the proof for our risk bound of the \olr algorithm under self-predictive estimation (\cref{thm:online-reduction-main}).
\end{itemize}

\subsection{Pseudocode and Proofs for \SelfPred (\cref{lem:implementing-optim-replearn})}\label{sec:implementing-replearn}

The pseudocode for our self-predictive estimation procedure is given in \cref{alg:debiased-mle-optimistic}.
\begin{algorithm}[h]
    \begin{algorithmic}[1]
	\State \textbf{input}: Decoder set $\Phi$, Latent model class $\cMlat$, Mismatch-complete class $\cLlat$, Optimism parameter $\gamma$
	\State Set $\beta \coloneqq \frac{1}{2}\sqrt{\nicefrac{\Ccov H \log(T)}{T}}$
 \For{$t=1, 2, \cdots, T$} 
	\State Get dataset $\cD\ind{t} = \crl{x\ind{i}_h,a\ind{i}_h,r\ind{i}_h,x\ind{i}_{h+1}}_{i\in[t-1],h\in[H]}$
  \State Compute 
\begin{align}\label{eq:debiased-mle-optimistic}
\wh{M}\ind{t} \circ \hatphi\ind{t} = \argmax_{\brk{M \circ \phi} \in \cMlat \circ \Phi}  \Bigg\{ &(\gamma\beta)^{-1} J_M(\pi_M) + \sum_{h=1}^H \sum_{i=1}^n\log\prn*{\brk*{M_h \circ \phi_h}(r\ind{i}_{h},\phi_{h+1}(x\ind{i}_{h+1})\mid x\ind{i}_h,a\ind{i}_h)} \\
&\quad-\max_{\brk{M' \circ \phi'} \in \cL_{\lat} \circ \Phi} \sum_{i=1}^n \log\prn*{\brk*{M'_h \circ \phi'_h}(r\ind{i}_{h},\phi_{h+1}(x\ind{i}_{h+1})\mid x\ind{i}_h,a\ind{i}_h)} \Bigg\}. \label{eq:debiasing}
\end{align}
\State Return $\hatphi\ind{t} = \crl*{\hatphi\ind{t}_h}_{h \in [H]}$.
\EndFor
\end{algorithmic}
\caption{Optimistic Self-Predictive Latent Model Estimation (\SelfPred)}
\label{alg:debiased-mle-optimistic}
\end{algorithm}

Our main result concerning the \SelfPred estimator for online optimistic self-predictive estimation is the following. We recall our notation for the instantaneous self-prediction error 
\[
\brk*{\Delta_h(\Mlat, \phi)}(x_h,a_h) \coloneqq \Dhels{M\sub{\lat,h}(\phi\sub{h}(x_h),a_h)}{\brk[\big]{\phi\sub{h+1}\sharp M^\star_{\obs,h}}(x_h,a_h)}.
\]
 
\optreplearn*

\begin{proof}[\pfref{lem:implementing-optim-replearn}]
We will firstly establish that the algorithm obtains low \textit{offline} estimation error.
\begin{lemma}[\SelfPred attains low offline estimation error]\label{lem:debiased-mle-offline}
	For any $\gamma > 0$, under decoder realizability ($\phistar \in \Phi$), model realizability ($\Mstarlat \in \cMlat$), and mismatch function completeness with class $\cLlat$ (\cref{ass:hphi-completeness}), the estimator in \cref{alg:debiased-mle-optimistic} with inputs $\Phi$, $\Mlat$, $\cLlat$, and $\gamma$ satisfies that for all $t \in [T]$, with probability at least $1-\delta$, 
\begin{align}
\sum_{h=0}^H \sum_{i=1}^{t-1} &\En_{\pi\ind{i} \sim p\ind{i}}\En^{\pi\ind{i}}\brk*{\brk{\Delta_h(\wh{M}\ind{t},\hatphi\ind{t})}(x_h,a_h)} + \gammainv\prn*{J^{\Mstarlat}(\pi_{\Mstarlat}) - J^{\wh{M}\ind{t}}(\pi_{\wh{M}\ind{t}})} \nonumber  \\
	&\qquad \leq \cO\prn*{\log(\abs{\cMlat}\abs{\cL_\lat}\abs{\Phi}HT\deltainv)}. \label{eq:pulsar}
\end{align}
\end{lemma}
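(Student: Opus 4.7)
The plan is to reduce the analysis of the debiased optimistic MLE estimator \eqref{eq:debiased-mle-optimistic} to a standard MLE concentration argument, via the key observation that mismatch completeness makes every pushforward model $\phi\sharp\Mstarobs$ realizable within $\cLlat\circ\Phi$.

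First, I would exploit the optimality of $(\wh M\ind{t}, \hatphi\ind{t})$ by comparing the objective value at $(\wh M\ind{t}, \hatphi\ind{t})$ to its value at the realizable pair $(\Mstarlat,\phistar)\in\cMlat\times\Phi$. Letting
\[
A(M,\phi) \coloneqq \sum_{h,i<t}\log\brk*{M\circ\phi}_h\prn*{r^{(i)}_h,\phi_{h+1}(x^{(i)}_{h+1})\mid x^{(i)}_h,a^{(i)}_h} - \max_{L\circ\phi'\in\cLlat\circ\Phi}\sum_{h,i<t}\log\brk*{L\circ\phi'}_h\prn*{r^{(i)}_h,\phi_{h+1}(x^{(i)}_{h+1})\mid x^{(i)}_h,a^{(i)}_h}
\]
denote the debiased score, this rearranges to
\[
(\gamma\beta)^{-1}\prn*{J^{\Mstarlat}(\pi_{\Mstarlat}) - J^{\wh M\ind{t}}(\pi_{\wh M\ind{t}})} \leq A(\wh M\ind{t}, \hatphi\ind{t}) - A(\Mstarlat, \phistar).
\]

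Second, and most crucially, I would invoke mismatch completeness (\cref{ass:hphi-completeness}) together with \cref{lem:phi-compressed-model-realizable}: for every $\phi\in\Phi$, the model $[\hphi\circ\Mstarlat]\circ\phistar$ lies in $\cLlat\circ\Phi$ \emph{and} equals the true pushforward conditional $\phi_{h+1}\sharp\Mstarobs_h$. Plugging this choice in as a feasible point of the inner maximum of $A(\wh M\ind{t}, \hatphi\ind{t})$ turns that debiased quantity into a \emph{standard} (non-debiased) log-likelihood ratio between $\wh M\ind{t}_h\circ\hatphi\ind{t}_h$ and the true pushforward conditional $\hatphi\ind{t}_{h+1}\sharp\Mstarobs_h$. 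Symmetrically, $-A(\Mstarlat,\phistar)$ is a one-sided log-likelihood deviation (the inner max is at most $\sum\log$ of the true pushforward for $\phistar$, which equals $\Mstarlat\circ\phistar$), bounded above by $\cO(\log(\abs{\cLlat}\abs{\Phi}H/\delta))$ via a log-sum-exp argument uniform over $\cLlat\circ\Phi$ after applying \cref{lem:log-exp-concentration}.

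Third, I would apply the classical MLE chain: the exponential moment identity $\log\En\brk*{\sqrt{q/p}}\leq -\tfrac{1}{2}\Dhelshort(q,p)$ combined with \cref{lem:log-exp-concentration} and a union bound over $\cMlat\times\Phi\times[H]$ yields
\[
A(\wh M\ind{t}, \hatphi\ind{t}) \leq -c\sum_{h,i<t}\En_{\pi\ind{i}\sim p\ind{i}}\En^{\pi\ind{i}}\brk*{\Delta_h(\wh M\ind{t}, \hatphi\ind{t})(x_h,a_h)} + \cO\prn*{\log\prn*{\abs{\cMlat}\abs{\Phi}HT/\delta}},
\]
where I use that $\Dhelshort\prn*{\wh M\ind{t}_h(\hatphi\ind{t}_h(x_h),a_h),\ \hatphi\ind{t}_{h+1}\sharp\Mstarobs_h(x_h,a_h)}$ is exactly $\Delta_h(\wh M\ind{t},\hatphi\ind{t})(x_h,a_h)$. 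Combining this with the optimality inequality and the bound on $-A(\Mstarlat,\phistar)$ gives
\[
(\gamma\beta)^{-1}\brk*{J^{\Mstarlat}(\pi_{\Mstarlat}) - J^{\wh M\ind{t}}(\pi_{\wh M\ind{t}})} + c\sum_{h,i<t}\En^{\pi\ind{i}}\brk*{\Delta_h(\wh M\ind{t}, \hatphi\ind{t})} \leq \cO\prn*{\log(\abs{\cMlat}\abs{\cLlat}\abs{\Phi}HT/\delta)}.
\]
The passage from $(\gamma\beta)^{-1}$ to $\gamma^{-1}$ stated in \eqref{eq:pulsar} is immediate when $J^{\Mstarlat}(\pi_{\Mstarlat}) \geq J^{\wh M\ind{t}}(\pi_{\wh M\ind{t}})$ (since $\beta\leq 1$); in the opposite ``optimistic'' case the residual slack $\gamma^{-1}(\beta^{-1}-1)\abs{J^{\Mstarlat}(\pi_{\Mstarlat}) - J^{\wh M\ind{t}}(\pi_{\wh M\ind{t}})}$ is controlled by $\abs{J^{\Mstarlat}(\pi_{\Mstarlat}) - J^{\wh M\ind{t}}(\pi_{\wh M\ind{t}})}\leq 1$ and absorbed into the $\cO(\log)$ constant via the algorithm's choice $\beta = \tfrac{1}{2}\sqrt{\Ccovs H\log(T)/T}$.

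The main obstacle is the second step: the target $\hatphi\ind{t}_{h+1}(x^{(i)}_{h+1})$ of the log-likelihood depends on the very estimator being analyzed, which breaks any textbook MLE argument applied directly. The debiasing in \eqref{eq:debiased-mle-optimistic} combined with mismatch completeness is precisely what rescues a standard analysis---and is what motivates the exact form of \cref{ass:hphi-completeness}.
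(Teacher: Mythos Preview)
Your approach is correct and essentially mirrors the paper's proof: both use the ERM property of the debiased estimator, mismatch completeness (via \cref{lem:phi-compressed-model-realizable}) to realize the pushforward target $\phi\sharp\Mstarobs$ inside $\cLlat\circ\Phi$, and the standard MLE exponential-moment bound to convert log-likelihood ratios into Hellinger errors. The paper packages the last step through the decoupling lemma of \citet{agarwal2020flambe} (Lemma~24 there) whereas you invoke \cref{lem:log-exp-concentration} directly, but these are equivalent for the present purpose.

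The one genuine gap is your final paragraph. Your claim that the residual $\gamma^{-1}(\beta^{-1}-1)\abs{J^{\Mstarlat}(\pi_{\Mstarlat})-J^{\wh M\ind{t}}(\pi_{\wh M\ind{t}})}$ is absorbed into the $\cO(\log)$ constant is incorrect: with $\beta = \tfrac{1}{2}\sqrt{\Ccovs H\log(T)/T}$ one has $\gamma^{-1}\beta^{-1}\asymp \gamma^{-1}\sqrt{T}$, which is not $\cO(\log)$ for general $\gamma$. This issue is not a defect of your strategy but rather reflects a notational inconsistency in the paper: the algorithm's objective \eqref{eq:debiased-mle-optimistic} uses weight $(\gamma\beta)^{-1}$, the paper's own proof of \cref{lem:debiased-mle-offline} silently writes $\gamma^{-1}/2$ at the ERM step, and the downstream use of this lemma (inside the proof of \cref{lem:implementing-optim-replearn}) actually requires the bound with coefficient $(\gamma\beta)^{-1}$, since there $\eta\alpha/2=\beta$. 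The clean resolution is to state and prove the lemma with the same coefficient $(\gamma\beta)^{-1}$ that appears in the algorithm; then no passage is needed and your argument is complete at the end of your third step.
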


Given this result, we can appeal to offline-to-online conversions to establish the final result. Let $\Ccov \coloneqq \Ccov(\Mobsstar,\Pilat \circ \Phi)$ denote the (state-action) coverability coefficient in $\Mobsstar$ over the set of policies $\Pilat \circ \Phi$. Note that by \cref{lem:covinvariance} we have $\Ccovs(\Mobsstar, \Pilat \circ \Phi) = \Ccovs$ and therefore by \cref{lem:state-action-cov-state-cov} we have $\Ccov(\Mobsstar,\Pilat \circ \Phi) \leq \Ccovs \abs{\cA}$. Let $\eta > 0$ be a parameter to be chosen later, and $\beta_\texttt{off} = \cO\prn*{\log(\abs{\cMlat}\abs{\cL_\lat}\abs{\Phi}HT\deltainv)}$ be the offline estimation error guaranteed by \cref{lem:debiased-mle-offline}. We abbreviate $\alpha \coloneqq \sqrt{\Ccov H \log(T)}$, $\En^{p\ind{t}}\brk{\cdot} \coloneqq \En_{\pi\ind{t} \sim p\ind{t}}\En^{\pi\ind{t}}\brk{\cdot}$, and $\En^{\wt{p}\ind{t}} \coloneqq \sum_{i=1}^{t-1} \En_{\pi\ind{i} \sim p\ind{i}}\En^{\pi\ind{i}}\brk{\cdot}$. Then, we have:
\begin{align*}
&\sum_{t=1}^T \sum_{h=1}^H \En^{p\ind{t}}\brk*{\brk{\Delta_h(\wh{M}\ind{t},\hatphi\ind{t})}(x_h,a_h)} + \gammainv\prn*{J^{\Mstarlat}(\pi_{\Mstarlat}) - J^{\wh{M}\ind{t}}(\pi_{\wh{M}\ind{t}})}\\
	&\leq \alpha\sqrt{\sum_{t=1}^T \sum_{h=1}^H \En^{\wt{p}\ind{t}}\brk*{\brk{\Delta_h(\wh{M}\ind{t},\hatphi\ind{t})}(x_h,a_h)}} + \cO(H\Ccov) + \gammainv\sum_{t=1}^T \prn*{J^{\Mstarlat}(\pi_{\Mstarlat}) - J^{\wh{M}\ind{t}}(\pi_{\wh{M}\ind{t}})} \\
	&\leq \alpha\prn*{ \frac{\eta}{2} \sum_{t=1}^T \sum_{h=1}^H \En^{\wt{p}\ind{t}}\brk*{\brk{\Delta_h(\wh{M}\ind{t},\hatphi\ind{t})}(x_h,a_h)} + \frac{1}{2\eta}} + \gammainv\sum_{t=1}^T \prn*{J^{\Mstarlat}(\pi_{\Mstarlat}) - J^{\wh{M}\ind{t}}(\pi_{\wh{M}\ind{t}})} + \cO(H\Ccov)
\end{align*}
where in the first inequality we have used \cref{lem:offline-to-online} with $g\ind{t}_h = \Delta_h(\wh{M}\ind{t},\hatphi\ind{t})$ and in the second inequality we have used the AM-GM inequality with parameter $\eta$. Collecting terms, we proceed via:
\begin{align*}
	&= \frac{\alpha\eta}{2} \sum_{t=1}^T \prn*{\sum_{h=1}^H \En^{\wt{p}\ind{t}}\brk*{\Delta_h(\wh{M}\ind{t},\hatphi\ind{t})(x_h,a_h)} + (\frac{\gamma \eta \alpha}{2})^{-1} \prn*{J^{\Mstarlat}(\pi_{\Mstarlat}) - J^{\wh{M}\ind{t}}(\pi_{\wh{M}\ind{t}})}} + \frac{\alpha}{2\eta} + \cO(H\Ccov)\\
	&\leq \frac{\alpha\eta}{2} T \beta_{\texttt{off}}  + \frac{\alpha}{2\eta} + \cO(H\Ccov)\\
	&\leq \cO\prn*{\sqrt{\Ccovs\abs{\cA} H \log(T) T} \beta_{\texttt{off}} + H\Ccovs\abs{\cA}}\\
	&\leq \cO\prn*{\sqrt{H\Ccovs \abs{\cA} T\log(T)}\log(\abs{\cMlat}\abs{\cL_\lat}\abs{\Phi}HT\deltainv)},
\end{align*}

where in the first inequality we have used \cref{lem:debiased-mle-offline} and the definition of $\gamma$ in \cref{alg:debiased-mle-optimistic} (cf. \eqref{eq:debiased-mle-optimistic}) and in the second inequality we have chosen $\eta = \nicefrac{1}{\sqrt{T}}$ to balance the terms and used the bound $\Ccov \leq \Ccovs\abs{\cA}$.
We convert to an expected regret bound by picking $\delta$ appropriately, which gives the final result. It remains to show \cref{lem:debiased-mle-offline}.
\end{proof}

\begin{proof}[\pfref{lem:debiased-mle-offline}]
Fix an iteration $t \in [T]$, and abbreviate $\wh{M} \coloneqq \wh{M}\ind{t}$ and $\hatphi \coloneqq \hatphi\ind{t}$. We follow the analysis of maximum likelihood estimation from \citet{Sara00,zhang2006from,agarwal2020flambe}. In particular, we quote Lemma 24 of \cite{agarwal2020flambe}, which in an abstract conditional estimation framework with density class $\cF$ states the following. 

\begin{lemma}[Lemma 24 of \citet{agarwal2020flambe}]
Let $D=\{(x_i,y_i)\}$ be a dataset collected with $x_i \sim p\ind{i}(x_{1:i-1},y_{1:i-1})$ and $y_i \sim f^\star(\cdot \mid x_i)$, $L(f,D) = \sum_{i=1}^n \ell(f,(x_i,y_i))$ be any loss function that decomposes additively, $\wh{f} : D \rightarrow \cF$ be an estimator, $D'$ be a tangent sequence $D' = \{(\wt{x}_i,\wt{y}_i)\}$ sampled independently via $\wt{x}_i \sim p\ind{i}(x_{1:i-1},y_{1:i-1})$ and $\wt{y}_i \sim f^\star(\cdot \mid \wt{x}_i)$. Then, with probability at least $1-\delta$, we have
\begin{equation}\label{eq:decoupling}
	-\log\En_{D'} \exp\prn*{L(\wh{f}(D),D')} \leq -L(\wh{f}(D),D) + \log(\abs{\cF}\deltainv),
\end{equation} %
\end{lemma}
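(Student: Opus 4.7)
The plan is to prove this by the classical multiplicative-martingale exponential moment method, combined with a union bound over $\cF$. For each fixed $f \in \cF$, the key object is the per-step likelihood ratio
\begin{equation*}
Z_i(f) \ldef \frac{\exp\prn*{\ell(f, (x_i, y_i))}}{\En\brk*{\exp\prn*{\ell(f, (\wt{x}_i, \wt{y}_i))} \mid D_{<i}}},
\end{equation*}
where $(\wt{x}_i, \wt{y}_i)$ is a fresh tangent draw and $D_{<i} = \crl*{(x_j, y_j)}_{j<i}$.

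The argument proceeds in three steps. First, I would show the mean-one property $\En[Z_i(f) \mid D_{<i}] = 1$; this follows immediately from the tangent construction, which by definition gives $(x_i, y_i)$ and $(\wt{x}_i, \wt{y}_i)$ the same conditional law given $D_{<i}$, so the numerator's conditional expectation equals the denominator. Second, multiplying these ratios, $\prod_{i=1}^n Z_i(f)$ is a nonnegative martingale with unit expectation; using the conditional independence of the tangent samples across $i$ given $D$, the product of denominators telescopes into $\En_{D'}[\exp(L(f, D'))]$, giving
\begin{equation*}
\prod_{i=1}^n Z_i(f) \;=\; \frac{\exp\prn*{L(f, D)}}{\En_{D'}\brk*{\exp\prn*{L(f, D')}}}.
\end{equation*}
Third, Markov's inequality combined with a union bound over $f \in \cF$ yields $\prod_i Z_i(f) \leq \abs{\cF}/\delta$ simultaneously for all $f \in \cF$ with probability at least $1-\delta$; applying this uniform bound to $\wh{f}(D) \in \cF$, taking logarithms, and rearranging recovers the stated inequality.

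The main subtlety is the factorization used in the second step: one must carefully exploit the fact that each tangent pair $(\wt{x}_i, \wt{y}_i)$ depends only on the \emph{original} history $(x_{<i}, y_{<i})$ (and not on any other tangent variables), so that conditional on $D$ the tangent variables are mutually independent, which together with the additive decomposition of $L$ justifies interchanging the product and the tangent expectation. Once this decoupling point is in place, the remainder of the proof is a standard martingale-plus-union-bound argument.
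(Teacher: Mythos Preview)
Your proof is correct and follows the standard exponential-martingale plus union-bound argument that underlies this decoupling inequality. Note, however, that the paper does not itself prove this lemma: it is quoted verbatim as Lemma~24 of \citet{agarwal2020flambe} and used as a black box inside the proof of \cref{lem:debiased-mle-offline}, so there is no in-paper proof to compare against. Your three steps---the mean-one property of $Z_i(f)$ via the identical conditional law of $(x_i,y_i)$ and $(\wt{x}_i,\wt{y}_i)$ given $D_{<i}$, the factorization of $\En_{D'}[\exp(L(f,D'))]$ into the product of per-step conditional expectations using the conditional independence of the tangent samples given $D$, and the Markov/union-bound conclusion---are exactly the standard route (originating with \citet{zhang2006from} and reproduced in \citet{agarwal2020flambe}), and the subtlety you flag about the tangent factorization is precisely the point that needs care.
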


For our purposes, we have that $\cF = \cMlat \circ \Phi$, the data distribution is collected adaptively (for each $h \in [H]$) via $\pi\ind{i} \sim  p\ind{i}$, $x\ind{i}_h, a\ind{i}_h \sim d^{\Mstarobs,\pi\ind{i}}_h$, and $r\ind{i}_h, x\ind{i}_{h+1} \sim \Mstarobs(\cdot \mid x\ind{i}_h,a\ind{i}_h)$. For the loss function $L$, we take
\begin{align*}
L((M,\phi),D) &= -\sum_{h=0}^H\sum_{i=1}^t\log(\frac{\Mstarobs(r\ind{i}_{h+1},\phi_{h+1}(x\ind{i}_{h+1})\mid x\ind{i}_h,a\ind{i}_h)}{\brk{M_h\circ\phi_h}(r\ind{i}_h,\phi_{h+1}(x\ind{i}_{h+1}) \mid x\ind{i}_h,a\ind{i}_h)}) -\frac{\gammainv}{2}(J^{\Mstarlat}(\pi_{\Mstarlat}) - J^M(\pi_M)).
\end{align*}
We begin by upper bounding the quantity $-L((\wh{M}, \hatphi)(D),D)$ appearing on the \rhs of \eqref{eq:decoupling}, or equivalently lower bounding $L((\wh{M}, \hatphi)(D),D)$. Let us abbreviate $\wh{V} = J^{\wh{M}}(\pi_{\wh{M}})$ and  $V^\star = J^{\Mstarlat}(\pi_{\Mstarlat})$. Towards this, note that %
\begin{align*}
L((\wh{M}, \hatphi)(D),D) &=\sum_{h=0}^H\sum_{i=1}^t \log\prn*{\brk*{\wh{M}_h\circ\hatphi_h}(r\ind{i}_h,\hatphi_{h+1}(x\ind{i}_{h+1}) \mid x\ind{i}_h,a\ind{i}_h)} \\
	 &\quad- \sum_{h=0}^H\sum_{i=1}^t\log\prn*{\Mstarobs(r\ind{i}_h,\wh{\phi}_{h+1}(x\ind{i}_{h+1})\mid x\ind{i}_h,a\ind{i}_h)} + \frac{\gammainv}{2}(\wh{V} - V^\star) \\
	   &\geq 	 \sum_{h=0}^H\sum_{i=1}^t \log\prn*{\brk*{\wh{M}_h\circ\hatphi_h}(r\ind{i}_h,\hatphi_{h+1}(x\ind{i}_{h+1}) \mid x\ind{i}_h,a\ind{i}_h)} \\
	   &\quad- \sum_{h=0}^H\max_{\brk{M'\circ\phi'} \in \cL_\lat \circ \Phi} \sum_{i=1}^t\log\prn*{\brk*{M'_h \circ \phi'_h}(r\ind{i}_h,\wh{\phi}_{h+1}(x\ind{i}_{h+1})\mid x\ind{i}_h,a\ind{i}_h)} + \frac{\gammainv}{2}(\wh{V} - V^\star) \\
	  & \geq \sum_{h=0}^H \sum_{i=1}^t \log\prn*{\brk*{\Mstarlath\circ\phistarh}(r\ind{i}_h,\phi^\star_{h+1}(x\ind{i}_{h+1}) \mid x\ind{i}_h,a\ind{i}_h)} \\
	  &\quad - \sum_{h=0}^H \max_{\brk{M'\circ\phi'} \in \cL_\lat \circ \Phi} \sum_{i=1}^t\log\prn*{\brk*{M'_h \circ \phi'_h}(r\ind{i}_h,\phi^\star_{h+1}(x\ind{i}_{h+1})\mid x\ind{i}_h,a\ind{i}_h)} + \frac{\gammainv}{2}(V^\star - V^\star) \\
	  	  &= \sum_{h=0}^H \sum_{i=1}^t \log\prn*{\brk*{\Mstarlath\circ\phistarh}(r\ind{i}_h,\phi^\star_{h+1}(x\ind{i}_{h+1}) \mid x\ind{i}_h,a\ind{i}_h)} \\
	  &\quad - \sum_{h=0}^H \max_{\brk{M'\circ\phi'} \in \cL_\lat \circ \Phi} \sum_{i=1}^t\log\prn*{\brk*{M'_h \circ \phi'_h}(r\ind{i}_h,\phi^\star_{h+1}(x\ind{i}_{h+1})\mid x\ind{i}_h,a\ind{i}_h)},  
\end{align*}
where in the second line we have used \cref{lem:phi-compressed-model-realizable} with \cref{ass:hphi-completeness} and in the third line we have used the ERM property of $\wh{M} \circ \hatphi$ together with decoder and model realizability. We claim that this implies 
\begin{equation}\label{eq:mle-concentration-lb}
L((\wh{M}, \hatphi)(D),D) \geq -\log(\abs{\cL_\lat \circ\Phi}H\deltainv)
\end{equation}
by concentration. Indeed, for each $h \in [H],i \in [t],$ and $\brk{M' \circ \phi'} \in \cLlat \circ \Phi$, let

\[
 Z^{\brk{M'\circ\phi'}}_{i,h} = -\frac{1}{2}\log(\frac{\Mstarobs(r\ind{i}_h,\phistar_{h+1}(x\ind{i}_{h+1}) \mid x\ind{i}_h,a\ind{i}_h)}{\brk{M' \circ \phi'}(r\ind{i}_h,\phistar_{h+1}(x\ind{i}_{h+1}) \mid x\ind{i}_h,a\ind{i}_h)})
\]

Applying \cref{lem:log-exp-concentration}, we have that
\begin{align}
	&\sum_{i=1}^t\log(\frac{\Mstarobs(r\ind{i}_h,\phistar_{h+1}(x\ind{i}_{h+1}) \mid x\ind{i}_h,a\ind{i}_h)}{\brk{M' \circ \phi'}(r\ind{i}_h,\phistar_{h+1}(x\ind{i}_{h+1}) \mid x\ind{i}_h,a\ind{i}_h)})\nonumber \\
	&\geq \sum_{i=1}^t -2\log( \En_{\pi\ind{i}\sim p\ind{i}}\En^{\pi\ind{i}}\brk*{\exp\prn*{-\frac{1}{2}\log\prn*{\frac{\Mstarobs(r\ind{i}_h,\phistar_{h+1}(x\ind{i}_{h+1}) \mid x\ind{i}_h,a\ind{i}_h)}{\brk{M' \circ \phi'}(r\ind{i}_h,\phistar_{h+1}(x\ind{i}_{h+1}) \mid x\ind{i}_h,a\ind{i}_h)}}}})  - \logdelinv \label{eq:tantdamourperdu},
\end{align}
with probability at least $1-\delta$, where %
we have recalled that data is gathered adaptively according to $\pi\ind{i} \sim p\ind{i}$.%
We now quote the following lemma from \citet{zhang2006from,agarwal2020flambe}.

\begin{lemma}[Lemma 25 of \citet{agarwal2020flambe}]\label{lem:log-exp-stuff-unperturbed}
	For any $\cD \in \Delta(\cX)$ and $p,q \in [\cX \rightarrow \Delta(\cY)]$, we have
	\[
	-2\log \En_{x \sim \cD, y \sim q(\cdot \mid x)} \exp(-\frac{1}{2} \log(\nicefrac{q(y \mid x)}{p(y \mid x)})) \geq \En_{x \sim \cD}\brk*{\Dhels{q(\cdot \mid x)}{p(\cdot \mid x)}}
	\]
\end{lemma}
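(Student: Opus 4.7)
My plan is to manipulate the left-hand side into a form involving the Bhattacharyya/Hellinger affinity $\sum_y \sqrt{p(y\mid x)q(y\mid x)}$, and then to use the elementary inequality $-\log(1-z) \geq z$ to convert this into the claimed lower bound in squared Hellinger distance. The argument is very short and purely algebraic; there is no genuine obstacle, only bookkeeping.

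Concretely, I would first rewrite the exponential of the log as a square root, so that the inner expectation over $y \sim q(\cdot\mid x)$ becomes
\[
\En_{y \sim q(\cdot \mid x)}\sqrt{p(y \mid x)/q(y \mid x)} = \sum_{y} \sqrt{p(y\mid x)\, q(y\mid x)}.
\]
Next, I would apply the standard identity relating the Hellinger affinity to squared Hellinger distance, namely
\[
\sum_y \sqrt{p(y\mid x)\, q(y\mid x)} = 1 - \tfrac{1}{2}\Dhels{q(\cdot \mid x)}{p(\cdot \mid x)},
\]
which follows by expanding $\sum_y (\sqrt{p}-\sqrt{q})^2$ and using that $p,q$ are probability mass functions summing to $1$.

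With these rewrites the left-hand side becomes
\[
-2\log \En_{x \sim \cD}\bigl[1 - \tfrac{1}{2}\Dhels{q(\cdot \mid x)}{p(\cdot \mid x)}\bigr].
\]
The final step is to apply Jensen's inequality to pull the logarithm inside the outer expectation (using concavity of $\log$, in the direction $-\log \En[Z] \geq -\En[\log Z]$ for nonnegative $Z$), and then invoke $-\log(1-z) \geq z$ for $z \in [0,1]$ with $z = \tfrac{1}{2}\Dhels{q(\cdot \mid x)}{p(\cdot \mid x)} \in [0,1]$. Combining yields
\[
-2\log \En_{x \sim \cD}\bigl[1 - \tfrac{1}{2}\Dhelshort\bigr] \;\geq\; 2\,\En_{x \sim \cD}\bigl[\tfrac{1}{2}\Dhels{q(\cdot \mid x)}{p(\cdot \mid x)}\bigr] \;=\; \En_{x \sim \cD}\bigl[\Dhels{q(\cdot \mid x)}{p(\cdot \mid x)}\bigr],
\]
which is the desired inequality. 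The only subtle point is the direction of Jensen's inequality and ensuring that $\tfrac{1}{2}\Dhelshort \in [0,1]$ so that $-\log(1-z) \geq z$ applies; both are routine since squared Hellinger distance between probability measures lies in $[0,2]$.
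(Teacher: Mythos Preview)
Your rewrite of the inner expectation as the Hellinger affinity and the identification with $1 - \tfrac{1}{2}\Dhelshort$ are fine, but the Jensen step is in the wrong direction. Concavity of $\log$ gives $\log \En[Z] \geq \En[\log Z]$, hence $-\log \En[Z] \leq -\En[\log Z]$, which is the opposite of what you claim. So you cannot pull the logarithm inside the expectation this way.

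The fix is simply to drop Jensen altogether: apply the scalar inequality $-\log u \geq 1 - u$ (equivalently, $-\log(1-z) \geq z$) directly to $u = \En_{x \sim \cD}\bigl[1 - \tfrac{1}{2}\Dhels{q(\cdot\mid x)}{p(\cdot\mid x)}\bigr]$, and then use linearity of expectation to get $2(1 - u) = \En_{x \sim \cD}\bigl[\Dhels{q(\cdot\mid x)}{p(\cdot\mid x)}\bigr]$. This is exactly what the paper does: it applies $\log(x) \leq x - 1$ to the full expectation $\En_{x,y}\sqrt{p/q}$ in one shot, and then recognizes $2\bigl(1 - \En_{y\sim q}\sqrt{p/q}\bigr)$ as $\Dhelshort$ pointwise in $x$. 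No Jensen is needed or used.
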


\begin{proof}[\pfref{lem:log-exp-stuff-unperturbed}]
We include the proof for completeness. The result follows via the following steps.
	\begin{align*}
	-2\log \En_{x \sim \cD, y \sim q(\cdot \mid x)} \exp(-\frac{1}{2} \log(\nicefrac{q(y \mid x)}{p(y \mid x)}))	&= -2\log \En_{x \sim \cD, y \sim q(\cdot \mid x)} \sqrt{\nicefrac{p(y \mid x)}{q(y \mid x)}} \\
		&\geq 2\prn*{1 - \En_{x \sim \cD, y \sim q(\cdot \mid x)} \sqrt{\nicefrac{p(y \mid x)}{q(y \mid x)}}} \tag{$\forall x:\log(x) \leq x-1$} \\
		&= \En_{x \sim \cD}\brk*{2\prn*{1 - \En_{y \sim q(\cdot \mid x)} \sqrt{\nicefrac{p(y \mid x)}{q(y \mid x)}}}} \\
		&= \En_{x \sim \cD}\brk*{\Dhels{p(\cdot \mid x)}{q(\cdot \mid x)}}
	\end{align*}
\end{proof}

By \cref{lem:log-exp-stuff-unperturbed}, we have that the right-hand-side of \eqref{eq:tantdamourperdu} is further lower bounded by
\begin{align*}
	&\sum_{i=1}^t\log(\frac{\Mstarobs(r\ind{i}_h,\phistar_{h+1}(x\ind{i}_{h+1}) \mid x\ind{i}_h,a\ind{i}_h)}{\brk{M' \circ \phi'}(r\ind{i}_h,\phistar_{h+1}(x\ind{i}_{h+1}) \mid x\ind{i}_h,a\ind{i}_h)}) \\
	 &\quad\geq \sum_{i=1}^t \En_{\pi\ind{i}\sim p\ind{i}}\En^{\pi\ind{i}}\brk*{\Dhels{\phistar_{h+1}\sharp\Mstarobs(\cdot \mid x\ind{i}_h,a\ind{i}_h)}{\brk{M' \circ \phi'}(\cdot \mid x\ind{i}_h,a\ind{i}_h)}} - \logdelinv \\
	&\quad\geq - \logdelinv,
\end{align*}

where the last line follows from the non-negativity of squared Hellinger. Taking a union bound over $M' \circ \phi' \in \cLlat \circ \Phi$ and $h \in [H]$ gives the desired lower bound in \eqref{eq:mle-concentration-lb}. 

To conclude the proof, it remains to lower bound the \lhs in \eqref{eq:decoupling}. Here, note that: 
\begin{align}
&-\log\En_{D'} \exp\prn*{L((\wh{M},\hat\phi)(D),D')} + \frac{\gammainv}{2}(V^\star - \wh{V}) \nonumber \\ 
	&=-\log\En_{D'}\brk*{\exp\prn*{-\frac{1}{2}\sum_{h=1}^H\sum_{i=1}^t\log(\frac{\Mstarobs(\wt{r}\ind{i}_h,\wh{\phi}_{h+1}(\wt{x}\ind{i}_{h+1})\mid x\ind{i}_h,a\ind{i}_h)}{\brk*{\wh{M}_h \circ \hatphi_h}(r\ind{i}_h,\wh{\phi}_{h+1}(x\ind{i}_{h+1}) \mid x\ind{i}_h,a\ind{i}_h)})}} \nonumber\\
&= -\sum_{h=1}^H \sum_{i=1}^t \log\En_{\pi\ind{i}\sim p\ind{i}}\En^{\pi\ind{i}}\brk*{ \exp\prn*{-\frac{1}{2}\log(\frac{\Mstarobs(r\ind{i}_h,\wh{\phi}_{h+1}(x\ind{i}_{h+1})\mid x\ind{i}_h,a\ind{i}_h)}{\brk*{\wh{M}_h\circ\hatphi_h}(r\ind{i}_h,\wh{\phi}_{h+1}(x\ind{i}_{h+1}) \mid x\ind{i}_h,a\ind{i}_h)})}}, \label{eq:sum-exp-log-loss} 
\end{align}
where we have used that in the ``tangent sequence'' $D'$ the current sample $(\wt{r}\ind{i}_{h}, \wt{x}\ind{i}_{h+1})$ is independent of $(r\ind{i}_{h}, x\ind{i}_{h+1})$. To bound this term, we again appeal to \cref{lem:log-exp-stuff-unperturbed}, concluding that
\begin{align*}
&-\sum_{h=1}^H \sum_{i=1}^t \log\En_{\pi\ind{i} \sim p\ind{i}}\En^{\pi\ind{i}}\brk*{ \exp\prn*{-\frac{1}{2}\log(\frac{\Mstarobs(r\ind{i}_h,\wh{\phi}_{h+1}(x\ind{i}_{h+1})\mid x\ind{i}_h,a\ind{i}_h)}{\brk*{\wh{M}_h \circ \hatphi_h}(r\ind{i}_h,\wh{\phi}_{h+1}(x\ind{i}_{h+1}) \mid x\ind{i}_h,a\ind{i}_h)})}} \\
	&\quad \geq \frac{1}{2}\sum_{h=1}^H \sum_{i=1}^t  \En_{\pi\ind{i} \sim p\ind{i}}\En^{\pi\ind{i}}\brk*{\Dhels{\brk{\wh{M}_h \circ \hatphi_h}(x_h,a_h)}{\hatphi_{h+1}\sharp\Mstarobs(x_h,a_h)}} %
\end{align*}
Combining everything, we have:
\begin{align*}
&\frac{1}{2}\prn*{\sum_{h=1}^H \sum_{i=1}^t  \En_{\pi\ind{i} \sim p\ind{i}}\En^{\pi\ind{i}}\brk*{\Dhels{\brk*{\wh{M}_h \circ \hatphi_h}(x_h,a_h)}{\hatphi_{h+1}\sharp\Mstarobs(x_h,a_h)}}+ \gammainv(V^\star - \wh{V})} \\
&\qquad \leq \log(\abs{\cL_\lat}\abs{\Phi}H\deltainv) +\log(\abs{\cMlat}\abs{\Phi}\deltainv)
\end{align*}

Taking an additional union bound over $t \in [T]$, we have that with probability at least $1-\delta$:
\begin{align*}
\sum_{h=1}^H \sum_{i=1}^t  \En_{\pi\ind{i} \sim p\ind{i}_h}&\En^{\pi\ind{i}}\brk*{\Dhels{\brk*{\wh{M}_h \circ \hatphi_h}(x_h,a_h)}{\hatphi_{h+1}\sharp\Mstarobs(x_h,a_h)}} + \gammainv\prn*{J^{\Mstarlat}(\pi_{\Mstarlat}) - J^{M\ind{t}}(\pi_{M\ind{t}})}  \\
	&\leq \cO\prn*{\log(\abs{\cMlat}\abs{\cL_\lat}\abs{\Phi}HT\deltainv)},
\end{align*}

for all $t \in [T]$, as desired.

\end{proof}

\selfpredmodular*

\begin{proof}[\pfref{cor:selfpredmodular}]
	The first inequality simply follows by plugging the bound of $\Estsimopt$ from \cref{lem:implementing-optim-replearn} into \cref{thm:online-reduction-main}. For the second inequality, let $\Delta = c_2\sqrt{H\Ccovs\abs{\cA}}\log(\abs{\cMlat}\abs{\cL_\lat}\abs{\Phi})$. The result follows by setting $\gamma$ s.t. $c_3 \gammainv H K = \Riskbase(K)$ i.e. $\gamma = c_3 \frac{KH}{\Riskbase(K)}$, 
	and $T$ such that $\frac{\gamma K \Delta}{\sqrt{T}} = \Riskbase(K)$ i.e. $T = \frac{K^4\Delta^2 \gamma^2}{\Riskbase(K)^2} = \frac{K^4\Delta^2 H^2}{\prn*{\Riskbase(K)}^4}$.
		Then the result follows by direct substitution and by noting that $\frac{K}{T} \leq 1$ since $\Riskbase(K) \leq 1$.
	
\end{proof}

\subsection{Proofs for Main Risk Bound (\cref{thm:online-reduction-main})}\label{app:olr-online-main-risk}

Our main risk bound (\cref{thm:online-reduction-main}) follows as a special case of a more general theorem (\cref{thm:online-reduction-alpha}), which holds for algorithm that satisfies a property we refer to as \Crobust-ness (\cref{def:alglat-robust}). We now state the more general theorem, postponing its proof (and a formal definition of corruption robustness) until \cref{app:online-corruption}.

\begin{restatable}[Risk bound for \olr under self-predictive estimation and \Crobustness]{theorem}{onlinereductionalpha}\label{thm:online-reduction-alpha}
Assume \OptReplearn satisfies \cref{ass:optim-replearn} with parameter $\gamma > 0$ and that $\cMlat$ is realizable (i.e. $\Mstarlat \in \cMlat$). 
	Furthermore, let \Alglat be \Crobust (\cref{def:alglat-robust}) with parameter $\alpha$. Then, \olr (\cref{alg:obs-to-lat}) with inputs $T,K,\Phi,\Alglat$, and $\OptReplearn$ has expected risk
\begin{equation}
	\En\brk*{\Riskobs(TK)} \leq c_1 \cdot \Riskbase(K) + c_2 \gamma \cdot \frac{K}{T} \Estsimopt(T,\gamma) + c_3\gammainv \cdot \prn*{\alpha^2 + H}%
\end{equation}
for absolute constants $c_1,c_2,c_3 > 0$.

\end{restatable}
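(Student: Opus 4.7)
\textbf{Proof proposal for \cref{thm:online-reduction-alpha}.}

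The plan is to decompose the per-epoch observation-level suboptimality using $J^{\Mstarobs}(\pi_{\Mstarobs}) = J^{\Mstarlat}(\pi_{\Mstarlat})$ into three terms whose sums over $t$ are controlled, respectively, by the optimism penalty in $\Regsimopt$, the \Crobustness of \Alglat, and a trajectory-level Hellinger evaluation error:
\begin{align*}
J^{\Mstarobs}(\pi_{\Mstarobs}) - J^{\Mstarobs}(\hatpi\ind{t} \circ \hatphi\ind{t})
&= \underbrace{\bigl[J^{\Mstarlat}(\pi_{\Mstarlat}) - J^{\wh{M}\ind{t}_\lat}(\pi_{\wh{M}\ind{t}_\lat})\bigr]}_{(\mathrm{I})\ \text{optimism gap}}
+ \underbrace{\bigl[J^{\wh{M}\ind{t}_\lat}(\pi_{\wh{M}\ind{t}_\lat}) - J^{\wh{M}\ind{t}_\lat}(\hatpi\ind{t})\bigr]}_{(\mathrm{II})\ \text{regret of } \Alglat\ \text{on}\ \wh{M}\ind{t}_\lat} \\
&\quad + \underbrace{\bigl[J^{\wh{M}\ind{t}_\lat}(\hatpi\ind{t}) - J^{\Mstarobs}(\hatpi\ind{t} \circ \hatphi\ind{t})\bigr]}_{(\mathrm{III})\ \text{evaluation mismatch}}.
\end{align*}
Averaging over $t$, term $(\mathrm{I})$ is literally the optimism summand of $\Regsimopt(T,\gamma)$, so \cref{ass:optim-replearn} and a rescaling by $\gamma$ immediately yield a contribution of $\gamma \cdot \Estsimopt(T,\gamma)/T$ per epoch.

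The central structural input for terms $(\mathrm{II})$ and $(\mathrm{III})$ is the ``near-Markov'' observation referenced in the main text: the self-predictive summand of $\Regsimopt$ bounds, for each $t, h$, the one-step squared Hellinger distance between $\wh{M}\ind{t}_{\lat,h}(\hatphi\ind{t}_h(x_h),a_h)$ and $\hatphi\ind{t}_{h+1}\sharp M^\star_{\obs,h}(x_h,a_h)$. Composing with the Hellinger chain rule (\cref{lem:chain-rule-hellinger}) lifts this to a \emph{trajectory}-level bound: for each inner episode $k$, the law of the compressed rollout $(\hatphi\ind{t}_h(x\ind{t,k}_h), a\ind{t,k}_h, r\ind{t,k}_h)_{h\leq H}$ under $\Mstarobs$ is close in squared Hellinger to the law of the same policy executed on $\wh{M}\ind{t}_\lat$. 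Term $(\mathrm{III})$ then follows directly from \cref{lem:a-11-dec}, since $\pi \mapsto J^{M}(\pi)$ is a bounded functional of the trajectory; an AM-GM step with weight $\gamma$ converts the square root into the $\gammainv H$ slack plus a Hellinger remainder that is absorbed into $(\mathrm{I})$.

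Term $(\mathrm{II})$ is where \Crobustness is used. By the trajectory Hellinger bound, the data \Alglat receives from the compressed POMDP in \cref{line:phi-compressed-dynamics} is a ``corrupted'' version of what it would see interacting directly with $\wh{M}\ind{t}_\lat$, where the total corruption level across the $K$ episodes is exactly the Hellinger sum above. The definition of \Crobustness with parameter $\alpha$ (\cref{def:alglat-robust}) is designed so that the regret of $\hatpi\ind{t}$ in $\wh{M}\ind{t}_\lat$ is at most $\Riskbase(K)$ plus $\alpha$ times (a function of) that corruption budget. Summing over $t$, applying Cauchy--Schwarz across epochs, and balancing via AM-GM with weight $\gamma$ converts the Hellinger contribution into the leading $\gamma \cdot \Estsimopt/T$ term while leaving a residual $\gammainv \alpha^2$.

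The main obstacle, in my view, is handling term $(\mathrm{II})$ rigorously. The internal policies $\pi\ind{t,k}_\lat$ chosen by \Alglat are measurable with respect to a non-Markovian history (the $\hatphi\ind{t}$-compressed POMDP of \cref{def:phi-compressed-pomdp}), yet the guarantee we need is about the returned policy $\hatpi\ind{t}$ evaluated in the \emph{Markov} model $\wh{M}\ind{t}_\lat$. Making \cref{def:alglat-robust} compatible with this asymmetry requires coupling the compressed rollouts with ``clean'' rollouts on $\wh{M}\ind{t}_\lat$ so that the same randomness produces valid executions on both sides, and then verifying that the AM-GM calibration trading $\alpha \cdot \sqrt{\text{Hellinger}}$ against $\gammainv$ reproduces the exact $(\alpha^2 + H)$ form stated in the theorem, with $\Riskbase(K)$ rather than $\Riskstar(K)$ arising because the compressed dynamics can be approximated equally well by \emph{any} base model in $\cMlat$ (in particular, latent-space permutations of $\Mstarlat$).
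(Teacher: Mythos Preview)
Your proposal is correct and uses the same three-term decomposition as the paper's proof. Two small remarks: for term $(\mathrm{III})$ the paper does not go through trajectory-level Hellinger and \cref{lem:a-11-dec} (which gives a multiplicative rather than difference bound), but instead uses the Bellman-error decomposition (\cref{lem:lemma1jiang}) together with the simulation lemma (\cref{lem:simlemma}) to bound the per-step evaluation mismatch by the self-predictive error directly; and your ``main obstacle'' about coupling for $(\mathrm{II})$ is already dissolved by the statement of \cref{def:alglat-robust}, which is written as an expectation under the $\phi$-compressed POMDP $\wt{M}^\star_\phi$, so \cref{lem:near-markov} converts the Hellinger sum appearing there to the self-predictive error without any additional coupling argument.
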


Our main risk bound (\cref{thm:online-reduction-main}) follows from the following lemma, which establishes that \emph{any} \alglat is \Crobust in the sense of \cref{def:alglat-robust} for a sufficiently large corruption robustness parameter. Below, for any POMDP $\wtM$ over state-action space $\cS \times \cA$, we write $\wtM(s_{1:h},a_{1:h})$ for the conditional probability over reward $r_h$ and $s_{h+1}$ given $s_{1:h},a_{1:h}$, i.e. $\wtM_h(s_{1:h},a_{1:h}) = \wtM_h(r_{h},s_{h+1} = \cdot \mid s_{1:h},a_{1:h})$. %

\begin{restatable}{lemma}{generalalgsimlemma}\label{lem:general-alg-sim-lemma}
Let $\Mstar$ be any reference MDP and $\wtM$ be any POMDP \dfedit{with the same state and action space}. Then for any algorithm \alglat, we have 
\begin{align*}
\En^{\wtM,\alglat}\brk*{\Risk_{\Mstar}(K)} &\leq 	 c_1 \En^{\Mstar,\alglat}\brk*{\Risk_{\Mstar}(K)} \\
	&\quad + c_2 \En^{\wtM,\alglat}\brk*{\sum_{k=1}^K \sum_{h=1}^H \En^{\wtM,\pi\ind{k}}\brk*{\Dhels{\Mstar_h(s_h,a_h)}{\wtM_h(s_{1:h},a_{1:h})}}},
\end{align*}
where $c_1,c_2>0$ are absolute constants. In particular, $\alglat$ is \Crobust (\cref{def:alglat-robust}) with $\alpha =c_2 \sqrt{K H}$.\loose %
\end{restatable}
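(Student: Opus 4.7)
\textbf{Proof proposal for \cref{lem:general-alg-sim-lemma}.} The plan is a two-step change-of-measure argument: first bound the risk under $\wtM$ by the risk under $\Mstar$ plus a Hellinger term via Lemma A.11 of \citet{foster2021statistical} (our \cref{lem:a-11-dec}); then use the chain rule for Hellinger (\cref{lem:chain-rule-hellinger}) to decompose the Hellinger term into per-step contributions. The only subtlety is to observe that because the algorithm $\alglat$ is fixed and plays identically as a function of the observed history under both processes, every factor in the joint law except the dynamics kernel cancels in the chain rule, leaving exactly the per-step $\Dhels{\Mstar_h(s_h,a_h)}{\wtM_h(s_{1:h},a_{1:h})}$ terms appearing in the lemma statement.

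In more detail, let $\bbP^{\Mstar,\alglat}$ and $\bbP^{\wtM,\alglat}$ denote the laws over the full interaction history $\{(s_h\ind{k},a_h\ind{k},r_h\ind{k})\}_{k\in[K],h\in[H]}$ together with the output policy $\hatpi$, when $\alglat$ is run for $K$ episodes against $\Mstar$ and $\wtM$ respectively. The quantity $\Risk_{\Mstar}(K)=J^{\Mstar}(\pi_{\Mstar})-J^{\Mstar}(\hatpi)$ is a measurable function of this history that is bounded in $[0,1]$ by the assumption that rewards sum to at most $1$ almost surely. Applying \cref{lem:a-11-dec} with $h=\Risk_{\Mstar}(K)$, $\bbP=\bbP^{\wtM,\alglat}$, $\bbQ=\bbP^{\Mstar,\alglat}$, and $R=1$ yields
\[
\En^{\wtM,\alglat}\brk*{\Risk_{\Mstar}(K)} \leq 3\,\En^{\Mstar,\alglat}\brk*{\Risk_{\Mstar}(K)} + 4\,\Dhels{\bbP^{\wtM,\alglat}}{\bbP^{\Mstar,\alglat}}.
\]
It remains to bound the joint Hellinger distance. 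Factor each law as a product over $k\in[K]$ and $h\in[H]$ of: (i) the internal randomness of $\alglat$ in producing $\pi\ind{k}$ from the observed history so far, (ii) the action $a_h\ind{k}\sim\pi\ind{k}(\cdot\mid s_h\ind{k})$, and (iii) the reward/next-state draw from the dynamics. Conditionals (i) and (ii) are identical in both laws because $\alglat$ is the same algorithm and the observed history determines its outputs; only the dynamics conditional differs, being $\Mstar_h(\cdot\mid s_h\ind{k},a_h\ind{k})$ versus $\wtM_h(\cdot\mid s_{1:h}\ind{k},a_{1:h}\ind{k})$. The chain rule (\cref{lem:chain-rule-hellinger}) then gives
\[
\Dhels{\bbP^{\wtM,\alglat}}{\bbP^{\Mstar,\alglat}} \leq 7\,\En^{\wtM,\alglat}\brk*{\sum_{k=1}^{K}\sum_{h=1}^{H}\Dhels{\wtM_h(s_{1:h}\ind{k},a_{1:h}\ind{k})}{\Mstar_h(s_h\ind{k},a_h\ind{k})}},
\]
since the matching factors contribute zero per-step Hellinger. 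Combining the two displays gives the main inequality with $c_1=3$ and $c_2=28$.

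For the final claim that $\alglat$ is \Crobust with $\alpha=c_2\sqrt{KH}$, I would expand the definition of \Crobustness (\cref{def:alglat-robust}) and use the trivial bound $\Dhels{\cdot}{\cdot}\leq 2$ to convert the sum $\sum_{k,h}\Dhels{\cdots}$ into $\sqrt{2KH}\cdot\sqrt{\sum_{k,h}\Dhels{\cdots}}$ via Cauchy--Schwarz, matching the required form of the robustness bound with parameter $\alpha=O(\sqrt{KH})$. The main conceptual obstacle is spotting the cancellation in step (ii): one must be careful to phrase the chain rule so that the per-step Hellinger only involves the dynamics and not the policy/action distribution; otherwise the bound would spuriously pick up extra variability from $\alglat$'s internal randomization. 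Everything else is a routine application of existing Hellinger-based change-of-measure tools.
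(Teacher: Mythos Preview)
Your proposal is correct and matches the paper's proof almost exactly: apply \cref{lem:a-11-dec} to pass from $\wtM$ to $\Mstar$, then use the Hellinger chain rule (\cref{lem:chain-rule-hellinger}) and observe that the policy-selection and action-selection conditionals coincide under both laws, leaving only the dynamics terms. The one cosmetic difference is that the paper applies the chain rule twice (first over the sequence $\pi\ind{1},\tau\ind{1},\ldots,\pi\ind{K},\tau\ind{K}$, then again within each trajectory), incurring $7\times 7$, whereas you apply it once over the fully unrolled sequence and pick up a single factor of $7$; your constants $(c_1,c_2)=(3,28)$ are therefore tighter than the paper's $(3,196)$, but the argument is otherwise identical.
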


\begin{proof}[\pfref{lem:general-alg-sim-lemma}]
	Let us abbreviate $\alg \coloneqq \alglat$. For $i \in [K]$, let $\tau\ind{i}$ denote the trajectory $ (s\ind{i}_1,a\ind{i}_1,r\ind{i}_1, \ldots, s\ind{i}_H,a\ind{i}_H,r\ind{i}_H)$. Let $\bbP \coloneqq \bbP^{\Mstar, \alg}$ denote the law of $\crl*{(\pi\ind{i}, \tau\ind{i})}_{i\in\brk{K}}$ under $\alg$ in the true MDP $\Mstar$, and $\bbQ \coloneqq \bbP^{\wtM, \alg}$ denote the law of $\crl*{(\pi\ind{i}, \tau\ind{i})}_{i\in\brk{K}}$ under $\alg$ under the POMDP $\wtM$. Let us write $M^\star(\pi)$ and $\wtM(\pi)$ for the laws of trajectory $\tau$ sampled from policy $\pi$ in $M^\star$ or $\wtM$ respectively. Let $\hatpi$ denote the policy output by the algorithm after $K$ rounds of interaction with the environment. By \cref{lem:a-11-dec} we have
	\[
	 \En^{\wtM,\alg}\brk*{J^{\Mstar}(\pi_{\Mstar}) - J^{\Mstar}(\hatpi)} \leq 	 3\En^{\Mstar,\alg}\brk*{J^{\Mstar}(\pi_{\Mstar}) - J^{\Mstar}(\hatpi)} + 4\Dhels{\bbP^{\Mstar,\alg}}{\bbP^{\wtM,\alg}}.
	\]
	By the subadditivity property for squared Hellinger distance (\cref{lem:chain-rule-hellinger}) applied to the sequence $\pi\ind{1},\tau\ind{1},\ldots,\pi\ind{K},\tau\ind{K}$, we have
	\begin{align*}
		\Dhels{\bbP^{\Mstar,\alg}}{\bbP^{\wtM,\alg}} %
		&\leq 7 \En^{\wtM,\alg}\Bigg[\sum_{k=1}^K \Dhels{\bbP(\pi\ind{k} \mid \pi\ind{1:k-1},\tau\ind{1:k-1})}{\bbQ(\pi\ind{k} \mid \pi\ind{1:k-1},\tau\ind{1:k-1})} + \\
		&\hspace{7.5em} \Dhels{\bbP(\tau\ind{k} \mid \pi\ind{1:k},\tau\ind{1:k-1})}{\bbQ(\tau\ind{k} \mid\pi\ind{1:k},\tau\ind{1:k-1})}\Bigg] \\
		&= 7 \En^{\wtM,\alg}\brk*{\sum_{k=1}^K \Dhels{\bbP(\tau\ind{k} \mid \pi\ind{1:k},\tau\ind{1:k-1})}{\bbQ(\tau\ind{k} \mid\pi\ind{1:k},\tau\ind{1:k-1})}} \\
		&= 7 \En^{\wtM,\alg}\brk*{\sum_{k=1}^K \Dhels{\Mstar(\pi\ind{k})}{\wtM(\pi\ind{k})}} \\
		&\leq 49 \En^{\wtM,\alg}\brk*{\sum_{k=1}^K \sum_{h=1}^H \En^{\wtM,\pi\ind{k}}\brk*{\Dhels{\Mstar_h(s_h,a_h)}{\wtM_h(s_{1:h},a_{1:h})}}}
	\end{align*}
	where in the second step we have used that $\bbP(\pi\ind{k} \mid \pi\ind{1:k},\tau\ind{1:k-1}) = \bbQ(\pi\ind{k} \mid\pi\ind{1:k},\tau\ind{1:k-1})$ since the histories are equivalent, in the third step we have used that the trajectories are generated by the MDP/PODMP $\Mstar$ and $\wtM$, respectively, in the fourth step we have again applied the subadditivity property of the squared Hellinger distance (\cref{lem:chain-rule-hellinger}) to the sequence $(s_1,a_1,r_1,\ldots,s_{H},a_H,r_H)$.%
\end{proof}

\onlinereductionmain*

\begin{proof}[\pfref{thm:online-reduction-main}]
This follows from \cref{thm:online-reduction-alpha} as well as \cref{lem:general-alg-sim-lemma}, by taking %
	$\alpha = c_2\sqrt{KH}$ and simplifying. %
\end{proof}

\newpage
\section{Additional Results for \cref{sec:online-rl}: Self-Predictive Estimation}\label{app:online-corruption}
This section contains a more general result for algorithmic modularity
under self-predictive estimation (\cref{thm:online-reduction-alpha}),
from which our main result is derived as a special case, along with
associated background, applications, and proofs. This section is organized as follows.

\begin{itemize}
\item \cref{sec:crobust-algorithms} presents: definitions for the
  $\phi$-compressed POMDP and \Crobust algorithms
  (\cref{sec:confounded-pomdps}), statements for properties of the
  $\phi$-compressed dynamics (\cref{app:properties-phi-pomdp}). The
  risk bound for \olr under self-predictive estimation and
  \Crobustness (\cref{thm:online-reduction-alpha}) is given in \cref{sec:risk-bound-crobust}, and a statement that the \textsc{Golf} algorithm is \Crobust (\cref{sec:crobustexamples}).
\item \cref{app:proofs-properties-phi-pomdp} presents for the proofs for the properties of the $\phi$-compressed POMDPs.
\item \cref{app:online-reduction-alpha-pf} presents a proof for the risk bound of \olr under self-predictive estimation and \Crobustness. %
\item \cref{app:crobust-examples} presents a proof that the \textsc{Golf} algorithm is \Crobust. 
\end{itemize}

\subsection{$\olr$ with Self-predictive Estimation and \Crobust Base Algorithms}\label{sec:crobust-algorithms}

\subsubsection{Definitions: $\phi$-compressed POMDP and \Crobustness}\label{sec:confounded-pomdps}

Consider iteration $k\in[K]$ of epoch $t \in [T]$ within \olr. Suppose that
\GenReplearn has chosen decoder $\phi = \phi\ind{t}: \cX \rightarrow \cS$. Then,
the latent algorithm has observed the data $\cD\ind{t,k} = \{
\phi(x\ind{t,k}_h),
a\ind{t,k}_h,r\ind{t,k}_h,\phi(x\ind{t,k}_{h+1})\}$ collected from the
preceding policies in the epoch: $\pi\ind{t,1}_\lat \circ \phi\ind{t}, \ldots, \pi\ind{t,k-1}_\lat \circ \phi\ind{t}$ (\cref{line:phi-compressed-dynamics}). Due to possible inaccuracies in the decoder $\phi$, the dataset $\cD\ind{t,k}$ may not be generated from a Markovian process and must instead be viewed as being generated from a PODMP, formally  %
	defined as follows. 
\begin{definition}[$\phi$-compressed POMDP]\label{def:phi-compressed-pomdp}
The $\phi$-compressed POMDP $\Mphi$ induced by $\Mstarobs$ and $\phi$ is defined by:\loose
\begin{enumerate}
	\item \underline{Latent} state space $\cX$
	\item Action space $\cA$
	\item \underline{Observation} state space $\cS$
	\item Latent reward functions $\Rstarobsh: \cX \times \cA \rightarrow [0,1]$
	\item Latent dynamics $\Pstarobsh: \cX \times \cA \rightarrow \Delta(\cX)$
	\item (Deterministic) observation function $\cO_h: \cX \rightarrow \cS$ defined by $\cO_h(x) = \phi_h(x)$, 
	\item Horizon $H$
	\item Initial latent distribution $\Pstarobs(x_0 \mid \emptyset)$
\end{enumerate}	
\end{definition}

Note that the latent space \textit{for the POMDP} is the observation
space of the latent-dynamics MDP $\Mstarobs$, and vice-versa; \dfedit{we adopt
this terminology because---from the perspective of the base algorithm,
the observations $x_h$ can be viewed as a Markovian (yet partially
observed process) that generates the learned states $\phi(x_h)$ on
which the algorithm acts.}
We write $\Pphipilat \coloneqq \bbP^{\Mphi,\pilat}$ for the probability distribution over trajectories $(x_h,s_h,a_h,r_h)_{h \in [H]}$ in the $\phi$-compressed POMDP when playing policy $\pilat: \cS \times [H] \rightarrow \Delta(\cA)$, where $x_h \in \cX$ are the POMDP's latent states, $s_h \in \cS$ are the observed states, and $a_h \in \cA$ are the actions. %
We let $\Ephipilat \coloneqq \En^{\Mphi,\pilat}$ denote the corresponding expectation. We write $\wtP_{\phi,h}(s_{h+1} \mid s_{1:h},a_{1:h}) = \wtP^{\pilat}_\phi(s_{h+1} \mid s_{1:h},a_{1:h})$ and $\wtr_{\phi,h}(r_h \mid s_{1:h},a_{1:h}) = \wtP^{\pilat}_\phi(r_h \mid s_{1:h},a_{1:h})$ for the conditional distributions of next states and rewards given the first $h$ state-action pairs, which are policy-independent. We also write $\Mphi(r_h,s_{h+1} \mid s_{1:h},a_{1:h}) =  \wtr_{\phi,h}(r_h \mid s_{1:h},a_{1:h})\wtP_{\phi,h}(s_{h+1} \mid s_{1:h},a_{1:h})$ for the joint one-step probability. We will abbreviate $\Mphi(s_{1:h},a_{1:h}) \coloneqq \Mphi(r_h,s_{h+1} = \cdot \mid s_{1:h},a_{1:h})$. 

Note that for any $\pilat$, $\Pphipih(s_{h+1} \mid s_h,a_h)$ is a well-defined (Markovian, policy-dependent) probability kernel, which is equivalent to 
\begin{align}\label{eq:Pphi-pomdp-definition}
	\Pphipih(s_{h+1} \mid s_h,a_h) &= \sum_{s_{1:h-1},a_{1:h-1}}\Pphipih(s_{1:h-1},a_{1:h-1} \mid s_h,a_h) \wtP_{\phi,h}(s_{h+1} \mid s_{1:h},a_{1:h}) \\
	&= \wtE^{\pilat}_\phi\brk*{\wtP_{\phi,h}(s_{h+1} \mid s_{1:h},a_{1:h}) \mid s_h,a_h}
\end{align}
Similarly, $\rphipih(r_h \mid s_h,a_h)$ is a Markovian and policy-dependent reward distribution which is equivalent to
\begin{align}\label{eq:rphi-pomdp-definition}
	\rphipih(r_h \mid s_h,a_h) &= \sum_{s_{1:h-1},a_{1:h-1}}\Pphipih(s_{1:h-1},a_{1:h-1} \mid s_h,a_h) \wtr_{\phi,h}(r_h \mid s_{1:h},a_{1:h}) \\
	&= \wtE^{\pilat}_\phi\brk*{\wtr_{\phi,h}(r_h \mid s_{1:h},a_{1:h}) \mid s_h,a_h}.
\end{align}

 Finally, we let 
\begin{equation}\label{eq:mphi-pomdp-definition}
\wtM^{\pilat,\star}_{\phi,h}(r_h, s_{h+1} \mid s_h,a_h) = \wtE^{\pilat}_\phi\brk*{\Mphi(r_h,s_{h+1} \mid s_{1:h},a_{1:h}) \mid s_h,a_h}
\end{equation}
denote the associated one-step model over joint rewards and transitions.

Our \Crobustness condition asserts that the agent---when
observing data from the $\phi\ind{t}$-compressed dynamics
$\Mphi$---attains a risk bound for $\Mlat$ which is proportional to
its risk when observing data from $\Mlat$ itself, plus a term that
captures the degree of misspecification between $\Mphi$ and $\Mlat$. %

\begin{definition}[\Crobust algorithm]\label{def:alglat-robust}
We say that $\Alglat$ is \Crobust with parameters $\alpha$ and $\Riskbase$ if there exists a constant $c_1$ such that, for any $(\phi,M_\lat) \in \Phi \times \cMlat$, we have
\begin{align*}
	\En^{\Mphi,\Alglat}\brk*{\Risk(K,\Alglat,\Mlat)} &\leq c_1\cdot\Riskbase(K) \\
	&\hspace{-3em}+\alpha\En^{\Mphi,\Alglat}\brk*{\sqrt{\sum_{k=1}^K \sum_{h=1}^H \En_{\pi\ind{k}_\lat \sim p\ind{k}}\wtE_{\phi}^{\pi\ind{k}_\lat}\brk*{\Dhels{\Mlath(s_h,a_h)}{\Mphih(s_{1:h}, a_{1:h})}}}},
\end{align*}

where we recall the definition of the random variable $\Risk(K,\Alglat,\Mlat)$ from \eqref{eq:risk-alg},
   the expectation $\En^{\Mphi,\alglat}$ denotes the interaction protocol of \Alglat in the $\phi$-compressed dynamics $\Mphi$, and $p\ind{k}$ denotes the randomization distribution over latent policies that $\Alglat$ plays.%
\end{definition}

\subsubsection{Basic properties of the $\phi$-compressed dynamics (\cref{def:phi-compressed-pomdp})}\label{app:properties-phi-pomdp}

We establish a number of basic properties for the $\phi$-compressed POMDP and their relation to the self-prediction guarantee obtained by \OptReplearn. These properties are proved in \cref{app:proofs-properties-phi-pomdp}. Firstly, we have the following change-of-measure lemma:
\begin{restatable}[Change of measure lemma]{lemma}{changeofmeasure}\label{lem:change-of-measure}
For any $\phi \in \Phi$, $f \in [\cS \times \cA \rightarrow [0,1]]$, $h \in [H]$, and $\pilat \in [\cS \times [H] \rightarrow \Delta(\cA)]$, we have:
\begin{equation}
	 \wtE^{\pilat}_\phi\brk*{f(s_h,a_h)} = \En^{\pilat \circ \phi}\brk*{\brk{f \circ \phi}(x_h,a_h)}.\label{eq:com}
\end{equation}
\end{restatable}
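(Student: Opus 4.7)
The plan is to unpack the definition of the $\phi$-compressed POMDP (\cref{def:phi-compressed-pomdp}) and exhibit an explicit coupling between the two processes. Recall that the $\phi$-compressed POMDP $\Mphi$ has latent space $\cX$, latent dynamics $\Pstarobsh$ (identical to those of $\Mstarobs$), and a deterministic observation map $\cO_h(x) = \phi_h(x)$. When we execute a policy $\pilat : \cS \times [H] \to \Delta(\cA)$ in $\Mphi$, the realized action at step $h$ is drawn according to $a_h \sim \pi_{\lat,h}(\cdot \mid s_h) = \pi_{\lat,h}(\cdot \mid \phi_h(x_h))$. But this is precisely the law of the action selected when we instead execute the observation-space policy $\pilat \circ \phi$ in $\Mstarobs$.

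The main step is then to observe, by a straightforward induction on $h$, that the joint law of $(x_1, a_1, \ldots, x_H, a_H)$ under $\bbP^{\Mphi,\pilat}$ coincides with its law under $\bbP^{\sMstarobs, \pilat \circ \phi}$. The base case $h=1$ is immediate since both processes share the same initial-state distribution $\Pstarobs(\cdot \mid \emptyset)$, and the inductive step follows because (i) the action distributions agree as noted above, and (ii) the latent transition kernels $\Pstarobsh(\cdot \mid x_h, a_h)$ are by definition identical in the two processes. Substituting $s_h = \phi_h(x_h)$ then gives the identity
\[
\wtE^{\pilat}_\phi\brk*{f(s_h,a_h)} = \wtE^{\pilat}_\phi\brk*{f(\phi_h(x_h), a_h)} = \En^{\pilat \circ \phi}\brk*{f(\phi_h(x_h), a_h)},
\]
which is the desired equality \eqref{eq:com}.

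There is no serious obstacle here --- the statement is essentially a tautological rewriting of what it means for $\Mphi$ to be the POMDP obtained by pushing forward $\Mstarobs$'s latent states through $\phi$. The only thing requiring a bit of care is to be explicit that the policy $\pilat$ acting on the \emph{observation} $s_h$ of the POMDP is the same random variable as the policy $\pilat \circ \phi$ acting on the \emph{state} $x_h$ of $\Mstarobs$, which justifies the coupling. I expect the formal proof to be a couple of lines of induction plus one line invoking $s_h = \phi_h(x_h)$ almost surely.
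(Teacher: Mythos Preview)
Your proposal is correct and follows essentially the same approach as the paper: an induction on $h$ showing that the law of the POMDP's latent variable $x_h$ (hence also $(x_h,a_h)$) agrees with the law of $x_h$ under $\pilat\circ\phi$ in $\Mstarobs$, followed by the substitution $s_h=\phi_h(x_h)$. The only cosmetic difference is that the paper phrases the induction in terms of marginal occupancies $\wtP^{\pilat}_\phi(X_h=x_h)=d^{\pilat\circ\phi}_h(x_h)$ rather than the full joint law, but the argument is identical.
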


The next lemma states that the kernels of the $\phi$-compressed POMDP are well-approximated by the (Markovian) latent model fit by \OptReplearn. We recall the instantaneous self-prediction error 
\[
\brk*{\Delta_h(\Mlat, \phi)}(x_h,a_h) \coloneqq \Dhels{M\sub{\lat,h}(\phi\sub{h}(x_h),a_h)}{\brk[\big]{\phi\sub{h+1}\sharp M^\star_{\obs,h}}(x_h,a_h)}.
\]

\begin{restatable}[Near-markovianity of the $\phi$-compressed dynamics]{lemma}{nearmarkov}\label{lem:near-markov}
For any decoder $\phi$, base model $\Mlat$, and policy $\pi_\lat: \cS \times [H] \rightarrow \Delta(\cA)$, we have:  
\begin{align}\label{eq:near-markov-podmp}
	&\sum_{h=0}^H \wtE^{\pilat}_{\phi}\brk*{\Dhels{\Mlath(s_h,a_h)}{\Mphih(s_{1:h},a_{1:h})}} \leq \sum_{h=0}^H \En^{\pilat \circ \phi}\brk*{\brk{\Delta_h(\Mlat, \phi)}(x_h,a_h)}.
\end{align}
Furthermore, we also have
\begin{align}\label{eq:near-markov-podmp-2}
	\sum_{h=0}^H \wtE^{\pilat}_{\phi}\brk*{\Dhels{\Mlath(s_h,a_h)}{\Mphipih(s_{h},a_{h})}}  \leq \sum_{h=0}^H \En^{\pilat \circ \phi}\brk*{\brk{\Delta_h(M_{\lat}, \phi)}(x_h,a_h)}.
\end{align}
\end{restatable}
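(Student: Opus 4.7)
The plan is to derive both inequalities via two main ingredients: (i) the fact that the one-step distribution $\Mphih(s_{1:h}, a_{1:h})$ of the $\phi$-compressed POMDP is a conditional expectation (over the latent state $x_h$) of the pushforward dynamics $\phi_{h+1}\sharp\Mstarobsh(x_h, a_h)$, and (ii) convexity of squared Hellinger in its second argument. I would establish the first inequality in \eqref{eq:near-markov-podmp} first, and then derive the second inequality \eqref{eq:near-markov-podmp-2} via essentially one more application of Jensen's inequality.

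For the first inequality, I would start by unpacking the $\phi$-compressed POMDP (\cref{def:phi-compressed-pomdp}): its latent dynamics are governed by $\Mstarobs$, and given the latent state $x_h$, the next observation $s_{h+1} = \phi_{h+1}(x_{h+1})$ together with the reward is sampled from the pushforward model $\brk{\phi_{h+1}\sharp\Mstarobsh}(\cdot\mid x_h, a_h)$. Taking the conditional expectation over the latent $x_h$ given the observable history, this yields
\[
\Mphih(\cdot\mid s_{1:h}, a_{1:h}) = \wtE_{\phi}^{\pilat}\brk*{\brk{\phi_{h+1}\sharp\Mstarobsh}(\cdot\mid x_h, a_h)\,\big|\, s_{1:h}, a_{1:h}}.
\]
Because $s_h = \phi_h(x_h)$ is deterministic under the event $\{s_{1:h}, a_{1:h}\}$, $\Mlath(s_h, a_h)$ is measurable w.r.t. this conditioning, and so by joint convexity of squared Hellinger distance together with Jensen's inequality,
\[
\Dhels{\Mlath(s_h,a_h)}{\Mphih(s_{1:h}, a_{1:h})} \leq \wtE^{\pilat}_\phi\brk*{\Dhels{\Mlath(\phi_h(x_h),a_h)}{\brk{\phi_{h+1}\sharp\Mstarobsh}(x_h,a_h)} \,\big|\, s_{1:h}, a_{1:h}}.
\]
Taking outer $\wtE^{\pilat}_\phi$, applying the tower property, and then the change-of-measure identity \cref{lem:change-of-measure} (extended to functions of $(x_h, a_h)$, which holds since the latent $x_h$ evolves under $\Mstarobs$ in the $\phi$-compressed POMDP, making $\wtE^{\pilat}_\phi$ and $\En^{\pilat\circ\phi}$ agree on functions of $(x_h, a_h)$) yields the right-hand side $\En^{\pilat\circ\phi}\brk{\Delta_h(\Mlat,\phi)(x_h,a_h)}$ summand by summand, by definition of $\Delta_h$.

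For the second inequality, the only additional observation is that by \eqref{eq:Pphi-pomdp-definition}--\eqref{eq:mphi-pomdp-definition}, we have $\Mphipih(\cdot\mid s_h, a_h) = \wtE^{\pilat}_\phi\brk{\Mphih(\cdot\mid s_{1:h}, a_{1:h}) \mid s_h, a_h}$. Applying Jensen's inequality once more via convexity of squared Hellinger,
\[
\Dhels{\Mlath(s_h,a_h)}{\Mphipih(s_h,a_h)} \leq \wtE^{\pilat}_\phi\brk*{\Dhels{\Mlath(s_h, a_h)}{\Mphih(s_{1:h}, a_{1:h})} \,\big|\, s_h, a_h},
\]
and then taking outer expectations reduces to the bound already established.

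The main obstacle is essentially bookkeeping: ensuring that when we write $\Mphih(s_{1:h}, a_{1:h})$ as a conditional expectation of $\phi_{h+1}\sharp\Mstarobsh(x_h, a_h)$, the conditioning event makes $s_h = \phi_h(x_h)$ deterministic so that we may freely substitute $\Mlath(s_h, a_h) = \Mlath(\phi_h(x_h), a_h)$ inside the Hellinger distance before invoking Jensen's. This requires carefully justifying the conditional expectation in the POMDP via its definition rather than treating it abstractly; once that is in hand, the convexity step and change-of-measure are essentially mechanical.
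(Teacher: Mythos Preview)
Your proposal is correct and follows essentially the same approach as the paper: express $\Mphih(\cdot\mid s_{1:h},a_{1:h})$ as a mixture over the latent $x_h$ of the pushforward model, apply Jensen via convexity of squared Hellinger, substitute $s_h=\phi_h(x_h)$, marginalize, and invoke the change-of-measure lemma; the second inequality is then obtained by one more application of Jensen to the averaging in \eqref{eq:mphi-pomdp-definition}, exactly as you describe. The paper writes out the sums explicitly rather than phrasing things in conditional-expectation language, but the argument is identical.
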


A corollary is the following lemma establishing errors between expectations under $\Mlat$, the model estimated by $\Algsim$, and those under the $\phi$-compressed POMDP $\Mphi$.%
\begin{restatable}[Simulation lemma]{lemma}{simlemma}\label{lem:simlemma}
For any latent model $\Mlat$ with Markovian transition kernel $\crl{\Plath}_{h \in [H]}$, latent policy $\pilat: \cS \times [H] \rightarrow \Delta(\cA)$, and decoder $\phi \in \Phi$, we have that for all $f: \cS \times \cA \rightarrow [0,1]$:
	\begin{equation}\label{eq:fourtet}
	\abs{\En^{\Mlat,\pi_\lat}[f(s_h,a_h)] - \wtE^{\pi_\lat}_{\phi}[f(s_h,a_h)]} \leq \sum_{h'<h}\En^{\pilat \circ \phi}\brk*{\nrm*{\brk*{\Plat \circ \phi}_h(x_{h'},a_{h'}) - \phi_{h+1}\sharp\Pstarobsh(x_{h'},a_{h'})}_\tv},
\end{equation}
and thus for any sequence of policies $\pi\ind{t}_\lat$, latent models $\Mlat\ind{t}$, and decoders $\phi\ind{t}$, we have:
\[
	\sum_{t=1}^T\sum_{h=0}^H \abs{\En^{\Mlat\ind{t},\pi\ind{t}_\lat}[f(s_h,a_h)] - \wtE^{\pi\ind{t}_\lat}_{\phi\ind{t}}[f(s_h,a_h)]} \leq H\sqrt{TH}\sqrt{ \sum_{t=1}^T\sum_{h=0}^H \En^{\pi\ind{t}_\lat \circ \phi\ind{t}}\brk*{\brk{\Delta_h(M\ind{t}_{\lat}, \phi\ind{t})}(x_h,a_h)}}.
      \]
      
\end{restatable}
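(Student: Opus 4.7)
The plan is to prove the two inequalities in sequence. The first is a classical simulation lemma relating $\Mlat$ (Markovian in $s$) and $\Mphi$ (non-Markovian in $s$) under the same policy $\pilat$; the second follows from Cauchy--Schwarz together with standard TV-to-Hellinger and data-processing manipulations.

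For the first inequality, I would proceed by a telescoping argument with hybrid processes. Using \cref{lem:change-of-measure}, rewrite $\wtE^{\pilat}_\phi[f(s_h,a_h)] = \En^{\pilat \circ \phi}[f(\phi_h(x_h), a_h)]$, so that all expectations can be taken against the true $\Mstarobs$-trajectory $(x_1,a_1,\ldots,x_h,a_h)$ drawn under $\pilat\circ\phi$. For $j\in\{0,\ldots,h-1\}$, define a hybrid expectation $Q_j$ in which the first $j$ state-action pairs $(s_1,a_1),\ldots,(s_j,a_j)$ are drawn by decoding an $\Mstarobs$-trajectory through $\phi$, while from step $j+1$ onwards the states evolve under the Markov kernel $\Plath$ with actions from $\pilat$, starting from $s_{j+1}\sim\Plat_{j+1}(\phi_j(x_j),a_j)$. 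Then $Q_0 = \En^{\Mlat,\pilat}$ (modulo an initial-layer identification) and $Q_{h-1}$ reduces to $\wtE^{\pilat}_\phi$. The one-step difference $|Q_{h'}[f]-Q_{h'+1}[f]|$ becomes a conditional expectation (over the prefix $(x_{h'},a_{h'})$) of the $\Mlat$-value-to-go $V_{h'+1}:\cS\to[0,1]$ integrated against two distributions of $s_{h'+1}$: the $\Mlat$ kernel $[\Plat\circ\phi]_{h'+1}(x_{h'},a_{h'})$ and the pushforward $\phi_{h'+1}\sharp\Pstarobs_{h'}(x_{h'},a_{h'})$. Since $V_{h'+1}\in[0,1]$, this is controlled by the TV distance in the lemma, and summing over $h'<h$ yields \eqref{eq:fourtet}.

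For the summed second inequality, my plan has four steps. (i) Apply the first inequality and exchange the order of summation, $\sum_h\sum_{h'<h}(\cdot) \leq H\sum_{h'}(\cdot)$, producing an overall factor of $H$. (ii) Apply Cauchy--Schwarz across the $TH$ indices $(t,h')$, together with Jensen $\En[\mathrm{TV}]\leq\sqrt{\En[\mathrm{TV}^2]}$, to obtain
\[
H\sum_{t,h'}\En^{\pi\ind{t}_\lat\circ\phi\ind{t}}[\mathrm{TV}_{h'}] \;\leq\; H\sqrt{TH}\sqrt{\sum_{t,h'}\En^{\pi\ind{t}_\lat\circ\phi\ind{t}}[\mathrm{TV}^2_{h'}]}.
\]
(iii) Use $\mathrm{TV}^2\leq D_H^2$ to pass to squared Hellinger between the two \emph{transition} kernels $[\Plat\circ\phi]_{h'+1}$ and $\phi_{h'+1}\sharp\Pstarobs_{h'}$ on $\cS$. (iv) Apply the data-processing inequality---marginalizing the joint reward--transition distributions to the transition component---to obtain the upper bound $\Delta_{h'}(\Mlat\ind{t},\phi\ind{t})(x_{h'},a_{h'})$, which is by definition the squared Hellinger between the joint $M\ind{t}_{\lat,h'}(\phi\ind{t}_{h'}(\cdot),\cdot)$ and $[\phi\ind{t}_{h'+1}\sharp\Mstarobs_{h'}](\cdot,\cdot)$. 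Reindexing $h'\mapsto h$ gives the stated bound.

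The main obstacle is step one, particularly the correct construction of the hybrid processes, since $\Mphi$ is non-Markovian in $s$ and requires routing through the latent variable $x$ via \cref{lem:change-of-measure} before the telescoping can be set up. Once that is in place, the per-step difference collapses to a TV against a $[0,1]$-bounded value-to-go defined on $\cS$, and the rest of the argument (Cauchy--Schwarz, TV-Hellinger, data processing) is standard. I note also that \cref{lem:near-markov} is not actually needed for the second inequality, since the bound $\mathrm{TV}^2\leq \Delta_{h'}$ follows directly from data processing rather than from the near-Markovianity bound.
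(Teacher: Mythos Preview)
Your proposal is correct and matches the paper's argument in spirit, with a minor difference in framing for the first inequality. The paper does not telescope over hybrid processes; instead it proves directly, by induction on $h$, the stronger occupancy bound
\[
\nrm*{d^{\pilat}_{\lat,h}(\cdot) - d^{\pilat\circ\phi}_{\obs,h}(\cdot)}_{\tv}\;\leq\;\sum_{h'<h}\En^{\pilat\circ\phi}\brk*{\nrm*{\brk*{\Plat\circ\phi}_{h'}(x_{h'},a_{h'})-\phi_{h'+1}\sharp\Pstar_{\obs,h'}(x_{h'},a_{h'})}_{\tv}},
\]
where on the left $d^{\pilat\circ\phi}_{\obs,h}(s,a)\coloneqq\sum_{x:\phi(x)=s}d^{\pilat\circ\phi}_{\obs,h}(x,a)$, and then applies H\"older to pull out any $f\in[0,1]$. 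Your hybrid argument is precisely this same recursion unrolled against the specific test function given by the $\Mlat$ value-to-go, so the two are equivalent; the paper's route simply records the uniform-in-$f$ TV bound as an explicit intermediate. One small indexing point: your telescope must run to $Q_h$, not $Q_{h-1}$, in order to land on $\wtE^{\pilat}_\phi$ (and to pick up the $h'=0$ initial-distribution term). For the second inequality the paper does not spell out the steps; your chain of Cauchy--Schwarz, $\nrm{\cdot}_{\tv}^2\leq\Dhelshort$, and data processing on the reward marginal is exactly what is needed, and your observation that \cref{lem:near-markov} is not required here is correct.
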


\subsubsection{Risk bound for $\olr$ under \texttt{CorruptionRobustness}}\label{sec:risk-bound-crobust}

We state the main risk bound for $\olr$ under self-predictive estimation and the above definition of corruption robustness.

\onlinereductionalpha*

\subsubsection{Examples of \Crobust algorithms}\label{sec:crobustexamples}

In this section, we establish that the \Golf algorithm  satisfies the
\Crobust definition (\cref{def:alglat-robust}) with a parameter
$\alpha \approx K^{-1/2}$. This improves upon the rate that would be
obtained by invoking the generic guarantee in
\cref{lem:general-alg-sim-lemma}. We expect that several other
algorithms can be analyzed in a similar way, thereby leading to tight
rates in the same fashion. We restate the pseudocode in \cref{alg:golf-redux} for convenience. 

\begin{algorithm}[th]
\caption{\textsc{GOLF} \citep{jin2021bellman}}
\label{alg:golf-redux}
{\bfseries input:} Function classes $\cF$ and $\cG$, confidence width $\beta>0$. \\
{\bfseries initialize:} $\cF\ind{0} \leftarrow \cF$, $\cD_{h}\ind{0} \leftarrow \emptyset\;\;\forall h \in [H]$. 
\begin{algorithmic}[1]
\For{episode $t = 1,2,\dotsc,T$}
    \State Select policy $\pi\ind{t} \leftarrow \pi_{f\ind{t}}$, where $f\ind{t} \ldef{} \argmax_{f \in \cF\ind{t-1}}f(x_1,\pi_{f,1}(x_1))$. \label{step:glof_optimism}
    \State Execute $\pi\ind{t}$ for one episode and obtain trajectory $(x_1\ind{t},a_1\ind{t},r_1\ind{t}),\ldots,(x_H\ind{t},a_H\ind{t},r_H\ind{t})$. \label{step:glof_sampling}
    \State Update dataset: $\cD_{h}\ind{t} \leftarrow \cD_{h}\ind{t-1} \cup \crl[\big]{\prn[\big]{x_h\ind{t},a_h\ind{t},x_{h+1}\ind{t}}}\;\;\forall h \in [H]$.
    \State Compute confidence set:
    \begin{gather}
      \nonumber
      \cF\ind{t} \leftarrow \crl[\bigg]{ f \in \cF: \cL_{h}\ind{t}(f_h,f_{h+1}) - \min_{g_h \in \cG_h} \cL_{h}\ind{t}(g_h,f_{h+1}) \leq \beta\;\;\forall h \in [H] },
    \\
    \nonumber
    \text{where \quad } \cL_{h}\ind{t}(f,f') \coloneqq \sum_{(x,a,r,x') \in \cD_{h}\ind{t}}\prn[\Big]{ f(x,a) - r - \max_{a' \in \cA} f'(x',a') }^2 ,~\forall f,f' \in \cF.
    \end{gather}
\EndFor
\State Output $\wh{\pi} = \unif(\pi\ind{1:T})$. %
\end{algorithmic}
\end{algorithm}

Let $\Mlat = (\rlat,\Plat)$ be given, and we let $\Qstarlat \coloneqq
Q^{\Mlat,\star}$, and $\Tlath f(s,a) \coloneqq \rlath(s,a) + \En_{s' \sim \Plath(s,a)}\brk*{V_f(s')}$. We assume that the algorithm has a latent function class $\cF_{\texttt{alg}}$ which realizes $\Qstarlat$, as well as a helper class $\cG_{\texttt{alg}}$ which is $\Tlat$-complete for $\cFalg$.

\begin{assumption}[$\cT_\lat$-completeness]\label{ass:trep-completeness}We have:
\[
\Qstarlat \in \cF_{\texttt{alg}}, \quad \text{and} \quad 	\cT_\lat \cFalg \subseteq \cGalg.
\]
\end{assumption}

For our analysis of \Golf, it is most natural to quantify the corruption levels in the following way.
\begin{assumption}[Corruption levels of $\Mlat$ and $\Mphi$]\label{ass:corruption-level-golf}
Let $\vepsrep^2$ be such that, for any sequence of policies
$\pi\ind{k}_\lat$ played by the algorithm \dfedit{when interacting
  with the $\phi$-compressed POMDP}, we have
	\begin{align}\label{eq:corruption-level-golf}
	\sum_{k=1}^K \sum_{h=1}^H \En_{\pi\ind{k}_\lat \sim p\ind{k}_\lat} \wtE^{\pi\ind{k}_\lat}_{\phi}\brk*{\prn{\rlath(s_h,a_h) - \wtr^{\pi\ind{k}}_{\phi,h}(s_{h},a_{h})}^2 + \norm{\Plath(s_h,a_h) - \Pphipikh(s_{h}, a_{h})}^2_\tv} \leq \vepsrep^2.
	\end{align}
\end{assumption}
We note that 
\begin{align*}
\vepsrep^2 &\lesssim \sum_{k=1}^K \sum_{h=1}^H \wtE^{\pi\ind{k}_\lat}_\phi\brk*{\Dhels{\Mlath(s_{h},a_{h})}{\wtM_{\phi,h}^{\star,\pi\ind{k}_\lat}(s_{h},a_h)}} \\
	&\leq \sum_{k=1}^K \sum_{h=1}^H \wtE^{\pi\ind{k}_\lat}_\phi\brk*{\Dhels{\Mlath(s_{h},a_{h})}{\Mphih(s_{1:h},a_{1:h})}} 
\end{align*} 
by the data-processing inequality (cf. \eqref{eq:qpi-minus-tqpi} and \eqref{eq:getinnocuous}) and the inequality $\nrm{p-q}^2_\tv \leq \Dhels{p}{q}$, and thus a \Crobustness bound in terms of $\vepsrep$ implies a \Crobustness bound in the sense of \cref{def:alglat-robust}. 
\begin{restatable}[Latent GOLF is \Crobust]{theorem}{latentgolfcrobust}\label{thm:latentgolfcrobust}
	Under \cref{ass:trep-completeness} and \cref{ass:corruption-level-golf},  \cref{alg:golf-redux} with $\beta = c \prn*{\log\prn*{\abs{\cF}\abs{\cG}KH\deltainv} + \vepsrep}$, has regret
	  \[
	\sum_{k=1}^K J^{\Mlat}(\pi_{\Mlat}) - J^{\Mlat}(\pi\ind{k}) \leq \cO\prn*{H\sqrt{\Ccov K \log(K)\log(\nicefrac{\abs{\cF}\abs{\cG}HK}{\delta})}} + \cO\prn*{H^{3/2}\sqrt{K \Ccov \log(K) \vepsrep^2 }} ,
\]
and consequently is \Crobust (\cref{def:alglat-robust}) with parameters 
\[
\alpha = \frac{H^{3/2}}{\sqrt{K}}\sqrt{\Ccov\log(K)} \text{ and } \Riskbase(K) = \cO\prn*{\frac{H}{\sqrt{K}}\sqrt{\Ccov  \log(K)\log(\abs{\cF}\abs{\cG}HK)}}.
\]
\end{restatable}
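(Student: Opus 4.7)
The plan is to adapt the on-policy misspecification argument used in the proof of \cref{lem:golf-onpolicy} to the setting where data comes from the $\phi$-compressed POMDP $\Mphi$ rather than from an MDP with a misspecified completeness class. The core observation is that the conditional expectation of $r\ind{i}_h + V_f(s\ind{i}_{h+1})$ under the data law, conditioned on $(s\ind{i}_h, a\ind{i}_h)$, is the ``corrupted Bellman backup''
\[
b\ind{i}_h(s,a;V_f) \coloneqq \En^{\Mphi, \pi\ind{i}_\lat}\brk*{r_h + V_f(s_{h+1}) \mid s_h = s, a_h = a},
\]
which satisfies $(\Tlat f_{h+1} - b\ind{i}_h)^2 \leq 2(\rlat - \wtr^{\pi\ind{i}}_{\phi,h})^2 + 2\nrm{\Plat - \Pphi^{\pi\ind{i}}_h}_{\tv}^2$; this is exactly the quantity that \cref{ass:corruption-level-golf} aggregates into $\vepsrep^2$.

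First, I would redo the concentration backbone. For $X_i(h,f)$ as in the proof of \cref{lem:golf-onpolicy}, taking conditional expectations under the $\Mphi$-law yields
\[
\En\brk*{X_i(h,f)\mid \filt_{i,h}} = \En^{\Mphi,\pi\ind{i}_\lat}\brk*{(f_h-\Tlat f_{h+1})^2 + 2(f_h - \Tlat f_{h+1})(\Tlat f_{h+1} - b\ind{i}_h)},
\]
while the conditional variance remains $\cO(\En^{\Mphi,\pi\ind{i}_\lat}[(f_h - \Tlat f_{h+1})^2])$ because the integrand factors still have bounded range. Freedman's inequality (\cref{lem:multiplicative_freedman}) with a union bound over $\cF\times\cG\times[H]\times[T]$ provides two-sided control between $\sum_{i<t} X_i(h,f)$ and the corresponding sum of conditional expectations up to additive slack of order $\log(\abs{\cF}\abs{\cG}HT\deltainv)$.

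Next, I would use this to derive the two standard GOLF ingredients with $\beta$ enlarged by an additive $\vepsrep^2$ term. For optimism ($\Qstarlat \in \cF\ind{t}$), completeness $\Tlat \cFalg \subseteq \cGalg$ gives $\Qstarlat = \Tlat\Qstarlat \in \cGalg$, so $\min_g \cL\ind{t}_h(g,\Qstarlat) \leq \cL\ind{t}_h(\Qstarlat,\Qstarlat)$; the conditional-expectation formula then implies that $\En[\cL\ind{t}_h(\Qstarlat,\Qstarlat) - \min_g \cL\ind{t}_h(g,\Qstarlat)] \leq \sum_{i<t}\En^{\Mphi,\pi\ind{i}_\lat}[(\Qstarlat - b\ind{i}_h(\cdot;V^{\Mlat,\star}))^2] \leq \cO(\vepsrep^2)$ via the pointwise bound above and \cref{ass:corruption-level-golf}, which together with concentration justifies $\beta = c(\log(\abs{\cF}\abs{\cG}HK\deltainv) + \vepsrep^2)$. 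For the in-sample Bellman-error bound, AM-GM applied to the cross term $2(f_h - \Tlat f_{h+1})(\Tlat f_{h+1} - b\ind{i}_h)$ absorbs the corruption into the self-term and yields
\[
\sum_{i<t}\En^{\Mphi,\pi\ind{i}_\lat}\brk*{(f\ind{t}_h - \Tlat f\ind{t}_{h+1})^2} \leq \cO(\beta) \qquad \text{for all } f\ind{t} \in \cF\ind{t}.
\]

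Finally, I would close the loop on regret. Optimism gives $\sum_k[J^{\Mlat}(\pi_{\Mlat}) - J^{\Mlat}(\pi\ind{k}_\lat)] \leq \sum_k[V_{f\ind{k}}(x_1) - J^{\Mlat}(\pi\ind{k}_\lat)]$, which by \cref{lem:lemma1jiang} (under the $\Mlat$-law) equals $\sum_k\sum_h \En^{\Mlat,\pi\ind{k}_\lat}[f\ind{k}_h - \Tlat f\ind{k}_{h+1}]$; the simulation lemma (\cref{lem:simlemma}) swaps $\En^{\Mlat,\pi\ind{k}_\lat}$ for $\wtE^{\pi\ind{k}_\lat}_\phi$ at an additive cost bounded by $\vepsrep$, after which the coverability-based pigeonhole argument from the proof of \cref{lem:golf-onpolicy} together with the in-sample bound above yields the stated $\cO(H\sqrt{\Ccov K\log K\log(\abs{\cF}\abs{\cG}HK\deltainv)} + H^{3/2}\sqrt{K\Ccov \log K\,\vepsrep^2})$ regret bound. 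Dividing by $K$, and using $\nrm{p-q}_\tv^2 \leq \Dhels{p}{q}$ so that $\vepsrep^2$ is bounded by the Hellinger quantity appearing in \cref{def:alglat-robust}, extracts the \Crobust property with the advertised $\Riskbase$ and $\alpha$. The main obstacle is managing the cross term $2(f_h - \Tlat f_{h+1})(\Tlat f_{h+1} - b\ind{i}_h)$: keeping its contribution to $\beta$ at the right quadratic scaling in $\vepsrep$ (rather than linear), and matching the AM-GM coefficients so that the self-term $\En^{\Mphi,\pi\ind{i}_\lat}[(f_h - \Tlat f_{h+1})^2]$ is not lost when inverting for the in-sample Bellman error, is where the bookkeeping is most delicate.
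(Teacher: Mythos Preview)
Your plan is correct and close to the paper's, with one structural difference in where the corruption is absorbed. The paper recognizes that your $b\ind{i}_h$ is exactly the policy-dependent Markov backup $\wtT^{\pi\ind{i}}_{\phi,h}$ of the $\phi$-compressed POMDP (cf.\ \eqref{eq:Pphi-pomdp-definition}--\eqref{eq:rphi-pomdp-definition}) and uses it as the pivot throughout: it defines $X_i(h,f)$ with $\wtT^{\pi\ind{i}}_{\phi,h} f$ in the reference slot, so that $\En[X_i\mid\filt_{i,h}]=\wtE^{\pi\ind{i}}_\phi\brk*{(f_h-\wtT^{\pi\ind{i}}_{\phi,h} f)^2}$ carries \emph{no} cross term, and the in-sample bound (\cref{lem:latent-golf-in-sample}) is therefore stated with respect to $\wtT^{\pi\ind{i}}_{\phi,h}$ rather than $\Tlat$. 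The misspecification is then peeled off one step later, inside the regret decomposition, via the split $(f\ind{k}-\Tlat f\ind{k})=(f\ind{k}-\Tphipikh f\ind{k})+(\Tphipikh f\ind{k}-\Tlat f\ind{k})$, with the second piece controlled by Cauchy--Schwarz and \cref{ass:corruption-level-golf}. Your route handles the same misspecification earlier, inside the concentration, at the cost of the cross term you flag; both routes yield the stated bound.

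One place where the paper's pivot buys something concrete is the optimism step. Your expectation bound $\En[\cL\ind{t}_h(\Qstarlat,\Qstarlat)-\min_g\cL\ind{t}_h(g,\Qstarlat)]\leq\cO(\vepsrep^2)$ is correct, but converting it to a high-probability statement is delicate: for a fixed $g$, the Freedman variance of $(\Qstarlat-Y)^2-(g-Y)^2$ scales with $\wtE\brk*{(\Qstarlat-g)^2}$, which is not a priori controlled by $\vepsrep^2$. The paper sidesteps this by inserting $\wtT^{\pi\ind{t}}_{\phi,h}\Qstarlat$ between $g$ and $\Qstarlat$: the piece $W_t=(g-Y)^2-(\wtT^{\pi\ind{t}}_{\phi,h}\Qstarlat-Y)^2$ has nonnegative conditional mean $\wtE\brk*{(g-\wtT^{\pi\ind{t}}_{\phi,h}\Qstarlat)^2}$ and self-bounding variance, while $V_t=(\wtT^{\pi\ind{t}}_{\phi,h}\Qstarlat-Y)^2-(\Qstarlat-Y)^2$ has conditional mean $-\wtE\brk*{(\Tlat\Qstarlat-\wtT^{\pi\ind{t}}_{\phi,h}\Qstarlat)^2}$ and variance controlled by the same quantity, summing to $\cO(\vepsrep^2)$. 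You would need the same insertion (or a separate uniform argument over $g\in\cG$) to make your optimism step rigorous.
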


\begin{corollary}[\textsc{Golf} applied in \olr]
 Let us suppose that the appropriate assumptions for the estimator in \cref{alg:debiased-mle-optimistic} to have regret bounded by $\Estsim(T,\gamma) = \cO\prn*{\sqrt{H\Ccov T}\log(\Ccov\abs{\cMlat}\abs{\cL_\lat}\abs{\Phi}HT)}$ (\cref{lem:implementing-optim-replearn}) hold. Then,  we can take $\gamma \approx K^{-1/2}$ and $T \approx K^{4}$, and the bound \cref{thm:online-reduction-alpha} gives
	an expected risk of $\veps$ with a number of trajectories $
        TK =
        \poly(\Ccov,H,\log\abs{\cMlat},\log\abs{\Phi},\log\abs{\cLlat})
        \cdot\nicefrac{1}{\veps^{10}}$, improving over the
        $1/\veps^{14}$ rate of the universal result
        (\cref{cor:selfpredmodular}). %
\end{corollary}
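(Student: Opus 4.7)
The proof is a direct parameter-tuning exercise built on top of \cref{thm:online-reduction-alpha}. The plan is to instantiate that theorem with $\Alglat = $\textsc{Golf} and $\OptReplearn = \SelfPred$, plug in the three ingredients---the \Crobustness parameters $(\alpha,\Riskbase)$ of \textsc{Golf} from \cref{thm:latentgolfcrobust}, and the regret bound $\Estsimopt(T,\gamma)$ of \SelfPred from \cref{lem:implementing-optim-replearn}---and then balance the three resulting summands as functions of $\gamma$, $T$, and $K$.

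After substitution, the risk bound of \cref{thm:online-reduction-alpha} takes the form
\begin{align*}
  \En\brk*{\Riskobs(TK)} \;\lesssim\; \underbrace{\frac{H\sqrt{\Ccov \log(K)\log(\abs{\cF}\abs{\cG}HK)}}{\sqrt{K}}}_{\mathrm{(i)}} \;+\; \underbrace{\gamma \cdot \frac{K\sqrt{H\Ccov}\log(\Ccov\abs{\cMlat}\abs{\cLlat}\abs{\Phi}HT)}{\sqrt{T}}}_{\mathrm{(ii)}} \;+\; \underbrace{\gammainv\prn*{\frac{H^{3}\Ccov\log K}{K} + H}}_{\mathrm{(iii)}}.
\end{align*}
For $K \gtrsim H^{2}\Ccov\log K$ the $\alpha^{2}$ contribution inside (iii) is dominated by the additive $H$, so (iii) reduces to $\gammainv H$. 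The first step is to choose $\gamma$ so that (iii) $\lesssim$ (i); equating $\gammainv H$ with the expression in (i) pins down $\gamma$ up to polylogarithmic factors of $\Ccov, H, \abs{\cF}, \abs{\cG}$, as a power of $K$. The second step is to choose $T$ so that (ii) $\lesssim$ (i); after substituting the value of $\gamma$ and squaring, this yields $T \approx K^{4}$ up to polylogs in $\Ccov, H, \abs{\cMlat}, \abs{\Phi}, \abs{\cLlat}$. The third step is to choose $K$ so that (i) $\leq \varepsilon$, which requires $K \gtrsim H^{2}\Ccov\cdot\texttt{polylog}(\cdot)/\varepsilon^{2}$. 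The overall trajectory budget is then $TK \approx K^{5} \approx \poly(\Ccov,H,\log\abs{\cMlat},\log\abs{\Phi},\log\abs{\cLlat})/\varepsilon^{10}$, matching the claim.

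The only obstacle is careful bookkeeping. One must check (a) that the burn-in $K \gtrsim H^{2}\Ccov\log K$ needed to absorb $\alpha^{2}$ into $H$ is consistent with the final $K \approx 1/\varepsilon^{2}$ scaling (which is automatic once $\varepsilon$ is small relative to $H$); (b) that the implicit $\log T$ appearing inside $\Estsimopt$ contributes only an $\cO(\log K)$ overhead after substituting $T \approx K^{4}$, so it can be folded into the $\texttt{polylog}$ factor; and (c) that the coverability constant $\Ccov$ appearing in \textsc{Golf}'s \Crobustness bound (over the $\phi$-compressed dynamics) and the state-coverability constant $\Ccovs$ appearing in $\Estsimopt$ are both controlled by the latent coverability hypothesis---either by identifying them via \cref{lem:covinvariance} or, if needed, by bounding one by the other up to an $\abs{\cA}$ factor as in \cref{lem:state-action-cov-state-cov}. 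No new technical machinery is required beyond the three results being combined.
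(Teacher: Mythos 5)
Your proposal is correct and follows exactly the route the paper intends (the corollary is stated without an explicit proof): substitute \textsc{Golf}'s \Crobustness parameters from \cref{thm:latentgolfcrobust} and the \SelfPred estimation bound from \cref{lem:implementing-optim-replearn} into \cref{thm:online-reduction-alpha}, then balance the three terms by tuning $\gamma$, $T$, and $K$, with the coverability constants reconciled via \cref{lem:covinvariance} and \cref{lem:state-action-cov-state-cov} as you note. One remark: your balancing correctly yields $\gamma \approx K^{1/2}$ (equivalently $\gamma^{-1} \approx K^{-1/2}$), which is what the $1/\veps^{10}$ rate requires; the literal ``$\gamma \approx K^{-1/2}$'' in the corollary statement appears to be a typo, since that choice would make the term $\gamma^{-1}H \approx H\sqrt{K}$ grow with $K$.
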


\subsection{Proofs for \cref{app:properties-phi-pomdp}: Properties of $\phi$-compressed POMDPs}\label{app:proofs-properties-phi-pomdp}

\changeofmeasure*

\begin{proof}[\pfref{lem:change-of-measure}]
Recall that $\wtP^{\pilat}_\phi$ denotes the law of $(x_h,s_h,a_h)_{h
  \in [H]}$ in the $\phi$-compressed POMDP when playing policy
$\pilat$. For clarity, and to differentiate a random variable from its
realization, in the proofs below we will use upper-case notation such
as $\crl*{S_h = s_h, A_h = a_h, X_h = x_h}$ to indicate realizations

of random variables in the POMDP. %

Let $\wtd^{\pilat}_h(s,a) = \wtP^{\pilat}_\phi(S_h=s,A_h=a)$ be the marginalized occupancy measure for in the \compressedPOMDP\, $\Mphi$. We write $d^{\pilat \circ \phi}_h \coloneqq d^{\Mstarobs,\pilat\circ \phi}_h$.
The \lhs in \eqref{eq:com} is equal to:
\[
	\wtE^{\pilat}_{\phi}\brk{f(s_h,a_h)} = \sum_{s \in \cS,a \in \cA} \wtd^{\pilat}_h(s,a) f(s,a),
\]
Meanwhile, the \rhs is equal to:
	\[
	\En^{\pilat \circ \phi}_h\brk{\brk{f \circ \phi}(x_h,a_h)} = \sum_{s \in \cS,a \in \cA} f(s,a) \sum_{x:\phi(x)=s} d^{\pilat \circ \phi}_h(x,a).
\]
So it only remains to show that, for each $s \in \cS$ and $a \in \cA$,
we have $\wtd^{\pilat}_h(s,a) = \sum_{x:\phi(x)=s} d^{\pilat \circ
  \phi}_h(x,a)$. Firstly, note that it is enough to show that
$\sum_{x_{h}: \phi(x_{h})=s_{h}} d^{\pilat \circ \phi}_{h}(x_{h}) =
\wtd^{\pilat}_{h}(s_h)$, since $\wtd^{\pilat}_{h}(s_h,a_h) = \wtd^{\pilat}_{h}(s_h)\pilat(a_h \mid s_h)$ and $\sum_{x_{h}: \phi(x_{h})=s_{h}} d^{\pilat \circ \phi}_{h}(x_{h},a_h) = \sum_{x_{h}: \phi(x_{h})=s_{h}} d^{\pilat \circ \phi}_{h}(x_{h})\pilat(a_h \mid \phi(x_h)) = \pilat(a_h \mid s_h) \sum_{x_{h}: \phi(x_{h})=s_{h}} d^{\pilat \circ \phi}_{h}(x_{h})$. 
  Toward this, we have:
\begin{align*}
\sum_{x_{h}: \phi(x_{h})=s_{h}} d^{\pilat \circ \phi}_{h}(x_{h}) &= \sum_{x_{h}: \phi(x_{h})=s_{h}} \prn*{\sum_{x_{h-1},a_{h-1} \in \cX \times \cA} d_{h-1}^{\pilat\circ\phi}(x_{h-1},a_{h-1}) \Pstarobsh(x_{h} \mid x_{h-1},a_{h-1})} \\ 
	&= \sum_{x_{h-1},a_{h-1} \in \cX \times \cA} d_{h-1}^{\pilat\circ\phi}(x_{h-1},a_{h-1}) \sum_{x_{h}: \phi(x_{h}) = s_{h}} \Pstarobsh(x_{h} \mid x_{h-1},a_{h-1}) \\
	&= \sum_{x_{h-1},a_{h-1} \in \cX \times \cA} d_{h-1}^{\pilat\circ\phi}(x_{h-1},a_{h-1}) \Pstarobsh(\phi(x_{h}) = s_h \mid x_{h-1},a_{h-1}) \\
\end{align*}
At the same time, %
\begin{align*}
\wtd^{\pilat}_{h}(s_h) = \wtP^{\pilat}_\phi(S_{h} = s_h) &= \sum_{\wt{x},\wt{a}} \wtP^{\pilat}_\phi(X_{h-1}=\wt{x},A_{h-1}=\wt{a})\wtP^{\pilat}_\phi(S_{h} = s_{h} \mid X_{h-1} = \wt{x},A_{h-1} = \wt{a})  \\
	&= \sum_{\wt{x},\wt{a}} \wtP^{\pilat}_\phi(X_{h-1}=\wt{x},A_{h-1}=\wt{a}) \Pstarobs(\phi(x_h) = s_h \mid x_{h-1},a_{h-1}),
\end{align*}
where in the second equality we have used the definition of the observation function $s_h = \cO(x_h) = \phi(x_h)$. 

To conclude, it remains to show that for all $h$, we have:
\[
d_{h}^{\pilat\circ\phi}(x_{h},a_{h}) = \wtP^{\pilat}_\phi(X_{h}=x_{h},A_{h}=a_{h}).
\]
We do this by induction. Again, note that it is sufficient to establish $d_{h}^{\pilat\circ\phi}(x_{h}) = \wtP^{\pilat}_\phi(X_h = x_h)$.  The case $h=1$ is clear. For the general case, we have:
\begin{align*}
	d_{h}^{\pilat\circ\phi}(x_{h}) &= \sum_{x_{h-1},a_{h-1} \in \cX \times \cA}	d_{h-1}^{\pilat\circ\phi}(x_{h-1},a_{h-1})\Pstarobs(x_h \mid x_{h-1},a_{h-1}) \\
		&= \sum_{x_{h-1},a_{h-1} \in \cX \times \cA}	\wtP^{\pilat}_\phi(X_h=x_{h-1},A_{h-1}=a_{h-1})\Pstarobs(x_h \mid x_{h-1},a_{h-1}) \\
		&= \sum_{x_{h-1},a_{h-1} \in \cX \times \cA}	\wtP^{\pilat}_\phi(X_h=x_{h-1},A_{h-1}=a_{h-1})\wtP^{\pilat}_\phi(X_h = x_h \mid X_{h-1}=x_{h-1}, A_{h-1}=a_{h-1}) \\
		&= \wtP^{\pilat}_\phi(X_h = x_h).
\end{align*}
\end{proof}

\nearmarkov*

\begin{proof}[\pfref{lem:near-markov}]
	We begin with the first event. Note that, for any $\pilat$, the PODMP kernel $\Mphih(r_h,s_{h+1} = \cdot \mid s_{1:h},a_{1:h})$ can be written as:
		\begin{align*}
		\Mphih(r_h,s_{h+1} = \cdot \mid s_{1:h},a_{1:h}) &= \sum_{x_h,a_h \in \cX \times \cA} \wtP^{\pilat}_\phi(r_h,s_{h+1} = \cdot \mid x_h,a_h,s_{1:h},a_{1:h})\wtP^{\pilat}_{\phi}(x_h,a_h \mid s_{1:h},a_{1:h}) \\
			&= \sum_{x_h,a_h \in \cX \times \cA} \wtP^{\pilat}_\phi(r_h,s_{h+1} = \cdot \mid x_h,a_h)\wtP^{\pilat}_{\phi}(x_h,a_h \mid s_{1:h},a_{1:h}),
	\end{align*}
	where we have used $\wtM(r_h,s_{h+1} = \cdot \mid s_{1:h},a_{1:h}) = \wtP^{\pilat}_\phi(r_h,s_{h+1} = \cdot \mid s_{1:h},a_{1:h})$, the law of total probability, and that $x_h,a_h$ is a sufficient statistic for $r_h$ and $ s_{h+1}$. %
	We further note that
	\begin{equation}\label{eq:justice}
		\wtP^{\pilat}_\phi(r_h,s_{h+1} = \cdot \mid x_h,a_h) = \Mstarobsh(r_h,\phi_{h+1}(x_{h+1}) = \cdot \mid x_h,a_h),
	\end{equation}
	since $s_{h+1} = \cO_{h+1}(x_{h+1}) = \phi_{h+1}(x_{h+1})$ is a deterministic function of $x_{h+1}$ and $r_h,x_{h+1} \sim \Mstarobsh(x_h,a_h)$.
	Thus, for a fixed $h$ and $t$, and omitting the $h$ indices on the decoder $\phi$ for cleanliness, the expectation in equation \eqref{eq:near-markov-podmp} becomes:
	\begin{align*}
	&\wtE_\phi^{\pilat}\brk*{\Dhels{\Mlath(s_h,a_h)}{\Mphih(r_h,s_{h+1}=\cdot \mid s_{1:h},a_{1:h})}} \\
		&\leq \sum_{s_{1:h},a_{1:h} \in (\cS \times \cA)^h} \wtP_\phi^{\pilat}(s_{1:h},a_{1:h}) \sum_{x_h,a_h} \wtP_\phi^{\pilat}(x_h,a_h \mid s_{1:h},a_{1:h})\Dhels{\Mlath(s_h,a_h)}{\wtP^{\pilat}_\phi(r_h,s_{h+1} = \cdot \mid x_h,a_h)}\tag{Jensen} \\
		& = \sum_{\substack{s_{1:h},a_{1:h} \in (\cS \times \cA)^h \\ x_h,a_h \in \cX \times \cA}} \wtP_\phi^{\pilat}(s_{1:h},a_{1:h})
           \wtP^{\pilat}_\phi(x_h,a_h \mid s_{1:h},a_{1:h})\Dhels{\Mlath(\phi(x_h),a_h)}{
           \wtP^{\pilat}_\phi(r_h,\phi(x_{h+1}) = \cdot \mid x_h,a_h)}
          \\
         & = \sum_{x_h,a_h \in \cX \times \cA}
           \wtP_\phi^{\pilat}(x_h,a_h)\Dhels{\Mlat(\phi(x_h),a_h)}{
           \Mstarobsh(r_h,\phi(x_{h+1}) = \cdot \mid x_h,a_h)} \tag{Simplifying \& \eqref{eq:justice}}
          \\
		& = \En^{\pilat \circ \phi}\brk*{\Dhels{\Mlat(\phi(x_h),a_h)}{
           \Mstarobsh(r_h,\phi(x_{h+1}) = \cdot \mid x_h,a_h)}} \tag{Change of measure (\cref{lem:change-of-measure})} \\
		& = \En^{\pilat \circ \phi}\brk*{\Dhels{\Mlat(\phi(x_h),a_h)}{
           \phi\sharp\Mstarobsh(x_h,a_h)}} \tag{By definition of $\phi\sharp\Mstarobs$},
\end{align*}
as desired. %
Summing over $h \in [H]$ we obtain the desired bound. The bound \eqref{eq:near-markov-podmp-2}
is a consequence of \eqref{eq:near-markov-podmp} and the data-processing inequality. Namely, using the definition of $\Mphipih$ from \eqref{eq:mphi-pomdp-definition} and the joint convexity of the squared Hellinger distance we have:
 \begin{equation}\label{eq:getinnocuous}
 \Dhels{\Mlath(\cdot \mid s_h,a_h)}{\Mphipih(\cdot \mid s_{h},a_{h})}  \leq \wtE^{\pilat}_{\phi}\brk*{\Dhels{\Mlath(\cdot \mid s_h,a_h)}{\Mphih(\cdot \mid s_{1:h},a_{1:h})} \mid s_h,a_h}.
 \end{equation}
Thus, we have
 \begin{align*}
 &\En^{\pilat}_{\phi}\brk*{ \Dhels{\Mlath(\cdot \mid s_h,a_h)}{\Mphipih(\cdot \mid s_{h},a_{h})}} \\
 &\quad \leq  \En^{\pilat}_{\phi}\brk*{ \En^{\pilat}_{\phi}\brk*{\Dhels{\Mlath(\cdot \mid s_h,a_h)}{\Mphih(\cdot \mid s_{1:h},a_{1:h})} \mid s_h,a_h } } \\
 &\quad =  \En^{\pilat}_{\phi}\brk*{ \Dhels{\Mlath(\cdot \mid s_h,a_h)}{\Mphih(\cdot \mid s_{1:h},a_{1:h})} },
 \end{align*}
 as desired. 
\end{proof}

\simlemma*

\begin{proof}[\pfref{lem:simlemma}]
Firstly note that, from \cref{lem:change-of-measure}, the left-hand-side of \eqref{eq:fourtet} is equivalent to 
\begin{equation}\label{eq:fourtet2}
\abs{\En^{\Mlat,\pi_\lat}[f(s_h,a_h)] - \wtE^{\pi_\lat}_{\phi}[f(s_h,a_h)]} = \abs{\En^{\Mlat,\pi_\lat}[f(s_h,a_h)] - \En^{\Mstarobs,\pi_\lat \circ \phi}\brk*{[f \circ \phi](x_h,a_h)}} 
\end{equation}
For any $\pilat: \cS \times [H] \rightarrow \Delta(\cA)$, let $\dlath^\pilat = d^{\Mlat,\pilat}_h$ denote the occupancy in $\Mlat$, and similarly for any $\pi_\obs: \cX \times [H] \rightarrow \Delta(\cA)$ let $d_{\obs,h}^\piobs(x_h,a_h) = d^{\Mobsstar,\piobs}_h(x_h,a_h)$ denote the occupancy in $\Mobsstar$. We overload notation by letting $d_{\obs,h}^{\pilat \circ \phi}(s,a) \coloneqq \sum_{x: \phi(x)=s}d_{\obs,h}^{\pi\circ\phi}(x,a)$. We will establish the stronger result that 
\begin{equation}
  \label{eq:direct}
\nrm*{\dlath^{\pilat}(\cdot) - d_{\obs,h}^{\pilat \circ \phi}(\cdot)}_\tv \leq  \sum_{h'<h}\En^{\pilat \circ \phi}\brk*{\nrm*{\brk*{\Plat \circ \phi}(x_{h'},a_{h'}) - \phi\sharp\Pstarobs(x_{h'},a_{h'})}_\tv},
\end{equation}
where the $\tv$ norm on the left-hand-side is over $\cS \times
\cA$. Note that this implies the desired bound on \eqref{eq:fourtet2} by Holder's inequality.  We prove this by induction over $h$. For the base case ($h=0$), we have:
  \begin{align*}
&\sum_{s_1,a_1} \Bigg|d^{\pilat}_{\lat,1}(s_1,a_1) - d^{\pilat \circ \phi}_{\obs}(s_1,a_1)\Bigg|\\
&\qquad = \sum_{s_1,a_1} \Bigg|P_{\lat,0}(s_1 \mid \emptyset)\pilat(a_1 \mid s_1) - \sum_{x_1 = \phi(x_1)=s_1} d^{\pilat \circ \phi}_{\obs}(x_1,a_1)\Bigg| \\
&\qquad = \sum_{s_1,a_1} \Bigg|P_{\lat,0}(s_1 \mid \emptyset)\pilat(a_1 \mid s_1) - \sum_{x_1 = \phi(x_1)=s_1} P^\star_{\obs,0}(x_1 \mid \emptyset)\pilat(a_1 \mid \phi(x_1))\Bigg| \\
&\qquad = \sum_{s_1} \Bigg|P_{\lat,0}(s_1 \mid \emptyset) - \phi_1\sharp P^\star_{\obs,0}(s_1 \mid \emptyset)\Bigg| \sum_{a_1} \pilat(a_1 \mid s_1) \\
&\qquad = \nrm*{P_{\lat,0}(\emptyset) - \phi_1\sharp P^\star_{\obs,0}(\emptyset)}_\tv.
\end{align*}

For the general case, let us further overload notation by letting $ d^{\pi\circ\phi}_{\obs,h}(s_h) = \sum_{a_h} d^{\pi \circ \phi}_{\obs,h}(s_h,a_h)$ and $\Pstarobs(s_h \mid x_{h-1},a_{h-1}) =  \phi\sharp\Pstarobs(s_h \mid x_{h-1},a_{h-1}) = \sum_{x_h: \phi(x_h)=s_h} \Pstarobs(x_h \mid x_{h-1},a_{h-1})$. Let us also abbreviate $\pi \coloneqq \pilat$. Firstly note that it is sufficient to establish the
result for $\sum_{s_h \in \cS} \abs*{\dlath^\pi(s_h) -
  d^{\pi\circ\phi}_{\obs,h}(s_h)}$, since
	\begin{align*}
	\sum_{s_h,a_h \in \cS \times \cA} \abs*{\dlath^\pi(s_h,a_h) - d^{\pi\circ\phi}_{\obs,h}(s_h,a_h)} &= \sum_{s_h,a_h \in \cS \times \cA} \abs*{\dlath^\pi(s_h) - d^{\pi\circ\phi}_{\obs,h}(s_h)}\pi(a_h\mid s_h) \\ 
		&= \sum_{s_h \in \cS} \abs*{\dlath^\pi(s_h) - d^{\pi\circ\phi}_{\obs,h}(s_h)}.
	\end{align*}
Below, all summations over $s_h$ (resp. $x_h$) with domain unspecified
are over $\cS$ (resp. $\cX$), and likewise for summations over
$s_h,a_h$ or $x_h,a_h$. We have: %
\begin{align*}
&\sum_{s_h}\Bigg|d^\pi_{\lat,h}(s_h) - d_{\obs,h}^{\pi \circ \phi}(s_h)\Bigg| \\
	&=\sum_{s_h} \Bigg| \dlath^{\pi}(s_h) - \sum_{x_h: \phi(x_h)=s_h}d_{\obs,h}^{\pi \circ \phi}(x_h)\Bigg| \\
	&= \sum_{s_h} \Bigg|\sum_{s_{h-1},a_{h-1}}\dlath^{\pi}(s_{h-1},a_{h-1})\Plath(s_h \mid s_{h-1},a_{h-1}) \\
	&\hspace{10em} -\sum_{x_h: \phi(x_h)=s_h}\sum_{x_{h-1},a_{h-1}}d_{\obs,h}^{\pi \circ \phi}(x_{h-1},a_{h-1})\Pstarobsh(x_h \mid x_{h-1},a_{h-1})\Bigg| \\
	&= \sum_{s_h}\Bigg|\sum_{s_{h-1},a_{h-1}}\dlath^{\pi}(s_{h-1},a_{h-1})\Plath(s_h \mid s_{h-1},a_{h-1}) \\
	&\hspace{10em} -\sum_{x_{h-1},a_{h-1}}d_{\obs,h}^{\pi \circ \phi}(x_{h-1},a_{h-1})\Pstarobsh(s_h \mid x_{h-1},a_{h-1})\Bigg| \\
	&= \sum_{s_h}\Bigg|\sum_{s_{h-1},a_{h-1}}\dlath^{\pi}(s_{h-1},a_{h-1})\Plath(s_h \mid s_{h-1},a_{h-1}) - \sum_{x_{h-1},a_{h-1}}d_{\obs,h}^{\pi \circ \phi}(x_{h-1},a_{h-1})\Plath(s_h \mid \phi(x_{h-1}),a_{h-1}) \\
	&\quad+ \sum_{x_{h-1},a_{h-1}}d_{\obs,h}^{\pi \circ \phi}(x_{h-1},a_{h-1})\Plath(s_h \mid \phi(x_{h-1}),a_{h-1}) - \sum_{x_{h-1},a_{h-1}}d_{\obs,h}^{\pi \circ \phi}(x_{h-1},a_{h-1})\Pstarobsh(s_h \mid x_{h-1},a_{h-1})\Bigg| \\
	&\leq \sum_{s_{h-1},a_{h-1}} \abs*{\dlath^{\pi}(s_{h-1},a_{h-1})- \sum_{x_{h-1}:\phi(x_{h-1})=s_{h-1}}d_{\obs,h}^{\pi \circ \phi}(x_{h-1},a_{h-1})}\sum_{s_{h}}\Plath(s_h \mid s_{h-1},a_{h-1})  \\
	&\quad + \sum_{s_h}\abs*{\sum_{x_{h-1},a_{h-1}}d_{\obs,h}^{\pi \circ \phi}(x_{h-1},a_{h-1})\prn*{(\Plath \circ \phi)(s_h \mid x_{h-1},a_{h-1}) - \Pstarobsh(s_h \mid x_{h-1},a_{h-1})}} \\
	&\leq \nrm*{\dlathmo^{\pi}(\cdot) - d_{\obs,h-1}^{\pi \circ \phi}(\phi^{-1}(\cdot))}_\tv \\
	&\quad + \sum_{x_{h-1},a_{h-1}}d_{\obs,h}^{\pi \circ \phi}(x_{h-1},a_{h-1}) \sum_{s_h}\Bigg|(\Plath \circ \phi)(s_h \mid x_{h-1},a_{h-1}) - \Pstarobsh(s_h \mid x_{h-1},a_{h-1})\Bigg| \\
	&\quad \leq \nrm*{\dlathmo^{\pi}(\cdot) - d_{\obs,h-1}^{\pi \circ \phi}(\phi^{-1}(\cdot))}_\tv +\En^{\pi \circ \phi}\brk*{\nrm*{\brk*{\Plath \circ \phi}(x_{h-1},a_{h-1}) - \phi\sharp\Pstarobsh(x_{h-1},a_{h-1})}_\tv}.
\end{align*}

From which it follows that, for each $h$, we have: %
\begin{align*}
	 \nrm*{\dlath^{\pi}(\cdot) - d_{\obs,h}^{\pi \circ \phi}(\phi^{-1}(\cdot))}_\tv &\leq \sum_{h'<h}\En^{\pi \circ \phi}\brk*{\nrm*{\brk*{\Plat \circ \phi}_{h'}(x_{h'},a_{h'}) - \phi_{h'+1}\sharp\Pstar_{\obs,h'}(x_{h'},a_{h'})}_\tv} \\
	 &\leq \sum_{h' \in [H]}\En^{\pi \circ \phi}\brk*{\nrm*{\brk*{\Plat \circ \phi}_{h'}(x_{h'},a_{h'}) - \phi_{h'+1}\sharp\Pstar_{\obs,h'}(x_{h'},a_{h'})}_\tv}.
\end{align*}
\end{proof}

\subsection{Proofs for \cref{sec:risk-bound-crobust}: Risk Bound Under \Crobustness (\cref{thm:online-reduction-alpha})}\label{app:online-reduction-alpha-pf}

\onlinereductionalpha*

\begin{proof}[\pfref{thm:online-reduction-alpha}]
Let us write $\pi\ind{t,K+1}_\lat = \hatpi\ind{t}_\lat$ and, for any $t,k \in [T] \times [K+1]$, $\pi\ind{t,k}_\obs \coloneqq \pi\ind{t,k}_\lat \circ \phi\ind{t}$. Let $p\ind{t,k}_\obs$ denote the distributions of played policies $\pi\ind{t,k}_\obs$ induced by the interaction of $\Alglat$ and $\OptReplearn$ inside the \olr algorithm. Let us write the online sum of self-prediction errors as
\begin{equation}\label{eq:epsrep}
\vepsrep^2 \coloneqq \sum_{t=1}^T \sum_{k=1}^{K+1} \sum_{h=0}^H \En_{\pi\ind{t,k}_\obs \sim p\ind{t,k}}\En^{\pi\ind{t,k}_\obs}\brk*{\Dhels{\brk*{M\ind{t}\sub{\lat}\circ \phi\ind{t}}_h(x_h,a_h)}{\phi\ind{t}\sub{h+1}\sharp M^\star\sub{\obs,h}(x_h,a_h)}}
\end{equation}
Since the final output policy of $\olr$ satisfies $\hatpilat = \unif(\hatpilat\ind{1},\ldots,\hatpilat\ind{T})$ (\cref{line:pac-output}), we have 
\[
\En\brk*{\Riskobs(TK)} = \frac{1}{T}\sum_{t=1}^T\En\brk*{ J^{\Mstarobs}(\pistarobs) - J^{\Mstarobs}(\hatpi\ind{t}_\obs)}.
\]
We take the following decomposition on the risk
\begin{align}\label{eq:risk-decomp}
	J^{\Mstarobs}(\pistar_\obs) - J^{\Mstarobs}(\hatpi\ind{t}_\obs) &= J^{\Mstarlat}(\pi_{\Mstarlat}) - J^{M\ind{t}_\lat}(\pi_{M\ind{t}_\lat}) + \underbrace{J^{M\ind{t}_\lat}(\pi_{M\ind{t}_\lat}) - J^{M\ind{t}_\lat}(\hatpi\ind{t}_\lat)}_{\mathrm{A_t}} + \underbrace{J^{M\ind{t}_\lat}(\hatpi\ind{t}_\lat) - J^{\Mstarobs}(\hatpi\ind{t}_\obs)}_{\mathrm{B_t}}.
\end{align}
We will show that $\En\brk*{\sum_{t=1}^T \mathrm{A_t}} \lesssim T\Regbase(K) + \alpha \sqrt{T} \En\brk{\vepsrep}$ and that $\En\brk*{\sum_{t=1}^T
\mathrm{B_t}} \lesssim \sqrt{TH}\En\brk{\vepsrep}$, then return to the first
term $J^{\Mstarlat}(\pi_{\Mstarlat}) -
J^{M\ind{t}_\lat}(\pi_{M\ind{t}_\lat})$ at the end of the proof.

To bound $\En\brk*{\sum_{t=1}^{T}A_t}$, we note that
\begin{align*}
\sum_{t=1}^T\En\brk*{A_t} &\leq c_1T \Riskbase(K) +  \alpha \sum_{t=1}^T \En\brk*{\sqrt{\sum_{k=1}^K\sum_{h=1}^H \En_{\pi\ind{t,k}_\lat \sim p\ind{t,k}_\lat}\wtE^{\pi\ind{t,k}_\lat}_{\phi\ind{t}} \brk*{ \Dhels{\Mlath\ind{t}(s_h,a_h)}{\wt{M}^{\star}_{\phi\ind{t},h}(s_{1:h},a_{1:h})} } }} \\
	&\leq c_1 T\Riskbase(K) +  \alpha \sum_{t=1}^T\En\brk*{\sqrt{\sum_{k=1}^K\sum_{h=1}^H \En_{\pi\ind{t,k}_\lat \sim p\ind{t,k}_\lat} \En^{\pi\ind{t,k}_\lat \circ \phi\ind{t}}\brk*{\brk{\Delta_h(M\ind{t}_{\lat}, \phi\ind{t})}(x_h,a_h)} } } \\
	&\leq c_1 T \Riskbase(K) +  \alpha\sqrt{T} \En\brk*{\sqrt{\sum_{t=1}^T \sum_{k=1}^K\sum_{h=1}^H \En_{\pi\ind{t,k}_\obs \sim p\ind{t,k}_\obs} \En^{\pi\ind{t,k}_\obs}\brk*{\brk{\Delta_h(M\ind{t}_{\lat}, \phi\ind{t})}(x_h,a_h)} } } \\
	&\leq c_1 T \Riskbase(K) +  \alpha\sqrt{T} \En\brk*{\vepsrep}.
\end{align*}
where the first line follows from the \Crobust definition (\cref{def:alglat-robust}), the second line follows from \cref{lem:near-markov}, the third line follows by Cauchy-Schwartz, and the last line recalls the definition of $\vepsrep$ from \eqref{eq:epsrep}. %

For the term $\sum_{t=1}^{T}B_t$, for any $\pilat: \cS \times [H] \rightarrow \Delta(\cA)$ we let $Q_{\lat\ind{t},h}^{\pilat} = \cT^{\Mlat\ind{t}}_h Q_{\lat\ind{t},h+1}^{\pilat}$ be the $Q^{\pilat}$ function of the latent MDP $\Mlat\ind{t}$. %
	Note that 
\begin{align}
&\sum_{t=1}^T \crl*{J^{M\ind{t}_\lat}(\hatpi\ind{t}_\lat) - \En^{\hatpi\ind{t}_\lat \circ \phi\ind{t}}\brk*{\brk{Q\sub{\lat\ind{t}}^{\hatpi\ind{t}} \circ \phi\ind{t}}_1(x_1,a_1)}} \\
&\qquad = \sum_{t=1}^T \En^{M\ind{t}_\lat,\hatpi\ind{t}_\lat}\brk*{Q\sub{\lat\ind{t},1}^{\hatpi\ind{t}}(s_1,a_1)} - \En^{\hatpi\ind{t}_\lat\circ\phi\ind{t}}\brk*{\brk{Q\sub{\lat\ind{t}}^{\hatpi\ind{t}} \circ \phi\ind{t}}_1(x_1,a_1)} \nonumber \\
&\qquad \leq \sum_{t=1}^T \En^{\hatpi\ind{t}_\lat \circ \phi\ind{t}}\brk*{\nrm*{\brk{P\ind{t}\sub{\lat}\circ \phi\ind{t}}_0(\emptyset) - \phi\ind{t}\sub{1}\sharp P^\star\sub{\obs,0}(\emptyset)}_\tv} \tag{by \cref{lem:simlemma}} \\
&\qquad \leq \sum_{t=1}^T \sum_{h=0}^H \En^{\hatpi\ind{t}_\lat \circ \phi\ind{t}}\brk*{\nrm*{\brk{P\ind{t}\sub{\lat}\circ \phi\ind{t}}_h(x_h,a_h) - \phi\ind{t}\sub{h+1}\sharp P^\star\sub{\obs,h}(x_h,a_h)}_\tv} \nonumber \\
&\qquad \leq \sqrt{TH}\vepsrep \tag{by Cauchy-Schwartz},
\end{align}
so it is enough to bound 
\[
\sum_{t=1}^T \crl*{\En^{\hatpi\ind{t}_\lat\circ\phi\ind{t}}\brk*{\brk{Q\sub{\lat\ind{t}}^{\hatpi\ind{t}_\lat} \circ \phi\ind{t}}_1(x_1,a_1)} - J^{\Mstarobs}(\hatpi\ind{t}_\obs)}.
\]

Fix $t$ and $h$, whose indexing we omit below for cleanliness. Note that, for any $\pilat: \cS \times [H] \rightarrow \Delta(\cA)$, we have: 
\begin{align}
&\En^{\pilat \circ \phi}\brk*{\prn*{\brk*{Q_{\lat}^\pilat \circ \phi}_h(x_h,a_h) - \cT^{\Mstarobs,\pilat \circ \phi}_h\brk*{Q_{\lat}^\pilat \circ \phi}_{h+1}(x_h,a_h)}^2} \\
&\, \leq 2\En^{\pilat \circ \phi}\brk*{\prn*{\brk*{r_{\lat} \circ \phi}_h - r^\star_{\obs,h}}^2(x_h,a_h)} \\
	&\quad + 2\En^{\pilat \circ \phi}\brk*{\prn*{\En_{\Plath(\phi(x_h),a_h)}\brk*{Q^\pilat_{\lat,h+1}(\cdot,\pilat)} - \En_{\Pstarobsh(x_h,a_h)}\brk*{\brk*{Q_\lat^\pilat \circ \phi}_{h+1}(\cdot,\pilat)}}^2} \\
&\, \leq 2\En^{\pilat \circ \phi}\brk*{\prn*{\brk*{r_{\lat} \circ \phi}_h - r^\star_{\obs,h}}^2(x_h,a_h) + \nrm*{\Plath(\phi(x_h),a_h) - \phi_{h+1}\sharp\Pstarobsh(x_h,a_h)}^2_\tv} \\
&\, \leq 4\En^{\pilat \circ \phi}\brk*{\Dhels{\Mlath(\phi_h(x_h),a_h)}{\phi_{h+1}\sharp\Mstarobsh(x_h,a_h)}} \label{eq:qpi-minus-tqpi},
\end{align}
where the final line follows from two applications of the
data-processing inequality (since $\Mlath(r_h,s_{h+1} \mid \phi_h(x_h),a_h)= \Rlath(r_h \mid \phi_h(x_h),a_h)\Plath(s_{h+1} \mid \phi_h(x_h),a_h)$ and $\phi_{h+1}\sharp\Mstarobsh(r_h,s_{h+1}\mid x_h,a_h)=\Rstarobsh(r_h \mid x_h,a_h)\phi_{h+1}\sharp\Pstarobsh(s_{h+1} \mid x_h,a_h)$)  as well as the bound $\nrm*{p - q}^2_\tv \leq \Dhels{p}{q}$.
Summing this over $t,h$ and using a standard decomposition for regret
(\cref{lem:lemma1jiang}) gives:
\begin{align}
&\sum_{t=1}^T \crl*{\En^{\hatpi\ind{t}_\lat\circ\phi\ind{t}}\brk*{\brk{Q\sub{\lat\ind{t}}^{\hatpi\ind{t}_\lat} \circ \phi\ind{t}}_1(x_1,a_1)} - J^{\Mstarobs}(\hatpi\ind{t}_\obs)} \nonumber \\
&\quad = \sum_{t=1}^T\sum_{h=1}^H \En^{\hatpi\ind{t}_\lat\circ\phi\ind{t}}\brk*{\brk{Q\sub{\lat\ind{t}}^{\hatpi\ind{t}_\lat} \circ \phi\ind{t}}_h(x_h,a_h) - \cT^{\Mstarobs,\hatpi\ind{t}_\lat \circ \phi\ind{t}}_h\brk{Q\sub{\lat\ind{t}}^{\hatpi\ind{t}_\lat} \circ \phi\ind{t}}_{h+1}(x_h,a_h)} \tag{\cref{lem:lemma1jiang}} \\
&\quad \leq \sqrt{TH} \sqrt{\sum_{t=1}^T \sum_{h=1}^H  \En^{\hatpi\ind{t}_\lat\circ\phi\ind{t}}\brk*{\prn*{\brk{Q\sub{\lat\ind{t}}^{\hatpi\ind{t}_\lat} \circ \phi\ind{t}}_h(x_h,a_h) - \cT^{\Mstarobs,\hatpi\ind{t}_\lat \circ \phi\ind{t}}_h\brk{Q\sub{\lat\ind{t}}^{\hatpi\ind{t}_\lat} \circ \phi\ind{t}}_{h+1}(x_h,a_h)}^2}} \nonumber \\
&\quad \leq \sqrt{4TH} \sqrt{\sum_{t=1}^T \sum_{h=1}^H  \En^{\hatpi\ind{t}_\lat\circ\phi\ind{t}}\brk*{\Dhels{\brk*{M\ind{t}_{\lat,h} \circ \phi\ind{t}\sub{h}}(x_h,a_h)}{\phi\ind{t}\sub{h+1}\sharp\Mstarobsh(x_h,a_h)}}} \tag{By \eqref{eq:qpi-minus-tqpi}} \\
&\quad \leq \sqrt{4TH}\vepsrep \nonumber.
\end{align}

Returning to the decomposition of \eqref{eq:risk-decomp} and combining everything gives:
\begin{align*}
	\En\brk*{\Riskobs} &\leq \frac{1}{T}\crl*{\sum_{t=1}^T \En\brk*{J^{\Mstarlat}(\pi_{\Mstarlat}) - J^{\Mlat\ind{t}}(\pi_{M\ind{t}_\lat})}} + \frac{1}{T}\prn*{\alpha\sqrt{T} + 4\sqrt{TH}}\En\brk{\vepsrep} + c_1\cdot\Riskbase(K)\\
		&\leq \frac{1}{T} \crl*{\sum_{t=1}^T \En\brk*{J(\pistar) - J^{\Mlat\ind{t}}(\pi_{\Mlat\ind{t}}) + \gamma \vepsrep^2}} + \frac{\gammainv}{T} \prn*{\alpha\sqrt{T} + 4\sqrt{TH}}^2 + c_1\cdot\Riskbase(K)\\
		&\leq \gamma \frac{2K}{T} \Estsimopt(T,\gamma)  + 2\gammainv \prn*{\alpha^2 + 16H} + c_1\cdot\Riskbase(K),
\end{align*}

where the second inequality follows by AM-GM applied to the middle term and the third inequality follows from: i) Jensen's inequality, ii) \cref{ass:optim-replearn} applied to the distributions $\bar{p}\ind{t}_\obs = \frac{1}{K}\sum_{k=1}^K p\ind{t,k}_\obs$, iii) the bound $K+1 \leq 2K$, and  iv) the inequality $(x+y)^2 \leq 2(x^2+y^2)$. 
\end{proof}

\subsection{Proofs for \cref{sec:crobustexamples}: Examples of \Crobust Algorithms}\label{app:crobust-examples}

\latentgolfcrobust*

\begin{proof}[\pfref{thm:latentgolfcrobust}] Recall that the agent is observing data from the $\phi$-compressed POMDP $\Mphi$, and thus the datasets are of the form $\cD_{h}\ind{k} = \cD_{\phi,h}\ind{k} = \{ \phi(x\ind{i}_h),a\ind{i}_h,r\ind{i}_h,\phi(x\ind{i}_{h+1})\}_{i=1}^{k-1}$. For any $\pilat \in \Pilat$, we define 
\[
\Tphipilath f(s_h,a_h) = \rphipih(s_h,a_h) + \En_{s' \sim \Pphipih(s_h,a_h)}\brk*{f(s')},
\]
where $\rphipih$ and $\Pphipih$ are the policy-dependent Markov operators defined in \eqref{eq:Pphi-pomdp-definition} and \eqref{eq:rphi-pomdp-definition}.

As a consequence, we observe the following misspecification guarantee for $\cT_\lat$.
\begin{lemma}[Misspecification guarantee for
  $\Tlat$]\label{lem:latent-golf-misspecification}
\[
	\forall f: \cS \times \cA \rightarrow [0,1]: \quad \sum_{k=1}^K \sum_{h=1}^H \wtE^{\pi\ind{k}}_\phi \brk*{\prn*{\Tlath f(s_h,a_h) - \wtT^{\pi\ind{k}}_{\phi,h}f(s_h,a_h)}^2} \leq \cO(\vepsrep^2).
\]
\end{lemma}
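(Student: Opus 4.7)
The plan is to reduce the Bellman-operator misspecification to the one-step reward and transition misspecifications controlled by Assumption~\ref{ass:corruption-level-golf}. Concretely, for any $f : \cS \times \cA \to [0,1]$, $(s,a) \in \cS \times \cA$, $h \in [H]$, and policy $\pi\ind{k}$, I would write
\[
\Tlath f(s,a) - \wtT^{\pi\ind{k}}_{\phi,h} f(s,a)
= \bigl(\rlath(s,a) - \wtr^{\pi\ind{k}}_{\phi,h}(s,a)\bigr) + \bigl(\En_{s' \sim \Plath(s,a)}[V_f(s')] - \En_{s' \sim \Pphipikh(s,a)}[V_f(s')]\bigr),
\]
then apply $(x+y)^2 \leq 2x^2 + 2y^2$ to separate the reward and transition contributions.

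For the transition piece, I would use the standard bound
\[
\bigl|\En_{P}[V_f] - \En_{Q}[V_f]\bigr| \leq \|V_f\|_\infty \cdot \|P - Q\|_\tv \leq \|P - Q\|_\tv,
\]
since $V_f \in [0,1]$ (recalling $f \in [0,1]$ and $\abs{\cA}$ acts are finite so $V_f = \max_a f$ is well-defined and bounded by $1$). Thus
\[
\bigl(\Tlath f(s,a) - \wtT^{\pi\ind{k}}_{\phi,h} f(s,a)\bigr)^2
\leq 2\bigl(\rlath(s,a) - \wtr^{\pi\ind{k}}_{\phi,h}(s,a)\bigr)^2 + 2\bigl\|\Plath(s,a) - \Pphipikh(s,a)\bigr\|_\tv^2.
\]

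Taking expectation under $\wtE_\phi^{\pi\ind{k}}$ and summing over $k \in [K]$ and $h \in [H]$, the right-hand side is precisely twice the quantity bounded by Assumption~\ref{ass:corruption-level-golf}, yielding the claim with constant $2$. The argument is uniform in $f$, so the ``$\forall f$'' quantifier is automatic.

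The proof is essentially routine once the decomposition is in place, so there is no real obstacle: the only thing to double-check is that the objects match up type-wise---namely, that the reward term in Assumption~\ref{ass:corruption-level-golf} indeed refers to the \emph{mean} rewards $\rlath$ and $\wtr^{\pi\ind{k}}_{\phi,h}$ (and not, say, a Hellinger between reward distributions), and that the transition term is in squared total variation. Both match the definitions given in Section~\ref{sec:confounded-pomdps} via \eqref{eq:Pphi-pomdp-definition} and \eqref{eq:rphi-pomdp-definition}, so the bound follows immediately.
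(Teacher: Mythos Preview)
Your proposal is correct and is exactly the argument the paper has in mind; the paper's own proof consists of a single sentence (``Follows from Assumption~\ref{ass:corruption-level-golf} and the definitions of $\wtT^{\pi\ind{k}}_{\phi,h}$ and $\Tlath$''), and your decomposition into reward and transition pieces, followed by $(x+y)^2 \le 2x^2+2y^2$ and the TV bound, is precisely what unpacks that sentence.
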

\begin{proof}[\pfref{lem:latent-golf-misspecification}]
	Follows from \cref{ass:corruption-level-golf} and the definitions of $\wtT^{\pi\ind{k}}_{\phi,h}$ and $\Tlath$.
\end{proof}

We recall that \cref{ass:corruption-level-golf} implies the following simulation lemma from $\Mlat$ to $\Mphi$. 
\begin{lemma}[Simulation lemma]\label{lem:mlat-to-pomdp-simulation}
	\[
	\forall f: \cS \times \cA \rightarrow [0,1] \quad 	\abs*{\En^{\Mlat,\pilat}\brk*{f(s_h,a_h)} - \Ephipilat\brk*{f(s_h,a_h)}} \leq \sum_{h' < h} \Ephipilat\brk*{\nrm*{P_{\lat,h'}(s_{h'},a_{h'}) - \wtP^{\pilat}_{\phi,h'}(s_{h'},a_{h'})}_\tv} 
	\]
	and thus %
		\[
	\sum_{k=1}^K \sum_{h=1}^K \En^{\Mlat,\pi\ind{k}}\brk*{f(s_h,a_h)} \leq \sum_{k=1}^K \sum_{h=1}^K \Ephipi{k}\brk*{f(s_h,a_h)} + H^{3/2}\sqrt{K\vepsrep^2}.
	\]
\end{lemma}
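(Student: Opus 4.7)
The argument mirrors the proof of \cref{lem:simlemma}, but transposed: rather than comparing the latent-dynamics MDP $\Mstarobs$ against a latent model $M_\lat$ composed with a decoder $\phi$, we now compare the base MDP $M_\lat$ against the $\phi$-compressed POMDP $M_\phi$ under the same policy $\pilat$. The central fact we rely on is that the object $\wtP^{\pilat}_{\phi,h}$, while only a \emph{policy-dependent} Markov kernel rather than a genuine Markov kernel, still generates the marginal occupancy measures in $M_\phi$ via a Bellman-flow identity, which allows us to compare the two processes layer by layer.

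Let $d^{\Mlat,\pilat}_h(s,a)$ denote the layer-$h$ state-action occupancy in $M_\lat$ under $\pilat$, and let $\tilde{d}^{\pilat}_{\phi,h}(s,a) \coloneqq \wtP^{\pilat}_{\phi}(s_h=s, a_h=a)$ denote the analogous marginal occupancy in $M_\phi$ under $\pilat$. Since $\En^{\Mlat,\pilat}[f(s_h,a_h)] - \Ephipilat[f(s_h,a_h)] = \sum_{s,a} f(s,a)\prn{d^{\Mlat,\pilat}_h(s,a) - \tilde{d}^{\pilat}_{\phi,h}(s,a)}$, by H\"older's inequality with $\|f\|_\infty \leq 1$ it suffices to establish the inductive bound
\[
\nrm*{d^{\Mlat,\pilat}_h - \tilde{d}^{\pilat}_{\phi,h}}_\tv \leq \sum_{h'<h}\Ephipilat\brk*{\nrm*{P_{\lat,h'}(s_{h'},a_{h'}) - \wtP^{\pilat}_{\phi,h'}(s_{h'},a_{h'})}_\tv}.
\]
I would prove this by induction on $h$. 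The key observation, which follows directly from the definition \eqref{eq:Pphi-pomdp-definition} of $\wtP^{\pilat}_{\phi,h-1}$ as a conditional expectation, is that $\tilde{d}^{\pilat}_{\phi,h}$ satisfies a standard Bellman-flow identity:
\[
\tilde{d}^{\pilat}_{\phi,h}(s,a) = \pilat(a\mid s)\sum_{s',a'}\tilde{d}^{\pilat}_{\phi,h-1}(s',a')\,\wtP^{\pilat}_{\phi,h-1}(s\mid s',a'),
\]
paralleling the identity for $d^{\Mlat,\pilat}_h$. Subtracting and applying ``add and subtract $\sum \tilde{d}^{\pilat}_{\phi,h-1}\cdot P_{\lat,h-1}$'' (or equivalently, the corresponding decomposition with $d^{\Mlat,\pilat}_{h-1}\cdot\wtP^{\pilat}_{\phi,h-1}$) yields the one-step recursion
\[
\nrm*{d^{\Mlat,\pilat}_h - \tilde{d}^{\pilat}_{\phi,h}}_\tv \leq \nrm*{d^{\Mlat,\pilat}_{h-1} - \tilde{d}^{\pilat}_{\phi,h-1}}_\tv + \Ephipilat\brk*{\nrm*{P_{\lat,h-1}(s_{h-1},a_{h-1}) - \wtP^{\pilat}_{\phi,h-1}(s_{h-1},a_{h-1})}_\tv},
\]
which unrolls to the claimed bound, with the base case $h'=0$ capturing the initial-distribution mismatch $\nrm{P_{\lat,0}(\emptyset) - \wtP^{\pilat}_{\phi,0}(\emptyset)}_\tv$ (exactly as in the base case of \cref{lem:simlemma}).

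For the second (summed) claim, I would swap the order of summation $\sum_{h=1}^H\sum_{h'<h} \leq H\sum_{h'=1}^H$, take expectation over $\pi\ind{k}\sim p\ind{k}$, and apply Cauchy--Schwarz together with \cref{ass:corruption-level-golf}:
\[
\sum_{k=1}^K\sum_{h=1}^H \sum_{h'<h}\Ephipi{k}\brk*{\nrm*{P_{\lat,h'} - \wtP^{\pi\ind{k}}_{\phi,h'}}_\tv} \leq H\sqrt{KH}\sqrt{\sum_{k,h'}\Ephipi{k}\brk*{\nrm*{P_{\lat,h'} - \wtP^{\pi\ind{k}}_{\phi,h'}}_\tv^2}} \leq H^{3/2}\sqrt{K\vepsrep^2},
\]
giving the stated bound. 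The main obstacle I anticipate is keeping the definitions straight---specifically, recognizing that although $\wtP^{\pilat}_{\phi,h}$ is a priori merely a conditional expectation of the history-dependent POMDP kernel, this is exactly the object needed for the Bellman flow recursion on $\tilde{d}^{\pilat}_{\phi,h}$ to close. Once that point is nailed down via \eqref{eq:Pphi-pomdp-definition}, the remainder is a routine telescoping calculation paralleling \cref{lem:simlemma}.
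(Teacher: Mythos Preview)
Your proposal is correct and is precisely the argument the paper has in mind: the paper does not give an explicit proof of \cref{lem:mlat-to-pomdp-simulation}, stating it as a direct consequence of \cref{ass:corruption-level-golf} and implicitly deferring to the parallel proof of \cref{lem:simlemma}. Your identification of the key point---that the policy-dependent kernel $\wtP^{\pilat}_{\phi,h}$, by virtue of its definition in \eqref{eq:Pphi-pomdp-definition} as a conditional expectation, satisfies the Bellman-flow identity for the marginal occupancies $\tilde d^{\pilat}_{\phi,h}$---is exactly what makes the induction close, and the remaining add-and-subtract plus Cauchy--Schwarz steps match the paper's treatment of \cref{lem:simlemma}.
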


We begin with the following lemmas, which will be proved in the sequel. 

\begin{lemma}[Optimism]\label{lem:latent-golf-optimism}
  For the choice of $\beta$ in \cref{thm:latentgolfcrobust}, with probability at least $1-\delta$, we have that for all $k \in [K]$:
	\[
	\Qstarlat \in \cF\ind{k}.
	\]
\end{lemma}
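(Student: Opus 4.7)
My plan is to reduce the claim to a uniform concentration bound over $\cG_h$ and invoke the misspecification guarantee of \cref{lem:latent-golf-misspecification}. To show $\Qstarlat \in \cF\ind{k}$, it suffices to bound $\cL_h\ind{k}(Q^\star_{\lat,h}, Q^\star_{\lat,h+1}) - \cL_h\ind{k}(g, Q^\star_{\lat,h+1}) \leq \beta$ uniformly over $g \in \cG_h$, $h \in [H]$, and $k \in [K]$. The reduction uses $\cT_\lat$-completeness (\cref{ass:trep-completeness}) to conclude $g^\star_h := Q^\star_{\lat,h} = \cT_{\lat,h}Q^\star_{\lat,h+1} \in \cG_h$, so the empirical minimum over $\cG_h$ is no larger than the loss evaluated at $g^\star_h$ itself.

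The principal difference from standard GOLF is that the data are drawn from the $\phi$-compressed POMDP $\Mphi$ rather than from $\Mlat$. Writing $Z_i := r_h\ind{i} + V^\star_{\lat,h+1}(s_{h+1}\ind{i})$ and $u := g - g^\star_h$, I would define the per-episode offset $X_i(g) := (g^\star_h(s_h\ind{i},a_h\ind{i}) - Z_i)^2 - (g(s_h\ind{i},a_h\ind{i}) - Z_i)^2$ and, taking $\fF_{i-1}$ to be the $\sigma$-algebra of data from episodes $1,\ldots,i-1$, compute its conditional mean as
\[
\En[X_i(g) \mid \fF_{i-1}] = -\wtE^{\pi\ind{i}}_\phi[u^2(s_h,a_h)] + 2\, \wtE^{\pi\ind{i}}_\phi\brk*{u(s_h,a_h)\,\zeta_i(s_h,a_h)},
\]
where $\zeta_i(s,a) := \Tphipih{i} V^\star_{\lat,h+1}(s,a) - g^\star_h(s,a)$. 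The key observation driving this computation is that after marginalizing over the intra-episode history, the conditional distribution of $(r_h,s_{h+1})$ given $(s_h,a_h)$ under $\pi\ind{i}$ in $\Mphi$ is the Markovianized kernel $\wt{M}^{\pi\ind{i},\star}_{\phi,h}$, so the effective regression target is $\Tphipih{i} V^\star_{\lat,h+1}$ rather than $g^\star_h$.

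Applying AM-GM to the cross-term with parameter $\tfrac{1}{2}$ gives $\En[X_i(g)\mid \fF_{i-1}] \leq -\tfrac{1}{2}\wtE^{\pi\ind{i}}_\phi[u^2] + 2\,\wtE^{\pi\ind{i}}_\phi[\zeta_i^2]$, and summing over $i$ and invoking \cref{lem:latent-golf-misspecification} with $f = Q^\star_{\lat,h+1}$ (so $\cT_\lat f = g^\star_h$) bounds the cumulative bias term by $\cO(\vepsrep^2)$. To convert from conditional means to realized sums, I would apply Freedman's inequality (\cref{lem:freedman}) to the martingale difference $X_i(g) - \En[X_i(g)\mid \fF_{i-1}]$, using the boundedness $|X_i(g)| \lesssim |u(s_h\ind{i},a_h\ind{i})|$ which yields $\En[X_i(g)^2\mid \fF_{i-1}] \lesssim \wtE^{\pi\ind{i}}_\phi[u^2]$. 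For a sufficiently small Freedman step size, the induced variance term is dominated by the negative $-\tfrac{1}{2}\wtE^{\pi\ind{i}}_\phi[u^2]$ from the conditional mean, giving $\sum_i X_i(g) \leq \cO(\vepsrep^2 + \log\deltainv)$. A final union bound over $g \in \cG_h$, $h \in [H]$, and $k \in [K]$ establishes the claim with $\beta = \cO(\log(\abs{\cG}HK\deltainv) + \vepsrep^2)$, matching the theorem's choice.

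The main obstacle is precisely the POMDP-induced bias: unlike in standard GOLF, where $\En[Z_i\mid s_h,a_h] = \cT_\lat Q^\star_\lat = g^\star_h$ makes the cross-term vanish, here the target $\Tphipih{i} V^\star_\lat \neq g^\star_h$ introduces a nontrivial conditional mean. The saving grace is that \cref{lem:latent-golf-misspecification} directly bounds the cumulative squared bias $\sum_i \wtE^{\pi\ind{i}}_\phi[\zeta_i^2]$, and a careful balancing of the AM-GM parameter against the Freedman step size lets the $-\wtE^{\pi\ind{i}}_\phi[u^2]$ contribution absorb both the AM-GM residual and the Freedman variance; without this cancellation the argument would degrade to a bound scaling with $\sqrt{K}\vepsrep$ rather than $\vepsrep^2$.
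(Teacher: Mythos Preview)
Your approach is correct and reaches the same conclusion, but it differs from the paper's route. The paper inserts $\wtT^{\pi\ind{t}}_{\phi,h}\Qstarlathpo$ as an intermediate anchor and splits the loss difference into two pieces,
\[
W_t(h,g)=(g - Z)^2-(\wtT^{\pi\ind{t}}_{\phi,h}\Qstarlat - Z)^2,\qquad V_t(h)=(\wtT^{\pi\ind{t}}_{\phi,h}\Qstarlat - Z)^2-(\Qstarlath - Z)^2,
\]
and applies Freedman to each separately: $W_t$ has nonnegative conditional mean $\wtE^{\pi\ind{t}}_\phi[(g-\wtT^{\pi\ind{t}}_{\phi,h}\Qstarlat)^2]$ (exactly as in clean GOLF), while $V_t$ absorbs all the POMDP bias and, crucially, does not depend on $g$. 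Your single-shot analysis instead keeps the difference intact, extracts the cross term $2\wtE^{\pi\ind{i}}_\phi[u\,\zeta_i]$, and uses AM--GM plus a self-bounding Freedman step (the negative $-\tfrac12\wtE[u^2]$ absorbing the variance proxy). What the paper's decomposition buys is that the misspecification piece $V_t$ requires no union bound over $\cG$, and each Freedman application is of the textbook ``variance $\lesssim$ mean'' form; what your route buys is avoiding the intermediate anchor altogether at the cost of a slightly more delicate cancellation. Two small notes: your appeal to $\cT_\lat$-completeness to place $g^\star_h\in\cG_h$ is not actually needed for this lemma (the bound you prove is over all $g\in\cG_h$ regardless), and your final claim $\beta=\cO(\log(\abs{\cG}HK\deltainv)+\vepsrep^2)$ differs cosmetically from the theorem's stated $\beta=c(\log(\abs{\cF}\abs{\cG}KH\deltainv)+\vepsrep)$, though this does not affect the downstream regret bound.
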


\begin{lemma}[Small in-sample squared Bellman errors]\label{lem:latent-golf-in-sample}
With probability at least $1-\delta$, we have that for all $k \in [K]$, $h \in [H]$, and $f \in \cF\ind{k}$:
	\[
		\sum_{i=1}^{k-1} \Ephipi{i}\brk*{\prn*{f(s_h,a_h) - \Tphipih{i} f(s_h,a_h)}^2} \leq \cO(\beta).
	\]
\end{lemma}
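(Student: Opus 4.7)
The plan is to adapt the standard \textsc{Golf} in-sample squared Bellman error argument (analogous to \cref{lem:golf-all-policy-concentration}(ii)) to the present setting, where the data is generated by the $\phi$-compressed POMDP $\Mphi$ via the policy-dependent operator $\Tphipih{i}$, while the available helper class $\cG$ is only complete with respect to the Markovian latent operator $\Tlat$ (\cref{ass:trep-completeness}). The gap between these two operators will be absorbed by the on-policy misspecification bound from \cref{lem:latent-golf-misspecification}.

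Concretely, first I would introduce the martingale differences
\[
X_i(h,f) \coloneqq \prn*{f(s\ind{i}_h,a\ind{i}_h)-r\ind{i}_h-V_f(s\ind{i}_{h+1})}^2 - \prn*{\Tphipih{i} f(s\ind{i}_h,a\ind{i}_h)-r\ind{i}_h-V_f(s\ind{i}_{h+1})}^2,
\]
where $s\ind{i}_h = \phi(x\ind{i}_h)$ are the observed states in $\Mphi$ generated by policy $\pi\ind{i}$. By the definitions in \eqref{eq:Pphi-pomdp-definition}--\eqref{eq:rphi-pomdp-definition}, $\En[r\ind{i}_h + V_f(s\ind{i}_{h+1}) \mid \mathscr{F}_{i,h}, s\ind{i}_h, a\ind{i}_h] = \Tphipih{i} f(s\ind{i}_h,a\ind{i}_h)$, hence $\En[X_i(h,f)\mid \mathscr{F}_{i,h}] = \Ephipi{i}\brk*{\prn*{f(s_h,a_h)-\Tphipih{i} f(s_h,a_h)}^2}$, together with the standard variance bound $\Var[X_i(h,f)\mid \mathscr{F}_{i,h}]\leq 16\En[X_i(h,f)\mid \mathscr{F}_{i,h}]$. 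Freedman's inequality combined with a union bound over $\cF \times [H]\times[K]$ then converts sums of $X_i$'s into sums of expected squared POMDP Bellman errors, plus lower-order $\log(\abs{\cF}HK\deltainv)$ noise, exactly as in \cref{lem:golf-all-policy-concentration}.

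The heart of the argument is to upper-bound $\sum_{i<k} X_i(h,f)$ uniformly over $f\in \cF\ind{k}$. I would add and subtract $\Tlat V_f$ inside the second square, decomposing $\sum_{i<k} X_i(h,f) = \sum_{i<k} Y_i(h,f) + \sum_{i<k} \Delta_i(h,f)$, where $Y_i(h,f)$ is identical to $X_i(h,f)$ but with $\Tphipih{i} f$ replaced by $\Tlat V_f$, and $\Delta_i(h,f)$ collects the cross-terms. For the $Y_i$-sum, $\Tlat$-completeness gives $\Tlat V_f \in \cG_h$, so the definition of the confidence set $\cF\ind{k}$ immediately yields $\sum_{i<k} Y_i(h,f)\leq\beta$, paralleling the standard \textsc{Golf} analysis. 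For the $\Delta_i$-sum, expanding the difference of squares factorizes it as $\sum_{i<k}[\Tlat V_f(s\ind{i}_h,a\ind{i}_h)-\Tphipih{i} f(s\ind{i}_h,a\ind{i}_h)] \cdot [\text{bounded factor}]$; I would convert this empirical sum into an on-policy expectation via a second Freedman-type step (mimicking the $Y_t$ analysis of \cref{lem:golf-all-policy-concentration}) and then invoke \cref{lem:latent-golf-misspecification} to conclude $\sum_{i<k} \Delta_i(h,f) \leq \cO(\vepsrep^2 + \log(\abs{\cF}\abs{\cG}KH\deltainv))$ uniformly. Combining and plugging back into the Freedman bound of the first step yields the claimed $\cO(\beta)$ control under the chosen $\beta \gtrsim \log(\abs{\cF}\abs{\cG}HK\deltainv)+\vepsrep$.

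The main technical hurdle lies in the $\Delta_i$ analysis: because $\Tphipih{i}$ is non-Markovian and policy-dependent, one cannot obtain $L^{\infty}$ or uniform-over-$\cG$ control of $\Tlat V_f - \Tphipih{i} f$; only on-policy, integrated control is available. Carefully expanding the cross-term so that only on-policy expectations of $(\Tlat V_f - \Tphipih{i} f)^2$ appear---rather than pointwise norms---is what makes \cref{lem:latent-golf-misspecification} applicable at the right scale, preventing the misspecification error from overwhelming the $\beta$-bound.
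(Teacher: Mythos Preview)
Your proposal is correct and matches the paper's own proof essentially line-by-line: the paper defines the same $X_i(h,f)$, applies Freedman with the self-bounding variance, and then bounds $\sum_{i<k}X_i(h,f)$ by splitting off exactly your $Y_i$-term (controlled by $\beta$ via $\cT_\lat$-completeness and the confidence-set constraint) and your $\Delta_i$-term (controlled by a second Freedman step whose conditional mean and variance are both governed by $\wtE^{\pi\ind{i}}_\phi[(\Tlat f - \Tphipih{i} f)^2]$, hence by \cref{lem:latent-golf-misspecification}). The only cosmetic difference is that the paper treats $\Delta_i$ as a single martingale increment rather than first writing out the $a^2-b^2=(a-b)(a+b)$ factorization, but after taking conditional expectations these are the same computation.
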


Let us write $\pi\ind{k}_\obs \coloneqq \pi\ind{k} \circ \phi$. Let us introduce the shorthand $\wtd_{\obs,h}\ind{k} \coloneqq \sum_{i=1}^{k-1} d^{\pi\ind{k}_\obs}_{\obs,h}$, where $d^\pi_\obs$ is the occupancy for $\Mstarobs$, and also the burn-in time
\[
\kappa_h(x,a) \coloneqq \min\crl*{k:  \sum_{i=1}^{k-1} d^{\pi\ind{k}}_{\obs,h}(x,a) \geq \Ccov \mustarh(x,a)}.
\]	

Let us recall, from the analysis of \cite{xie2022role}, that for any $h \in [H]$ and $f: \cS \times \cA \rightarrow [0,1]$ we have
\begin{equation}\label{eq:burn-in-time}
	\sum_{k=1}^K \En^{\pi\ind{k}}\brk*{f(s_h,a_h) \indic\crl*{k < \kappa_h(s_h,a_h)}} \leq 2\Ccov,
\end{equation}
as well as 
\begin{equation}\label{eq:cov-potential}
\sum_{h=1}^H \sum_{k=1}^K \sum_{s,a} \frac{(d^{
\pi\ind{k}_\obs}_{h}(x,a)\indic\{k \geq \kappa_h(x,a)\})^2}{\wtd\ind{k}_{h}(x,a)} \leq \bigoh\prn*{H\Ccov\log(K)}.
\end{equation}

\begin{align*}
\sum_k J^{\Mlat}(\pi_{\Mlat}) - J^{\Mlat}(\pi\ind{k}) &\leq \sum_{k=1}^K\sum_{h=1}^H \En^{\Mlat,\pi\ind{k}}\brk*{f\ind{k}(s_h,a_h) - \cT_\lat f\ind{k}(s_h,a_h)} \tag{Optimism (\cref{lem:latent-golf-optimism})}\\
&\leq \sum_{k=1}^K\sum_{h=1}^H \wtE^{\pi\ind{k}}_\phi\brk*{f\ind{k}(s_h,a_h) - \cT_\lat f\ind{k}(s_h,a_h)} + H^{3/2}\sqrt{K\vepsrep^2}\tag{\cref{lem:mlat-to-pomdp-simulation}}\\
&= \sum_{k=1}^K\sum_{h=1}^H \En^{\pi\ind{k}\circ \phi}\brk*{\brk*{\prn{f\ind{k} - \cT_\lat f\ind{k}}\circ \phi}(x_h,a_h)} + H^{3/2}\sqrt{K\vepsrep^2} \tag{\cref{lem:change-of-measure}} \\
&\leq \sum_{k=1}^K\sum_{h=1}^H \En^{\pi\ind{k}\circ \phi}\brk*{\brk*{\prn{f\ind{k} - \cT_\lat f\ind{k}}\circ \phi}(x_h,a_h)\indic\{k \geq \kappa_h(x_h,a_h)\}} \\
	&\qquad + 2H\Ccov + H^{3/2}\sqrt{K\vepsrep^2} \tag{burn-in time \eqref{eq:burn-in-time}} \\
	&\leq \underbrace{\sum_{k=1}^K\sum_{h=1}^H \En^{\pi\ind{k}\circ \phi}\brk*{\brk*{\prn{f\ind{k} - \Tphipikh f\ind{k}}\circ \phi}(x_h,a_h)\indic\{k \geq \kappa_h(x_h,a_h)\}}}_{\mathrm{(I)}} \\
	&\quad +  \underbrace{\sum_{k=1}^K\sum_{h=1}^H \En^{\pi\ind{k}\circ \phi}\brk*{\brk*{\prn{\Tphipikh f\ind{k}- \Tlath f\ind{k}}\circ \phi}(x_h,a_h)}}_{\mathrm{(II)}} + 2H\Ccov+ H^{3/2}\sqrt{K\vepsrep^2}
\end{align*}

Note that, by change of measure (\cref{lem:change-of-measure}) and the misspecification guarantee (\cref{lem:latent-golf-misspecification}), the second term is bounded by:
\[
	\mathrm{(II)} = \sum_{k=1}^K\sum_{h=1}^H\Ephipi{k}\brk*{\prn{\Tphipikh f\ind{k}- \Tlath f\ind{k}}(s_h,a_h)} \leq \sqrt{KH\vepsrep^2}.
\]
Turning to the first term, we have:
\begin{align}
	&\sum_{h=1}^H \sum_{k=1}^K \En^{\pi\ind{k}_\obs}\brk*{\brk*{\prn{f\ind{k} - \Tphipikh f\ind{k}}\circ \phi}(x_h,a_h)\indic\{k \geq \kappa_h(x_h,a_h)\}} \\
	&\quad\quad \leq \sqrt{\sum_{h=1}^H \sum_{k=1}^K \sum_{x,a} \frac{(d^{
\pi\ind{k}_\obs}_{h}(x,a)\indic\{k \geq \kappa_h(x,a)\})^2}{\wtd\ind{k}_{h}(x,a)} }\sqrt{\sum_{h=1}^H \sum_{k=1}^K  \En_{\wtd\ind{k}_\obs}\brk*{\prn*{\prn{f\ind{k} - \Tphipikh f\ind{k}}\circ \phi}^2(x_h,a_h)}} \\
	&\quad\quad \leq \sqrt{H\Ccov\log(K)}\sqrt{\sum_{h=1}^H \sum_{k=1}^K  \En_{\wtd\ind{k}_\obs}\brk*{\prn*{\prn{f\ind{k} - \Tphipikh f\ind{k}}\circ \phi}^2(x_h,a_h)}}\tag{coverability potential \eqref{eq:cov-potential}} \\
	&\quad\quad = \sqrt{H\Ccov\log(K)}\sqrt{\sum_{h=1}^H \sum_{k=1}^K \sum_{i=1}^{k-1} \wtE^{\pi\ind{i}}_{\phi}\brk*{\prn*{f\ind{k}(s_h,a_h) - \Tphipikh f\ind{k}(s_h,a_h)}^2}} \tag{change of measure, \cref{lem:change-of-measure}} \\
	&\quad\quad \leq \cO\prn*{H\sqrt{\Ccov K \log(K)\beta}},\label{eq:latent-golf-final-on-policy}
\end{align}

where we have used that, from \cref{lem:latent-golf-in-sample}, we have: 
\[
\sum_{h=1}^H \sum_{k=1}^K \sum_{i=1}^{k-1} \wtE^{\pi\ind{i}}_\phi\brk*{\prn*{f\ind{k}(s_h,a_h) - \wtT^{\pi\ind{i}}_\phi f\ind{k}(s_h,a_h)}^2} \leq \cO(\beta H K).
\]
This gives an upper bound on the regret of
\[
\sum_{k=1}^K J^{\Mlat}(\pi_{\Mlat}) - J^{\Mlat}(\pi\ind{k}) \leq \cO\prn*{H\sqrt{\Ccov K \log(K)\beta} +  H^{3/2}\sqrt{K\vepsrep^2}}.
\]
Using that $\beta = \cO\prn*{\log(\frac{\abs{\cF}\abs{\cG}HK}{\delta}) + \vepsrep}$%
	and simplifying gives
\[
	\sum_{k=1}^K J^{\Mlat}(\pi_{\Mlat}) - J^{\Mlat}(\pi\ind{k}) \leq \cO\prn*{H\sqrt{\Ccov K \log(K)\log(\nicefrac{\abs{\cF}\abs{\cG}HK}{\delta})}} + \cO\prn*{H^{3/2}\sqrt{K \Ccov \log(K) \vepsrep^2}},
\]
as desired. It only remains to establish the concentrations results. 

\paragraph{Concentration analysis.}

We establish the concentration results of \cref{lem:latent-golf-optimism} and \cref{lem:latent-golf-in-sample}. 

\begin{proof}[\pfref{lem:latent-golf-in-sample}]
	Let 
	\[
	X_k(h,f) = \prn*{f_h(s\ind{k}_h,a\ind{k}_h) - r\ind{k}_h - f_{h+1}(s\ind{k}_{h+1})}^2- \prn*{\wtT^{\pi\ind{k}}_\phi f_h(s\ind{k}_h,a\ind{k}_h) - r\ind{k}_h - f_{h+1}(s\ind{k}_{h+1})}^2.
	\]
	Let $\fF_{k,h} = \{s\ind{i}_1,a\ind{i}_1,r\ind{i}_1,\ldots,s\ind{i}_H,a\ind{i}_H,r\ind{i}_H\}_{i=1}^k$. Note that
	\begin{align*}
		 \En\brk*{r\ind{k}_h + f_{h+1}(s\ind{k}_{h+1}) \mid \fF_{k,h}} &= \En\brk*{r\ind{k}_h + f_{h+1}(s\ind{k}_{h+1}) \mid \pi\ind{k}} \\
		 	&= \En\brk*{\En\brk*{r\ind{k}_h + f_{h+1}(s\ind{k}_{h+1}) \mid s\ind{k}_h, a\ind{k}_h,\pi\ind{k}} \mid \pi\ind{k}} \\
		 	&= \En\brk*{\wtT^{\pi\ind{k}}_\phi f(s\ind{k}_h,a\ind{k}_h) \mid \pi\ind{k}}\\
		 	&= \wtE^{\pi\ind{k}}_\phi\brk*{\wtT^{\pi\ind{k}}_\phi f(s_h,a_h)},
	\end{align*}
	and thus that
	\[
	\En\brk*{X_k(h,f) \mid \fF_{k,h}} = \wtE^{\pi\ind{k}}_\phi\brk*{\prn*{f_h(s_h,a_h) - \wtT^{\pi\ind{k}}_\phi f_h(s_h,a_h)}^2}.
	\]
	Next, note that
	\begin{align*}
			\Var\brk*{X_k(h,f) \mid \fF_{k,h}} &\leq \En\brk*{\prn*{X_k(h,f)}^2 \mid \fF_{k,h}} \\
			&\leq 16\En\brk*{\prn*{f_h(s\ind{k}_h,a\ind{k}_h) - \wtT^{\pi\ind{k}}_\phi f_h(s\ind{k}_h,a\ind{k}_h)}^2\mid \fF_{k,h}} \\
			&= 16\En\brk*{X_k(h,f) \mid \fF_{k,h}}.
	\end{align*}
	By Freedman's inequality (\cref{lem:freedman}, \cref{lem:multiplicative_freedman}), we have that with probability at least $1-\delta$:
	\[
		\abs*{\sum_{t<k} X_t(h,f) - \sum_{t<k} \En\brk*{X_t(h,f) \mid \fF_{t,h}}} \leq \cO\prn*{\sqrt{\log(1/\delta)\sum_{t<k}\En\brk*{X_t(h,f) \mid \fF_{t,h}}} + \log(1/\delta)}
	\]
	Taking a union bound over $[K] \times [H] \times \cF$, we have that for all $k,h,f$, with probability at least $1-\delta$:
	\begin{align}\label{eq:x-t-minus-bellman-errors} 
		&\abs*{\sum_{t<k} X_t(h,f) - \sum_{t<k} \wtE^{\pi\ind{k}}_\phi\brk*{\prn*{f_h(s_h,a_h) - \wtT^{\pi\ind{k}}_\phi f_h(s_h,a_h)}^2}} \\
		&\quad\leq \cO\prn*{\sqrt{\iota\sum_{t<k}\wtE^{\pi\ind{k}}_\phi\brk*{\prn*{f_h(s_h,a_h) - \wtT^{\pi\ind{k}}_\phi f_h(s_h,a_h)}^2}} + \iota},
	\end{align}
	where $\iota = \log(\abs*{\cF}HK/\delta)$. We now show that 
	\begin{equation}\label{eq:latent-golf-x-t-bounded}
		\sum_{t<k} X_t(h,f\ind{k}) \leq \beta + \cO\prn*{\vepsrep + \iota} = \cO\prn*{\beta},
	\end{equation}
	which will imply, from \eqref{eq:x-t-minus-bellman-errors}, that
	\[
	\sum_{t<k} \wtE^{\pi\ind{k}}_\phi\brk*{\prn*{f_h(s_h,a_h) - \wtT^{\pi\ind{k}}_\phi f_h(s_h,a_h)}^2} \leq \cO\prn*{\iota + \beta} = \cO(\beta),
	\]
	as desired. To see \eqref{eq:latent-golf-x-t-bounded}, let 
	\[
	 \Delta_k = \sum_{t<k} \prn*{\cT_\lat f\ind{k}_h(s\ind{t}_h,a\ind{t}_h) - r\ind{t}_h - f\ind{k}_{h+1}(s\ind{t}_{h+1})}^2	- \prn*{\wtT^{\pi\ind{t}}_\phi f\ind{k}_h(s\ind{t}_h,a\ind{t}_h) - r\ind{t}_h - f\ind{k}_{h+1}(s\ind{t}_{h+1})}^2
	\]
	and then note that:
	\begin{align*}
	\sum_{t<k} X_t(h,f\ind{k}) &= \sum_{t<k} \prn*{f\ind{k}_h(s\ind{t}_h,a\ind{t}_h) - r\ind{t}_h - f\ind{k}_{h+1}(s\ind{t}_{h+1})}^2- \prn*{\wtT^{\pi\ind{t}}_\phi f\ind{k}_h(s\ind{t}_h,a\ind{t}_h) - r\ind{t}_h - f\ind{k}_{h+1}(s\ind{t}_{h+1})}^2	\\
	&= \sum_{t<k} \prn*{f\ind{k}_h(s\ind{t}_h,a\ind{t}_h) - r\ind{t}_h - f\ind{k}_{h+1}(s\ind{t}_{h+1})}^2- \prn*{\cT_\lat f\ind{k}_h(s\ind{t}_h,a\ind{t}_h) - r\ind{t}_h - f\ind{k}_{h+1}(s\ind{t}_{h+1})}^2	 + \Delta_k \\ 
	&\leq \sum_{t<k} \prn*{f\ind{k}_h(s\ind{t}_h,a\ind{t}_h) - r\ind{t}_h - f\ind{k}_{h+1}(s\ind{t}_{h+1})}^2- \inf_{g_h \in \cG_h}\sum_{t<k} \prn*{g(s\ind{t}_h,a\ind{t}_h) - r\ind{t}_h - f\ind{k}_{h+1}(s\ind{t}_{h+1})}^2 + \Delta_k\\ 
	&\leq \beta + \Delta_k.
	\end{align*}
where the second-to-last line follows from $\cT_\lat \cF \subseteq \cG$ and the last line follows from the definition of the confidence set. It remains to show that $\Delta_k \leq \cO(\vepsrep + \iota)$, which we do via a similar concentration argument. Namely, let
	\[
		Y_t(h,f) = \prn*{\cT_\lat f_h(s\ind{t}_h,a\ind{t}_h) - r\ind{t}_h - f\ind{k}_{h+1}(s\ind{t}_{h+1})}^2	- \prn*{\wtT^{\pi\ind{t}}_\phi f_h(s\ind{t}_h,a\ind{t}_h) - r\ind{t}_h - f\ind{k}_{h+1}(s\ind{t}_{h+1})}^2,
	\]
	and note that, as before,
	\[
		\En\brk*{Y_t(h,f) \mid \fF_{t,h}} = \wtE^{\pi\ind{t}}_\phi\brk*{\prn*{\cT_\lat f_h(s_h,a_h) - \wtT^{\pi\ind{t}}_\phi f_h(s_h,a_h)}^2},
	\]
	and 
	\[
		\Var\brk*{Y_t(h,f) \mid \fF_{t,h}} \leq 16\En\brk*{Y_t(h,f) \mid \fF_{t,h}},
	\]
	by the same calculation as earlier.
	Thus, by Freedman's inequality and a union bound, we have that, with probability at least $1-\delta$,
		\begin{align}\label{eq:y-t-minus-bellman-errors} 
			&\abs*{\sum_{t<k} Y_t(h,f) - \sum_{t<k} \wtE^{\pi\ind{k}}_\phi\brk*{\prn*{\cT_\lat f_h(s_h,a_h) - \wtT^{\pi\ind{k}}_\phi f_h(s_h,a_h)}^2}} \\
			&\quad \leq \cO\prn*{\sqrt{\iota\sum_{t<k}\wtE^{\pi\ind{k}}_\phi\brk*{\prn*{\cT_\lat f_h(s_h,a_h) - \wtT^{\pi\ind{k}}_\phi f_h(s_h,a_h)}^2}} + \iota},
	\end{align}
	where $\iota = \log(\abs*{\cF}HK/\delta)$. Recalling the misspecification assumption \cref{lem:latent-golf-misspecification}, this implies that
	\[
	\sum_{t<k} Y_t(h,f) \leq \cO\prn*{\vepsrep + \iota},
	\]
	for all $h,f,k$, with high probability. Applying this to $f = f\ind{k}$ concludes the result.
	\end{proof}
	
	\begin{proof}[\pfref{lem:latent-golf-optimism}]%
          We use similar arguments to the preceding lemma. Let $\Qstarlath \coloneqq Q^\star_{\Mlat,h}$. The aim is to show that, for all $h \in [H], k \in [K], g \in \cG$, we have:
	\[
		\sum_{t<k} \prn*{g(s\ind{t}_h,a\ind{t}_h) - r\ind{t}_h - \Qstarlathpo(s\ind{t}_{h+1})}^2 - \prn*{\Qstarlath(s\ind{t}_h,a\ind{t}_h) - r\ind{t}_h - \Qstarlath(s\ind{t}_{h+1})}^2 \geq -\beta,
	\]
	from which the conclusion will follow. We show that
	\begin{equation}\label{eq:latent-golf-optimism-1}
	\sum_{t<k} \underbrace{\prn*{g(s\ind{t}_h,a\ind{t}_h) - r\ind{t}_h - \Qstarlathpo(s\ind{t}_{h+1})}^2 - \prn*{\wtT^{\pi\ind{t}}_\phi \Qstarlath(s\ind{t}_h,a\ind{t}_h) - r\ind{t}_h - \Qstarlath(s\ind{t}_{h+1})}^2}_{\coloneqq W_t(h,g)} \geq -\beta/2,
	\end{equation}
	and also that
	\begin{equation}\label{eq:latent-golf-optimism-2}
		\sum_{t<k} \underbrace{\prn*{\wtT^{\pi\ind{t}}_\phi \Qstarlath(s\ind{t}_h,a\ind{t}_h) - r\ind{t}_h - \Qstarlath(s\ind{t}_{h+1})}^2  - \prn*{\Qstarlath(s\ind{t}_h,a\ind{t}_h) - r\ind{t}_h - \Qstarlath(s\ind{t}_{h+1})}^2}_{\coloneqq V_t(h)} \geq -\beta/2.
	\end{equation}
	
	For \eqref{eq:latent-golf-optimism-1}, note that 
	\begin{equation}\label{eq:non-neg-w-t}
		\En\brk*{ W_t(h,g) \mid \cF_{t,h}} = \wtE^{\pi\ind{t}}_\phi\brk*{\prn*{g_h(s_h,a_h) - \wtT^{\pi\ind{t}}_{\phi,h}\Qstarlath(s_h,a_h)}^2},
	\end{equation}
	and that $\Var\brk*{W_t(h,g) \mid \cF_{t,h}} \leq 16\En\brk*{ W_t(h,g) \mid \cF_{t,h}}$. By Freedman, this gives
	\[
	\abs*{ \sum_{t<k} W_t(h,g) - \sum_{t<k} \En\brk*{W_t(h,g) \mid \fF_{t,h}}} \leq \cO\prn*{\sqrt{\iota\sum_{t<k}\En\brk*{ W_t(h,g) \mid \cF_{t,h}}} + \iota} \leq \frac{1}{2}\En\brk*{ W_t(h,g) \mid \cF_{t,h}} + \cO(\iota),
	\]
	or in other words
	\[
		\sum_{t<k} W_t(h,g) \geq \frac{1}{2}\sum_{t<k} \En\brk*{W_t(h,g) \mid \fF_{t,h}} - \cO(\iota) \geq -\cO(\iota), 
	\]
	using the non-negativity of \eqref{eq:non-neg-w-t}.
	For \eqref{eq:latent-golf-optimism-2}, note that
	\begin{equation}\label{eq:non-neg-v-t}
		\En\brk*{V_t(h) \mid \fF_{t,h}} = - \wtE^{\pi\ind{t}}_\phi\brk*{\prn*{\Tlath \Qstarlath - \wtT^{\pi\ind{t}}_{\phi,h} \Qstarlath}^2} \geq -\vepsrep,
	\end{equation}
	and that $\Var\brk*{V_t(h) \mid \fF_{t,h}} \leq 16\wtE^{\pi\ind{t}}_\phi\brk*{\prn*{\Tlath \Qstarlath - \wtT^{\pi\ind{t}}_{\phi,h} \Qstarlath}^2}$.
	By Freedman, this gives
	\begin{align}
	\abs*{ \sum_{t<k} V_t(h) - \sum_{t<k} \En\brk*{V_t(h) \mid \fF_{t,h}}} &\leq \cO\prn*{\sqrt{\iota\sum_{t<k}\wtE^{\pi\ind{t}}_\phi\brk*{\prn*{\cT_\lat \Qstarlathpo(s_h,a_h) - \wtT^{\pi\ind{t}}_{\phi,h} \Qstarlathpo(s_h,a_h)}^2}} + \iota} \\
		&= \cO\prn*{\vepsrep + \iota},
	\end{align}
	or in other words
	\[
		\sum_{t<k} V_t(h) \geq \sum_{t<k} \En\brk*{V_t(h) \mid \fF_{t,h}} - \cO\prn*{\vepsrep + \iota} \geq -\cO(\vepsrep + \iota),
	\]
	where we have used \eqref{eq:non-neg-v-t}.

\end{proof}
	
	\end{proof}

\newpage

\neurips{\section*{NeurIPS Paper Checklist}

\begin{enumerate}

\item {\bf Claims}
    \item[] Question: Do the main claims made in the abstract and introduction accurately reflect the paper's contributions and scope?
    \item[] Answer: \answerYes{} %
    \item[] Justification: All results in this paper are of a theoretical nature, and the stated contributions in the abstract and introduction are given in a precise, formal language.
    \item[] Guidelines:
    \begin{itemize}
        \item The answer NA means that the abstract and introduction do not include the claims made in the paper.
        \item The abstract and/or introduction should clearly state the claims made, including the contributions made in the paper and important assumptions and limitations. A No or NA answer to this question will not be perceived well by the reviewers. 
        \item The claims made should match theoretical and experimental results, and reflect how much the results can be expected to generalize to other settings. 
        \item It is fine to include aspirational goals as motivation as long as it is clear that these goals are not attained by the paper. 
    \end{itemize}

\item {\bf Limitations}
    \item[] Question: Does the paper discuss the limitations of the work performed by the authors?
    \item[] Answer: \answerYes{} %
    \item[] Justification: All results in this paper are of a theoretical nature -- we precisely state the conditions under which our results hold. %
    \item[] Guidelines:
    \begin{itemize}
        \item The answer NA means that the paper has no limitation while the answer No means that the paper has limitations, but those are not discussed in the paper. 
        \item The authors are encouraged to create a separate "Limitations" section in their paper.
        \item The paper should point out any strong assumptions and how robust the results are to violations of these assumptions (e.g., independence assumptions, noiseless settings, model well-specification, asymptotic approximations only holding locally). The authors should reflect on how these assumptions might be violated in practice and what the implications would be.
        \item The authors should reflect on the scope of the claims made, e.g., if the approach was only tested on a few datasets or with a few runs. In general, empirical results often depend on implicit assumptions, which should be articulated.
        \item The authors should reflect on the factors that influence the performance of the approach. For example, a facial recognition algorithm may perform poorly when image resolution is low or images are taken in low lighting. Or a speech-to-text system might not be used reliably to provide closed captions for online lectures because it fails to handle technical jargon.
        \item The authors should discuss the computational efficiency of the proposed algorithms and how they scale with dataset size.
        \item If applicable, the authors should discuss possible limitations of their approach to address problems of privacy and fairness.
        \item While the authors might fear that complete honesty about limitations might be used by reviewers as grounds for rejection, a worse outcome might be that reviewers discover limitations that aren't acknowledged in the paper. The authors should use their best judgment and recognize that individual actions in favor of transparency play an important role in developing norms that preserve the integrity of the community. Reviewers will be specifically instructed to not penalize honesty concerning limitations.
    \end{itemize}

\item {\bf Theory Assumptions and Proofs}
    \item[] Question: For each theoretical result, does the paper provide the full set of assumptions and a complete (and correct) proof?
    \item[] Answer: \answerYes{} %
    \item[] Justification: Each theoretical result is stated with all necessary assumptions and is accompanied by complete (and correct) proofs. %
    \item[] Guidelines:
    \begin{itemize}
        \item The answer NA means that the paper does not include theoretical results. 
        \item All the theorems, formulas, and proofs in the paper should be numbered and cross-referenced.
        \item All assumptions should be clearly stated or referenced in the statement of any theorems.
        \item The proofs can either appear in the main paper or the supplemental material, but if they appear in the supplemental material, the authors are encouraged to provide a short proof sketch to provide intuition. 
        \item Inversely, any informal proof provided in the core of the paper should be complemented by formal proofs provided in appendix or supplemental material.
        \item Theorems and Lemmas that the proof relies upon should be properly referenced. 
    \end{itemize}

    \item {\bf Experimental Result Reproducibility}
    \item[] Question: Does the paper fully disclose all the information needed to reproduce the main experimental results of the paper to the extent that it affects the main claims and/or conclusions of the paper (regardless of whether the code and data are provided or not)?
    \item[] Answer: \answerNA{} %
    \item[] Justification:  The paper does not include experiments. %
    \item[] Guidelines:
    \begin{itemize}
        \item The answer NA means that the paper does not include experiments.
        \item If the paper includes experiments, a No answer to this question will not be perceived well by the reviewers: Making the paper reproducible is important, regardless of whether the code and data are provided or not.
        \item If the contribution is a dataset and/or model, the authors should describe the steps taken to make their results reproducible or verifiable. 
        \item Depending on the contribution, reproducibility can be accomplished in various ways. For example, if the contribution is a novel architecture, describing the architecture fully might suffice, or if the contribution is a specific model and empirical evaluation, it may be necessary to either make it possible for others to replicate the model with the same dataset, or provide access to the model. In general. releasing code and data is often one good way to accomplish this, but reproducibility can also be provided via detailed instructions for how to replicate the results, access to a hosted model (e.g., in the case of a large language model), releasing of a model checkpoint, or other means that are appropriate to the research performed.
        \item While NeurIPS does not require releasing code, the conference does require all submissions to provide some reasonable avenue for reproducibility, which may depend on the nature of the contribution. For example
        \begin{enumerate}
            \item If the contribution is primarily a new algorithm, the paper should make it clear how to reproduce that algorithm.
            \item If the contribution is primarily a new model architecture, the paper should describe the architecture clearly and fully.
            \item If the contribution is a new model (e.g., a large language model), then there should either be a way to access this model for reproducing the results or a way to reproduce the model (e.g., with an open-source dataset or instructions for how to construct the dataset).
            \item We recognize that reproducibility may be tricky in some cases, in which case authors are welcome to describe the particular way they provide for reproducibility. In the case of closed-source models, it may be that access to the model is limited in some way (e.g., to registered users), but it should be possible for other researchers to have some path to reproducing or verifying the results.
        \end{enumerate}
    \end{itemize}

\item {\bf Open access to data and code}
    \item[] Question: Does the paper provide open access to the data and code, with sufficient instructions to faithfully reproduce the main experimental results, as described in supplemental material?
    \item[] Answer: \answerNA{} %
    \item[] Justification: The paper does not include experiments requiring code. %
    \item[] Guidelines:
    \begin{itemize}
        \item The answer NA means that paper does not include experiments requiring code.
        \item Please see the NeurIPS code and data submission guidelines (\url{https://nips.cc/public/guides/CodeSubmissionPolicy}) for more details.
        \item While we encourage the release of code and data, we understand that this might not be possible, so “No” is an acceptable answer. Papers cannot be rejected simply for not including code, unless this is central to the contribution (e.g., for a new open-source benchmark).
        \item The instructions should contain the exact command and environment needed to run to reproduce the results. See the NeurIPS code and data submission guidelines (\url{https://nips.cc/public/guides/CodeSubmissionPolicy}) for more details.
        \item The authors should provide instructions on data access and preparation, including how to access the raw data, preprocessed data, intermediate data, and generated data, etc.
        \item The authors should provide scripts to reproduce all experimental results for the new proposed method and baselines. If only a subset of experiments are reproducible, they should state which ones are omitted from the script and why.
        \item At submission time, to preserve anonymity, the authors should release anonymized versions (if applicable).
        \item Providing as much information as possible in supplemental material (appended to the paper) is recommended, but including URLs to data and code is permitted.
    \end{itemize}

\item {\bf Experimental Setting/Details}
    \item[] Question: Does the paper specify all the training and test details (e.g., data splits, hyperparameters, how they were chosen, type of optimizer, etc.) necessary to understand the results?
    \item[] Answer: \answerNA{} %
    \item[] Justification: The paper does not include experiments. %
    \item[] Guidelines:
    \begin{itemize}
        \item The answer NA means that the paper does not include experiments.
        \item The experimental setting should be presented in the core of the paper to a level of detail that is necessary to appreciate the results and make sense of them.
        \item The full details can be provided either with the code, in appendix, or as supplemental material.
    \end{itemize}

\item {\bf Experiment Statistical Significance}
    \item[] Question: Does the paper report error bars suitably and correctly defined or other appropriate information about the statistical significance of the experiments?
    \item[] Answer: \answerNA{} %
    \item[] Justification: The paper does not include experiments. %
    \item[] Guidelines:
    \begin{itemize}
        \item The answer NA means that the paper does not include experiments.
        \item The authors should answer "Yes" if the results are accompanied by error bars, confidence intervals, or statistical significance tests, at least for the experiments that support the main claims of the paper.
        \item The factors of variability that the error bars are capturing should be clearly stated (for example, train/test split, initialization, random drawing of some parameter, or overall run with given experimental conditions).
        \item The method for calculating the error bars should be explained (closed form formula, call to a library function, bootstrap, etc.)
        \item The assumptions made should be given (e.g., Normally distributed errors).
        \item It should be clear whether the error bar is the standard deviation or the standard error of the mean.
        \item It is OK to report 1-sigma error bars, but one should state it. The authors should preferably report a 2-sigma error bar than state that they have a 96\% CI, if the hypothesis of Normality of errors is not verified.
        \item For asymmetric distributions, the authors should be careful not to show in tables or figures symmetric error bars that would yield results that are out of range (e.g. negative error rates).
        \item If error bars are reported in tables or plots, The authors should explain in the text how they were calculated and reference the corresponding figures or tables in the text.
    \end{itemize}

\item {\bf Experiments Compute Resources}
    \item[] Question: For each experiment, does the paper provide sufficient information on the computer resources (type of compute workers, memory, time of execution) needed to reproduce the experiments?
    \item[] Answer: \answerNA{} %
    \item[] Justification: The paper does not include experiments.%
    \item[] Guidelines:
    \begin{itemize}
        \item The answer NA means that the paper does not include experiments.
        \item The paper should indicate the type of compute workers CPU or GPU, internal cluster, or cloud provider, including relevant memory and storage.
        \item The paper should provide the amount of compute required for each of the individual experimental runs as well as estimate the total compute. 
        \item The paper should disclose whether the full research project required more compute than the experiments reported in the paper (e.g., preliminary or failed experiments that didn't make it into the paper). 
    \end{itemize}
    
\item {\bf Code Of Ethics}
    \item[] Question: Does the research conducted in the paper conform, in every respect, with the NeurIPS Code of Ethics \url{https://neurips.cc/public/EthicsGuidelines}?
    \item[] Answer: \answerYes{} %
    \item[] Justification: The research conducted in the paper conforms with the NeurIPS Code of Etichs. %
    \item[] Guidelines: 
    \begin{itemize}
        \item The answer NA means that the authors have not reviewed the NeurIPS Code of Ethics.
        \item If the authors answer No, they should explain the special circumstances that require a deviation from the Code of Ethics.
        \item The authors should make sure to preserve anonymity (e.g., if there is a special consideration due to laws or regulations in their jurisdiction).
    \end{itemize}

\item {\bf Broader Impacts}
    \item[] Question: Does the paper discuss both potential positive societal impacts and negative societal impacts of the work performed?
    \item[] Answer: \answerNA{} %
    \item[] Justification: This is a primarily theoretical work.
    \item[] Guidelines:
    \begin{itemize}
        \item The answer NA means that there is no societal impact of the work performed.
        \item If the authors answer NA or No, they should explain why their work has no societal impact or why the paper does not address societal impact.
        \item Examples of negative societal impacts include potential malicious or unintended uses (e.g., disinformation, generating fake profiles, surveillance), fairness considerations (e.g., deployment of technologies that could make decisions that unfairly impact specific groups), privacy considerations, and security considerations.
        \item The conference expects that many papers will be foundational research and not tied to particular applications, let alone deployments. However, if there is a direct path to any negative applications, the authors should point it out. For example, it is legitimate to point out that an improvement in the quality of generative models could be used to generate deepfakes for disinformation. On the other hand, it is not needed to point out that a generic algorithm for optimizing neural networks could enable people to train models that generate Deepfakes faster.
        \item The authors should consider possible harms that could arise when the technology is being used as intended and functioning correctly, harms that could arise when the technology is being used as intended but gives incorrect results, and harms following from (intentional or unintentional) misuse of the technology.
        \item If there are negative societal impacts, the authors could also discuss possible mitigation strategies (e.g., gated release of models, providing defenses in addition to attacks, mechanisms for monitoring misuse, mechanisms to monitor how a system learns from feedback over time, improving the efficiency and accessibility of ML).
    \end{itemize}
    
\item {\bf Safeguards}
    \item[] Question: Does the paper describe safeguards that have been put in place for responsible release of data or models that have a high risk for misuse (e.g., pretrained language models, image generators, or scraped datasets)?
    \item[] Answer: \answerNA{} %
    \item[] Justification: This is a purely theoretical work, and as such poses no such risks. %
    \item[] Guidelines:
    \begin{itemize}
        \item The answer NA means that the paper poses no such risks.
        \item Released models that have a high risk for misuse or dual-use should be released with necessary safeguards to allow for controlled use of the model, for example by requiring that users adhere to usage guidelines or restrictions to access the model or implementing safety filters. 
        \item Datasets that have been scraped from the Internet could pose safety risks. The authors should describe how they avoided releasing unsafe images.
        \item We recognize that providing effective safeguards is challenging, and many papers do not require this, but we encourage authors to take this into account and make a best faith effort.
    \end{itemize}

\item {\bf Licenses for existing assets}
    \item[] Question: Are the creators or original owners of assets (e.g., code, data, models), used in the paper, properly credited and are the license and terms of use explicitly mentioned and properly respected?
    \item[] Answer: \answerNA{} %
    \item[] Justification: The paper does not use existing assets. %
    \item[] Guidelines:
    \begin{itemize}
        \item The answer NA means that the paper does not use existing assets.
        \item The authors should cite the original paper that produced the code package or dataset.
        \item The authors should state which version of the asset is used and, if possible, include a URL.
        \item The name of the license (e.g., CC-BY 4.0) should be included for each asset.
        \item For scraped data from a particular source (e.g., website), the copyright and terms of service of that source should be provided.
        \item If assets are released, the license, copyright information, and terms of use in the package should be provided. For popular datasets, \url{paperswithcode.com/datasets} has curated licenses for some datasets. Their licensing guide can help determine the license of a dataset.
        \item For existing datasets that are re-packaged, both the original license and the license of the derived asset (if it has changed) should be provided.
        \item If this information is not available online, the authors are encouraged to reach out to the asset's creators.
    \end{itemize}

\item {\bf New Assets}
    \item[] Question: Are new assets introduced in the paper well documented and is the documentation provided alongside the assets?
    \item[] Answer: \answerNA{} %
    \item[] Justification: The paper does not release new assets. %
    \item[] Guidelines:
    \begin{itemize}
        \item The answer NA means that the paper does not release new assets.
        \item Researchers should communicate the details of the dataset/code/model as part of their submissions via structured templates. This includes details about training, license, limitations, etc. 
        \item The paper should discuss whether and how consent was obtained from people whose asset is used.
        \item At submission time, remember to anonymize your assets (if applicable). You can either create an anonymized URL or include an anonymized zip file.
    \end{itemize}

\item {\bf Crowdsourcing and Research with Human Subjects}
    \item[] Question: For crowdsourcing experiments and research with human subjects, does the paper include the full text of instructions given to participants and screenshots, if applicable, as well as details about compensation (if any)? 
    \item[] Answer: \answerNA{} %
    \item[] Justification: The paper does not involve crowdsouring nor research with human subjects. %
    \item[] Guidelines:
    \begin{itemize}
        \item The answer NA means that the paper does not involve crowdsourcing nor research with human subjects.
        \item Including this information in the supplemental material is fine, but if the main contribution of the paper involves human subjects, then as much detail as possible should be included in the main paper. 
        \item According to the NeurIPS Code of Ethics, workers involved in data collection, curation, or other labor should be paid at least the minimum wage in the country of the data collector. 
    \end{itemize}

\item {\bf Institutional Review Board (IRB) Approvals or Equivalent for Research with Human Subjects}
    \item[] Question: Does the paper describe potential risks incurred by study participants, whether such risks were disclosed to the subjects, and whether Institutional Review Board (IRB) approvals (or an equivalent approval/review based on the requirements of your country or institution) were obtained?
    \item[] Answer: \answerNA{} %
    \item[] Justification: The paper does not involve crowdsourcing nor research with human subjects. %
    \item[] Guidelines:
    \begin{itemize}
        \item The answer NA means that the paper does not involve crowdsourcing nor research with human subjects.
        \item Depending on the country in which research is conducted, IRB approval (or equivalent) may be required for any human subjects research. If you obtained IRB approval, you should clearly state this in the paper. 
        \item We recognize that the procedures for this may vary significantly between institutions and locations, and we expect authors to adhere to the NeurIPS Code of Ethics and the guidelines for their institution. 
        \item For initial submissions, do not include any information that would break anonymity (if applicable), such as the institution conducting the review.
    \end{itemize}

\end{enumerate}

}

\end{document}